\documentclass{article} % For LaTeX2e
\usepackage{iclr2027_conference,times}
\usepackage{comment}

\usepackage{amsmath}
\let\amsmathEqref\eqref
\usepackage{amsmath,amsfonts,bm}

\def\eqref#1{equation~\ref{#1}}
\def\1{\bm{1}}

\DeclareMathAlphabet{\mathsfit}{\encodingdefault}{\sfdefault}{m}{sl}
\SetMathAlphabet{\mathsfit}{bold}{\encodingdefault}{\sfdefault}{bx}{n}

\newcommand{\E}{\mathbb{E}}

\newcommand{\Var}{\mathrm{Var}}

\let\eqref\amsmathEqref  % 
\let\E\undefined         % 
\let\Var\undefined       % 

\usepackage[utf8]{inputenc} % allow utf-8 input
\usepackage[T1]{fontenc}    % use 8-bit T1 fonts
\usepackage{url}            % simple URL typesetting
\usepackage{booktabs}       % professional-quality tables
\usepackage{amsfonts}       % blackboard math symbols
\usepackage{nicefrac}       % compact symbols for 1/2, etc.
\usepackage{microtype}      % microtypography
\usepackage{xcolor}         % colors

\usepackage{algorithm}
\usepackage{algorithmic}

\usepackage{wrapfig}

\usepackage{amsmath}
\usepackage{amssymb}
\usepackage{mathtools}
\usepackage{amsthm}
\IfFileExists{bbm.sty}{\usepackage{bbm}}{}
\usepackage{amsfonts}
\usepackage{mathrsfs}

\theoremstyle{plain}
\newtheorem{theorem}{Theorem}[section]
\newtheorem{proposition}[theorem]{Proposition}
\newtheorem{lemma}[theorem]{Lemma}
\newtheorem{corollary}[theorem]{Corollary}
\theoremstyle{definition}
\newtheorem{definition}[theorem]{Definition}
\newtheorem{assumption}[theorem]{Assumption}
\theoremstyle{remark}
\newtheorem{remark}[theorem]{Remark}

\definecolor{midblue}{rgb}{0.0, 0.2, 0.6}

\usepackage[colorlinks=true,linkcolor=black,citecolor=midblue,urlcolor=blue,backref=page]{hyperref}

\usepackage{my_symbol}
\usepackage[most]{tcolorbox}
\usepackage{lipsum}
\usepackage{subcaption}

\newtcolorbox{setup}{
  colframe=cyan!30,
  colback=cyan!5,
  coltitle=black,
  fonttitle=\bfseries,
  title=Setup
}

\DeclareMathOperator{\Var}{Var}

\usepackage{multirow}
\usepackage{cleveref}

\title{Attention Graphons: A Graph Limit Perspective on Graph Transformers}

\author{%
  Caio F. Deberaldini Netto\\
  Johns Hopkins University\\
  Baltimore, MD %21218
  \\
  \texttt{cnetto1@jh.edu}\\
  \And
  Moshe Eliasof\\
  University of Cambridge\\
  Cambridge, UK\\
  \texttt{me532@cam.ac.uk}\\
  \And
  Luana Ruiz\\
  Johns Hopkins University\\
  Baltimore, MD %21218
  \\
  \texttt{lrubini1@jh.edu}\\
}

\iclrfinalcopy % Uncomment for camera-ready version, but NOT for submission.
\begin{document}

\maketitle

\begin{abstract}
  Graph Transformers produce, for each attention head, a dense $n\times n$ matrix of learned pairwise interactions. We ask a fundamental question: do these attention-induced graphs converge to a stable limit object as $n$ grows, or does the learned interaction pattern remain unstructured and size-dependent? We answer this using dense graph limit theory, treating each attention matrix as a finite sample from an underlying kernel---an \emph{attention graphon}---and studying concentration around this limit under the cut-distance. We derive a worst-case variance bound requiring no assumptions on the graphon, and a sharper regularity-aware bound based on nonparametric estimation theory. To operationalize the theory, we propose a canonicalize-then-block-average pipeline for estimating dataset-level attention graphons, and a variance-based diagnostic for testing whether attention admits a stable continuum description. Experiments across multiple graph benchmarks show that learned attention stabilizes to dataset-specific graphon structure on several datasets; that empirical cut-distance and cut-norm variance decreases with $n$ consistent with our bounds; and that attention graphons transfer to larger graph sizes with error decreasing in $n$.
\end{abstract}

\section{Introduction}
\label{sec:intro}

%\red{L: Here when we say transformers we still mean transformers applied to graphs, right? Might be worth making that clear. As we don't want someone to say we don't have experiments for sequential data.}
%Graph Transformers (GTs) \cite{dwivedi2021generalization,ying2021do,rampasek2022recipe}, based on sequence data transformers \cite{vaswani2017attention}, induce dense weighted graphs through attention scores: for an input graph with $n$ nodes, every attention head produces an
%$n\times n$ attention matrix that specifies a learned interaction pattern between all pairs of nodes.
%This raises a basic question that becomes unavoidable as graph sizes grow: \emph{how do these attention-induced graphs behave
%in the large-$n$ limit?}
%Do they converge to a stable continuum object, {potentially revealing useful low-dimensional structures}, or do they remain fundamentally finite-size and dataset-specific?
Graph Transformers (GTs) \citep{dwivedi2021generalization,ying2021do,rampasek2022recipe} 
adapt the self-attention mechanism of the Transformer architecture \citep{vaswani2017attention} to graph-structured inputs. 
For an input graph with $n$ nodes, each attention head produces an $n\times n$ matrix of 
learned pairwise interactions, which can be thought of as a dense weighted graph on the node set. This paper answers a fundamental 
question:
\vspace{-0.8em}
\begin{center}
    \textit{Do attention-induced graphs converge to a stable limit object as $n$ grows, or does the learned interaction pattern remain unstructured and size-dependent?}
    \vspace{-0.8em}
\end{center}
% \emph{Do attention-induced graphs converge to a stable limit object as $n$ grows, or does the learned interaction pattern remain unstructured and size-dependent?}

This question matters because if attention-induced graphs admit a stable limit, the interaction patterns learned by GTs can be compressed into a single, dataset-level low-dimensional object that simultaneously encodes inductive bias, supports size-agnostic comparison across graphs, and provides a theoretical basis for transferability across graph sizes.

%A natural approach for studying these questions is dense graph limit theory.
%For dense graphs, convergent sequences admit limit objects given by \emph{graphons}: measurable kernels on $[0,1]^2$, with
%convergence measured by the cut-distance $\delta_\square$ \citep{borgs2008convergent,lovasz2012large}.
%Graphons {can also be thought of as generative models for graphs of any size,}
%are invariant to node relabelings, i.e., their permutation, and control discrepancies over all node subsets
%making them well-suited for
%comparing attention patterns across graphs of different scales.
%At the same time, graphon estimation provides practical tools for recovering and comparing latent kernels from finite
%samples \citep{chan2014consistent,gao2015rate}.
We study these questions using dense graph limit theory. For dense graphs, convergent sequences admit limit objects given by \emph{graphons}--- symmetric measurable kernels on $[0,1]^2$---where convergence is characterized by the alignment of subgraph density profiles and, equivalently, by the vanishing of cut-norm discrepancies across all node bipartitions \citep{borgs2008convergent,lovasz2012large}. Graphons also serve as generative models for graphs of any size, making them well-suited for comparing attention patterns across different scales. More specifically, graphon estimation provides practical tools for recovering and comparing latent kernels from finite samples \citep{chan2014consistent,gao2015rate}.

%In this work, we link these two threads{---the interpretation of attention matrices as graph adjacencies, and the graphon toolkit---}to analyze  %Graph Transformer attention.
%attention in GTs; 
%we treat an attention matrix as a dense weighted graph and ask if it concentrates around a dataset-dependent
%``attention graphon'' as the graph size grows.
Equipped with the graphon toolkit, we formalize and study the question of whether attention matrices concentrate around a dataset-specific \emph{attention graphon} as graph size grows. Our contributions are:

\textbf{(C1) Graphon viewpoint for attention.} 
We formalize pre-softmax attention matrices as finite samples from an underlying kernel, and propose a canonicalize-then-block-average pipeline that produces a dataset-level attention graphon estimate. A paramount consequence of such perspective is \emph{transferability}. In graph machine learning (GML), graphon theory explains why GML models (e.g., graph neural networks) trained on moderate-size graphs can be transferred to a larger graph sampled from the same graphon, with a performance gap vanishing as both graphs grow \cite{ruiz20-transf}. Thus, an attention layer trained on small graph samples approximates its behavior on larger ones, with an error decaying with size. That gives a principled account of size generalization, suggesting that long-context behavior can be probed using small samples.
%We formalize attention matrices as samples from an underlying kernel up to
%relabeling, and we utilize a concrete estimator that produces a
%dataset-level attention graphon. 
%\red{L: We should be careful here as we are not the ones introducing the``sort-and-smooth'' approach, it has been around a while. M: the question is if it has been used with attention matrices? if not, we can say we link the two ?}

\textbf{(C2) Variance bounds.} We derive a worst-case variance bound for the deviation of attention graphs from their underlying graphon, and a sharper regularity-aware bound based on nonparametric graphon estimation. Together, these results ground cut-distance and cut-norm variance curves as principled diagnostics for whether attention admits a stable continuous description.

%We derive a worst-case bound for the variance of the deviation of {an attention matrix from its underlying graphon,}
%graphon samples 
%and a sharper regularity-aware proxy based on nonparametric graphon estimation.
%theory.
%These results {motivate employing}
%promote the utilization of 
%cut-distance variance curves as a diagnostic for understanding if graph attention admits a stable continuum
%description {in the form of a graphon}.

\textbf{(C3) Empirical validation.} We evaluate on multiple graph benchmarks and synthetic graphon families, showing that, for several of them, estimated attention graphons stabilize with graph size and that empirical variance decreases with $n$, consistent with the theory. For datasets admitting an attention graphon, we further show that the estimated kernel can be used to approximate attention on larger graphs, yielding transfer error curves that decrease with $n$ relative to the exactly computed attention matrices.

%We evaluate our approach on multiple graph benchmarks and synthetic graphon families. We show
%    that estimated attention graphons stabilize with size, and that
    %empirical cut distance variance 
%    {the empirical variance of the cut-distance from the sample to the graphon estimate}
    %typically 
%    decreases with
%    graph size, demonstrating the benefit of studying graph attention through the lens of graphons.

%Overall, the results presented in this paper provide a simple yet principled way to study how Graph Transformer attention behaves ``on the limit'' with respect to graph size: by turning attention matrices into estimated kernels and measuring their stability in a relabeling-invariant, size-agnostic metric.

\textbf{Notation.} 
We denote a simple undirected graph with $n = |V|$ nodes $G=(V,E)$, and its adjacency matrix
$A_G \in \{0,1\}^{n \times n}$, where $(A_G)_{ij}=1$ iff $(i,j)\in E$ and $(A_G)_{ii}=0$.
We also consider {weighted} graphs with adjacency 
%(weight) 
matrix
$A \in [0,1]^{n \times n}$.
Nodes are labeled as $[n] = \{1,\dots,n\}$, and their node features denoted $X \in \mathbb{R}^{n \times d}$. We use 
$\pi$ to express a permutation of $[n]$ with associated permutation matrix $M_\pi$.

\section{Background}\label{sec:preliminaries}
% \vskip -0.1in
% Our object of study is the attention score matrix produced by a GT layer. We view this matrix as the adjacency matrix of a dense weighted graph. 
Our object of study is the pre-softmax attention matrix produced by a GT layer. We view this matrix as the adjacency matrix of a dense weighted graph. 

\subsection{GTs and Attention Matrices as Weighted Graphs}
%As discussed in \Cref{sec:intro}, 
%We treat the attention score matrix produced by a GT head as a \emph{weighted dense graph} on the node set.
%This allows making use of dense-graph limit theory, graphons, and cut-distance to study the behavior of GTs.

%\textbf{Attention scores induce a weighted graph.}
Consider a GT operating on node representations $X^{(\ell)}\in\mathbb{R}^{n\times d}$ at layer $\ell$.
For head $h$, query, key, and value embeddings are respectively
$Q^{(\ell,h)} = X^{(\ell)}W_Q^{(\ell,h)}$, $K^{(\ell,h)} = X^{(\ell)}W_K^{(\ell,h)}$, and $V^{(\ell,h)} = X^{(\ell)}W_V^{(\ell,h)}$.
% The attention logits are
The pre and post-softmax attention matrices have entries respectively
\noindent
\begin{minipage}{0.48\textwidth}
\begin{equation}
\label{eq:attn-logits}
S^{(\ell,h)}_{ij} \;=\; \frac{\langle q^{(\ell,h)}_i,\,k^{(\ell,h)}_j\rangle}{\sqrt{d_k}} \;+\; b^{(\ell,h)}_{ij}
\end{equation}
\end{minipage}
\hfill
\begin{minipage}{0.48\textwidth}
\begin{equation}
\label{eq:attn-softmax}
P^{(\ell,h)}_{ij} \;=\; \frac{\exp(S^{(\ell,h)}_{ij})}{\sum_{t=1}^n \exp(S^{(\ell,h)}_{it})},
\end{equation}
\end{minipage}
where $b^{(\ell,h)}_{ij}$ are positional encodings (e.g., distance or spectral information), and the post-softmax attention matrix is its row-wise softmax. 

We interpret $S^{(\ell,h)}(G)$ as the adjacency matrix of a weighted directed graph on the node set $V$. In particular, for each fixed $(\ell,h)$, a Transformer layer defines a map $G \mapsto S^{(\ell,h)}(G)$ from an input graph sample to a dense weighted graph. \Cref{sec:object} motivates the choice of $S^{(\ell,h)}$ over $P^{(\ell,h)}$ and describes how its entries are mapped onto $[0,1]$.
% The pre-softmax attention matrix has entries
% \begin{equation}
% \label{eq:attn-logits}
% S^{(\ell,h)}_{ij} \;=\; \frac{\langle q^{(\ell,h)}_i,\,k^{(\ell,h)}_j\rangle}{\sqrt{d_k}} \;+\; b^{(\ell,h)}_{ij},
% \end{equation}
% % where $b^{(\ell,h)}_{ij}$ are positional encodings (e.g., distance or spectral information), and the attention scores are the row-wise softmax
% where $b^{(\ell,h)}_{ij}$ are positional encodings (e.g., distance or spectral information), and the post-softmax attention matrix is its row-wise softmax
% \begin{equation}
% \label{eq:attn-softmax}
% P^{(\ell,h)}_{ij} \;=\; \frac{\exp(S^{(\ell,h)}_{ij})}{\sum_{t=1}^n \exp(S^{(\ell,h)}_{it})}.
% \end{equation}
% We interpret $P^{(\ell,h)}(G)$ as the adjacency matrix of a weighted directed graph on the node set $V$. 
% We interpret $S^{(\ell,h)}(G)$ as the adjacency matrix of a weighted directed graph on the node set $V$. 
% %:every ordered pair $(i,j)$ has an edge of weight $P^{(\ell,h)}_{ij}$.
% % In particular, for each fixed $(\ell,h)$, a Transformer layer defines a map $G \mapsto P^{(\ell,h)}(G)$ from an input graph sample to a dense weighted graph.
% In particular, for each fixed $(\ell,h)$, a Transformer layer defines a map $G \mapsto S^{(\ell,h)}(G)$ from an input graph sample to a dense weighted graph. %\Cref{sec:object} explains why we analyze $S^{(\ell,h)}$ rather than $P^{(\ell,h)}$, and how its entries are mapped onto $[0,1]$.
% \Cref{sec:object} motivates the choice of $S^{(\ell,h)}$ over $P^{(\ell,h)}$ and describes how its entries are mapped onto $[0,1]$.

\textbf{Symmetric attention.}
%Classical graphon theory and the cut-metric are most commonly formulated for symmetric kernels, corresponding to undirected weighted graphs.
% While $P^{(\ell,h)}(G)$ is generally asymmetric and row-stochastic, symmetrization is well-motivated both theoretically---attention with tied query-key weights produces symmetric logits by construction \citep{kitaev2020reformerefficienttransformer}---and empirically, as learned symmetric attention patterns tend to be more efficient to train while keeping model's performance \citep{yang2024partsharedqk,courtois2024symmetric}. Symmetrizing attention is also a common simplification choice in theoretical analyzes of Transformers \citep{edelman2022inductive,ataee2023max,tian2023scan}. Here, we take a similar approach, working with the symmetrized matrix:
While $S^{(\ell,h)}(G)$ is generally asymmetric, symmetrization is well-motivated both theoretically---attention with tied query-key weights produces a symmetric pre-softmax attention matrix by construction \citep{kitaev2020reformerefficienttransformer}---and empirically, as learned symmetric attention patterns tend to be more efficient to train while retaining model performance \citep{yang2024partsharedqk,courtois2024symmetric}. Measured on trained models, the information discarded by symmetrization is limited, and negligible on some datasets (\Cref{tab:asym}). Symmetrizing attention is also a common simplification choice in theoretical analyses of Transformers \citep{edelman2022inductive,ataee2023max,tian2023scan}. Here, we take a similar approach, working with the symmetrized pre-softmax attention matrix:
%we map it to an undirected weighted adjacency via symmetrization:
\begin{equation}
\label{eq:sym-attn}
% A^{(\ell,h)}(G) \;\coloneqq\; \tfrac{1}{2}\bigl(P^{(\ell,h)}(G) + P^{(\ell,h)}(G)^\top\bigr)\in[0,1]^{n\times n},
\bar S^{(\ell,h)}(G) \;\coloneqq\; \tfrac{1}{2}\bigl(S^{(\ell,h)}(G) + S^{(\ell,h)}(G)^\top\bigr)\in\mathbb{R}^{n\times n}.
\end{equation}
Directional relationships become undirected ones, and pairwise affinity structure is preserved.
%This choice is a {simplifying reduction} that preserves the pairwise affinity structure while enabling direct use of the standard (symmetric) graphon framework and cut-distance.
%We note that one can work with asymmetric graphons, and we discuss this extension and its implications in Section~\ref{sec:discussion}.
This symmetrization places attention squarely in the undirected dense graph regime, where graphons are the natural limit objects.

\subsection{Graphons and Graph Distances}

To ask whether attention-induced graphs stabilize as their number of nodes grows, we require two ingredients: (1) a limiting object for sequences of dense graphs, and (2) a notion of distance between graphs under which the aforementioned sequences converge. 
%that does not depend on an arbitrary labeling (i.e., ordering) of nodes. 
Graphons, defined as symmetric measurable functions $W\colon [0,1]^2\to[0,1]$ \citep{lovasz2006limits},
%, as described below, 
are the natural limit object for dense symmetric graphs. To measure discrepancies between graphs, we use either the {cut distance}, when labelings are unknown or arbitrary, or the {cut norm}, when node correspondences are known.

%More specifically, \emph{graphon} is a symmetric measurable function $W\colon [0,1]^2\to[0,1]$ \citep{lovasz2006limits}.
%Graphons represent limits of convergent sequences of dense graphs.
%, and are identifiable up to
%measure-preserving relabelings of $[0,1]$. This non-identifiability matches the finite setting: node indices in an attention matrix are not intrinsic, so comparisons should not depend on node labeling.
%\textbf{Cut-norm and cut-distance.} 

\textbf{Cut-distance and cut-norm.} {The cut-distance is the metric under which graphs admitting graphon limits converge. For two graphs
$G_n$ and $G_m$, the cut-distance is defined via the cut-norm of their associated induced
graphons.
Given an $n$-node graph $G$ with adjacency matrix $A$, the induced graphon
$W_G : [0,1]^2 \to \mathbb{R}$ is defined as
\begin{equation}
\label{eqn:inducedgraphon}
W_G(u,v)
\;\coloneqq\;
\sum_{j=1}^n \sum_{k=1}^n
[A]_{jk}\,\mathbb{I}(u \in I_j)\,\mathbb{I}(v \in I_k),
\end{equation}
where $\{I_j\}_{j=1}^n$ is a partition of $[0,1]$ into equal-measure intervals and $\mathbb{I}$ is the indicator function.

For a kernel $K : [0,1]^2 \to [-1,1]$,e the cut-norm is defined as
\begin{equation}
\label{eq:cutnorm-graphon}
\|K\|_\square
\;\coloneqq\;
\sup_{S,T \subseteq [0,1]}
\left|
\int_{S \times T} K(u,v)\,\mathrm{d}u\,\mathrm{d}v
\right|.
\end{equation}

The cut-distance between $G_n$ and $G_m$ is then
\begin{equation}
\label{eqn:cut_dist_w}
\delta_{\square}(G_n, G_m)
\;\coloneqq\;
\inf_{\phi}
\bigl\|
W_{G_n} - W_{G_m}^{\phi}
\bigr\|_{\square},
\end{equation}
where the infimum is taken over all measure-preserving bijections
$\phi : [0,1] \to [0,1]$.
}

%\red{L: Here we can add a sentence saying something to the effect of ``with the appropriate labeling, graphs associated with the same graphon can also be shown to converge in the cut norm.'' This should help with the issue raised on Zoom earlier.}
While the cut-distance searches over all relabelings, if graphs sampled from the same graphon are compared under a labeling aligned with the graphon's latent node
ordering, their convergence may also be expressed directly in the cut-norm difference $\|W_{G_n} - W_{G_m}\|_\square$ \citep[Chapter~11]{lovasz2012large}.

\textbf{Graphons as generative models.} 
Beyond their graph limit interpretation, graphons are random graph models. This generative model perspective formalizes the idea that a finite graph is a discrete stochastic observation of an underlying kernel.
%In the weighted model $$,
To sample an $n$-node weighted graph, denoted $H(n,W)$, we first sample $n$ i.i.d. latent variables $x_1,\dots,x_n \sim \mathrm{Unif}[0,1]$ and set the edge weights as $W(x_i,x_j)$, yielding a complete graph.
To obtain undirected graphs $G(n,W)$, we instead sample edges
$(i,j) \sim \mathrm{Bernoulli}(W(x_i,x_j))$ for $i\neq j$ \citep{lovasz2012large}. From hereon, unless otherwise stated, we refer to $H(n, W)$ as $H$ and $G(n, W)$ as $G$.

\subsection{Pre-Softmax Attention Graphs}
\label{sec:object}
We build attention graphs from pre-softmax scores. Given the symmetrized pre-softmax attention matrix $\bar S^{(\ell,h)}(G)$ of \Cref{eq:sym-attn}, an increasing bijection $\rho\colon\mathbb{R}\to(0,1)$ applied entrywise, and a constant offset $c\in\mathbb{R}$, we define
\begin{equation}
\label{eq:attn-object}
A^{(\ell,h)}(G)\;\coloneqq\;\rho\bigl(\bar S^{(\ell,h)}(G)-c\bigr)\in(0,1)^{n\times n}.
\end{equation}
We take $\rho=\sigma$, the logistic sigmoid, and set $c$ to the average pre-softmax score over the graphs analyzed. This centers the entries in the nearly linear range of $\sigma$. Other admissible choices of $\rho$ are discussed in Appx~\ref{app:object}. From here on, $A^{(\ell,h)}(G)$ is the attention graph whose limit we study.

Because $\rho$ is invertible, $A^{(\ell,h)}$ loses no information about the post-softmax attention. This allows us to recompute attention from estimated kernels in later sections.
\begin{proposition}
\label{prop:invertible}
Let $\rho\colon\mathbb{R}\to(0,1)$ be a bijection, $c\in\mathbb{R}$, and $A=\rho(\bar S-c)$. Then $\mathrm{softmax}(\bar S)=\mathrm{softmax}\bigl(\rho^{-1}(A)\bigr)$, and the post-softmax attention is recovered from $A$ independently of $c$.
\end{proposition}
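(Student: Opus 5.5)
The argument rests on two facts: $\rho$ can be inverted entrywise, and the row-wise softmax does not change when a row is shifted by a constant.

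\emph{Step 1: inversion.} Since $\rho\colon\mathbb{R}\to(0,1)$ is a bijection, it has an inverse $\rho^{-1}\colon(0,1)\to\mathbb{R}$. Every entry of $A=\rho(\bar S-c)$ lies in $(0,1)$, so $\rho^{-1}(A)$ is well defined entrywise, and
\[
\rho^{-1}(A)_{ij}=\rho^{-1}\bigl(\rho(\bar S_{ij}-c)\bigr)=\bar S_{ij}-c
\]
for all $i,j$. Hence $\rho^{-1}(A)=\bar S-c\,\mathbf{1}\mathbf{1}^\top$.

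\emph{Step 2: shift invariance of the row-wise softmax.} For any matrix $M$ and scalar $c$, I will use the definition in \Cref{eq:attn-softmax}:
\[
\mathrm{softmax}(M-c\mathbf{1}\mathbf{1}^\top)_{ij}
=\frac{e^{M_{ij}-c}}{\sum_{t}e^{M_{it}-c}}
=\frac{e^{-c}\,e^{M_{ij}}}{e^{-c}\sum_t e^{M_{it}}}
=\mathrm{softmax}(M)_{ij}.
\]
The step that matters is that the same $c$ is subtracted from every entry of a row, so the factor $e^{-c}$ cancels. Applying this with $M=\bar S$ and combining with Step 1 gives $\mathrm{softmax}(\rho^{-1}(A))=\mathrm{softmax}(\bar S)$.

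\emph{Step 3: independence of $c$.} The expression $\mathrm{softmax}(\rho^{-1}(A))$ uses only $A$ and the fixed map $\rho$; the offset $c$ never appears in it. So the recovery map is the same whatever $c$ was used, and by Step 2 it always returns $\mathrm{softmax}(\bar S)$.

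If ``post-softmax attention'' is meant as $\mathrm{softmax}(\bar S)$, the symmetrized logits, the proof is complete. If it is meant as $\mathrm{softmax}(S)$ for the original asymmetric $S$, that cannot be recovered from the symmetrized $\bar S$. I will therefore state explicitly that the claim refers to the attention induced by $\bar S$, as in the displayed identity.

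There is no real obstacle. The only points needing care are that the range $(0,1)$ of $\rho$ makes $\rho^{-1}$ defined on every entry of $A$, and that $c$ is a scalar, constant across each row, so the cancellation in Step 2 applies.
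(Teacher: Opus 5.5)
Your proof is correct and follows the same argument as the paper: entrywise invertibility of $\rho$ gives $\bar S=\rho^{-1}(A)+c$, and the row-wise softmax is invariant under adding a constant to every entry of a row, so $c$ drops out. Your clarification that what is recovered is $\mathrm{softmax}(\bar S)$, not $\mathrm{softmax}(S)$ for the unsymmetrized logits, is a fair reading of the statement.
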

\begin{proof}
Invertibility gives $\bar S=\rho^{-1}(A)+c$, and the softmax is shift invariant, that is, $\mathrm{softmax}(x+t\mathbf{1})=\mathrm{softmax}(x)$ for every $t\in\mathbb{R}$.
\end{proof}

\section{Theory: Graphon Sample Concentration}\label{sec:theory}
{
%Our empirical objects are 
% We work with attention score matrices produced by a trained GT, one matrix per input graph and attention head. Since these matrices
We work with the attention graphs of \Cref{eq:attn-object} produced by a trained GT, one matrix per input graph and attention head. Since these matrices
vary in dimension with the number of nodes of the input graphs, their comparison across input samples
requires a common, size-independent representation, achieved by embedding such graphs into a common function space.
%, we rely on graphon theory 

%equipped with the cut-metric,
%the standard notion of proximity for dense graph limits.

}
%Our empirical objects are attention score matrices produced by a trained Graph Transformer, one matrix per input graph and head.
%These matrices live in different dimensions as the number of nodes varies across inputs, so to state a scaling law we need a common space in which objects of different sizes can be compared.
%Graphon theory provides this space; it embeds every finite dense weighted graph into a function space, and equips that space with the cut metric, which is the standard notion of proximity for dense-graph limits.
%The results in this Section are therefore stated for generic graphon sampling and serve as \emph{calibration}: if attention-induced graphs behave like samples from a stable kernel, then their cut-metric deviations should concentrate as the graph size $n$ increases.

\textbf{Embedding attention graphs in graphon space.}
{
Given a weighted symmetric adjacency matrix $A \in [0,1]^{n \times n}$ and its associated $n$-node weighted graph $H$, we consider its induced graphon, $W_H$, as defined in \Cref{eqn:inducedgraphon}. This construction places finite matrices of different sizes and continuous kernels $W$ in the same space, without introducing error. Without any assumptions on the node labeling in $A$, or on how it relates to the node labeling of the graphon nodes, we measure the discrepancy between them using the cut-distance $\delta_\square$ \eqref{eqn:cut_dist_w}, and write $\delta_\square(W_n,W)$ as shorthand for $\delta_\square(W_H,W)$.
% \red{This is not standard. We should write $\delta_\square(G,W)$, i.e., compare graph and graphon, not adjacency and graphon. Also, you introduced weighted graphs $H(n,W)$ in the previous section but never used this notation again. I also imagine you don't want to write the $(n,W)$ every time, so might be worth either saying you will drop it for ease of notation, or just defining weighted graphs as $H$.} 

%With this
%otation, concentration statements of the form
%$\delta_\square(A,W) \to 0$ as $n \to \infty$ are well-defined. 
}
%A weighted graph with adjacency matrix $A\in[0,1]^{n\times n}$ can be canonically associated with a step-function graphon
%$W_A\colon [0,1]^2\to[0,1]$ by partitioning $[0,1]$ into $n$ equal intervals and setting $W_A$ to be constant on each
%cell $I_i\times I_j$ with value $A_{ij}$.
%This construction is not an estimator; it is an \emph{embedding} that lets us view finite $n\times n$ matrices and continuum kernels $W$ as elements of the same space.
%We use the standard graphon cut distance $\delta_\square(\cdot,\cdot)$ \citep{lovasz2012large,borgs2008convergent}, and write
%$\delta_\square(A,W)$ as shorthand for $\delta_\square(W_A,W)$.
%With this notation, statements such as ``$\delta_\square(A,W)$ concentrates'' are well-posed even when $n$ varies, and they directly correspond to the empirical question we test for attention matrices.

%The results in this section are stated for generic graphon sampling, and serve as
%a calibration principle: if attention-induced graphs behave as samples from a
%stable kernel, then their cut-metric deviations should concentrate as the graph
%size $n$ increases.

\textbf{Testing the graphon hypothesis.} Embedding attention graphs into graphon space does not by itself imply they were generated by a graphon. To test this hypothesis, we rely on classical graphon sampling results, which characterize how far a finite sample $H(n,W)$ can deviate from W.
% its generating kernel $W$ in cut-distance. 
%% Specifically, we first derive a variance upper bound from the graphon sampling lemma \citep[Lemma~10.16]{lovasz2012large}. If attention-induced graphs at size $n$ behave as i.i.d.\ samples from a stable kernel, their empirical cut-distance variance should fall below this bound. \blue{C: This part seems to be misplaced.}

\subsection{A Universal Variance Bound}
\label{sec:worstcase}

We begin with a %worst-case 
concentration bound for graphon samples that
%, though very loose, 
requires
no structural assumptions.  Its generality comes at the cost of a slow $1/\log n$ rate, which does not explain the much faster concentration we observe for learned attention.

%This is a corollary of classical graphon sampling results
%Based on well-established results for sampling over graphons 
%\citep[Lemma~10.15]{lovasz2012large}.
%, we first devise the most general bound to the concentration of samples from a graphon.

% The bound presented in \Cref{thm:bound_1} is a direct corollary of the second sampling lemma for graphons \citep{lovasz2012large}.
% It is universal, as no assumptions\textcolor{red}{needs to be softened and verified with results and referred to them, also need to say what implication / take home message they it tells us: but too slow to explain the sharper empirical concentration we observe for learned attention.}

\begin{theorem}[Cut-distance variance]
\label{thm:bound_1}
Let $W\in\ccalW_0$ be a graphon and let $
%G \sim
% H \sim
H(n,W)$ be a weighted graph sampled from it.
Then
\begin{equation}
\Var\bigl(\delta_\square(W_n, W)\bigr) \;=\; \mathcal{O}\!\left(\tfrac{1}{\log n}\right),
\end{equation}
where $W_n$ is the graphon induced by $H(n, W)$. Proof and omitted assumptions are in Appx~\ref{app:proof_worstcase}.
\end{theorem}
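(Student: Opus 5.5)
The plan is to bound the variance by the second moment and control that with a high-probability sampling estimate. Write $X_n \coloneqq \delta_\square(W_n,W)$. Since $W$ and $W_n$ both take values in $[0,1]$, we have $0\le X_n\le 1$ deterministically, and
\[
\Var(X_n)\;\le\;\mathbb{E}[X_n^2]\;\le\;\mathbb{E}[X_n],
\]
because $X_n^2\le X_n$ on $[0,1]$. So it suffices to show $\mathbb{E}[X_n]=\mathcal{O}\bigl((\log n)^{-1/2}\bigr)$; this already gives $\mathbb{E}[X_n^2]=\mathcal{O}(1/\log n)$ directly. I would work with $\mathbb{E}[X_n^2]$ so that the square-root rate of the sampling lemma turns into the claimed $1/\log n$ rate.

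The key input is the second sampling lemma for graphons \citep[Lemma~10.16]{lovasz2012large}. For the weighted sample $H(n,W)$ it states that, with probability at least $1-\exp(-n/(2\log n))$,
\[
\delta_\square\bigl(H(n,W),W\bigr)\;\le\;\frac{C}{\sqrt{\log n}},
\]
with an absolute constant ($C=20$ in the book's form). This is uniform over $W\in\ccalW_0$, so no regularity is needed; that uniformity is exactly why the rate is only logarithmic. The ``omitted assumptions'' are presumably just that $W$ is $[0,1]$-valued and that $n$ is large enough (e.g.\ $n\ge 2$) for the lemma to apply.

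Let $E_n$ be the good event above. Split the second moment on $E_n$ and its complement, using $X_n\le 1$ on the complement:
\[
\mathbb{E}[X_n^2]\;\le\;\frac{C^2}{\log n}\;+\;\Pr(E_n^c)\;\le\;\frac{C^2}{\log n}\;+\;e^{-n/(2\log n)}.
\]
The exponential term is $o(1/\log n)$, so $\Var(X_n)\le\mathbb{E}[X_n^2]=\mathcal{O}(1/\log n)$.

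The main obstacle is checking that Lemma~10.16 applies verbatim to the weighted model $H(n,W)$ with the induced step graphon $W_n$ of \eqref{eqn:inducedgraphon}. The lemma is stated for $\mathbb{H}(n,W)$, and $\delta_\square$ is invariant under relabeling, so the random order of the $x_i$ does not matter. I would also confirm that the book's convention on the diagonal entries $W(x_i,x_i)$ changes $\delta_\square$ by at most $1/n$, which is absorbed into the bound. If only the first sampling lemma, an expectation bound, were available, I would fall back on $\Var\le\mathbb{E}[X_n]$ together with $\mathbb{E}[X_n]=\mathcal{O}(1/\sqrt{\log n})$. That would give only a $1/\sqrt{\log n}$ rate, so the tail form of the lemma is what is needed here.
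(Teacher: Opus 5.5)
Your proof is correct and follows the paper's route. You bound the variance by $\mathbb{E}[X_n^2]$ and split on the good event of the second sampling lemma (Lemma~10.16, constant $20$). Using $X_n\le 1$ on the complement then gives $\mathrm{Var}(X_n)\le 20^2/\log n + e^{-n/(2\log n)}=\mathcal{O}(1/\log n)$, exactly as the paper does. One correction: your opening claim that $\mathbb{E}[X_n]=\mathcal{O}((\log n)^{-1/2})$ ``already gives'' $\mathbb{E}[X_n^2]=\mathcal{O}(1/\log n)$ is false in general, as you yourself note at the end, but the tail-split argument you actually carry out does not depend on it.
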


The bound in Thm.~\ref{thm:bound_1} is a corollary of the second sampling lemma for graphons \citep{lovasz2012large}. It is universal, since no assumptions are made on the graphon, and it provides a natural baseline to measure the closeness of our samples to the graphon that generated them.
%, with an asymptotic convergence. 
However, 
%preliminary experiments on a sparse SBM showed us that 
this bound is not useful in practice; it decays with $(\log n)^{-1}$, which is {astronomically} loose. If we were to compare empirical sample variances to this bound, %empirical variance is always smaller than its theoretical counterpart, 
they would always be smaller even for extremely large graph samples known not to come from a graphon.

\subsection{A Regularity-Aware Variance Bound}
\label{sec:regularity}

{While the cut-distance is invariant to arbitrary relabelings, the conservativeness of Thm.~\ref{thm:bound_1} is a direct consequence of this relabeling invariance. When the underlying graphon has
additional regularity, such as H\"older or Lipschitz continuity, sharper
rates can be obtained under {aligned representations} which restrict
comparisons to graph labelings compatible with the graphon's latent ordering on
$[0,1]$. This alignment trades full permutation invariance for substantially
tighter convergence guarantees \citep{gao2015rate,ruiz2021transferability}. 
Switching to this more structured regime, we rely on nonparametric
graphon estimation rates under H\"older regularity to derive a more useful variance bound
\citep{chan2014consistent,gao2015rate}. Note that in contrast to the previous result, which controls fluctuations in cut-distance, the following bound is stated in terms of the cut-norm, reflecting
the use of aligned representations.
}
\begin{theorem}[Regularity-aware cut-{norm} variance]
\label{thm:bound_2}
Fix $\alpha>0$ and $M>0$, and assume $W\in\ccalF_\alpha(M)$ (a H\"older class; defined in Appx~\ref{app:holder}).
%Let $\theta_{ij}=
%Define $W(x_i,x_j)$ 
For latent variables $(x_1,\dots,x_n)\sim \ccalP_X$, let $H(n,W)$ be a weighted graph sample.
With block resolution $k = \lceil n^{1/(\min(\alpha,1)+1)}\rceil$, there exists $C>0$, depending on $M$, such that
\begin{equation}
\label{eq:regularity_proxy}
\Var\bigl(\|W_n-W\|_\square\bigr) = \ccalO\left(n^{-\frac{2\min(\alpha, 1)}{(\min(\alpha, 1)+1)}}\right),
\end{equation}
where $W_n$ is the graphon induced by $H(n,W)$.
Proof and omitted assumptions are in Appx~\ref{app:holder}.
\end{theorem}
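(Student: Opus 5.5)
The plan is to bound the variance by the second moment, $\Var(\|W_n-W\|_\square)\le \mathbb{E}\bigl[\|W_n-W\|_\square^2\bigr]$, and then bound the second moment by a mean-squared error. Since $|\int_{S\times T}K|\le \int_{[0,1]^2}|K|\le \|K\|_{L^2}$ for every measurable $S,T$, we have $\|K\|_\square\le\|K\|_{L^1}\le\|K\|_{L^2}$. Hence
\[
\Var\bigl(\|W_n-W\|_\square\bigr)\;\le\;\mathbb{E}\,\|W_n-W\|_{L^2}^2 .
\]
So it suffices to prove an $L^2$ rate of order $n^{-2\beta/(\beta+1)}$ with $\beta=\min(\alpha,1)$. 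Only the Lipschitz-type part of the H\"older condition matters, since $\ccalF_\alpha(M)$ with $\alpha\ge 1$ still gives $|W(x,y)-W(x',y')|\le M(|x-x'|+|y-y'|)^{\beta}$ up to constants. I will therefore work with exponent $\beta$ throughout.

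In the aligned representation, the nodes are ordered by their latents $x_{(1)}\le\dots\le x_{(n)}$. The object $W_n$ is then read as the block-averaged induced graphon at resolution $k$, which is the quantity the block resolution in the statement refers to. The error splits into two terms, $\|W_n-W\|_{L^2}\le \|W_n-\bar W_k\|_{L^2}+\|\bar W_k-W\|_{L^2}$, where $\bar W_k$ is the $k\times k$ block average of $W$ on the equipartition of $[0,1]$.

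The \emph{approximation term} uses H\"older continuity directly. On each block of side $1/k$ the oscillation is at most $M(2/k)^\beta$, so $\|\bar W_k-W\|_{L^2}^2\lesssim M^2k^{-2\beta}$.

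The \emph{sampling/alignment term} compares block averages of the entries $W(x_{(i)},x_{(j)})$ with block averages of $W$ over the cells $I_a\times I_b$. The discrepancy comes from the displacement between the order statistics and the grid points, $|x_{(i)}-i/n|$. Standard uniform order-statistic bounds give $\mathbb{E}\,|x_{(i)}-i/n|^2\le 1/(4n)$ uniformly, and the DKW inequality gives a sup-norm version of order $n^{-1/2}$. Combining these with H\"older continuity yields an error of order $M^2(k/n)^{2\beta}$-type after averaging within blocks, where each block contains about $n/k$ nodes, plus a counting error from unequal block occupancy of order $k/n$.

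Balancing $k^{-2\beta}$ against the sampling error with $k=\lceil n^{1/(\beta+1)}\rceil$ gives $k^{-2\beta}=n^{-2\beta/(\beta+1)}$. The chosen $k$ then makes both terms of order $n^{-2\beta/(\beta+1)}$, which mirrors the $n^{-2\alpha/(\alpha+1)}$ rate of \citet{gao2015rate}. The constant $C$ depends only on $M$, plus absolute constants.

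The main obstacle is the sampling/alignment term. A naive use of $|x_{(i)}-i/n|\sim n^{-1/2}$ gives only $n^{-\beta}$, not $n^{-2\beta/(\beta+1)}$. I would first try to import the oracle-ordering block-estimation analysis of \citet{gao2015rate} and \citet{chan2014consistent}, treating the aligned sample as a noiseless observation model whose error comes only from latent positions. There, block averaging over about $(n/k)^2$ entries reduces the variance contribution, and the bias is controlled at scale $1/k$. If the sharp matching fails, the fallback is to absorb the assumption into the ``omitted assumptions'' of Appx~\ref{app:holder}, namely that the latent design $\ccalP_X$ is sufficiently regular, for example quasi-uniform with spacing of order $1/n$. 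Under that assumption the alignment error is $O(n^{-\beta})$ and is dominated by the approximation term.
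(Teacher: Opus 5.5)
Your reduction $\mathrm{Var}\le\mathbb{E}\|\cdot\|_\square^2\le\mathbb{E}\|\cdot\|_{L^2}^2$ and your approximation term are fine. The sampling/alignment term, however, is a genuine gap, and it is not an artifact of a naive estimate.

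You compare the latent-sorted block estimate with the continuum $W$ on the fixed cells $I_a\times I_b$, so the error is driven by the displacements $x_{(i)}-i/n$. These are of order $n^{-1/2}$. Across the $n/k$ consecutive order statistics of one block they act as an essentially common shift: roughly $n^{-1/2}$ times a Brownian bridge, which varies only by about $(nk)^{-1/2}$ inside a block. Moreover, for your $\beta=\min(\alpha,1)\le 1$ the block side $1/k\le n^{-1/2}$ is no larger than the shift. Averaging the $(n/k)^2$ entries of a block therefore removes nothing.

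Concretely, take $W(x,y)=\tfrac12\bigl(f(x)+f(y)\bigr)$ with $f$ a lacunary Weierstrass-type $\beta$-H\"older function, rescaled into $[0,1]$. The error kernel then has the form $\tfrac12\bigl(g(u)+g(v)\bigr)$, its cut norm is at least $\tfrac14\|g\|_{L^1}$, and $\|g\|_{L^1}$ is of order $n^{-\beta/2}$. Hence $\mathbb{E}\|W_n-W\|_\square^2\gtrsim n^{-\beta}$, which exceeds $n^{-2\beta/(\beta+1)}$ for every $\beta<1$. Your route cannot reach the claimed rate in your formulation.

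Importing \citet{gao2015rate} does not help. Their rate concerns recovering $\theta_{ij}=W(x_i,x_j)$ at the sampled latents. The $(k/n)^2$ term that block averaging controls there comes from independent Bernoulli edge noise, which $H(n,W)$ does not have, so there is nothing to balance against $k^{-2\beta}$. Your fallback $|x_{(i)}-i/n|=O(1/n)$ would make the argument go through, but it contradicts the i.i.d.\ latent sampling that defines $H(n,W)$. Also, for non-uniform $\mathcal{P}_X$ with c.d.f.\ $F$, the sorted sample tracks $W(F^{-1}(u),F^{-1}(v))$ rather than $W$, so your second moment would not even vanish.

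The missing idea is the choice of comparison target, which is how the paper's proof avoids this term entirely. In Appendix~\ref{app:holder} the quantity bounded is $\|\hat\theta-\theta\|_\square$: the block estimator, read back on $[n]\times[n]$, against the sampled kernel $\theta_{ij}=W(x_i,x_j)$ itself. No global displacement then enters, only the latent spread inside each block.

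The paper imposes this through Assumption~\ref{asm:canon}: every block has latent diameter at most $c_1/k$ with probability at least $1-e^{-c_2 n}$. This is a relative condition, compatible with random latents; under oracle sorting of uniform latents it only asks that $n/k$ consecutive order statistics span $O(1/k)$. On that event, Lemma~\ref{lem:holder-bias} gives $|\hat\theta_{ij}-\theta_{ij}|\le 2M(c_1/k)^{\beta}$ for all $i,j$. Hence $\|\hat\theta-\theta\|_\square^2\le\|\hat\theta-\theta\|_2^2\lesssim k^{-2\beta}\le n^{-2\beta/(\beta+1)}$. Off the event the squared cut norm is at most $1$, contributing $e^{-C'n}$. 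The rate is pure bias at the prescribed $k$, with no sampling term and no balancing.
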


%\red{L: The next 3 paragraphs are very nice but very long, they can be shrunk substantially.}

%In \Cref{thm:bound_2}, we establish a regularity-aware concentration bound for the cut-metric variance between a $k$-block approximation estimator $\hat{\theta}$ and the true graphon $W$. Unlike the worst-case bound in Theorem 4.1, which converges at the slow rate of $(\log n)^{-1}$, this result exploits the H\"older regularity of the underlying graphon to achieve substantially faster convergence, {though at the cost of requiring label alignment}. Specifically, when the graphon $W$ belongs to the H\"older class $\ccalF_\alpha(M)$ and the number of blocks scales as $k \asymp n^{1/(\min(\alpha,1)+1)}$, the variance decays as $\ccalO(n^{-2\alpha/(\alpha+1)} + \frac{\log n}{n})$. This polynomial rate represents a significant improvement over the logarithmic rate of \Cref{thm:bound_1}, demonstrating that structural assumptions on the graphon translate directly into tighter statistical guarantees.

In Thm.~\ref{thm:bound_2}, we establish a regularity-aware concentration bound for the variance of the cut norm difference between the graphon induced by $H(n,W)$, $W_n$, and the true graphon $W$. Unlike the bound in Thm.~\ref{thm:bound_1}, which decays as $(\log n)^{-1}$, 
% in this result 
faster convergence is possible due to H\"older regularity. Specifically, if $W \in \ccalF_\alpha(M)$, then the variance vanishes at a polynomial rate. %The price to pay is label alignment.

The nonparametric rate $n^{-2\min(\alpha,1)/(\min(\alpha,1)+1)}$ balances two errors in the resolution $k$. Finer blocks reduce the error of approximating a H\"older kernel by block averages, while coarser blocks average more node pairs and are less noisy. The dependence on $\min(\alpha,1)$ reflects a structural limit of piecewise-constant estimators, which cannot exploit smoothness beyond Lipschitz without local polynomial corrections \citep{olhede2014network}, so the rate stabilizes at $n^{-1}$ for $\alpha\ge 1$. The label-alignment requirement is provided by our canonicalization procedure (Sec.~\ref{sec:estimator}), which follows the SAS estimator \citep{chan2014consistent} and recovers the latent node ordering consistently under strict monotonicity of the degree function.

\begin{comment}
\begin{figure*}[t]
\centering
\includegraphics[width=0.9\linewidth]{Figures/Rebuttal/framework.png}
\caption{\textbf{Visual explanation of the cut distance $\delta_\square$ and graphon computation.} \textit{Top (cut distance)}: The cut distance roughly measures the largest discrepancy over node subsets $S, T$. \textit{Bottom (graphon computation)}: From a dense, weighted attention graph with $n = 8$, the matrix is symmetrized, canonicalized via degree sorting, block-averaged, and finally used to estimate a dataset-level graphon.}
\label{fig:pipeline}
\end{figure*}
\end{comment}

\paragraph{Remark: Choice of attention graph.}
The concentration results in this section are informative when the analyzed matrix stays bounded as $n$ grows and its limit depends on what the model learns. The attention graph of \Cref{eq:attn-object} satisfies both. Its entries lie in $(0,1)$ by construction, and pre-softmax scores carry no normalization over the nodes, so the limit is not forced to be trivial. The natural alternatives each miss one of these properties. Row-stochasticity makes the graphons induced by $P^{(\ell,h)}$ converge to the zero graphon for every model and dataset. The rescaled matrix $nP^{(\ell,h)}$ avoids this, but its entries can grow with $n$ (c.f. Appx \ref{app:object}).

\section{Practice: Attention Graphons}
\label{sec:method}

\begin{comment}
The theory in Sec.~\ref{sec:theory} is stated for generic graphon sampling: it characterizes how dense weighted graphs concentrate in cut-distance around an underlying kernel as $n$ grows.
% In this section, we connect that viewpoint to GTs by turning per-graph attention scores into (i) a \emph{comparable} dense weighted graph representation across inputs and sizes, and (ii) a \emph{dataset-level} attention-graphon estimate that we can visualize and use in our analysis.
In this section, we connect that viewpoint to GTs by turning per-graph pre-softmax attention matrices into (i) a \emph{comparable} dense weighted graph representation across inputs and sizes, and (ii) a \emph{dataset-level} attention-graphon estimate that we can visualize and use in our analysis.
Concretely, we introduce notation for attention-induced weighted graphs, describe the canonicalize-then-block-average estimator used to produce attention graphons, and define the cut-norm variance statistic that we track across 
% graph 
sizes in Sec.~\ref{sec:experiments}.
\end{comment}
We now apply the theory of \Cref{sec:theory} to GTs. We represent each input's pre-softmax attention as a graph that is comparable across inputs and sizes, and estimate a dataset-level attention graphon from these graphs. We also define the cut-norm variance statistic tracked across graph sizes in \Cref{sec:experiments}.

%\red{L: The below has already been explained earlier, no? Can we avoid the repetition either here or back there?}
%\textbf{Attention-induced weighted graphs.}
%Fix a trained Graph Transformer, a layer $\ell$, and head $h$.
%For an input graph $G$ with $n$ nodes, let $P^{(\ell,h)}(G)\in[0,1]^{n\times n}$ denote the post-softmax attention matrix, where $P^{(\ell,h)}_{ij}(G)$ is the attention mass assigned by node $i$ to node $j$.
%We will treat this matrix as the adjacency of a dense weighted graph induced by attention.

\subsection{Does GT Attention Converge to a Graphon?}
\label{sec:attn-hyp}

% The central question of this paper is whether the dense weighted graphs induced by attention scores exhibit a stable continuum structure as the number of nodes grows.
The central question of this paper is whether the dense weighted graphs induced by pre-softmax attention exhibit a stable continuum structure as the number of nodes grows.
%Rather than assuming a generative model for the \emph{input} graphs, 
We ask whether, for a fixed trained model and a fixed head, the \emph{learned attention adjacencies} can be well-approximated 
%(in a relabeling-invariant sense) 
by sampling from an underlying kernel.

%\red{L: The remainder of the subsection (below) is excellent. It'd be nice if we could get to it faster. Maybe by cutting a bit the intro to Sec. 5 and the attention-induced weighted graphs paragraph.}

\textbf{A graphon proxy model for attention.} %(for calibration, not as an assumption)
Given a trained GT, a layer $\ell$, and head $h$, we say that the attention-induced adjacencies admit a graphon description on a given dataset (or size-controlled family) if there exists a bounded symmetric kernel
$W^{(\ell,h)}\colon[0,1]^2\to[0,M]$ such that attention matrices for graphs with $n$ nodes behave like random samples from $W^{(\ell,h)}$, up to relabeling.
Concretely, the proxy model posits latent variables $x_1,\dots,x_n \overset{\mathrm{iid}}{\sim}\mbP_X$ and an underlying affinity matrix $\theta_{ij} \;=\; W^{(\ell,h)}(x_i,x_j)$,
\begin{comment}
\begin{equation}
\label{eq:attn-kernel}
\theta_{ij} \;=\; W^{(\ell,h)}(x_i,x_j),
\end{equation}
\end{comment}
with the observed attention adjacency $A^{(\ell,h)}(G)$ being a noisy observation of $\theta$.
Importantly, this proxy model makes no claim about how the \emph{input graphs} are generated; it only formalizes a hypothesis about the \emph{learned attention graphs} produced by the trained network.
This hypothesis is natural for a trained model with frozen weights, since the pre-softmax score of a pair of nodes depends only on their representations. Hence $A^{(\ell,h)}_{ij}=W^{(\ell,h)}(\omega_i,\omega_j)$ for a fixed symmetric kernel $W^{(\ell,h)}$ on node descriptors $\omega_i$, comprising features and structural role, and whether the resulting graphs converge as $n$ grows is what our diagnostic assesses.

\textbf{Practical implication.}
If attention-induced graphs admit such a stable kernel description, then the cut-distance deviations of $A^{(\ell,h)}(G)$ from a dataset-level kernel estimate should shrink with $n$.
This is precisely the behavior predicted by the cut-distance concentration results in Sec.~\ref{sec:theory}, which motivates our empirical diagnostics based on $\|\cdot\|_\square$ and its variance across graph sizes.

\begin{figure*}[t]
\centering
\begin{subfigure}[t]{0.24\textwidth}
  % \centering\includegraphics[width=\linewidth]{Figures/variance_PROTEINS_2.png}
  % \caption{\centering{\textbf{PROTEINS} %\\ ($N\in\{40,128,256,650\}$)
  \centering\includegraphics[width=\linewidth]{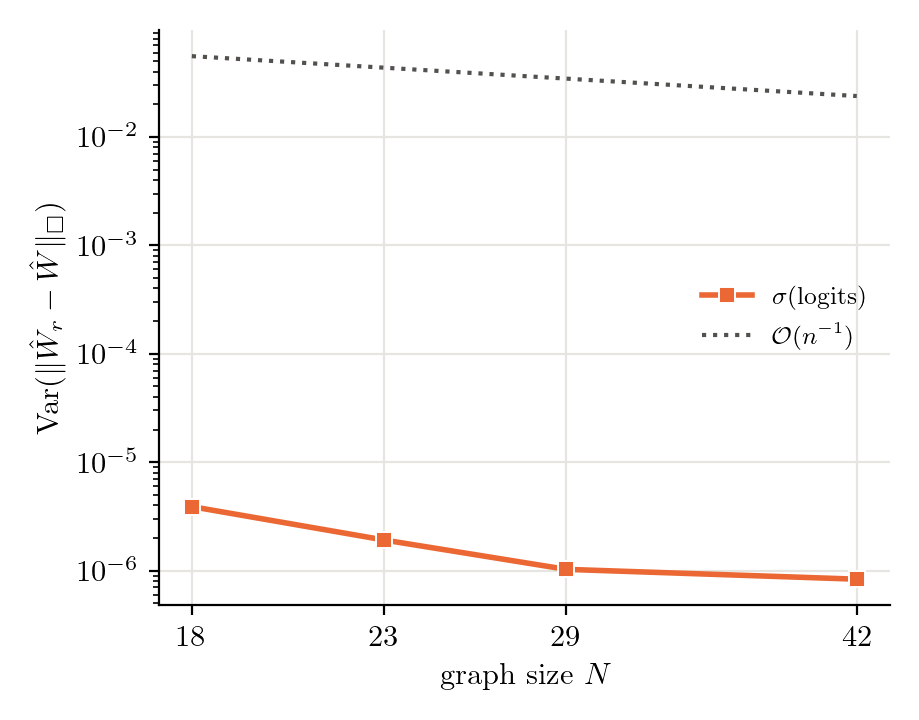}
  \caption{\centering{\textbf{NCI109} %\\ ($N\in\{16, 32, 64, 111\}$)
  }
  }
\end{subfigure}
\begin{subfigure}[t]{0.24\textwidth}
  % \centering\includegraphics[width=\linewidth]{Figures/variance_REDDIT-MULTI-5K_2.png}
  % \caption{\centering{\textbf{REDDIT}\textbf{-}\textbf{MULTI}\textbf{-}\textbf{5K}% \\ ($N\in\{128,256,512,1024\}$)
  \centering\includegraphics[width=\linewidth]{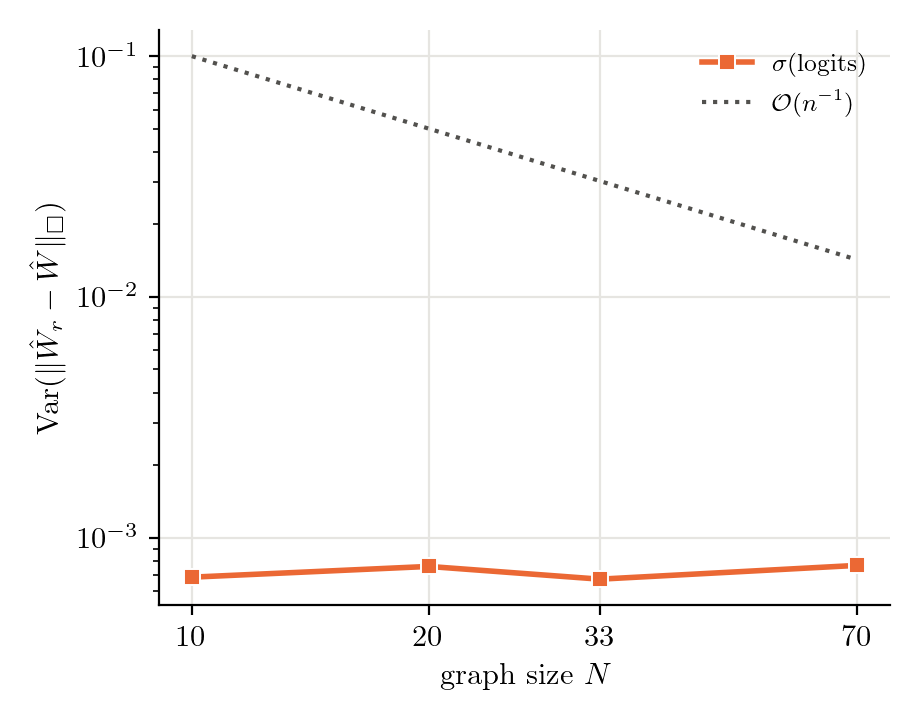}
  \caption{\centering{\textbf{PROTEINS} %\\ ($N\in\{40,128,256,650\}$)
  }}
\end{subfigure}
\begin{subfigure}[t]{0.24\textwidth}
  % \centering\includegraphics[width=\linewidth]{Figures/variance_ModelNet10_2.png}
  \centering\includegraphics[width=\linewidth]{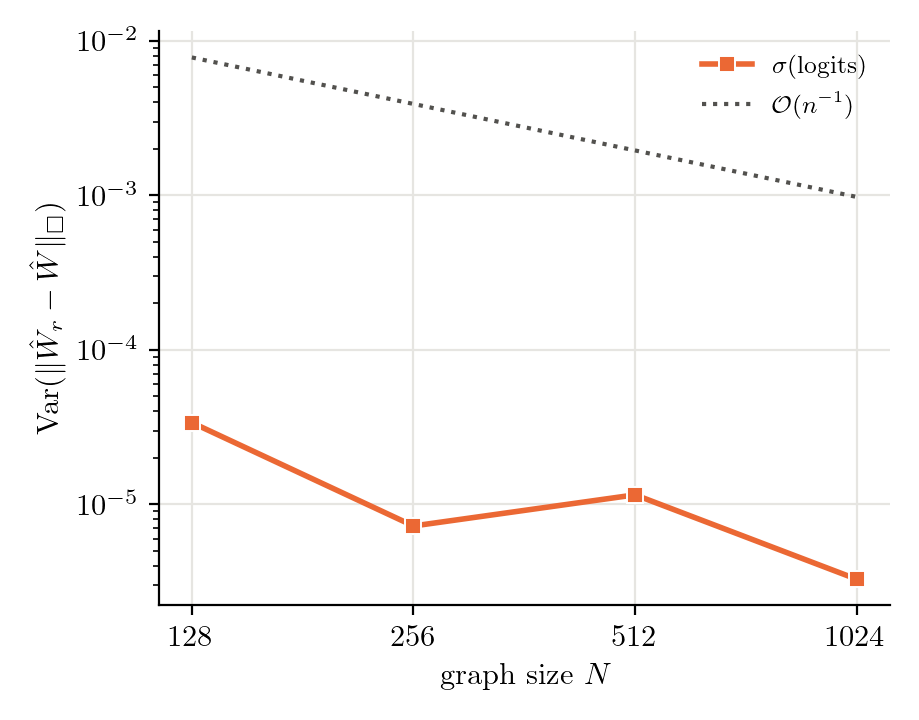}
  \caption{\centering{\textbf{ModelNet10} %\\ ($N\in\{128,256,512,1024\}$)
  }}
\end{subfigure}
\begin{subfigure}[t]{0.24\textwidth}
  % \centering\includegraphics[width=\linewidth]{Figures/variance_NoisyCSBM_2.png}
  % \centering\includegraphics[width=\linewidth]{Figures/variance_cSBM_fixed.png}
  \centering\includegraphics[width=\linewidth]{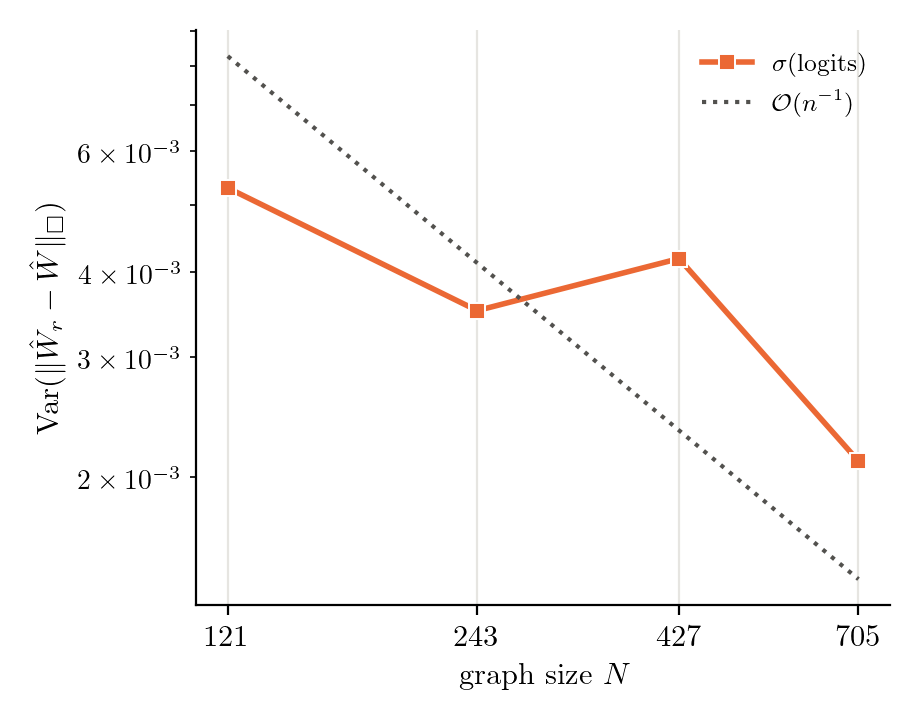}
  % \caption{\centering{\textbf{cSBM} %\\ ($N\in\{128,256,512,1024\}$)
  \caption{\centering{\textbf{REDDIT-MULTI-5K} %\\ ($N\in\{128,256,512,1024\}$)
  }}
\end{subfigure}

\caption{\small\textbf{Empirical cut-norm variance versus regularity-aware theoretical proxy.}
% Each panel compares the \blue{empirical variance} of 
Each panel compares the \orange{empirical variance} of 
% $\delta_\square(\hat W_r,\hat W)$ 
$\|\hat W_r - \hat W\|_\square$ against the 
% size-dependent 
% \green{scaling proxy} (Thm.~\ref{thm:bound_2}) 
\textcolor{darkgray}{scaling proxy} $\ccalO(n^{-1})$ (Thm.~\ref{thm:bound_2}).
% \blue{On REDDIT-MULTI-5K, the variance exceeds the proxy at the two largest sizes.}
}
\label{fig:cutvar_selected}
\vskip -0.15in
\end{figure*}

\subsection{Estimating Attention Graphons}
\label{sec:estimator}

%\red{L: I don't think in this section we need to keep repeating that the graphon is a ``dataset-level'' graphon. You can apply this estimation procedure to any set of attention graphs you want. We only need to specify we did this ``per dataset'' in the experiments. But I would leave it general here.}

% In Sec.~\ref{sec:attn-hyp} we consider each input graph and its dense weighted adjacency $A^{(\ell,h)}(G)$ induced by attention.
% We now turn a \emph{collection} of adjacencies into a single, interpretable kernel estimate.
% %This estimate serves two roles to answer our research question: (i) as a visualization of the interaction structure learned by a given head, and (ii) as a reference object for the cut distance concentration diagnostics in Section~\ref{sec:experiments}.
% Our estimator follows a common \emph{canonicalize-then-block-average} template, closely related to sort-and-smooth graphon estimators \citep{chan2014consistent}. Specifically, let $\{A_r\}_{r=1}^m$ denote attention-induced adjacencies extracted from $m$ %dataset 
% graphs, where $A_r\in[0,1]^{n_r\times n_r}$ may have varying sizes $n_r$.
% The estimator consists of three steps:

In Sec.~\ref{sec:attn-hyp} we consider each input graph and its attention-induced dense weighted adjacency $A^{(\ell,h)}(G)$. We now turn a collection of such adjacencies into a single, interpretable kernel estimate. Our estimator follows a \emph{canonicalize-then-block-average} template, closely related to the sort-and-smooth graphon estimator \citep{chan2014consistent}. Let $\{A_r\}_{r=1}^m$ denote attention-induced adjacencies from $m$ graphs, with $A_r \in [0,1]^{n_r \times n_r}$ of varying size $n_r$. The estimator has three steps:

\textbf{(i) Size normalization.}
To place all samples in a common space, we pad each $A_r$ to a shared size $N$ by repeating rows/columns,
% with zero rows/columns,
yielding $\tilde A_r\in[0,1]^{N\times N}$, 
with weights preserved.
% This operation preserves all existing weights.
%and only adds null interactions for missing nodes. 

\textbf{(ii) Canonicalization by degree sorting.}
{For each $N$-node graph, there are $N!$ possible node labelings, so considering each of these labelings for each of the $m$ graphs to estimate the graphon}
%Cut distance compares graphs up to relabeling, but explicitly optimizing over permutations 
is computationally infeasible.
We therefore use a lightweight canonicalization to reduce permutation variability: we sort nodes by their attention-induced degrees $d_i(\tilde A_r)\;=\;\sum_{j=1}^N (\tilde A_r)_{ij}$
% \begin{equation}
% d_i(\tilde A_r)\;=\;\sum_{j=1}^N (\tilde A_r)_{ij},
% \end{equation}
and{, to each attention graph,} apply the induced permutation $M^r_\pi$, producing $\bar A_r=M^r_\pi\tilde A_r{M^r_\pi}^\top$.
Intuitively, this aligns graphs using a coarse structural statistic before averaging.
% \blue{Formally, degree sorting recovers the latent node ordering when the degree function $g(x)=\int_0^1 W(x,y)\,dy$ is strictly monotone. This is the standard identifiability condition in nonparametric graphon estimation \citep{bickel2009nonparametric,yang2014nonparametric}, and the condition under which the SAS estimator is consistent \citep{chan2014consistent}, as formalized in Assumption~\ref{asm:canon}. Attention weights are real-valued, so exact ties in the degrees are not expected, and in our experiments the fraction of tied degrees among populated blocks is zero for every dataset and size. When the condition fails, degree sorting need not recover the latent ordering, and imperfect canonicalization becomes one of the ways the framework can fail (Sec.~\ref{sec:discussion}). Replacing degree sorting with spectral (Fiedler) ordering yields nearly identical variance decay and structurally consistent kernels (\Cref{fig:sensitivity_canon_hypothesis}).}

Formally, degree sorting recovers the latent node ordering when the degree function $g(x)=\int_0^1 W(x,y)\,dy$ is strictly monotone. This is the standard identifiability condition in nonparametric graphon estimation \citep{bickel2009nonparametric,yang2014nonparametric}, and the condition under which the SAS estimator is consistent \citep{chan2014consistent}, as formalized in Assumption~\ref{asm:canon}. We also tested spectral (Fiedler) ordering, which yields nearly identical variance decay (see \Cref{fig:sensitivity_canon_hypothesis}, Appx~\ref{app:add_exp}).

\textbf{(iii) Block averaging (graphon smoothing).}
% Fix a resolution $k$. 
Fix a resolution $k$. We use $k=\min(64,N)$, so the estimator refines with the graph size up to a cap. Thm.~\ref{thm:bound_2} admits $k=\lceil N^{1/(\min(\alpha,1)+1)}\rceil\in[\lceil\sqrt N\rceil,N]$, a range that contains $k$ whenever $N\le 64^2$, which covers every size we evaluate. Keeping $k\le N$ also avoids empty blocks, which would lower the measured variance for a mechanical reason (Appx~\ref{app:add_exp}). 
%(we keep $k$ fixed across datasets to enable consistent comparisons).
Partition $[N]$ into consecutive bins $B_1,\dots,B_k$ of equal size {(except for a possible remainder)}, and compute the  block mean. Then, taking the graphon induced by the matrix $\hat\Theta_r$ (cf. \Cref{eqn:inducedgraphon}) produces a step-function kernel estimate $\hat W_r$.
To obtain our final estimate for the attention graphon, we aggregate across samples. Those operations are defined as follows
\noindent
\begin{minipage}{0.48\textwidth}
\begin{equation}
\label{eq:block-avg}
(\hat\Theta_r)_{ab} \;=\; \frac{1}{|B_a|\,|B_b|}\sum_{i\in B_a}\sum_{j \in B_b} (\bar A_r)_{ij}
\end{equation}
\end{minipage}
\hfill
\begin{minipage}{0.48\textwidth}
\begin{equation}    
\hat\Theta \;=\; \frac{1}{m}\sum_{r=1}^m \hat\Theta_r,
\end{equation}
\end{minipage}
with $a, b \in [k]$.
% \begin{equation}
% \label{eq:block-avg}
% (\hat\Theta_r)_{ab} \;=\; \frac{1}{|B_a|\,|B_b|}\sum_{i\in B_a}\sum_{j\in B_b} (\bar A_r)_{ij},\  a,b\in[k].
% \end{equation}
% Taking the graphon induced by the matrix $\hat\Theta_r$ (cf. \Cref{eqn:inducedgraphon}) produces a step-function kernel estimate $\hat W_r$.
% To obtain our final estimate for the attention graphon, we aggregate across samples as
% \begin{equation}    
% \hat\Theta \;=\; \frac{1}{m}\sum_{r=1}^m \hat\Theta_r.
% \end{equation}
The graphon induced by $\hat \Theta$ defines the attention graphon estimate $\hat W$.
% for the chosen layer and head.

% \blue{\textbf{Block resolution.}
% We take the block resolution to be $k_n=\min(K,n)$ with $k=64$, so the estimator refines as graphs grow and saturates once $n$ exceeds the cap. Thm.~\ref{thm:bound_2} admits $k=\lceil n^{1/(\min(\alpha,1)+1)}\rceil$, which for $\alpha\in(0,1]$ lies between $[\lceil\sqrt n\rceil, n]$, and $k_n$ lies in this range at every size we evaluate. The cap prevents the degeneracy reported in Appendix~\ref{app:add_exp}, where a resolution exceeding the graph size leaves empty bins that lower the measured variance for a mechanical rather than statistical reason. Since $k_n$ is eventually constant while the admissible range grows with $n$, a fixed cap remains admissible only while $\lceil\sqrt n\rceil\le K$, that is, up to $n=K^2$. Extending the diagnostic to substantially larger graphs would require raising $k$ with $n$, at which point empty bins no longer arise because graphs are large relative to the resolution.}

%\textbf{Output and interpretation.}
%The result $\hat W$ is a $k\times K$ discretized kernel that summarizes the typical attention-induced interaction pattern for the dataset.
%In Section~\ref{sec:experiments}, we use $\hat W$ for head-level visualization and as the reference kernel in cut distance concentration and variance diagnostics across graph sizes.

\subsection{Hypothesis Testing}
\label{sec:ht}

\begin{figure*}[t]
    \begin{center}
        % {\includegraphics[width=0.875\linewidth]{Figures/est_graphons_2.png}}
        % {\includegraphics[width=\linewidth]{Figures/est_graphons_3.jpeg}}
        % \begin{subfigure}[t]{0.24\textwidth}\centering\includegraphics[width=\linewidth]{Figures/Rebuttal/Graphon Est/GPS/h1/modelnet_128.png}\caption*{\blue{$N=128$}}\end{subfigure}
        \raisebox{0.075\textwidth}{\rotatebox[origin=c]{90}{\small ModelNet10}}\hspace{2pt}%
        \begin{subfigure}[t]{0.22\textwidth}\centering\includegraphics[width=\linewidth,height=2.2cm]{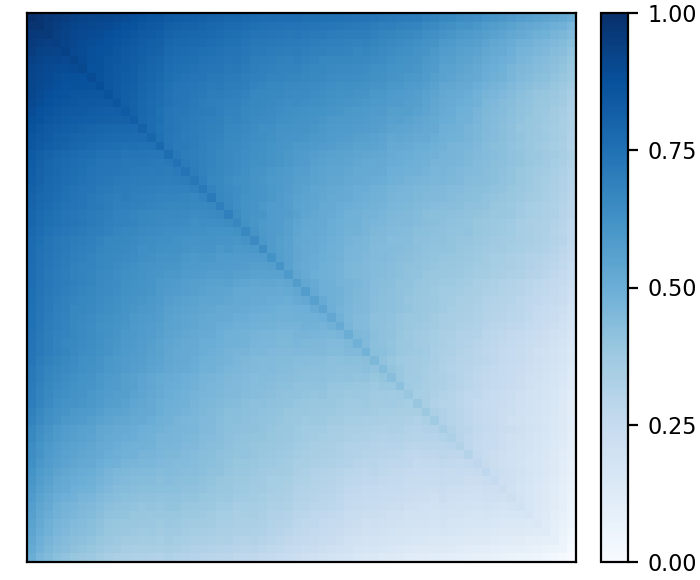}
        % \caption*{$N=128$}
        \end{subfigure}
        % \begin{subfigure}[t]{0.24\textwidth}\centering\includegraphics[width=\linewidth]{Figures/Rebuttal/Graphon Est/GPS/h1/modelnet_256.png}\caption*{\blue{$N=256$}}\end{subfigure}
        \begin{subfigure}[t]{0.22\textwidth}\centering\includegraphics[width=\linewidth,height=2.2cm]{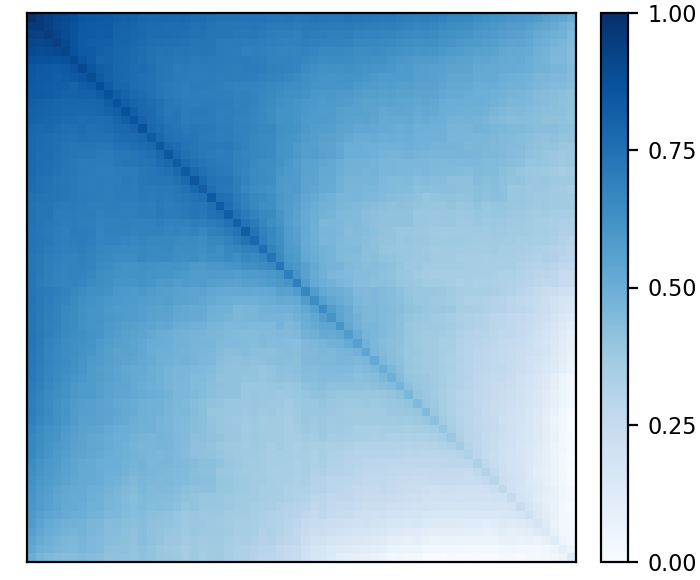}
        % \caption*{$N=256$}
        \end{subfigure}
        % \begin{subfigure}[t]{0.24\textwidth}\centering\includegraphics[width=\linewidth]{Figures/Rebuttal/Graphon Est/GPS/h1/modelnet_512.png}\caption*{\blue{$N=512$}}\end{subfigure}
        \begin{subfigure}[t]{0.22\textwidth}\centering\includegraphics[width=\linewidth,height=2.2cm]{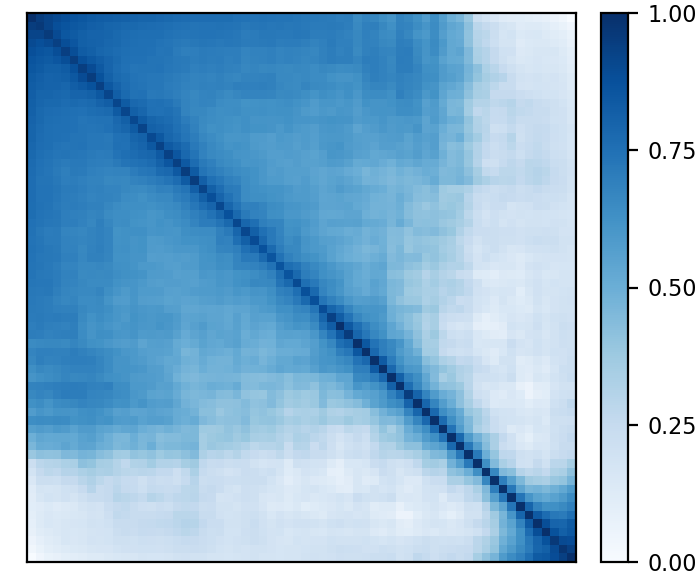}
        % \caption*{$N=512$}
        \end{subfigure}
        % \begin{subfigure}[t]{0.24\textwidth}\centering\includegraphics[width=\linewidth]{Figures/Rebuttal/Graphon Est/GPS/h1/modelnet_1024.png}\caption*{\blue{$N=1024$}}\end{subfigure}\\[2pt]
        \begin{subfigure}[t]{0.22\textwidth}\centering\includegraphics[width=\linewidth,height=2.2cm]{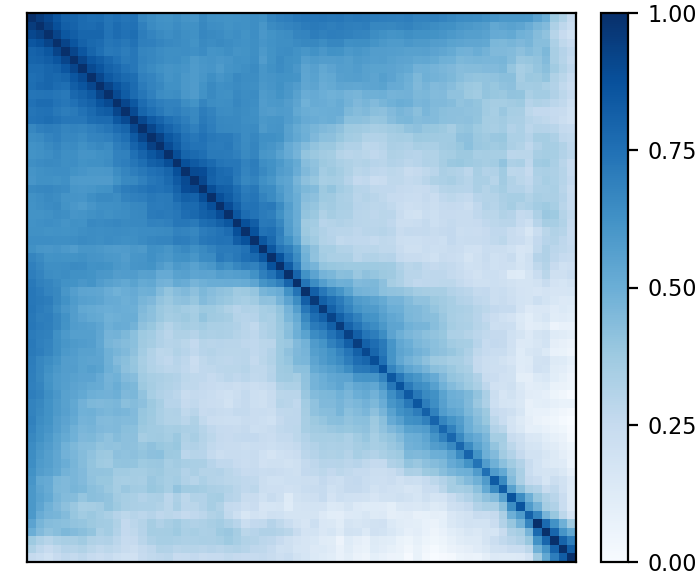}
        % \caption*{$N=1024$}
        \end{subfigure}\\[2pt]
        % \begin{subfigure}[t]{0.24\textwidth}\centering\includegraphics[width=\linewidth]{Figures/Rebuttal/Graphon Est/GPS/h1/collab_118.png}\caption*{\blue{COLLAB \\ $N=118$}}\end{subfigure}
        \begin{subfigure}[t]{0.22\textwidth}\centering\includegraphics[width=\linewidth,height=2.2cm]{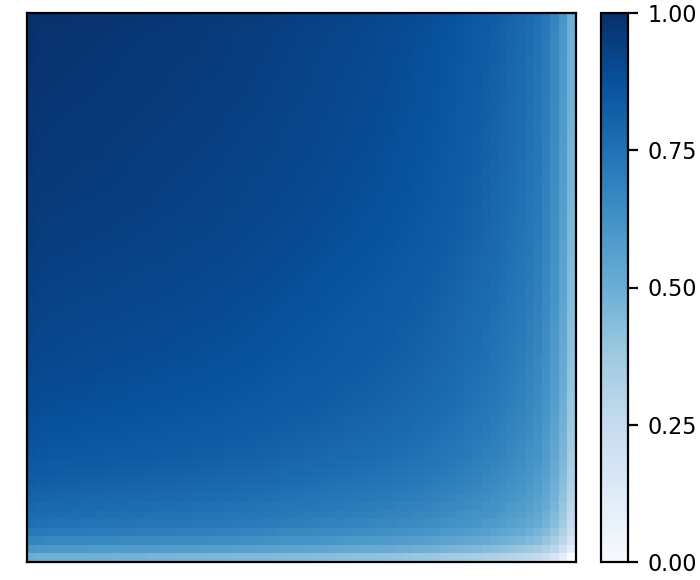}\captionsetup{justification=centering}
        % \caption*{COLLAB \\ $N=118$}
        \caption*{COLLAB}
        \end{subfigure}
        % \begin{subfigure}[t]{0.24\textwidth}\centering\includegraphics[width=\linewidth]{Figures/Rebuttal/Graphon Est/GPS/h1/proteins_70.png}\caption*{\blue{PROTEINS \\ $N=70$}}\end{subfigure}
        \begin{subfigure}[t]{0.22\textwidth}\centering\includegraphics[width=\linewidth,height=2.2cm]{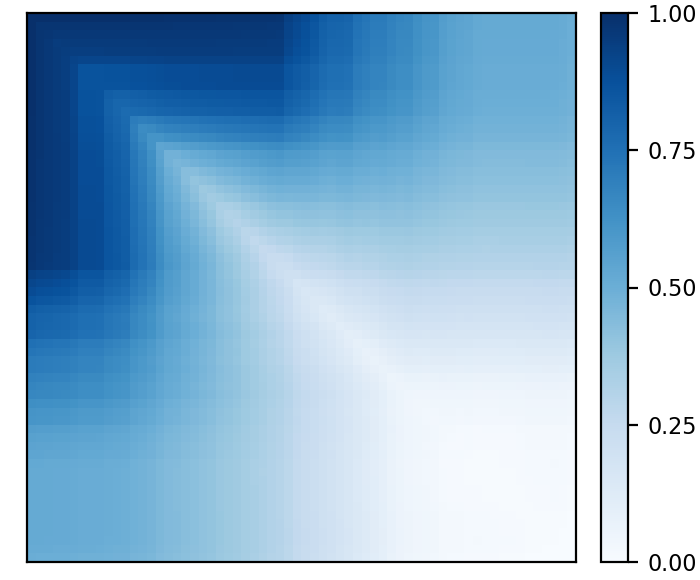}\captionsetup{justification=centering}
        % \caption*{PROTEINS \\ $N=70$}
        \caption*{PROTEINS}
        \end{subfigure}
        % \begin{subfigure}[t]{0.24\textwidth}\centering\includegraphics[width=\linewidth]{Figures/Rebuttal/Graphon Est/GPS/h1/redditmulti5k_705.png}\caption*{\blue{REDDIT-MULTI-5K \\ $N=705$}}\end{subfigure}
        \begin{subfigure}[t]{0.22\textwidth}\centering\includegraphics[width=\linewidth,height=2.2cm]{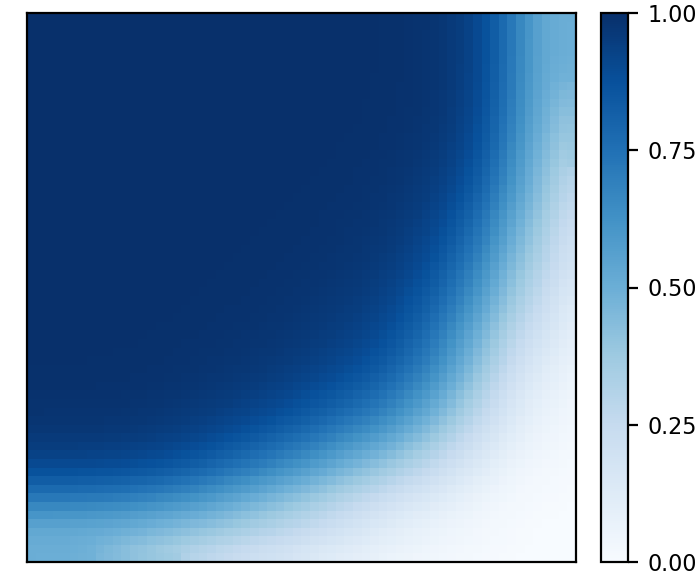}\captionsetup{justification=centering}
        % \caption*{REDDIT-MULTI-5K \\ $N=705$}
        \caption*{REDDIT-MULTI-5K}
        \end{subfigure}
        \caption{\small
        \textbf{Estimated attention graphons across datasets and graph sizes.} Each panel shows the $K{\times}K$ ($K=64$) block-step estimate of the 
        % dataset-level 
        attention kernel at target size $N$.
        % Top row: Attention graphon convergence for REDDIT-MULTI-5K, as target size $N$ grows ($N \in \{128, 256, 512, 1024\}$). As shown, kernel estimates stabilize as $N$ increases, while remaining distinct across datasets (as compared to the benchmarks below).
        \textit{Top row}: Attention graphon convergence for ModelNet10, as target size $N$ grows ($N \in \{128, 256, 512, 1024\}$). Kernel estimates stabilize as $N$ increases and remains distinct across datasets 
        % (as compared to the benchmarks below).
        \textit{Bottom row}: Attention graphon estimate 
        % at the largest target size 
        % for COLLAB, PROTEINS, and REDDIT-MULTI-5K (resp., $N = \{498, 600, 1024\}$). 
        for COLLAB, PROTEINS, and REDDIT-MULTI-5K (resp., $N = \{118, 70, 705\}$). 
        % Specifically, COLLAB ($N = 498$), PROTEINS ($N = 600$), and REDDIT-MULTI-5K ($N = 1024$).
        }
        \label{fig:est_graphons}
    \end{center}
    \vskip -0.2in
\end{figure*}

%To determine whether the attention-induced graphs come from the same graphon, we perform a hypothesis test comparing the theoretical and empirical cut distance variances. 
{We use the estimated graphon as a reference kernel to assess the consistency of
attention-induced graphs across sizes and verify our hypothesis that attention graphs for a given layer and head are samples from a common graphon. Specifically, we compute cut-norm
deviations between attention matrices and the estimated graphon, estimate their
empirical variance, and compare it against the theoretical variance bounds via
a one-sided hypothesis test.} 
Formally, given a set of attention-induced graphs $\{A_r\}_{r=1}^m$, represented by its adjacency matrices, distributed $A_i \sim W_i$ according to graphons $W_i$, we consider the following test:
\begin{align*}
    H_0\colon W_1 = \cdots = W_m && \textit{vs.} && H_a\colon \text{ {at least one} } W_i \neq W_j . %\ \text{for some} \ i\neq j,
\end{align*}
% The test statistic is the empirical variance of the cut-norm of the samples to the estimated graphon. The critical value is the upper bound in Thm.~\ref{thm:bound_2}. More practically, given that the smoothness parameter $\alpha$ is unknown, we select the value that gives the fastest rate (i.e., $\ccalO(n^{-1})$). Our decision rule is conservative: we reject $H_0$ when the empirical variance exceeds the theoretical bound, and fail to reject otherwise. We refer the reader to Appendix~\ref{app:add_exp} for additional calibration procedures (bootstrap, permutation test, and synthetic null simulation) to this test.
The test statistic is the empirical variance of the cut-norm of the samples to the estimated graphon. The critical value is the upper bound in Thm.~\ref{thm:bound_2}. More practically, given that the smoothness parameter $\alpha$ is unknown, we select the value that gives the fastest rate (i.e., $\ccalO(n^{-1})$). This is conservative, since we reject $H_0$ when the empirical variance exceeds the theoretical bound, and fail to reject otherwise. It is conservative in the sense that the fastest admissible rate yields the smallest threshold. Hence, the test is one-sided: a rejection implies that the attention graphs are not consistent with a common graphon, while a failure to reject indicates that the estimated kernels concentrate, without certifying a common limit, since low dispersion can also arise from nearly uniform attention. We refer the reader to Appx~\ref{app:add_exp} for additional calibration procedures (bootstrap, permutation test, and synthetic null simulation) to this test, and to Appx~\ref{app:method} for the constant hidden in the threshold.

\vskip -0.2in

\section{Experiments}
\label{sec:experiments}

%We validate the attention-graphon understanding developed in \Cref{sec:theory} and later put into practice in \Cref{sec:method}. 
Our experiments seek to address three questions:  
\textbf{(Q1)} Does the dispersion of attention in cut-norm decrease with size, and how does it behave compared with the regularity-aware proxy in \Cref{eq:regularity_proxy}?
\textbf{(Q2)} Do GT attention matrices stabilize to a dataset-driven kernel as graph size grows?
\textbf{(Q3)}  Can we use the attention-graphon relationship  to learn GTs more efficiently?

% estimated graphon to transfer the information learned on smaller attention to larger ones? Or can we assess the training dynamics or performance of the model on a specific dataset and downstream task?

%\subsection{Experimental Setup}
%\label{sec:exp-setup}
%Throughout our experiments, we consider the following experimental setup, with additional details provided in Appendix~\ref{app:exp_details}.

% To answer them, we use eight real-world graph classification benchmarks and three synthetic node-classification datasets: four bioinformatics datasets (MUTAG \cite{mutag1,mutag2}, PROTEINS \cite{proteins2,proteins1}, NCI1, NCI109 \cite{nci1,nci12}), three social networks (IMDB-MULTI, COLLAB, REDDIT-MULTI-5K \cite{socialnet}), one point cloud (ModelNet10 \cite{modelnet}), a sparse contextual SBM (NoisyCSBM \cite{deshpande2018contextual}), and two ground truth graphons (SmoothW and SharpW).
% detailed in Appendix~\ref{app:exp_details}). 

% To answer them, we use eight real-world graph classification benchmarks and three synthetic node-classification datasets: four bioinformatics datasets (MUTAG, PROTEINS, NCI1, NCI109), three social networks (IMDB-MULTI, COLLAB, REDDIT-MULTI-5K), one point cloud (ModelNet10), a sparse contextual SBM (NoisyCSBM), and two ground truth graphons (SmoothW and SharpW) ---
To answer them, we use eight real-world graph classification benchmarks and three synthetic node-classification datasets: MUTAG, PROTEINS, NCI1, NCI109, IMDB-MULTI, COLLAB, REDDIT-MULTI-5K, ModelNet10, a sparse contextual SBM (NoisyCSBM), and two ground truth graphons (SmoothW and SharpW) ---
% details in Appendix~\ref{app:exp_details}.
for details and extensive experiments spanning additional architectures, large-scale datasets, tasks, transferability, multi-head attention, statistical calibration (bootstrap CIs, permutation and synthetic-null tests), runtime, and sensitivity/robustness analyses, see Appxs~\ref{app:exp_details}, \ref{app:add_exp}.

\vskip -0.2in
\begin{comment}
Beyond the experiments shown in the main body, \Cref{app:add_exp} contains an extensive empirical study spanning additional architectures (Graphormer-GD, GRIT) \cite{zhang2024rethinkingexpressivepowergnns, ma2023graphinductivebiasestransformers}, large-scale graphs (OGB, LRGB) \cite{hu2020open,dwivedi2022LRGB}, tasks (CLRS algorithmic reasoning) \cite{velivckovic2022clrs}, transferability, multi-head attention, statistical calibration (bootstrap CIs, permutation and synthetic-null tests), runtime, and sensitivity/robustness analyses.
\end{comment}
% Beyond these, Appendix~\ref{app:add_exp} contains an extensive empirical study spanning additional architectures, large-scale datasets, tasks, transferability, multi-head attention, statistical calibration (bootstrap CIs, permutation and synthetic-null tests), runtime, and sensitivity/robustness analyses.

% In Appendix~\ref{app:exp_details} we provide details on our experimental settings: from the datasets utilized, to the GT used in our experiments and the evaluation on target graph sizes.

\begin{figure*}[t]
    \begin{center}
        {\includegraphics[width=0.99\linewidth]{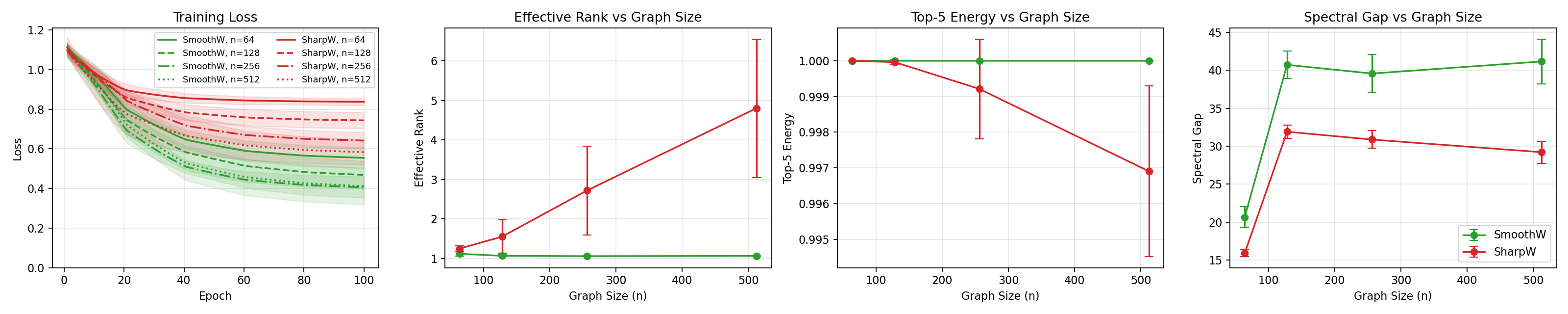}}
        \caption{\small
        \textbf{Training dynamics and spectral summaries under synthetic graphon families.}
        \textit{Left}: training loss for \green{SmoothW} and \red{SharpW} at increasing sizes.
        \textit{Middle/right}: spectral summaries of the estimated 
        % $K\times K$ 
        attention graphons (effective rank, top-5 energy, and
        spectral gap) as size grows.
        % Definitions of the metrics are given in Appendix~\ref{app:spectral}.
        Metrics are defined in Appx~\ref{app:spectral}.
        }
        \label{fig:train_dyn_vs_spectral_metrics}
    \end{center}
    \vskip -0.2in
\end{figure*}

\
\begin{table*}[t]
\small
\centering
\caption{\small Lipschitz constants and test accuracy (\%) of estimated attention graphons on synthetic and real-world datasets. For synthetic datasets, the Lipschitz constant correlates with task difficulty. For real-world datasets, it reflects the complexity of learned attention. %\textcolor{red}{lower lipschitz indiactes easier ? if so say it} \blue{C: Only true for the synthetic datasets. This is explained in the text. Also, the correlation coefficient and p-value show it, which we describe in here.}
}
\label{tab:lipschitz_results}
\resizebox{\textwidth}{!}{%
\begin{tabular}{l|ccc|cccccccc}
\toprule
\multirow{2}{*}{Metric $\downarrow$ / Dataset $\rightarrow$ } & \multicolumn{3}{c|}{\textbf{Synthetic}} & \multicolumn{8}{c}{\textbf{Real-World}} \\
\cmidrule(lr){2-4} \cmidrule(lr){5-12}
& \textsc{SmoothW} & \textsc{SharpW} & \textsc{NoisyCSBM} & \textsc{MUTAG} & \textsc{Proteins} & \textsc{NCI1} & \textsc{NCI109} & \textsc{Collab} & \textsc{IMDB-M} & \textsc{Reddit-M-5K} & \textsc{ModelNet} \\
\midrule
Graph size $n$ & 1024 & 1024 & 1024 & 28 & 600 & 111 & 111 & 498 & 89 & 1024 & 1024 \\
Lipschitz ($\hat{L}_W$) & 25.02 & 45.29 & 64.60 & 6.36 & 54.80 & 27.42 & 28.34 & 33.15 & 20.24 & 28.14 & 33.70 \\
Accuracy (\%) $\uparrow$ & 92.18 & 74.10 & 58.98 & 78.95 & 77.68 & 66.42 & 70.46 & 74.60 & 56.00 & 54.20 & 78.50 \\
\midrule
Correlation ($r$) & \multicolumn{3}{c|}{$-0.999$ ($p = 0.024$)} & \multicolumn{8}{c}{$0.207$ ($p = 0.622$)} \\
\bottomrule
\end{tabular}%
}
\vskip -0.15in
\end{table*}

\subsection{Estimated Attention Graphons}
\label{sec:exp-graphons}
\textbf{(i) Cut-norm variance \emph{vs.} theoretical scaling.}
\label{sec:exp-variance} 
%We quantify attention concentration via the empirical variance of $\Delta_r=\delta_\square(\hat W_r,\hat W)$ across samples (according to the procedure described in Appendix \ref{app:method}) and compare it to the theoretical bound from \Cref{eq:regularity_proxy}. Figure~\ref{fig:cutvar_selected} reports results for representative datasets of each domain considered (bioinformatics, social networks, point clouds, and SBM).  Across all datasets, empirical variance decreases as graph size increases (see \Cref{app:fig_cutvar_all}), and the theoretical bound is almost always bigger than the empirical counterpart, which means we fail to reject the null hypothesis, i.e., the graphs should come from the same graphon. That supports concentration of attention around a stable kernel and answers \texbf{(Q1)}.
% We quantify attention concentration via the empirical variance of $\Delta_r=\|\hat W_r - \hat W\|_\square$ across samples (Appendix \ref{app:method}) and compare it to the theoretical bound in \Cref{eq:regularity_proxy}. Fig.~\ref{fig:cutvar_selected} shows representative datasets from each domain (bioinformatics, social networks, point clouds, and cSBM). Across most of the real-world datasets, empirical variance decreases with graph size (see Fig.~\ref{app:fig_cutvar_all}), and the theoretical bound is almost always larger than the empirical counterpart, so we fail to reject the null hypothesis that the graphs come from the same graphon. This supports concentration of attention around a stable kernel and answers \textbf{(Q1)}.
We quantify attention concentration via the empirical variance of $\Delta_r=\|\hat W_r - \hat W\|_\square$ across samples (Appx \ref{app:method}) and compare it to the theoretical bound in \Cref{eq:regularity_proxy}. Fig.~\ref{fig:cutvar_selected} shows representative datasets from each domain (bioinformatics, point clouds, and social networks). On most datasets, the empirical variance decreases with graph size (see Fig.~\ref{app:fig_cutvar_all}), and on every dataset except REDDIT-MULTI-5K it stays below the theoretical bound at all sizes, so we fail to reject the null hypothesis that the graphs come from the same graphon. REDDIT-MULTI-5K exceeds the bound at its two largest sizes, where the test rejects. This supports concentration of attention around a stable kernel and answers \textbf{(Q1)}.
\textbf{(ii) Convergence to attention graphon.} In 
Fig.~\ref{fig:est_graphons}, we show the estimated attention graphons for 4 datasets with large graphs,  %PROTEINS, COLLAB, ModelNet10, REDDIT-MULTI-5K.
 % a design choice  to reduce small sample noise effects.
%, and it also represents the various domains considered in our experiments. 
Our results show that the estimated kernel becomes more consistent as $N$ increases, indicating that attention is not an arbitrary dense matrix but exhibits a stable %large-$n$ 
global structure. Across datasets, the limiting kernels are qualitatively distinct, supporting the view that GT attention induces dataset-dependent interaction kernels, providing an affirmative answer to \textbf{(Q2)}.

\subsection{Transferability, Training Dynamics, and Task-Hardness}
\label{sec:exp-synth}
%To answer \textbf{(Q3)}, we study whether the graphon structure of attention matrices can be leveraged for practical benefits. 
We  address \textbf{(Q3)} 
% --- whether the graphon structure of attention matrices can be leveraged in practice --- 
from three perspectives: transferability, training dynamics, and task-hardness.
%inference. % (i) \emph{transferability}, i.e., whether models trained on small graphs can transfer to larger graphs via graphon sampling; (ii) \emph{training dynamics}, i.e., how kernel regularity influences optimization behavior and spectral properties; and (iii) \emph{task-hardness inference}, i.e., whether the Lipschitz constant of the estimated graphon provides insight into task difficulty or model behavior. 

\textbf{Setup.} For the \emph{transferability} and \emph{training dynamics} experiments, we employ two synthetic graphon families: \emph{SmoothW}, a Hölder-smooth kernel defined as $W(x,y) = xy$, and \emph{SharpW}, a kernel with sharper transitions defined as $W(x,y) = \frac{1}{2}(e^{-xy} + |\sin(\omega(x+y))|)$ for frequency $\omega \gg 1$. 
%To generate graphs from these models, we sample $n$ latent positions $x_1, \ldots, x_n \sim \mathrm{Unif.}(0,1)$ and connect nodes $i$ and $j$ independently with probability $W(x_i, x_j)$. 
Node labels for the downstream classification task are derived from the latent positions via terciles, yielding three balanced classes. Node features are random walk positional encodings. 
For the \emph{task-hardness} experiments, our analysis includes both synthetic 
% (SmoothW, SharpW, and NoisyCSBM) 
and real-world datasets.

\begin{wrapfigure}{r}{0.43\textwidth}%0.48
    \vspace{-13pt}
    \centering
    {\includegraphics[width=0.42\textwidth,height=3.2cm]{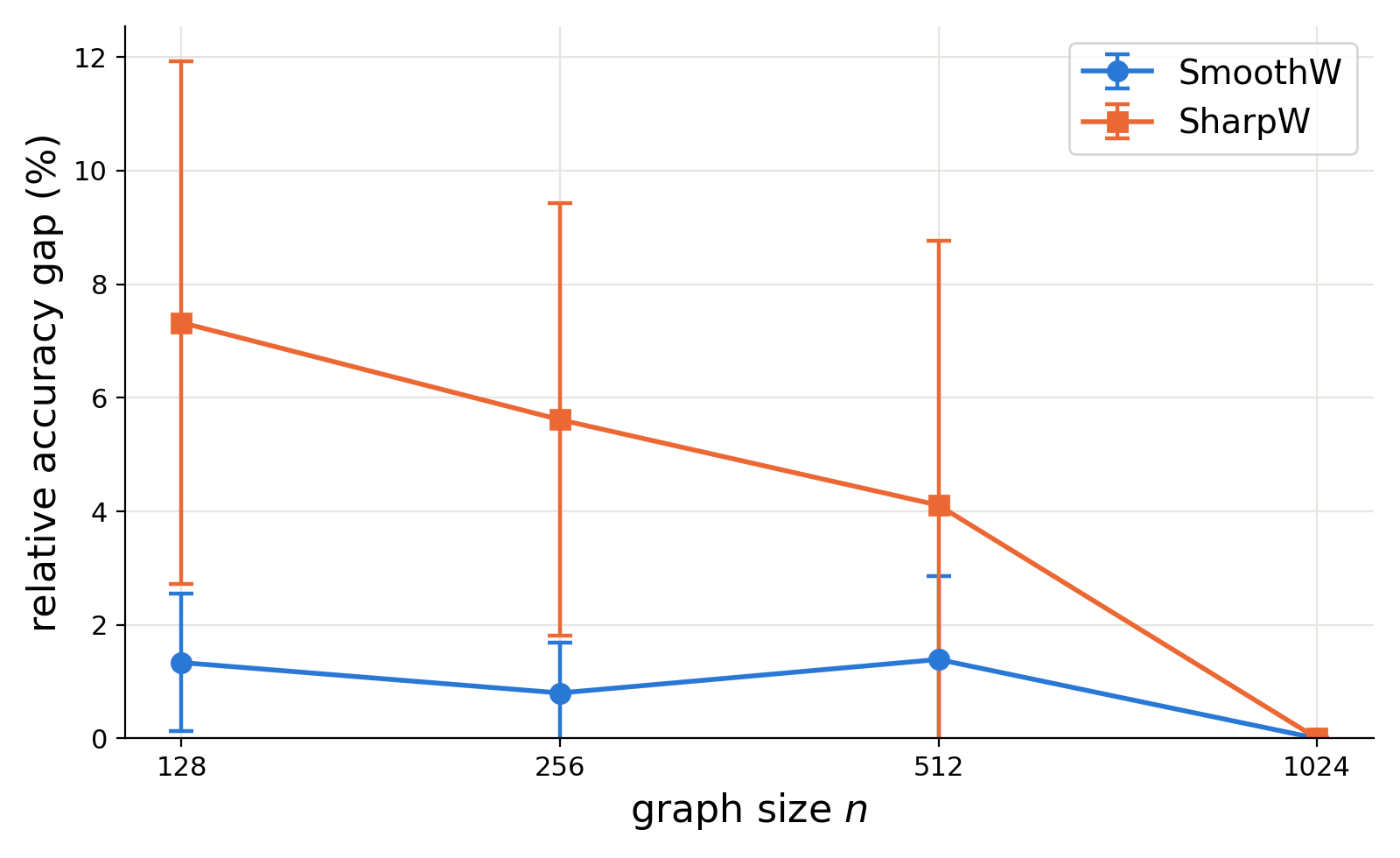}}%0.48
        \caption{\small\textbf{Attention graphon transferability to larger graphs.}
        %Accuracy comparison between a baseline model trained on larger graphs and a model trained on smaller graph sizes for datasets built from the graphon models, SmoothW and SharpW (as explained in the transferability section). Results show that one can get close to the model trained on larger datasets by training on smaller ones and transferring them, which is faster and computationally less expensive. Furthermore, the smoother graphon (SmoothW) has better transferability properties.
     Relative test-accuracy gap between a model trained and evaluated on small graphs and the same model evaluated on a large-scale test set.
     % baseline trained on larger graphs and a model trained on smaller graph sizes for datasets from the graphon models SmoothW and SharpW 
     % (as explained in the transferability section).
     Results show that training on smaller graphs and transferring can approach the performance of a model trained on larger datasets, while being faster and less computationally intensive. Furthermore, the smoother graphon 
     % (SmoothW) 
     has better transferability.
        }
    \vspace{-13pt}
        \label{fig:transfer}
\end{wrapfigure}

\textbf{(i) Transferability.} 
We evaluate whether GTs can transfer across graph sizes by training models on small graphs, estimating the attention graphon, and using them to generate attention matrices for larger graphs at inference. 
%Using the synthetic datasets obtained by sampling the SmoothW and SharpW graphon models, 
We first generate a fixed baseline dataset of graphs with $n=1024$ nodes from each graphon model.
% and train a baseline model on it
% Then, for each $n \in \{128, 256, 512, 1024\}$, we generate a separate dataset of graphs with $n$ nodes from the same graphon and train a model on this dataset. At transfer time, we bypass the query-key mechanism by sampling attention scores directly from the attention graphon estimated from the smaller graphs. 
Then, for each $n \in \{128, 256, 512, 1024\}$, we generate a separate dataset of graphs with $n$ nodes from the same graphon and train a model on this dataset. At transfer time, we bypass the query-key mechanism by sampling attention scores from the attention graphon estimated from the smaller graphs. Sampled kernel values are mapped back to attention scores by applying $\rho^{-1}$ and the row-wise softmax (see Prop.~\ref{prop:invertible}). 
%These sampled attention scores are combined with 
The value projections of the larger graphs are obtained using the frozen value matrix from the small-graph model.
%, to produce the final attention output. T
The relative accuracy gap is computed against the baseline dataset.
%trained directly on the dataset with the largest graph size. Notably, when $n=1024$, although the training and target graph sizes coincide, the graph samples differ.
Fig.~\ref{fig:transfer} shows that the accuracy gap decreases as the training graph size increases.
%, converging to zero for both datasets at $n=1024$. 
%Notably, 
SmoothW exhibits smaller transfer error throughout, consistent with its smaller Lipschitz constant. 
% (cf. Theorem \ref{thm:bound_2}).
%, enabling more reliable graphon estimation and sampling.

These results show that a model trained on smaller graphs, when evaluated at larger scales, can approach the accuracy of a baseline trained on the large-scale dataset, reducing computations without sacrificing performance. The query-key product costs $\ccalO(n^2d)$ for every input, whereas sampled attention is block-constant and can be applied to the values in $\ccalO(nd_v+K^2d_v)$ operations, linear in $n$, where $d_v$ is the value dimension (\Cref{tab:factorized} in Appx~\ref{app:add_exp}). In other words, graphon structure can be exploited for efficient inference with GTs; this answers \textbf{(Q3)} in the affirmative.
%size-agnostic deployment.

% \blue{This behavior is expected under our hypothesis. If attention matrices are samples from a graphon $W$, transferability of the induced graph filter follows from results for graphon operators \citep{ruiz2021transferability}, and the curves in \Cref{fig:transfer} are evidence that this premise holds. Conversely, transfer is not expected on datasets whose attention the variance test rejects.}
This behavior is expected under our hypothesis. If attention matrices are samples from a graphon $W$, transferability of the induced graph filter follows from results for graphon operators \citep{ruiz2021transferability}, and the curves in \Cref{fig:transfer} are evidence that this premise holds. The converse may also hold. On REDDIT-MULTI-5K, the only dataset where the test rejects, the transferability gap stays between $23\%$ and $44\%$ and does not shrink with the training size (\Cref{fig:transferability_gps_h1_2}, Appx~\ref{app:add_exp}).

\textbf{(ii) Training dynamics.} 
% Next, we empirically probe the link between kernel regularity and learnability using SmoothW and SharpW. For each dataset and graph size $n \in \{64, 128, 256, 512\}$, we train GT models for node classification and record: (1) training curves; and (2) spectral properties of the estimated attention graphon, including effective rank, top-$5$ energy concentration, and spectral gap.
Next, we probe the link between kernel regularity and learnability on SmoothW and SharpW, training GT models for node classification at $n\in\{64,128,256,512\}$ and recording training curves and spectral properties of the estimated attention graphons (effective rank, top-$5$ energy, and spectral gap). SharpW yields higher training loss than SmoothW at every size, and its estimated attention graphons have higher effective rank and lower top-$5$ energy (Fig.~\ref{fig:train_dyn_vs_spectral_metrics}), suggesting that less regular kernels are harder to learn.

\textbf{(iii) Task-hardness.}
We also examine whether the Lipschitz constant of the estimated attention graphon, computed as in Appx~\ref{app:lips_comp}, tracks task difficulty (Table~\ref{tab:lipschitz_results}). On the synthetic families, smoother graphons come with higher accuracy, in line with the training dynamics above. On real-world datasets the relation is more nuanced, and no clear correlation emerges, as these tasks also depend on node features and domain-specific factors that the graphon alone does not capture. We therefore read these results as a link between graphon regularity and learnability on synthetic data, not as a general proxy for task difficulty.

\section{Discussion and Limitations}
\label{sec:discussion}

%Our results in \Cref{sec:experiments} present a clear empirical picture: for a fixed trained GT, each attention head induces a dense weighted graphwhose large-$n$ behavior is often well described by a dataset-dependent graphon, and whose dispersion in cut distance decreases with graph size. This provides a principled way to summarize and compare attention patterns across datasets and scales. In what follows, we discuss additional aspects that stem from the understanding and experiments, providing interesting directions for future work.

Results in Sec.~\ref{sec:experiments} show that, for a fixed trained GT, each attention head induces a dense weighted graph whose large-$n$ behavior is often well described by a dataset-dependent graphon, with dispersion in cut-norm decreasing with graph size. This provides a principled way to summarize and compare attention patterns across datasets and scales. We next discuss additional aspects from this understanding and experiments, and directions for future work.

%\textbf{Can the graphon viewpoint fail?}
%The attention-graphon perspective developed in \Cref{sec:theory} is a modeling assumption; it can be violated when attention depends on size or on discrete structures that are not captured by a single size-invariant kernel.
%Empirically, non-monotonic variance curves can flag such regimes, as shown on \Cref{fig:cutvar_selected}. However, they can also arise from finite-sample effects (induced-subgraph sampling) and from imperfect canonicalization, as \Cref{tab:lipschitz_results} suggests. A promising direction for future work is to employ resolution-free and basis-invariant neural estimators, such as IGNR \cite{xia2023implicit} or MomentNet \citep{ramezanpour2025few}, which could mitigate canonicalization artifacts and yield more robust graphon estimates.
\textbf{Can the graphon viewpoint fail?}
The attention-graphon perspective in Sec.~\ref{sec:theory} is a modeling assumption, and can be violated when attention depends on size or on discrete structures not captured by a single size-invariant kernel. Empirically, non-monotonic variance curves can flag such regimes (Fig.~\ref{fig:cutvar_selected}), but they can also arise from finite-sample effects (induced-subgraph sampling) and imperfect canonicalization. 
% as \Cref{tab:lipschitz_results} suggests. 
A promising direction are resolution-free, basis-invariant neural estimators (e.g., \citep{xia2023implicit,ramezanpour2025few}) to mitigate canonicalization artifacts.
%and yield more robust graphon estimates.

% \textbf{When it might fail.}
% Collecting the failure modes discussed above, the attention-graphon description should not be relied upon when attention depends on graph size, when it encodes discrete motif structure that no single size-invariant kernel captures, when interactions are strongly asymmetric, or when the degree profile is degenerate and canonicalization cannot recover a latent ordering.
% Sparse attention is also where our test can fail. When attention is restricted to the input edges, the pre-softmax scores are $-\infty$ elsewhere, so the mass of $\smash{A^{(\ell,h)}}$ vanishes with the edge density and the induced graphons converge to the zero graphon. The variance vanishes with the mass, and the test does not reject (\Cref{fig:masked_csbm} in Appx.~\ref{app:add_exp}). Monitoring the kernel mass alongside the variance flags this regime.
\textbf{When it might fail.}
The attention-graphon view is less reliable for size-dependent or strongly asymmetric attention, for discrete motifs that no size-invariant kernel captures, and when a degenerate degree profile keeps canonicalization from recovering a latent ordering.
Our test can also fail under sparse attention: pre-softmax scores of $-\infty$ off the input edges make the mass of $\smash{A^{(\ell,h)}}$ vanish with edge density, so the induced graphons converge to the zero graphon, the variance vanishes, and the test does not reject (\Cref{fig:masked_csbm} in Appx.~\ref{app:add_exp}). Tracking the kernel mass flags this regime.

%\textbf{Canonicalization and directed attention.}
%In this work, we use degree sorting as a practical surrogate for the optimal relabeling in the cut metric.
%This is not guaranteed to recover the latent ordering of the underlying kernel, and alternative canonicalizations (e.g., based on spectral embeddings or optimal transport) are a promising direction.
%We also symmetrize attention to fit the standard (undirected) graphon formalism; extending the framework to directed kernels would allow using raw row-stochastic attention matrices.

\textbf{Canonicalization and directed attention.}
We use degree sorting as a practical surrogate for the optimal relabeling in the cut-metric, but it is not guaranteed to recover the latent ordering of the underlying kernel; alternative canonicalizations (e.g., spectral embeddings or optimal transport) are a promising direction. We also symmetrize attention to fit the standard (undirected) graphon formalism; extending the framework to directed kernels would allow using raw row-stochastic attention matrices.

%\textbf{Dense \textit{vs.} sparse limits.}
%Although attention matrices are dense by construction, the input graphs are often sparse. Our framework remains applicable since the attention mechanism produces dense pairwise interactions regardless of input sparsity, yet understanding how sparse input structure, positional encodings, and attention sparsification schemes affect the induced attention kernels is a compelling open question. Extending the theory to sparse regimes could inform the design of more interpretable attention patterns guided by input topology, while also enabling computationally efficient architectures with reduced memory overhead.
\textbf{Dense \textit{vs.} sparse limits.}
Although attention matrices are dense by construction, input graphs are often sparse. Our framework remains applicable because attention produces dense pairwise interactions regardless of input sparsity, yet understanding how sparse input structure, positional encodings, and attention sparsification schemes affect the induced attention kernels is an open question. 
%Extending the theory to sparse regimes could inform more interpretable attention patterns guided by input topology, and enable computationally efficient architectures with reduced memory overhead.

\textbf{Which heads to model.}
Since the diagnostic is applied per head, it also gives a selection rule in practice. A head is modeled by its attention graphon only when the test does not reject, and full attention is kept for the remaining heads. Because the kernels of different heads lie in a common function space, distances between them also measure head similarity, and heads with close kernels are natural candidates for merging or pruning, which we leave for future work.

\section{Conclusion}
\label{sec:conclusion}

We proposed a graphon lens that views GT attention matrices as dense weighted graphs, with cut-distance and cut-norm concentration calibrations and a pipeline that estimates dataset-level attention graphons and their stability across sizes. Across benchmarks, these graphons stabilize and cut-norm variance typically decreases with size, supporting the usefulness graph-limit tools for attention.

\bibliographystyle{iclr2027_conference}
\bibliography{refs,myIEEEabrv,bib-cumulative}

@article{cervino2021increase,
  	title="Learning by transference: {T}raining graph neural networks on growing graphs",
  	author="Cervino, J. and Ruiz, L. and Ribeiro, A.",
  	journal=IEEE_J_SP,
  	volume="71",
        pages="233--247",
        year="2023",
        publisher="IEEE"
}

@article{ruiz2021transferability,
  	author={Ruiz, Luana and Chamon, Luiz F. O. and Ribeiro, Alejandro},
  journal={IEEE Transactions on Signal Processing}, 
  title={Transferability Properties of Graph Neural Networks}, 
  year={2023},
  volume={71},
  number={},
  pages={3474-3489},
  doi={10.1109/TSP.2023.3297848}}

@inproceedings{ruiz20-transf,
	title="Graphon Neural Networks and the Transferability of Graph Neural Networks",
	author="Ruiz, L. and Chamon, L. F. O. and Ribeiro, A.",
	booktitle="34th " # NeurIPS,
	address="Vancouver, BC (Virtual)",
	year	="2020",
	month="6-12 " # DEC,
	publisher="NeurIPS Foundation"
}

@book{lovasz2012large,
  	author="Lov{\'a}sz, L.",
  	title="Large Networks and Graph Limits",
  	volume="60",
  	year="2012",
  	publisher="American Mathematical Society"
}

@article{olhede2014network,
  title={Network histograms and universality of blockmodel approximation},
  author={Olhede, Sofia C and Wolfe, Patrick J},
  journal={Proceedings of the National Academy of Sciences},
  volume={111},
  number={41},
  pages={14722--14727},
  year={2014},
  publisher={National Academy of Sciences}
}

@inproceedings{airoldi2013stochastic,
	title="Stochastic Blockmodel Approximation of a Graphon: Theory and Consistent Estimation",
  	author="Airoldi, E. M. and Costa, T. B. and Chan, S. H.",
  	booktitle="27th " # NIPS,
  	pages="692-700",
  	year="2013",
  	publisher="NIPS Foundation"
}

@article{lovasz2006limits,
 	author="Lov{\'a}sz, L. and Szegedy, B.",
  	title="Limits of Dense Graph Sequences",
  	journal="J. Comb. Theory, Series B",
 	volume="96",
  	number="6",
  	pages="933-957",
  	year="2006",
  	publisher="Elsevier"
}

@article{gao2015rate,
  	author="Gao, C. and Lu, Y. and Zhou, H. H.",
  	title="Rate-optimal Graphon Estimation",
  	journal="Ann. Stat.",
  	volume="43",
  	number="6",
  	pages="2624-2652",
  	year="2015",
  	publisher="Institute of Mathematical Statistics"
}

@article{borgs2008convergent,
  	author="Borgs, C. and Chayes, J. T. and Lov{\'a}sz, L. and S{\'o}s, V. T. and Vesztergombi, K.",
  	title="Convergent Sequences of Dense Graphs {I}: {S}ubgraph Frequencies, Metric Properties and Testing",
  	journal="Adv. Math.",
  	volume="219",
  	number="6",
  	pages="1801-1851",
  	year="2008",
  	publisher="Elsevier"
}

@article{levie2019transferability,
  	title="Transferability of Spectral Graph Convolutional Neural Networks",
  	author="Levie, R. and Huang, W. and Bucci, L. and Bronstein, M. and Kutyniok, G.",
  	journal=JMLR,
  	volume="22",
  	number="272",
  	pages="1-59",
  	year="2021"
}

@inproceedings{vaswani2017attention,
  	title="Attention is All You Need",
  	author="Vaswani, A. and Shazeer, N. and Parmar, N. and Uszkoreit, J. and Jones, L. and Gomez, A. N. and Kaiser, {\L}. and Polosukhin, I.",
  	booktitle=NIPS,
  	pages="5998-6008",
  	year="2017",
  	publisher="NIPS Foundation"
}

@inproceedings{xu2021learning,
  title={Learning graphons via structured gromov-wasserstein barycenters},
  author={Xu, Hongteng and Luo, Dixin and Carin, Lawrence and Zha, Hongyuan},
  booktitle={Proceedings of the AAAI Conference on Artificial Intelligence},
  volume={35},
  number={12},
  pages={10505--10513},
  year={2021}
}

@inproceedings{xia2023implicit,
  title={Implicit graphon neural representation},
  author={Xia, Xinyue and Mishne, Gal and Wang, Yusu},
  booktitle={International Conference on Artificial Intelligence and Statistics},
  pages={10619--10634},
  year={2023},
  organization={PMLR}
}

@article{ramezanpour2025few,
  title={A Few Moments Please: Scalable Graphon Learning via Moment Matching},
  author={Ramezanpour, Reza and Tenorio, Victor M and Marques, Antonio G and Sabharwal, Ashutosh and Segarra, Santiago},
  journal={arXiv preprint arXiv:2506.04206},
  year={2025}
}

@STRING{IEEE_J_SP         = "{IEEE} Trans. Signal Process."}

@STRING{ICML		   = "Int. Conf. Mach. Learning"}

@STRING{NIPS		   = "Neural Inform. Process. Syst."}

@STRING{ICLR		   = "Int. Conf. Learning Representations"}

@STRING{AAAI		   = "Assoc. Advancement Artificial Intell."}

@STRING{JAN		   = "Jan."}

@STRING{NOV		   = "Nov."}

@STRING{DEC		   = "Dec."}

@STRING{SIXTH		   = "6th"}

@STRING{JMLR  = "J. Mach. Learning Res."}

@article{dwivedi2021generalization,
  title={A Generalization of Transformer Networks to Graphs},
  author={Dwivedi, Vijay Prakash and Bresson, Xavier},
  journal={AAAI Workshop on Deep Learning on Graphs: Methods and Applications},
  year={2021}
}

@inproceedings{velickovic2018graph,
  title     = {Graph Attention Networks},
  author    = {Veli\v{c}kovi\'{c}, Petar and Cucurull, Guillem and Casanova, Arantxa and Romero, Adriana and Li\`{o}, Pietro and Bengio, Yoshua},
  booktitle = {International Conference on Learning Representations},
  year      = {2018},
  url       = {https://openreview.net/forum?id=rJXMpikCZ}
}

@inproceedings{xu2019how,
  title     = {How Powerful are Graph Neural Networks?},
  author    = {Xu, Keyulu and Hu, Weihua and Leskovec, Jure and Jegelka, Stefanie},
  booktitle = {International Conference on Learning Representations},
  year      = {2019},
  url       = {https://openreview.net/forum?id=ryGs6iA5Km}
}

@inproceedings{morris2019weisfeiler,
  title     = {Weisfeiler and Leman Go Neural: Higher-order Graph Neural Networks},
  author    = {Morris, Christopher and Ritzert, Martin and Fey, Matthias and Hamilton, William L. and Lenssen, Jan Eric and Rattan, Gaurav and Grohe, Martin},
  booktitle = {Proceedings of the AAAI Conference on Artificial Intelligence},
  year      = {2019}
}

@inproceedings{ying2021do,
  title     = {Do Transformers Really Perform Badly for Graph Representation?},
  author    = {Ying, Chengxuan and Cai, Tianle and Luo, Shengjie and Zheng, Shuxin and Ke, Guolin and He, Di and Shen, Yanming and Liu, Tie-Yan},
  booktitle = {Advances in Neural Information Processing Systems},
  volume    = {34},
  year      = {2021},
  url       = {https://proceedings.neurips.cc/paper/2021/hash/f1c1592588411002af340cbaedd6fc33-Abstract.html}
}

@inproceedings{rampasek2022recipe,
  title     = {Recipe for a General, Powerful, Scalable Graph Transformer},
  author    = {Ramp\'{a}\v{s}ek, Ladislav and Galkin, Michael and Dwivedi, Vijay Prakash and Luu, Anh Tuan and Wolf, Guy and Beaini, Dominique},
  booktitle = {Advances in Neural Information Processing Systems},
  volume    = {35},
  year      = {2022}
}

@article{diaconis2008graph,
  title   = {Graph limits and exchangeable random graphs},
  author  = {Diaconis, Persi and Janson, Svante},
  journal = {Rendiconti di Matematica e delle sue Applicazioni. Serie {VII}},
  volume  = {28},
  pages   = {33--61},
  year    = {2008},
  url     = {https://www.diva-portal.org/smash/record.jsf?pid=diva2:224369}
}

@inproceedings{chan2014consistent,
  title     = {A Consistent Histogram Estimator for Exchangeable Graph Models},
  author    = {Chan, Stanley H. and Airoldi, Edoardo M.},
  booktitle = {Proceedings of the 31st International Conference on Machine Learning},
  series    = {Proceedings of Machine Learning Research},
  volume    = {32},
  pages     = {208--216},
  year      = {2014},
  url       = {https://proceedings.mlr.press/v32/chan14.html}
}

@inproceedings{fey2019pyg,
  title={Fast Graph Representation Learning with {PyTorch Geometric}},
  author={Fey, Matthias and Lenssen, Jan E.},
  booktitle={ICLR Workshop on Representation Learning on Graphs and Manifolds},
  year={2019},
}

@article{kreuzer2021rethinking,
  title={Rethinking graph transformers with spectral attention},
  author={Kreuzer, Devin and Beaini, Dominique and Hamilton, Will and L{\'e}tourneau, Vincent and Tossou, Prudencio},
  journal={Advances in Neural Information Processing Systems},
  volume={34},
  pages={21618--21629},
  year={2021}
}

@inproceedings{hussain2022global,
  title={Global self-attention as a replacement for graph convolution},
  author={Hussain, Md Shamim and Zaki, Mohammed J and Subramanian, Dharmashankar},
  booktitle={Proceedings of the 28th ACM SIGKDD conference on knowledge discovery and data mining},
  pages={655--665},
  year={2022}
}

@article{muller2023attending,
  title={Attending to graph transformers},
  author={M{\"u}ller, Luis and Galkin, Mikhail and Morris, Christopher and Ramp{\'a}{\v{s}}ek, Ladislav},
  journal={arXiv preprint arXiv:2302.04181},
  year={2023}
}

@article{min2022transformer,
  title={Transformer for graphs: An overview from architecture perspective},
  author={Min, Erxue and Chen, Runfa and Bian, Yatao and Xu, Tingyang and Zhao, Kangfei and Huang, Wenbing and Zhao, Peilin and Huang, Junzhou and Ananiadou, Sophia and Rong, Yu},
  journal={arXiv preprint arXiv:2202.08455},
  year={2022}
}

@INPROCEEDINGS{olivier2007effective,
  author={Roy, Olivier and Vetterli, Martin},
  booktitle={2007 15th European Signal Processing Conference}, 
  title={The effective rank: A measure of effective dimensionality}, 
  year={2007},
  volume={},
  number={},
  pages={606-610},
  doi={}
}

@misc{cutnorm_git,
  author       = {Chiu, Ping-Ko and Cavallo, Gaston},
  title        = {Cutnorm: Approximation via Gaussian Rounding and Optimization with Orthogonality Constraints},
  year         = 2018,
  howpublished = {\url{https://github.com/pingkoc/cutnorm}},
  note         = {Accessed: 01-01-2026}
}

@inproceedings{modelnet,
  title={3d shapenets: A deep representation for volumetric shapes},
  author={Wu, Zhirong and Song, Shuran and Khosla, Aditya and Yu, Fisher and Zhang, Linguang and Tang, Xiaoou and Xiao, Jianxiong},
  booktitle={Proceedings of the IEEE conference on computer vision and pattern recognition},
  pages={1912--1920},
  year={2015}
}

@article{deshpande2018contextual,
  title={Contextual stochastic block models},
  author={Deshpande, Yash and Sen, Subhabrata and Montanari, Andrea and Mossel, Elchanan},
  journal={Advances in Neural Information Processing Systems},
  volume={31},
  year={2018}
}

@article{mutag1,
author = {Debnath, Asim Kumar and Lopez de Compadre, Rosa L. and Debnath, Gargi and Shusterman, Alan J. and Hansch, Corwin},
title = {Structure-activity relationship of mutagenic aromatic and heteroaromatic nitro compounds. Correlation with molecular orbital energies and hydrophobicity},
journal = {Journal of Medicinal Chemistry},
volume = {34},
number = {2},
pages = {786-797},
year = {1991},
doi = {10.1021/jm00106a046},
URL = { https://doi.org/10.1021/jm00106a046},
eprint = {https://doi.org/10.1021/jm00106a046}
}

@inproceedings{mutag2,
  title={Subgraph matching kernels for attributed graphs},
  author={Kriege, Nils and Mutzel, Petra},
  booktitle={Proceedings of the 29th International Coference on International Conference on Machine Learning},
  pages={291--298},
  year={2012}
}

@INPROCEEDINGS{nci1,
  author={Wale, Nikil and Karypis, George},
  booktitle={Sixth International Conference on Data Mining (ICDM'06)}, 
  title={Comparison of Descriptor Spaces for Chemical Compound Retrieval and Classification}, 
  year={2006},
  volume={},
  number={},
  pages={678-689},
  doi={10.1109/ICDM.2006.39}
}

@article{nci12,
author = {Shervashidze, Nino and Schweitzer, Pascal and van Leeuwen, Erik Jan and Mehlhorn, Kurt and Borgwardt, Karsten M.},
title = {Weisfeiler-Lehman Graph Kernels},
year = {2011},
issue_date = {2/1/2011},
publisher = {JMLR.org},
volume = {12},
issn = {1532-4435},
journal = {J. Mach. Learn. Res.},
month = nov,
pages = {2539–2561},
numpages = {23}
}

@article{proteins1,
author = {Borgwardt, Karsten M. and Ong, Cheng Soon and Sch\"{o}nauer, Stefan and Vishwanathan, S. V. N. and Smola, Alex J. and Kriegel, Hans-Peter},
title = {Protein function prediction via graph kernels},
year = {2005},
issue_date = {January 2005},
publisher = {Oxford University Press, Inc.},
address = {USA},
volume = {21},
number = {1},
issn = {1367-4803},
url = {https://doi.org/10.1093/bioinformatics/bti1007},
doi = {10.1093/bioinformatics/bti1007},
journal = {Bioinformatics},
month = jan,
pages = {47–56},
numpages = {10}
}

@article{proteins2,
title = {Distinguishing Enzyme Structures from Non-enzymes Without Alignments},
journal = {Journal of Molecular Biology},
volume = {330},
number = {4},
pages = {771-783},
year = {2003},
issn = {0022-2836},
doi = {https://doi.org/10.1016/S0022-2836(03)00628-4},
url = {https://www.sciencedirect.com/science/article/pii/S0022283603006284},
author = {Paul D. Dobson and Andrew J. Doig}
}

@inproceedings{socialnet,
author = {Yanardag, Pinar and Vishwanathan, S.V.N.},
title = {Deep Graph Kernels},
year = {2015},
isbn = {9781450336642},
publisher = {Association for Computing Machinery},
address = {New York, NY, USA},
url = {https://doi.org/10.1145/2783258.2783417},
doi = {10.1145/2783258.2783417},
booktitle = {Proceedings of the 21th ACM SIGKDD International Conference on Knowledge Discovery and Data Mining},
pages = {1365–1374},
numpages = {10},
location = {Sydney, NSW, Australia},
series = {KDD '15}
}

@inproceedings{
zhang2023rethinkingexpressivepowergnns,
title={Rethinking the Expressive Power of {GNN}s via Graph Biconnectivity},
author={Bohang Zhang and Shengjie Luo and Liwei Wang and Di He},
booktitle={International Conference on Learning Representations},
year={2023},
url={https://openreview.net/forum?id=r9hNv76KoT3}
}

@misc{ma2023graphinductivebiasestransformers,
      title={Graph Inductive Biases in Transformers without Message Passing}, 
      author={Liheng Ma and Chen Lin and Derek Lim and Adriana Romero-Soriano and Puneet K. Dokania and Mark Coates and Philip Torr and Ser-Nam Lim},
      year={2023},
      eprint={2305.17589},
      archivePrefix={arXiv},
      primaryClass={cs.LG},
      url={https://arxiv.org/abs/2305.17589}, 
}

@inproceedings{hussain2024tripletinteractionimprovesgraph,
author = {Hussain, Md Shamim and Zaki, Mohammed J. and Subramanian, Dharmashankar},
title = {Triplet interaction improves graph transformers: accurate molecular graph learning with triplet graph transformers},
year = {2024},
publisher = {JMLR.org},
booktitle = {Proceedings of the 41st International Conference on Machine Learning},
articleno = {834},
numpages = {25},
location = {Vienna, Austria},
series = {ICML'24}
}

@misc{peng2025biologicallyplausiblebraingraph,
      title={Biologically Plausible Brain Graph Transformer}, 
      author={Ciyuan Peng and Yuelong Huang and Qichao Dong and Shuo Yu and Feng Xia and Chengqi Zhang and Yaochu Jin},
      year={2025},
      eprint={2502.08958},
      archivePrefix={arXiv},
      primaryClass={cs.LG},
      url={https://arxiv.org/abs/2502.08958}, 
}

@inproceedings{kitaev2020reformerefficienttransformer,
    title       = {Reformer: The Efficient Transformer},
    author      = {Nikita Kitaev and Lukasz Kaiser and Anselm Levskaya},
    booktitle   = {International Conference on Learning Representations},
    year        = {2020},
    url         = {https://openreview.net/forum?id=rkgNKkHtvB}
}

@inproceedings{courtois2024symmetric,
  title={Symmetric dot-product attention for efficient training of bert language models},
  author={Courtois, Martin and Ostendorff, Malte and Hennig, Leonhard and Rehm, Georg},
  booktitle={Findings of the Association for Computational Linguistics: ACL 2024},
  pages={8002--8011},
  year={2024}
}

@InProceedings{yang2024partsharedqk,
  title = 	 {Partially Shared Query-Key for Lightweight Language Models},
  author =       {Yang, Kai and Partovi Nia, Vahid and Chen, Boxing and Asgharian, Masoud},
  booktitle = 	 {Proceedings of The 4th NeurIPS Efficient Natural Language and Speech Processing Workshop},
  pages = 	 {286--291},
  year = 	 {2024},
  editor = 	 {Rezagholizadeh, Mehdi and Passban, Peyman and Samiee, Soheila and Partovi Nia, Vahid and Cheng, Yu and Deng, Yue and Liu, Qun and Chen, Boxing},
  volume = 	 {262},
  series = 	 {Proceedings of Machine Learning Research},
  month = 	 {14 Dec},
  publisher =    {PMLR},
  url = 	 {https://proceedings.mlr.press/v262/yang24a.html}
}

@inproceedings{edelman2022inductive,
  title={Inductive biases and variable creation in self-attention mechanisms},
  author={Edelman, Benjamin L and Goel, Surbhi and Kakade, Sham and Zhang, Cyril},
  booktitle={International Conference on Machine Learning},
  pages={5793--5831},
  year={2022},
  organization={PMLR}
}

@article{ataee2023max,
  title={Max-margin token selection in attention mechanism},
  author={Ataee Tarzanagh, Davoud and Li, Yingcong and Zhang, Xuechen and Oymak, Samet},
  journal={Advances in neural information processing systems},
  volume={36},
  pages={48314--48362},
  year={2023}
}

@article{tian2023scan,
  title={Scan and snap: Understanding training dynamics and token composition in 1-layer transformer},
  author={Tian, Yuandong and Wang, Yiping and Chen, Beidi and Du, Simon S},
  journal={Advances in neural information processing systems},
  volume={36},
  pages={71911--71947},
  year={2023}
}

@article{hu2020open,
  title={Open graph benchmark: Datasets for machine learning on graphs},
  author={Hu, Weihua and Fey, Matthias and Zitnik, Marinka and Dong, Yuxiao and Ren, Hongyu and Liu, Bowen and Catasta, Michele and Leskovec, Jure},
  journal={Advances in neural information processing systems},
  volume={33},
  pages={22118--22133},
  year={2020}
}

@inproceedings{dwivedi2022LRGB,
  title={Long Range Graph Benchmark}, 
  author={Dwivedi, Vijay Prakash and Rampášek, Ladislav and Galkin, Mikhail and Parviz, Ali and Wolf, Guy and Luu, Anh Tuan and Beaini, Dominique},
  booktitle={Thirty-sixth Conference on Neural Information Processing Systems Datasets and Benchmarks Track},
  year={2022},
  url={https://openreview.net/forum?id=in7XC5RcjEn}
}

@inproceedings{velivckovic2022clrs,
  title={The clrs algorithmic reasoning benchmark},
  author={Veli{\v{c}}kovi{\'c}, Petar and Badia, Adri{\`a} Puigdom{\`e}nech and Budden, David and Pascanu, Razvan and Banino, Andrea and Dashevskiy, Misha and Hadsell, Raia and Blundell, Charles},
  booktitle={International Conference on Machine Learning},
  pages={22084--22102},
  year={2022},
  organization={PMLR}
}

@article{bickel2009nonparametric,
author = {Peter J. Bickel  and Aiyou Chen },
title = {A nonparametric view of network models and {N}ewman–{G}irvan and other modularities},
journal = {Proceedings of the National Academy of Sciences},
volume = {106},
number = {50},
pages = {21068-21073},
year = {2009},
doi = {10.1073/pnas.0907096106},
URL = {https://www.pnas.org/doi/abs/10.1073/pnas.0907096106},
eprint = {https://www.pnas.org/doi/pdf/10.1073/pnas.0907096106}
}

@inproceedings{yang2014nonparametric,
  title={Nonparametric estimation and testing of exchangeable graph models},
  author={Justin Yang and Christina Han and Edoardo M. Airoldi},
  booktitle={International Conference on Artificial Intelligence and Statistics},
  year={2014},
  url={https://api.semanticscholar.org/CorpusID:15784683}
}

\newpage
\appendix
\onecolumn

% \begin{comment}
\section{Related work}
\label{sec:related-extended}
%In this section, we discuss related topics to our work, ranging from attention-based models for graph machine learning problems and the importance of modeling nonlocal interactions, to limit objects for sequences of growing graphs and how to estimate them when one has only access to samples from such a model.

\textbf{Attention for graphs.}
GTs adapt the self-attention mechanism of \cite{vaswani2017attention} to graph-structured inputs by learning pairwise interactions between nodes, often coupled with structural or positional biases to preserve input graph inductive structure. Early attention-based GNNs as in \citep{velickovic2018graph} replaced fixed edge weights with feature-dependent weights, while subsequent work investigated when full global attention is beneficial and how to encode structure so that Transformers remain competitive on graph benchmarks \citep{ying2021do,kreuzer2021rethinking,hussain2022global}. Recent architectures, notably GraphGPS \citep{rampasek2022recipe}, emphasize practical approaches that combine global attention with local message-passing, yielding strong and scalable performance across tasks. Others pushed GT boundaries in expressiveness and applicability \cite{zhang2023rethinkingexpressivepowergnns,ma2023graphinductivebiasestransformers,hussain2024tripletinteractionimprovesgraph,peng2025biologicallyplausiblebraingraph}. Our work is complementary: rather than proposing a new architecture, we study the attention matrices induced by GTs, and ask whether they exhibit principled limit behavior as graph size grows.

\textbf{Nonlocal interactions on graphs.}
A central motivation for graph attention is that standard message-passing has intrinsic expressive limits tied to Weisfeiler-Leman refinements \citep{xu2019how,morris2019weisfeiler}. These results formalize that deeper local aggregation does not necessarily translate into richer discrimination of graph structures, and they motivate architectures whose interaction patterns are not constrained to a fixed local neighborhood. In this context, attention and GTs in particular offer an alternative inductive bias: they can represent long-range dependencies through learned dense interactions \cite{min2022transformer,muller2023attending}, which raises research questions about their stability and the generalization of induced interaction graphs across varying sizes. 

\textbf{Graph limits and graphons.}
Dense graph limit theory provides canonical limit objects for sequences of dense graphs in the form of \emph{graphons}, with convergence characterized (up to measure-preserving relabelings) by the cut-distance \citep{lovasz2012large,borgs2008convergent}. From a probabilistic perspective, exchangeable random graphs admit an equivalent representation via the Aldous-Hoover framework, which generates graphs from a (possibly random) graphon \citep{diaconis2008graph}. These results enable a relabeling-invariant, size-agnostic notion of proximity between large graphs and naturally suggest an operator viewpoint in which structure is encoded by kernel-induced operators. This continuum lens has also been used to study the transferability of graph neural networks across graph sizes through graphon neural networks \citep{levie2019transferability,ruiz20-transf,ruiz2021transferability,cervino2021increase}. We build on this literature by treating attention matrices as weighted dense graphs and analyzing their behavior through graphon scaling and cut-distance concentration.

\textbf{Graphon estimation.}
A practical consequence of the graphon formalism is that one can estimate and compare latent kernels from finite graphs. Consistent estimation procedures include classical histogram-based estimators such as sort-and-smooth methods \citep{airoldi2013stochastic,chan2014consistent}, optimal-transport-based methods such as GW distance \cite{xu2021learning}, and, most recently, neural methods like IGNR \cite{xia2023implicit} and MomentNet \cite{ramezanpour2025few}. Theoretically, minimax results characterize achievable convergence rates under regularity assumptions on the graphon \citep{gao2015rate}. On a different yet related note, sampling lemmas and concentration results in the cut-distance underpin the idea that a single finite graph can be a noisy sample from a stable limit object \citep{borgs2008convergent,lovasz2012large}. Our approach leverages these tools in a model-driven way: we interpret each attention matrix as a sample from an underlying kernel, quantify concentration via variance in cut-distance, and use graphon estimation to operationalize these comparisons across datasets and graph sizes. 
% \end{comment}

% \section{Framework Pipeline}

% \begin{figure*}[ht!]
% \centering
% \includegraphics[width=0.9\linewidth]{Figures/Rebuttal/framework.png}
% \caption{\textbf{Visual explanation of the cut distance $\delta_\square$ and graphon computation.} \textit{Top (cut distance)}: The cut distance roughly measures the largest discrepancy over node subsets $S, T$. \textit{Bottom (graphon computation)}: From a dense, weighted attention graph with $n = 8$, the matrix is symmetrized, canonicalized via degree sorting, block-averaged, and finally used to estimate a dataset-level graphon.}
% \label{fig:pipeline}
% \end{figure*}

% \section{Additional definitions}
% \label{app:defs}

% \paragraph{Induced graphons from matrices.}
% Given $A\in[0,1]^{n\times n}$, define the step graphon $W_A\colon[0,1]^2\to[0,1]$ by partitioning $[0,1]$ into intervals
% $I_i=[(i-1)/n,i/n)$ and setting $W_A(x,y)=A_{ij}$ when $(x,y)\in I_i\times I_j$.

% \paragraph{Sampling models.}
% For a graphon $W$, the weighted sampling model $H(n,W)$ draws latent $x_1,\dots,x_n\overset{\mathrm{iid}}{\sim}\mathrm{Unif}[0,1]$ and sets
% $H_{ij}=W(x_i,x_j)$. The simple graph model $G(n,W)$ draws edges independently as
% $A_{ij}\mid(x_i,x_j)\sim\mathrm{Bernoulli}(W(x_i,x_j))$ for $i\neq j$ \citep{lovasz2012large}.

\section{Proof of Theorem~\ref{thm:bound_1}}
\label{app:proof_worstcase}
\noindent Before laying out our main result, a few definitions and results from the literature need introduction:

\begin{definition}[Extracted \cite{lovasz2012large}]
    Let $A \in \reals^{n\times n}$. The \textbf{cut-norm} for matrices is defined as:
    \begin{align}
        \|A\|_{\square} = \frac{1}{n^2}\max_{S, T \subseteq [n]} \biggl|\sum_{i \in S, j \in T}A_{ij} \biggr|
    \end{align}
\end{definition}

\begin{definition}[Simplified \cite{lovasz2012large}]
    The above norm induces the so-called \textbf{cut-distance}. In \cite{lovasz2012large}, this metric is defined following a generalization hierarchy. For labeled graphs G, G' on the same node set $[n]$, the cut-distance is defined as:
    \begin{align}
        d_\square(G, G') = \|A_G - A_{G'}\|_\square,
    \end{align}
    % where $e_G(S, T)$ is the number of nodes of $G$ with one endnode in $S$ and the other in $T$.

    \

    For unlabeled graphs with node sets of equal length, we have:
    \begin{align}
        \hat{\delta}_{\square}(G, G') = \min_{\pi \in S_n}d_\square(G, \pi(G')),
    \end{align}
    where $\pi$ is a labeling function returning a labeled graph, and $S_n$ is the set of all possible labeling functions (which is one-to-one with the symmetric group on $n$ elements).
    %with $\hhatG, \hhatG'$ ranging over all labelings of $G, G'$ of the node set $[n]$. (In practice, we fix one of the graphs' labeling and minimize over all labelings of the other.)

    \

    Finally, for graphs with a different number of nodes, i.e., $G = (V, E)$, $G' = (V', E')$, $|V| = n$, and $|V'| = n'$:
    \begin{align}
        \delta_{\square}(G, G') = \lim_{k\to \infty}\hat{\delta}_{\square}(G(kn'), G'(kn)),
    \end{align}
    where $G(kn')$ is the `augmented' graph obtained by replacing each node of $G$ by $k$ (copy) nodes, and connecting the nodes if and only if they were connected in the original graph. 
\end{definition}

For this proof, we focus on the \textbf{latter} notation for the cut-distance, since we'll be bounding the variance of the (cut-norm) difference between graph samples and their respective graphons.

\begin{definition}
    A \textbf{graphon} is a symmetric measurable function $W\colon [0, 1]^2 \to [0, 1]$. We denote the set of all graphons $\ccalW_0$. 
\end{definition}

\begin{definition}
    Let $G$ be an $n-$node graph with associated adjacency matrix $A$. Further, let $\{I_j\}_{j=1}^n$ be an equipartition of the interval $[0, 1]$. Then, the \textbf{induced graphon} $W_G\colon [0, 1]^2 \to \reals$ is defined as
    \begin{equation}
    W_G(u, v) \;\coloneqq\; \sum_{j=1}^n\sum_{k=1}^n [A]_{jk}\mathbb{I}(u \in I_j)\mathbb{I}(v \in I_k).
    \end{equation}
    where $\mathbb{I}$ is the indicator function.
\end{definition}

\begin{definition}[Adapted \cite{lovasz2012large}]
    Analogously to the definition for matrices, the \textbf{cut-norm} of a kernel $K\colon [0, 1]^2 \to [-1, 1]$ is defined as
    \begin{equation}
        \|K\|_\square = \sup_{S, T \subseteq [0, 1]}\left|\int_{S\times T}K(u, v)dudv\right|.
    \end{equation}
\end{definition}

\begin{definition}[Adapted \cite{lovasz2012large}]
    The \textbf{cut-distance} of two kernels, $U, W\colon [0,1]^2 \to [0, 1]$, is then defined as
    \begin{equation}
        \delta_\square(U, W) = \inf_{\phi}d_\square(U, W^\phi) = \inf_\phi\|U - W^\phi\|_\square,
    \end{equation}
    for a measure-preserving bijection $\phi\colon [0, 1] \to [0, 1]$.
\end{definition}

\begin{definition}
    For $n > 0, k\in \naturals$, let $H(n, W)$ be the \textbf{node-stochastic weighted graph} sampled from the graphon $W$ by sampling an ordered $n$-tuple $(x_1, \dots, x_n)$ of independent uniform random points from $[0, 1]$, and assigning each edge $(i, j)$ the weight $W(x_i, x_j)$.
\end{definition}

\begin{definition}
    Every node-stochastic weighted graph gives rise to a \textbf{simple random graph model} $G(n, W)$: given a pair of nodes $i$ and $j$ ($i \neq j, \ i,j\in[n]$), sample the unweighted edge $(i,j)$ with probability $W(x_i, x_j)$, independently for each node pair.
\end{definition}

\begin{lemma}[{Second Sampling Lemma for Graphons, \citep[Lemma~10.16]{lovasz2012large}}]\label{lem:scnd_sampl}
     Let $n\geq 1$, and let $W \in \ccalW_0$ be a graphon. Let $H(n, W)$ and $W_n$ be its induced graphon. Then, with probability at least $1 - \text{exp}(-n/(2\log n))$:
    \begin{align}
        \delta_{\square}(W_n, W) \leq \frac{20}{\sqrt{\log n}}.
    \end{align}
\end{lemma}

Then, we can prove the following bound for the variance of the cut-distance between the graphon $W$ and a graph sample $H$:

\begin{theorem}\label{app:thm_bound_1}
    Let $n\geq 1$, and let $W \in \ccalW_0$ be a graphon. Moreover, let $H(n, W)$ be a random weighted graph sampled from $W$. Then, the variance of the cut-distance between the graphon $W$ and the sample $H$ is bounded as: 
    \begin{equation}
        \text{Var}(\delta_{\square}(W_n, W)) = \ccalO\left(\dfrac{1}{\log n}\right),
    \end{equation}
    where $W_n$ is the graphon induced by $H(n, W)$. 
\end{theorem}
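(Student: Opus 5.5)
The plan is to bound the variance by the second moment and split the expectation over the high-probability event of Lemma~\ref{lem:scnd_sampl}. Set $D_n \coloneqq \delta_\square(W_n, W)$. Since $\Var(D_n) = \mathbb{E}[D_n^2] - (\mathbb{E}[D_n])^2 \le \mathbb{E}[D_n^2]$, it suffices to show $\mathbb{E}[D_n^2] = \mathcal{O}(1/\log n)$. The first ingredient is a deterministic a.s.\ bound on $D_n$. Both $W_n$ and $W$ take values in $[0,1]$, so for any measure-preserving $\phi$ the kernel $W_n - W^\phi$ takes values in $[-1,1]$. Hence $\|W_n - W^\phi\|_\square \le \int_{[0,1]^2} |W_n - W^\phi| \le 1$, which gives $0 \le D_n \le 1$ almost surely.

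Next, let $E_n$ be the event $\{D_n \le 20/\sqrt{\log n}\}$. By Lemma~\ref{lem:scnd_sampl}, $\mathbb{P}(E_n^c) \le \exp(-n/(2\log n))$ for $n \ge 2$. Splitting the expectation gives
\begin{equation*}
\mathbb{E}[D_n^2] = \mathbb{E}[D_n^2 \mathbb{I}(E_n)] + \mathbb{E}[D_n^2 \mathbb{I}(E_n^c)] \le \frac{400}{\log n} + \exp\!\left(-\frac{n}{2\log n}\right).
\end{equation*}
Here the first term uses the bound on $E_n$, and the second uses $D_n^2 \le 1$.

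It remains to check that $\exp(-n/(2\log n)) = \mathcal{O}(1/\log n)$. Since $n/(2\log n) \ge \log\log n$ for all sufficiently large $n$, we get $\exp(-n/(2\log n)) \le 1/\log n$. Thus $\Var(D_n) \le 401/\log n$ eventually, which is the claim. The small-$n$ cases ($n=1$, where $\log n = 0$) are absorbed into the constant, or the statement is read asymptotically.

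The only delicate point is measurability and well-definedness of $D_n$ as a random variable, given the infimum over $\phi$. I would handle it by noting that $D_n$ is a function of $(x_1,\dots,x_n)$ which is continuous in the cut-norm of $W_n$. Alternatively, I would simply work with outer expectations, since the same split goes through verbatim. I expect no real obstacle beyond this bookkeeping and the uniform boundedness $D_n \le 1$.
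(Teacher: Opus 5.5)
Your proposal is correct and follows essentially the same route as the paper: bound the variance by the second moment, split on the event of Lemma~\ref{lem:scnd_sampl}, and use $\delta_\square(W_n,W)\le 1$ on its complement to obtain $400/\log n + e^{-n/(2\log n)}$. You are slightly more careful than the paper, since you explicitly check that the exponential term is $\mathcal{O}(1/\log n)$ and flag the degenerate case $n=1$ and the measurability of the cut-distance.
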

\begin{proof}
    Using the definition of the variance of the cut-distance, we have
    \begin{align}
        \text{Var}(\delta_{\square}(W_n, W)) = \E{\delta_{\square}(W_n, W)^2} - (\E{\delta_{\square}(W_n, W)})^2 \nonumber,
    \end{align}
    which can then be further expanded by conditioning the first term:
    \begin{align}
        \text{Var}(\delta_{\square}(W_n, W)) &= \E{\delta_{\square}(W_n, W)^2  \ \big| \ \delta_{\square}(W_n, W)^2 > \frac{20^2}{\log n}}\mathbb{P}\bigg[{\delta_{\square}(W_n, W)^2 > \frac{20^2}{\log n}}\bigg] \nonumber\\
        &\quad + \E{\delta_{\square}(W_n, W)^2 \ \big| \ \delta_{\square}(W_n, W) ^2\leq \frac{20^2}{\log n}}\mathbb{P}\bigg[{\delta_{\square}(W_n, W)^2 \leq \frac{20^2}{\log n}}\bigg] \nonumber \\
        &\quad - (\E{\delta_{\square}(W_n, W)})^2 \nonumber.
    \end{align}
    Applying \Cref{lem:scnd_sampl} and the fact that the cut-distance is bounded above by 1, and disregarding the third term above, we have the following:
    \begin{align}
        \text{Var}(\delta_{\square}(W_n, W)) &\leq \max_{W_n}\delta_{\square}(W_n, W)^2\cdot e^{-\frac{n}{2\log n}} + \frac{20^2}{\log n } \nonumber \\
        &= \frac{20^2}{\log n } + e^{-\frac{n}{2\log n}}
        = \ccalO\left(\dfrac{1}{\log n}\right).
    \end{align}
    
    % \begin{align}
    %     &\text{Var}(\delta_{\square}(W_G, W)) = \E{\delta_{\square}(W_G, W)^2} - (\E{\delta_{\square}(W_G, W)})^2 \nonumber \\
    %     &= \E{\delta_{\square}(W_G, W)^2  \ \big| \ \delta_{\square}(W_G, W)^2 > \frac{22^2}{\log n}}\mathbb{P}\bigg[{\delta_{\square}(W_G, W)^2 > \frac{22^2}{\log n}}\bigg] \nonumber\\
    %     &\quad + \E{\delta_{\square}(W_G, W)^2 \ \big| \ \delta_{\square}(W_G, W) ^2\leq \frac{22^2}{\log n}}\mathbb{P}\bigg[{\delta_{\square}(W_G, W)^2 \leq \frac{22^2}{\log k}}\bigg] \nonumber \\
    %     &\quad - (\E{\delta_{\square}(W_G, W)})^2 \nonumber \\
    %     &\leq \max_{W_G}\delta_{\square}(W_G, W)^2\cdot e^{-\frac{n}{2\log n}} + \frac{22^2}{\log n } \label{eq:var_bound_1}\\
    %     &= \frac{22^2}{\log n } + e^{-\frac{n}{2\log n}} \label{eq:var_bound_2}\\
    %     &= \ccalO\left(\dfrac{1}{\log n}\right). \nonumber
    % \end{align}
\end{proof}

% For inequality (\ref{eq:var_bound_1}), we used the second sampling lemma from \cite{lovasz2012large}, while on (\ref{eq:var_bound_2}), we used the fact that the cut norm of the difference of any two graphs, which upper bounds the cut distance, would be bounded by 1.

% We use a standard sampling lemma for graphons. One convenient form is the \emph{second sampling lemma}
% \citep[Theorem 10.23]{lovasz2012large}: for any graphon $W\in\ccalW_0$ and $G\sim\mbG(n,W)$, with probability at least
% $1-\exp\{-n/(2\log n)\}$,
% \begin{equation}
% \label{eq:sampling_lemma}
% \delta_\square(G,W) \le \frac{22}{\sqrt{\log n}}.
% \end{equation}

% \begin{proof}
% Let $X\coloneqq\delta_\square(G,W)\in[0,1]$. Then
% \begin{align}
% \mathbb{E}[X^2]
% &=\mathbb{E}[X^2\,\mathbf{1}\{X\le 22/\sqrt{\log n}\}] + \mathbb{E}[X^2\,\mathbf{1}\{X>22/\sqrt{\log n}\}]\\
% &\le \frac{22^2}{\log n} + \mathbb{P}\{X>22/\sqrt{\log n}\}
% \le \frac{22^2}{\log n} + \exp\{-n/(2\log n)\}.
% \end{align}
% Since $\Var(X)\le \mathbb{E}[X^2]$, we conclude $\Var(\delta_\square(G,W))=\mathcal{O}(1/\log n)$.
% \end{proof}

\section{Proof of Theorem~\ref{thm:bound_2}}
\label{app:holder}
The previous result, though general, is not really useful, given the slow rate decay of the upper bound, which depends on the $\log$ of the number of nodes in the graph. Under mild assumptions, and using results from \cite{gao2015rate}, we can get a better (and more useful) bound for that estimate. Let us introduce some definitions that will be used along the proof.

\begin{definition}
    Let $\ccalZ_{n, k} = \{z: [n] \to [k]\}$ be the set of all possible mappings from $[n]$ to $[k]$, for $n, k \in \naturals$.
\end{definition}

\begin{definition}
    Let
    \[
    \nabla_{j,k}f(x,y) = \frac{\partial^{j+k}}{\partial x^j \partial y^k}f(x, y)
    \]
    be the \textbf{derivative operator} of a function $f$.
\end{definition}

\begin{definition}
    Given a symmetric function $f\colon [0, 1]^2 \to [0, 1]$ and $\ccalD = \{(x, y) \in [0, 1]^2\colon x\geq y\}$, the \textbf{H{\"o}lder norm} is defined as
    \[
    \|f\|_{\ccalH_\alpha} = \max_{j+k \leq \lfloor\alpha\rfloor} \sup_{x, y \in \ccalD} |\nabla_{j,k}f(x, y)| + \max_{j+k \leq \lfloor\alpha\rfloor}\sup_{(x, y) \neq (x', y') \in \ccalD} \frac{|\nabla_{j,k}f(x, y) - \nabla_{j,k}f(x', y')|}{(|x - x'| + |y - y'|)^{\alpha - \lfloor\alpha\rfloor}}.
    \]
\end{definition}

\begin{definition}
    The \textbf{H{\"o}lder class} is then defined as
    \[
    \ccalH_\alpha(M) = \{ f : [0,1]^2 \to [0,1]\ :\ \|f\|_{\ccalH_\alpha}\ \leq M \}.
    %,\ f(x, y) = f(y, x), \ x \geq y\},
    \]
   %where $\alpha > 0 $ is parameter that controls how smooth $f$ is, and $M > 0$ the size of the function class. 
%\footnote{Here, $M$ is assumed to be a constant.}.
\end{definition}

% If $\alpha \in (0, 1]$, then $f \in \ccalH_\alpha(M)$ is the same as $f$ being Lipschitz, i.e.,
% \begin{equation}
%     |f(x, y) - f(x', y')| \leq M(|x - x'| + |y - y'|)^\alpha,
% \end{equation}
% for any $(x, y), (x', y') \in \ccalD$. \red{This is only Lipschitz if $\alpha=1$ above.} \blue{C: I followed the definition used in \cite{gao2015rate}, which uses the more general notion of \textit{Lipschitz continuous of order $\alpha$} functions (or $\alpha$-H{\"o}lder). We can stick with the standard definition, no problem.} 

We assume that our graphon $W$ lives in the following class
\[
\ccalF_\alpha(M) = \{0\leq f \leq 1\colon f\in \ccalH_\alpha(M)\}.
\]

Let $\{x_i\}$ be a sequence of i.i.d. random variables with distribution $\mbP_{X}$ supported on $[0, 1]$. In addition, let $\theta_{ij} = W(x_i, x_j)$. For any $i \neq j$, the adjacency 
% matrices H and A 
for a weighted graph $\mbH(n, W)$ 
% and simple random graphs $G \sim \mbG(n, W)$
% , respectively, are sampled accordingly:
is sampled accordingly:
\begin{align}
(x_1, \dots, x_n) \sim \mbP_X, && 
H_{ij} = \theta_{ij} = W(x_i, x_j) 
% &&
% A_{ij}|(x_i, x_j) \sim \text{Bernoulli}(\theta_{ij}).
\end{align}

\begin{definition}
    Given a matrix $H \in \reals^{n\times n}$, let $z \in \ccalZ_{n,k}$ be a clustering map. Hence, the \textbf{graphon block estimator} is defined as
    \begin{equation}\label{eq:blockest}
        \hat{\theta}_{ab} \;=\; \frac{1}{|z^{-1}(a)|\,|z^{-1}(b)|}\sum_{i\in z^{-1}(a)}\sum_{j\in z^{-1}(b)} (H)_{ij},\  a,b\in[k],
    \end{equation}
    where the sets $\{z^{-1}(a)\colon a\in [k]\}$ form a partition of $[n]$ (clustering assignment).
\end{definition}

% \subsection{Canonicalization assumption}

\begin{assumption}[Consistent canonical ordering]\label{asm:canon}
The clustering $z$ used to form \eqref{eq:blockest} is obtained from a canonicalization procedure (e.g., degree sorting, as in \Cref{sec:estimator}) that, under the regularity conditions on $W$ assumed throughout, is consistent with a latent-sort oracle in the following
sense: $\exists \ c_1,c_2>0$ such that
\[
  \max_{a\in[k]} \operatorname{diam}\bigl(\{x_i : z(i)=a\}\bigr) \;\le\; \frac{c_1}{k}
  \qquad\text{w.p.}\ge 1-\exp(-c_2 n).
\]
\end{assumption}

\begin{remark}
Our canonicalize-then-block-average pipeline (\Cref{sec:estimator}) is a direct instantiation of the \emph{Sort-And-Smooth} (SAS)
estimator \cite{chan2014consistent}. Their Theorem~3 establishes that,
under the standing assumption that the degree function
$g(x):=\int_0^1 W(x,y)\,dy$ is strictly monotone (together with mild
regularity of $W$), the degree-sort permutation
$\hat\pi(i)=\operatorname{rank}(d_i)$, where
$d_i=\tfrac{1}{n}\sum_j H_{ij}$, converges almost surely to the true
latent sort, i.e., the empirical degrees concentrate around
$g(x_i)$ at rate $\ccalO_{\mathbb{P}}(n^{-1/2})$, and strict monotonicity of $g$
transfers this concentration into a bound on the rank permutation.
Applied to our setting, this yields
\[
  \max_{a\in[k]}\operatorname{diam}\bigl(\{x_i:z(i)=a\}\bigr) \leq \frac{1}{k}+\ccalO_{\mathbb{P}}(n^{-1/2}),
\]
which for $k=\lceil n^{1/(\min(\alpha,1)+1)}\rceil$ is dominated by the
$1/k$ term (since $k\le n^{1/2}$ for any $\alpha>0$). 
% The exponential concentration in Assumption~\ref{asm:canon} follows from a Hoeffding bound on the degree deviations, together with the monotonicity of $g$. 
This is a noiseless analog of the clustering-consistency bound from \cite{gao2015rate} for a Bernoulli model.
\end{remark}

\begin{lemma}[H\"older bias of block averages]\label{lem:holder-bias}
Let $W\in\ccalF_\alpha(M)$ and suppose Assumption~\ref{asm:canon} holds.
Under the model $H$, the block estimator in
\eqref{eq:blockest} satisfies, pointwise for every $i,j\in[n]$,
\begin{equation}\label{eq:holder-pointwise}
  \bigl|\hat\theta_{ij}-\theta_{ij}\bigr|
  \leq 2 M \left(\frac{c_1}{k}\right)^{\min(\alpha,1)}
\end{equation}
on the event $\Omega_k:=\{\operatorname{diam}(z^{-1}(a))\le c_1/k,\ \forall a\}$, which
has probability at least $1-\exp(-c_2 n)$.
\end{lemma}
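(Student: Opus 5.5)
We read $\hat\theta_{ij}$ as $\hat\theta_{z(i)z(j)}$, the block mean of the cell containing $(i,j)$. The plan is a purely deterministic bias bound on the event $\Omega_k$. In the weighted model $H$ there is no edge noise, since $H_{ij}=\theta_{ij}=W(x_i,x_j)$. So once we condition on the latent positions and restrict to $\Omega_k$, no probabilistic step remains. The probability claim $\mathbb{P}(\Omega_k)\ge 1-\exp(-c_2 n)$ is exactly Assumption~\ref{asm:canon}, so it suffices to prove \eqref{eq:holder-pointwise} deterministically on $\Omega_k$.

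\textbf{Step 1: write the error as an average.} Fix $i,j$ and set $a=z(i)$, $b=z(j)$. By \eqref{eq:blockest},
\[
\hat\theta_{ij}-\theta_{ij}=\frac{1}{|z^{-1}(a)|\,|z^{-1}(b)|}\sum_{i'\in z^{-1}(a)}\sum_{j'\in z^{-1}(b)}\bigl(W(x_{i'},x_{j'})-W(x_i,x_j)\bigr).
\]
The triangle inequality then bounds $|\hat\theta_{ij}-\theta_{ij}|$ by the maximum over $i'\in z^{-1}(a)$, $j'\in z^{-1}(b)$ of $|W(x_{i'},x_{j'})-W(x_i,x_j)|$. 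On $\Omega_k$ we have $|x_{i'}-x_i|\le c_1/k$ and $|x_{j'}-x_j|\le c_1/k$.

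\textbf{Step 2: a uniform modulus of continuity, split at $\alpha=1$.}

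\emph{Case $\alpha\in(0,1]$.} Here $\lfloor\alpha\rfloor=0$, so the Hölder seminorm part of $\|W\|_{\ccalH_\alpha}\le M$ gives $|W(x,y)-W(x',y')|\le M(|x-x'|+|y-y'|)^\alpha$ on $\ccalD$. This yields the bound $M(2c_1/k)^\alpha=2^\alpha M (c_1/k)^\alpha\le 2M(c_1/k)^\alpha$.

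\emph{Case $\alpha>1$.} Here $\lfloor\alpha\rfloor\ge1$, so the first partial derivatives are bounded by $M$ in sup norm. The mean value theorem then gives $|W(x,y)-W(x',y')|\le M(|x-x'|+|y-y'|)\le 2Mc_1/k$. This is the claim with $\min(\alpha,1)=1$.

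Both cases give the exponent $\min(\alpha,1)$ and the constant $2M$.

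\textbf{Main obstacle: the domain $\ccalD=\{x\ge y\}$.} The Hölder norm is only defined there, while the pairs $(x_i,x_j)$ and $(x_{i'},x_{j'})$ may fall on opposite sides of the diagonal. For pairs on opposite sides, I would first try reflecting one of them using the symmetry $W(x,y)=W(y,x)$ and checking that the $\ell_1$ distance does not increase. If reflection does increase the distance, I would instead pass through the diagonal point: take a point on the segment between the two pairs, and apply the Hölder/Lipschitz bound on each piece. Each piece lies in one triangle, possibly after symmetrization, and subadditivity $(s+t)^\alpha\le s^\alpha+t^\alpha$ keeps the total within the same $2M(c_1/k)^{\min(\alpha,1)}$ budget, up to absorbing constants. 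For $\alpha>1$ the same segment argument applies, since the derivatives are bounded on each closed triangle and $W$ is continuous across the diagonal.
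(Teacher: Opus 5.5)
Your proposal is correct and follows the paper's proof essentially step for step. You write the error as a block average, bound each summand by the H\"older modulus when $\alpha\le 1$ and by the mean value theorem with $M$-bounded first derivatives when $\alpha>1$, and take the probability of $\Omega_k$ directly from Assumption~\ref{asm:canon}. The paper never addresses the restriction of the H\"older norm to $\ccalD$, and your first remedy (reflection) already suffices, so the fallback is not needed. For $x\ge y$ and $y'\ge x'$, pairing the sorted values gives $|x-y'|+|y-x'|\le|x-x'|+|y-y'|$, so reflecting across the diagonal never increases the $\ell_1$ distance, and the segment between the reflected points stays in the convex set $\ccalD$.
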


\begin{proof}
Fix $i,j\in[n]$ and let $a=z(i)$, $b=z(j)$. By \Cref{eq:blockest},
\[
  \hat\theta_{ij} - \theta_{ij}
  = \frac{1}{|z^{-1}(a)||z^{-1}(b)|}
    \sum_{i'\in z^{-1}(a)}\sum_{j'\in z^{-1}(b)}
    \bigl(W(x_{i'},x_{j'}) - W(x_i,x_j)\bigr).
\]
We distinguish two regimes.

\emph{Case $\alpha\le 1$.} The H\"older norm $\|W\|_{\ccalH_\alpha}\le M$
directly controls the zeroth-order difference:
\[
  |W(x_{i'},x_{j'})-W(x_i,x_j)|
  \le M\bigl(|x_{i'}-x_i|^\alpha+|x_{j'}-x_j|^\alpha\bigr).
\]

\emph{Case $\alpha>1$.} By definition of the H\"older class,
$\|W\|_{\ccalH_\alpha}\le M$ bounds every partial derivative of order at
most $\lfloor\alpha\rfloor\ge 1$.
% ; in particular $\|\partial_x W\|_\infty,\|\partial_y W\|_\infty\le M$. 
Applying the
mean-value theorem along the segment joining $(x_i,x_j)$ to
$(x_{i'},x_{j'})$,
\[
  W(x_{i'},x_{j'})-W(x_i,x_j) = \nabla W(\xi)\cdot\bigl[(x_{i'}-x_i),\ (x_{j'}-x_j)\bigr]
\]
for some $\xi$ on that segment, so
\[
  |W(x_{i'},x_{j'})-W(x_i,x_j)|
  \le M\bigl(|x_{i'}-x_i|+|x_{j'}-x_j|\bigr).
\]
Block-constant estimators cannot exploit smoothness beyond Lipschitz, so this is the operative bound \cite{olhede2014network}. Combining both cases:
\[
  |W(x_{i'},x_{j'})-W(x_i,x_j)|
  \;\le\; M\bigl(|x_{i'}-x_i|^{\min(\alpha,1)}+|x_{j'}-x_j|^{\min(\alpha,1)}\bigr).
\]
On $\Omega_k$, both latent differences are bounded by $c_1/k$, so each
summand is bounded by $2M(c_1/k)^{\min(\alpha,1)}$, and averaging
preserves the bound. The probability statement follows from
Assumption~\ref{asm:canon}.
\end{proof}

\begin{theorem}[Noiseless nonparametric graphon estimation]\label{thm:noiseless-rate}
Let $W\in\ccalF_\alpha(M)$, suppose Assumption~\ref{asm:canon} holds, and
take $k=\lceil n^{1/(\min(\alpha,1)+1)}\rceil$. Then there exist
constants $C,C'>0$ depending only on $M,c_1,c_2$ such that
\begin{equation}\label{eq:noiseless-frob}
  \frac{1}{n^2}\sum_{i,j\in[n]}\bigl(\hat\theta_{ij}-\theta_{ij}\bigr)^2
  \;\le\; C\,n^{-\tfrac{2\min(\alpha,1)}{\min(\alpha,1)+1}}
\end{equation}
with probability at least $1-\exp(-C'n)$, uniformly over
$W\in\ccalF_\alpha(M)$ and $\mathbb{P}_X$.
\end{theorem}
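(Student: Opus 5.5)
The plan is to reduce the statement to the pointwise bias bound of Lemma~\ref{lem:holder-bias}. In the noiseless model $H_{ij}=\theta_{ij}=W(x_i,x_j)$ there is no variance term, so the whole error is the approximation (bias) error of block averaging. Throughout, write $\beta:=\min(\alpha,1)$.

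First, I would work on the event $\Omega_k$ defined in Lemma~\ref{lem:holder-bias}. By Assumption~\ref{asm:canon}, $\mathbb{P}(\Omega_k)\ge 1-\exp(-c_2 n)$, so I take $C'=c_2$. On $\Omega_k$ the lemma gives, for every pair $(i,j)\in[n]^2$, including the diagonal,
\[
|\hat\theta_{ij}-\theta_{ij}|\le 2M\left(\frac{c_1}{k}\right)^{\beta}.
\]
Squaring this and averaging over the $n^2$ entries gives
\[
\frac{1}{n^2}\sum_{i,j}\bigl(\hat\theta_{ij}-\theta_{ij}\bigr)^2\le 4M^2c_1^{2\beta}k^{-2\beta}.
\]
Since this is a sup bound, it does not depend on $\mathbb{P}_X$ or on the particular $W\in\ccalF_\alpha(M)$. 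That gives the claimed uniformity, provided $c_1,c_2$ in the assumption are themselves uniform, which I take as part of Assumption~\ref{asm:canon}.

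Second, I substitute the prescribed resolution $k=\lceil n^{1/(\beta+1)}\rceil\ge n^{1/(\beta+1)}$. This yields $k^{-2\beta}\le n^{-2\beta/(\beta+1)}$, so \eqref{eq:noiseless-frob} holds with $C=4M^2\max(c_1,1)^2$. This constant bounds $c_1^{2\beta}$ for every $\beta\in(0,1]$.

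I would add a remark explaining why this choice of $k$ is the right one. In the noisy setting of \cite{gao2015rate}, the choice of $k$ balances the bias $k^{-2\beta}$ against a variance or clustering term of order $k^2/n^2+\log k/n$. Here only the bias survives, so any larger $k$ would also work, provided Assumption~\ref{asm:canon} still holds at that resolution. The stated $k$ is kept to match the rate used in Theorem~\ref{thm:bound_2}.

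The main obstacle is not the algebra. It is making sure Assumption~\ref{asm:canon} is compatible with the chosen $k$ on an exponentially likely event. The remark after the assumption only gives a diameter bound of $1/k+\ccalO_{\mathbb{P}}(n^{-1/2})$, which is a weaker probability statement than $1-\exp(-c_2n)$. If I had to justify this rather than assume it, I would first try Hoeffding on the degrees, $d_i-g(x_i)$, together with a lower bound on the slope of $g$, and then a DKW-type bound on the empirical quantiles of $\{x_i\}$ to control the bin widths. Both give $\exp(-cn/k^2)$-type tails, and since $k\le n^{1/2}$ these are at least $\exp(-c\,n^{0})$. This would force weakening the probability to something like $1-\exp(-C'n/k^2)$.

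Since the theorem is stated under Assumption~\ref{asm:canon}, I would simply invoke that assumption as given and note this caveat.
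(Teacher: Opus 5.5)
Your proof is correct and is the same as the paper's: on $\Omega_k$ you square and average the pointwise bound of Lemma~\ref{lem:holder-bias}, use $k\ge n^{1/(\min(\alpha,1)+1)}$, and take $C'=c_2$; your explicit $C=4M^2\max(c_1,1)^2$ just makes the paper's ``absorb all constants'' step concrete. One slip in your side caveat, which also appears in the remark after Assumption~\ref{asm:canon}: since $1/(\min(\alpha,1)+1)\ge 1/2$, this $k$ satisfies $k\ge\sqrt n$, not $k\le\sqrt n$, so $n/k^2\le 1$ and a DKW-type tail $\exp(-cn/k^2)$ would not even vanish---this only strengthens your point that the $1-\exp(-c_2 n)$ probability has to be assumed rather than derived, and it does not affect the proof itself.
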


\begin{proof}
On the event $\Omega_k$ of Lemma~\ref{lem:holder-bias},
\[
  \tfrac{1}{n^2}\sum_{i,j}(\hat\theta_{ij}-\theta_{ij})^2
  \le \bigl(2Mc_1^{\min(\alpha,1)}\bigr)^2 k^{-2\min(\alpha,1)}.
\]
Plugging in $k=\lceil n^{1/(\min(\alpha,1)+1)}\rceil$ gives
$k^{-2\min(\alpha,1)}\le n^{-2\min(\alpha,1)/(\min(\alpha,1)+1)}$ up to
absolute constants, and $\mathbb{P}(\Omega_k)\ge 1-\exp(-c_2 n)$. Absorbing all constants into $C,C'$ yields the result.
\end{proof}

\begin{proposition}[{Adapted, \citep[Equation~8.5]{lovasz2012large}}]\label{prop:norm_rel}
    Let
    \begin{equation*}
        \|A\|_1 = \dfrac{1}{n^2}\sum_{i,j=1}^n |A_{i, j}|, \quad \|A\|_2 = \left(\dfrac{1}{n^2}\sum_{i,j=1}^n A_{i,j}^2\right)^{1/2}, \quad \|A\|_\infty = \max_{i,j}|A_{i,j}|,
    \end{equation*}
    be the matrix $\ell_1$-norm, $\ell_2$-norm, and $\ell_\infty$-norm respectively. Then, we have the following
    \begin{equation}
        \|A\|_\square \leq \|A\|_1 \leq \|A\|_ 2 \leq \|A\|_\infty.
    \end{equation}
\end{proposition}
% \begin{proof}

%     \
    
%     \quad1) $\|A\|_\square \leq \|A\|_1$:
%     \begin{equation*}
%         \|A\|_\square = \frac{1}{n^2}\max_{S, T \subseteq [n]} \biggl|\sum_{i \in S, j \in T}A_{i, j} \biggr| \leq \frac{1}{n^2}\max_{S, T \subseteq [n]} \sum_{i \in S, j \in T}|A_{i, j}| = \frac{1}{n^2}\sum_{i,j=1}^n |A_{i,j}| = \|A\|_1.
%     \end{equation*}

%     \quad2) $\|A\|_1 \leq \|A\|_2$:
%     \begin{equation*}
%         \|A\|_1 = \frac{1}{n^2}\sum_{i,j=1}^n |A_{i,j}| \leq \frac{1}{n^2}\left(\sum_{i,j=1}^n A_{i,j}^2\right)^{1/2}\left(\sum_{i,j=1}^n 1\right)^{1/2} = \frac{n}{n^2}\left(\sum_{i,j=1}^n A_{i,j}^2\right)^{1/2} = \left(\frac{1}{n^2}\sum_{i,j=1}^n A_{i,j}^2\right)^{1/2} = \|A\|_2.
%     \end{equation*}

%     \quad3) $\|A\|_2 \leq \|A\|_\infty$:
%     \begin{equation*}
%         \|A\|_2 = \left(\frac{1}{n^2}\sum_{i,j=1}^n A_{i,j}^2\right)^{1/2} \leq \left(\frac{1}{n^2}\max_{i,j}A_{i,j}^2\sum_{i,j}^n 1\right)^{1/2} = \left(\frac{n^2}{n^2}\max_{i,j}A_{i,j}^2\right)^{1/2} = \max_{i,j}|A_{i, j}| = \|A\|_\infty.
%     \end{equation*}
% \end{proof}

\begin{corollary}[Cut-norm bound under noiseless observations]\label{cor:noiseless-cut}
    Under the hypotheses of \Cref{thm:noiseless-rate}, there exist constants $C, C' > 0$ such that
    \[
    \|\hat{\theta} - \theta\|_\square^2 \leq Cn^{-\tfrac{2\min(\alpha,1)}{\min(\alpha,1)+1}},
    \]
    with probability at least $1 - \exp(-C'n)$.
\end{corollary}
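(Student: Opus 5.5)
The plan is to combine the norm chain of \Cref{prop:norm_rel} with the Frobenius-type rate of \Cref{thm:noiseless-rate}. Both $\hat\theta$ and $\theta$ are viewed as $n\times n$ matrices, with $\hat\theta_{ij}:=\hat\theta_{z(i)z(j)}$ lifted from the $k\times k$ block matrix. The difference $D:=\hat\theta-\theta$ is then an $n\times n$ matrix with entries in $[-1,1]$, since both $W$ and its block averages take values in $[0,1]$. Hence the normalized norms of \Cref{prop:norm_rel} apply directly.

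\textbf{Step 1.} By \Cref{prop:norm_rel}, $\|D\|_\square\le\|D\|_1\le\|D\|_2$. Squaring both sides, which is legitimate because the norms are nonnegative, gives
\[
\|D\|_\square^2\le\|D\|_2^2=\tfrac{1}{n^2}\textstyle\sum_{i,j}(\hat\theta_{ij}-\theta_{ij})^2 .
\]
This is a deterministic inequality and holds for every realization of the latent variables and the clustering.

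\textbf{Step 2.} Intersect with the event of \Cref{thm:noiseless-rate}. On that event, which has probability at least $1-\exp(-C'n)$ and is essentially $\Omega_k$ from Lemma~\ref{lem:holder-bias}, the right-hand side is at most $C\,n^{-2\min(\alpha,1)/(\min(\alpha,1)+1)}$. The same constants $C,C'$ therefore carry over, and the corollary follows.

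\textbf{Main obstacle.} The only point requiring care is consistency of normalizations and indexing. The matrix cut norm carries the $1/n^2$ factor, and so do the $\ell_1$ and $\ell_2$ norms in \Cref{prop:norm_rel}; this is what makes the chain dimension-free. The block estimate must also be interpreted at the node level through $z$ rather than as a $k\times k$ matrix. If one instead wants the statement for induced graphons, $\|W_{\hat\theta}-W_\theta\|_\square$, I would note that the graphon cut norm of a step function on the $n$-equipartition is at most its $L^1$ norm. That $L^1$ norm equals the normalized matrix $\ell_1$ norm, so the same chain applies verbatim.
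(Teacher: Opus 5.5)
Your proposal is correct and follows the paper's proof: you chain $\|\cdot\|_\square\le\|\cdot\|_1\le\|\cdot\|_2$ from \Cref{prop:norm_rel}, square it, and invoke the high-probability bound of \Cref{thm:noiseless-rate}, keeping the same constants $C,C'$. Your extra remarks are consistent with the paper's setup but not needed for the argument; these are lifting $\hat\theta$ to node level through $z$ and the induced-graphon variant obtained via the $L^1$ bound.
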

\begin{proof}
    By Proposition~\ref{prop:norm_rel}, together with \Cref{thm:noiseless-rate}
    \[
    \|\hat{\theta} - \theta\|_\square^2 \leq \|\hat{\theta} - \theta\|_2^2 = \frac{1}{n^2}\sum_{i, j \in [n]}(\hat{\theta}_{i,j} - \theta_{i, j})^2 \leq C\,n^{-\tfrac{2\min(\alpha,1)}{\min(\alpha,1)+1}}.
    \]
\end{proof}
% That corollary yields a powerful result that bounds the cut-distance of the graphon and its estimate.
%, here represented as $\hat{\theta}$. 

Equipped with these tools, we can formalize a tighter bound to the cut-norm variance: 

\begin{theorem}[A tighter bound]\label{app:thm_bound_2}
    Let $n \geq 1$ and $W \in \ccalF_\alpha(M)$ be a graphon. Furthermore, given $\theta_{i,j} = W(x_i, x_j)$, for latent random variables $(x_1, \dots x_n) \sim \ccalP_X$, let $\hat{\theta}$ be the graphon's block approximation, constructed from $H(n, W)$. Then, for $k = \lceil n^{1/(\min(\alpha, 1) + 1)}\rceil$, there exists a constant $C > 0$, depending only on $M, c_1, c_2$, such that
    \[
    \text{Var}(\|\hat{\theta} - \theta\|_\square) = \ccalO\left(n^{-\tfrac{2\min(\alpha,1)}{\min(\alpha,1)+1}}\right).
    \]
\end{theorem}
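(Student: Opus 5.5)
The plan mirrors the proof of Theorem~\ref{app:thm_bound_1}, with Lemma~\ref{lem:scnd_sampl} replaced by Corollary~\ref{cor:noiseless-cut}. Set $Y:=\|\hat\theta-\theta\|_\square$ and $r_n:=n^{-2\min(\alpha,1)/(\min(\alpha,1)+1)}$. The argument rests on three facts:
\begin{itemize}
\item[(i)] $\mathrm{Var}(Y)\le \mathbb{E}[Y^2]$, so it suffices to bound the second moment.
\item[(ii)] $Y$ is deterministically bounded. Both $\hat\theta$ and $\theta$ have entries in $[0,1]$ (block averages of values of $W\in\ccalF_\alpha(M)$). Hence $|\hat\theta_{ij}-\theta_{ij}|\le 1$, and Proposition~\ref{prop:norm_rel} gives $Y\le\|\hat\theta-\theta\|_\infty\le 1$.
\item[(iii)] Corollary~\ref{cor:noiseless-cut} gives a high-probability bound on $Y^2$.
\end{itemize}

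Let $\Omega_k$ be the good event of Lemma~\ref{lem:holder-bias} and Assumption~\ref{asm:canon}, on which the conclusion $Y^2\le C r_n$ of Corollary~\ref{cor:noiseless-cut} holds. Split the second moment over this event:
\[
\mathbb{E}[Y^2]=\mathbb{E}[Y^2\mathbb{I}_{\Omega_k}]+\mathbb{E}[Y^2\mathbb{I}_{\Omega_k^c}]\le C r_n+\mathbb{P}(\Omega_k^c)\le C r_n+e^{-C'n}.
\]
The first term uses Corollary~\ref{cor:noiseless-cut} on $\Omega_k$. The second uses the trivial bound $Y\le1$ together with the probability estimate from the corollary.

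To finish, note that $e^{-C'n}$ decays faster than any polynomial. So there is $n_0$ with $e^{-C'n}\le r_n$ for all $n\ge n_0$. For $n<n_0$, the bound $\mathrm{Var}(Y)\le 1\le n_0\, r_n$ holds because $r_n\ge 1/n$. Combining the two ranges gives $\mathrm{Var}(Y)\le \max(C+1,n_0)\,r_n$, which is the claimed $\ccalO(r_n)$ with a constant depending only on $M,c_1,c_2$.

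The main obstacle is justifying (iii) uniformly, i.e.\ controlling the bad event. Lemma~\ref{lem:holder-bias} is deterministic once we are on $\Omega_k$, so everything hinges on Assumption~\ref{asm:canon} delivering the exponential probability of $\Omega_k$ with constants independent of $n$ and $\mathbb{P}_X$. I will take the assumption as stated rather than re-derive it from the SAS degree-sorting argument. I will also check that $k=\lceil n^{1/(\min(\alpha,1)+1)}\rceil\le n$, so that the block estimator is well defined for every $n\ge1$.

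A secondary point: the statement concerns the matrix cut norm of $\hat\theta-\theta$, whereas Theorem~\ref{thm:bound_2} in the main text is phrased via the induced graphon $W_n$. I will remark that the two coincide for step functions on the common equipartition, which is exactly the aligned representation the main text refers to. The cut norm of a step function is attained on unions of its steps, so the graphon and matrix cut norms agree.
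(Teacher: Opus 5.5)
Your proposal is correct and follows the paper's proof. Both bound the variance by $\mathbb{E}[Y^2]$ and split it into the high-probability event, where Corollary~\ref{cor:noiseless-cut} gives $Y^2\le C r_n$, and its complement, where $Y\le 1$, which yields $C r_n + e^{-C'n}$. Your version is slightly cleaner: splitting on the indicator of $\Omega_k$ avoids the paper's implicit assumption that $Y^2$ has a density, and you justify via $r_n\ge 1/n$ the absorption of $e^{-C'n}$ into $\mathcal{O}(r_n)$, which the paper only asserts.
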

\begin{proof}
    Analogously, let us write out the variance of the cut-norm
    \begin{align}
    \text{Var}(\|\hat{\theta} - \theta\|_\square) = \E{\|\hat{\theta} - \theta\|_\square^2} - \left(\E{\|\hat{\theta} - \theta\|_\square}\right)^2 \leq \E{\|\hat{\theta} - \theta\|_\square^2}. \nonumber
    \end{align}
    Now, let $f_{\|\hat{\theta} - \theta\|_\square}$ be the probability distribution of the r.v. $\|\hat{\theta} - \theta\|_\square$. Then, we have:
    \begin{align}
    \text{Var}(\|\hat{\theta} - \theta\|_\square) &\leq  \int_{0}^1 wf_{\|\hat{\theta} - \theta\|_\square^2}(w)dw \nonumber\\
    &= \int_{0}^{Ch(n)}wf_{\|\hat{\theta} - \theta\|_\square^2}(w)dw + \int_{Ch(n)}^1wf_{\|\hat{\theta} - \theta\|_\square^2}(w)dw, && h(n) = n^{-\frac{2\min(\alpha,1)}{(\min(\alpha,1)+1)}}\nonumber
    \end{align}
    Both terms are weighted sums of the values assumed by the cut-norm in the interval of integration. Hence, each term can be bounded by the sum of the weights multiplied by the maximum value, i.e., the upper limit of integration, attained by the cut-norm:
    \begin{align}
    \text{Var}(\|\hat{\theta} - \theta\|_\square)
    &\leq Ch(n)\int_{0}^{Ch(n)}f_{\|\hat{\theta} - \theta\|_\square^2}(w)dw + \int_{Ch(n)}^1f_{\|\hat{\theta} - \theta\|_\square^2}(w)dw \nonumber \\
    &= Ch(n)\mbP\left(\|\hat{\theta} - \theta\|_\square^2 \leq Ch(n)\right) + \left(1-\mbP\left(\|\hat{\theta} - \theta\|_\square^2 \leq Ch(n)\right)\right) \nonumber\\
    &\leq Ch(n)
    %\left(1 - \exp(-C'n)\right) 
    + \exp(-C'n) = \ccalO\left(n^{-\tfrac{2\min(\alpha,1)}{\min(\alpha,1)+1}}\right).
    \end{align}
\end{proof}

A key distinction between \Cref{app:thm_bound_1,app:thm_bound_2} lies in the metric employed. The first result uses the cut-distance $\delta_\square$, which involves an infimum over all measure-preserving bijections and thus operates on \emph{unlabeled} graphons. This generality comes at the cost of a loose $(\log n)^{-1}$ rate. In contrast, the tighter bound in \Cref{app:thm_bound_2} uses the cut-norm $\|\cdot\|_\square$, which compares graphons under a \emph{fixed labeling}. This requires establishing a canonical node ordering (in our case, induced by sorting nodes according to degree). Whilst this labeling assumption is necessary to achieve the faster polynomial rate, it reflects a practical requirement: to leverage regularity for tighter concentration, one must first align the graph to a consistent reference frame.

\section{Choice of the attention object}
\label{app:object}
% This section compares three candidate attention graphs for a head, namely the symmetrized post-softmax matrix $\bar P=\tfrac12(P+P^\top)$, its rescaled version $n\bar P$, and the pre-softmax object $A=\rho(\bar S-c)$ of \Cref{eq:attn-object}, and proves Proposition~\ref{prop:invertible}.
This section compares three candidate attention graphs for a head, namely the symmetrized post-softmax matrix $\bar P=\tfrac12(P+P^\top)$, its rescaled version $n\bar P$, and the pre-softmax object $A=\rho(\bar S-c)$ of \Cref{eq:attn-object}, and complements Proposition~\ref{prop:invertible}.

\paragraph{Post-softmax attention.}
\begin{lemma}
\label{lem:post-softmax}
Let $P\in[0,1]^{n\times n}$ be row-stochastic and $\bar P=\tfrac12(P+P^\top)$. Then $\delta_\square(W_{\bar P},0)=\|W_{\bar P}\|_\square=1/n$. Moreover, if some row of $P$ places all its mass on a single entry, then $\max_{i,j}n\bar P_{ij}\ge n/2$.
\end{lemma}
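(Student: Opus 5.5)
The plan is a direct computation that uses only nonnegativity and row-stochasticity, with no sampling argument. First I would check that $\bar P$ is a legitimate symmetric weighted adjacency. Its entries lie in $[0,1]$ because $0\le P_{ij}\le 1$. Its total mass is
\[
\sum_{i,j}\bar P_{ij}=\tfrac12\Bigl(\sum_{i,j}P_{ij}+\sum_{i,j}P_{ji}\Bigr)=\tfrac12(n+n)=n,
\]
since every row of $P$ sums to one.

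Next I would compute the cut-norm of the induced graphon $W_{\bar P}$ from \Cref{eqn:inducedgraphon}. Because $W_{\bar P}\ge 0$ pointwise, every $S,T\subseteq[0,1]$ satisfies
\[
\Bigl|\int_{S\times T}W_{\bar P}\Bigr|=\int_{S\times T}W_{\bar P}\le\int_{[0,1]^2}W_{\bar P}.
\]
Equality holds at $S=T=[0,1]$, so the supremum in \Cref{eq:cutnorm-graphon} is attained there. Each cell $I_i\times I_j$ has measure $1/n^2$, hence
\[
\|W_{\bar P}\|_\square=\frac{1}{n^2}\sum_{i,j}\bar P_{ij}=\frac{1}{n^2}\cdot n=\frac1n.
\]

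For the cut-distance, I would use that the zero graphon is invariant under every measure-preserving bijection, $0^\phi=0$. The infimum in \Cref{eqn:cut_dist_w} is therefore over a constant family, and $\delta_\square(W_{\bar P},0)=\|W_{\bar P}-0\|_\square=1/n$. This step is the only one needing any care: one must note that the infimum is trivial here rather than invoke a general inequality.

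For the second claim, suppose row $i$ of $P$ has $P_{ij}=1$ for some $j$. Since $P_{ji}\ge 0$,
\[
\bar P_{ij}=\tfrac12(P_{ij}+P_{ji})\ge\tfrac12,
\]
so $\max_{i,j}n\bar P_{ij}\ge n\bar P_{ij}\ge n/2$. I do not expect a real obstacle anywhere; the argument amounts to observing that positivity turns the cut-norm into the total mass.
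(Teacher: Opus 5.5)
Your proposal is correct and matches the paper's proof step for step. Nonnegativity of $\bar P$ puts the cut-norm supremum at $S=T=[0,1]$, which gives $n/n^2=1/n$. The zero graphon is invariant under every measure-preserving bijection, so the infimum in the cut-distance is trivial. Finally, $P_{ij}=1$ forces $\bar P_{ij}\ge 1/2$.
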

\begin{proof}
Since $\bar P\ge0$, the supremum in \Cref{eq:cutnorm-graphon} is attained at $S=T=[0,1]$, so $\|W_{\bar P}\|_\square=n^{-2}\sum_{i,j}\bar P_{ij}=1/n$, because the entries of $P$ and of $P^\top$ each sum to $n$. The zero graphon is invariant under relabeling, so $\delta_\square(W_{\bar P},0)=\|W_{\bar P}\|_\square$. Finally, $P_{ij}=1$ implies $\bar P_{ij}\ge1/2$.
\end{proof}
The first statement holds for every model and dataset, so post-softmax attention graphs converge to the zero graphon regardless of what the network has learned, and their convergence carries no information about it. Rescaling by $n$ restores a mean entry of one, but the second statement shows that $n\bar P$ can leave every bounded class of graphons, which the results of \Cref{sec:theory} require.

\paragraph{Empirical comparison.}
\Cref{fig:objects} measures both effects. On the left, a single GPS model trained on BFS at $n=32$ is evaluated with frozen weights up to $n=256$. The mean entry of $\bar P$ equals $1/n$, falling from $0.031$ to $0.0039$, and the spread of its kernel, the standard deviation of the kernel entries, falls from $1.3\times10^{-3}$ to $8.9\times10^{-5}$, so both vanish. The rescaled matrix has mean entry exactly one. The mean entry of $A$ stays between $0.500$ and $0.514$, with a spread between $0.005$ and $0.008$. Without the offset, the mean entry of $\sigma(\bar S)$ is between $0.78$ and $0.79$, so $c$ only moves the entries to the center of the range of $\sigma$. On the right, on NoisyCSBM, the largest entry of $n\bar P$, averaged over graphs, grows from $90$ to $681$ as $n$ goes from $128$ to $1024$, close to linearly, while the largest logit magnitude grows only from $42$ to $53$. 
% On the other eight datasets of this comparison (BFS, Dijkstra, COLLAB, MUTAG, NCI1, PROTEINS, SmoothW, and SharpW), the largest entry of $n\bar P$ stays between $1.0$ and $1.7$.}

\begin{figure}[ht]
\centering
\includegraphics[width=\linewidth,trim=0 0 0 46,clip]{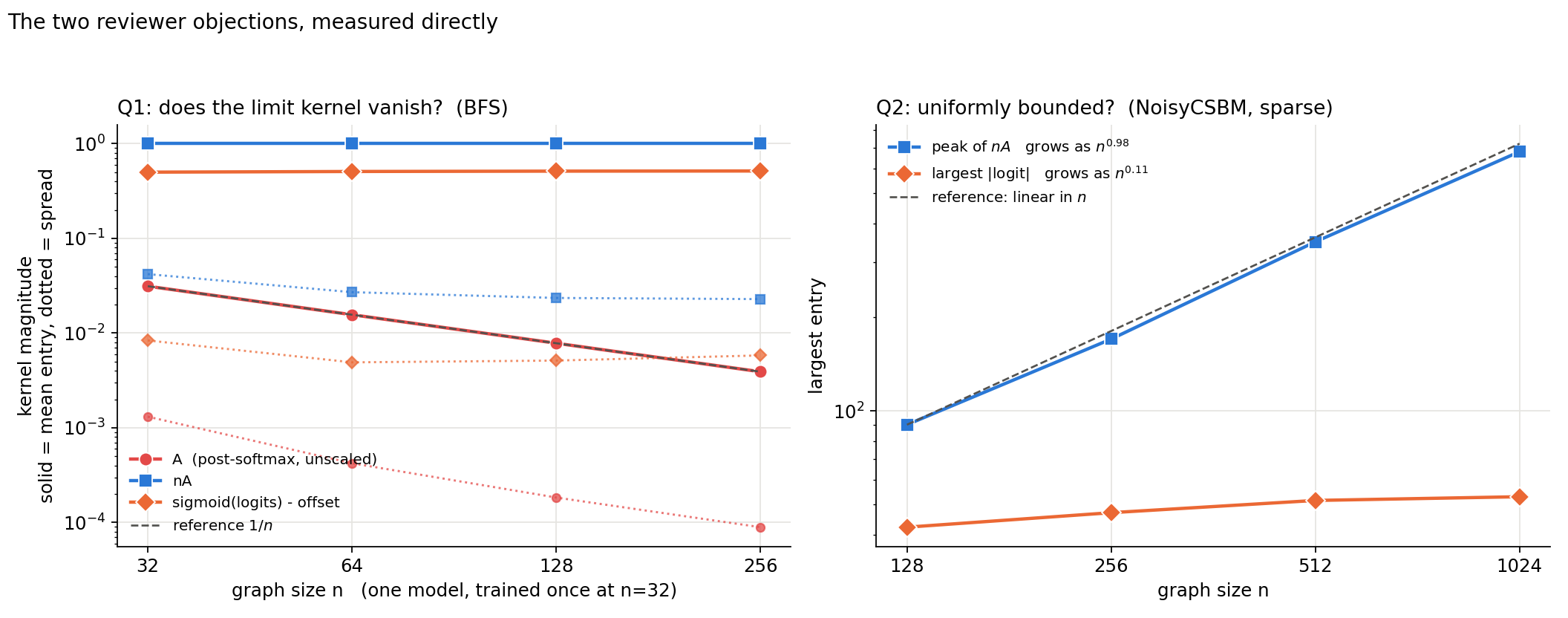}
\caption{Comparison of candidate attention objects. \textit{Left}: mean entry (solid) and spread (dotted) of the kernels of $\bar P$, $n\bar P$, and $A$ on BFS, for one GPS model trained at $n=32$ and evaluated with frozen weights. \textit{Right}: largest entry of $n\bar P$ and largest logit magnitude on NoisyCSBM. In the legend, $A$ and $nA$ denote $\bar P$ and $n\bar P$, and ``sigmoid(logits) - offset'' denotes $A=\sigma(\bar S-c)$.}
\label{fig:objects}
\end{figure}

\paragraph{Why $\rho$ must be a bijection.}
Proposition~\ref{prop:invertible} uses both properties of a bijection. Injectivity is what makes the post-softmax attention a function of $A$. If $\rho(x)=\rho(y)$ with $x\neq y$, let $\bar S$ have every entry equal to $c$ except $\bar S_{11}=x+c$, and let $\bar S'$ equal $\bar S$ except $\bar S'_{11}=y+c$. Both matrices are symmetric and $\rho(\bar S-c)=\rho(\bar S'-c)$, but the first entries of their first softmax rows, $e^{x}/(e^{x}+n-1)$ and $e^{y}/(e^{y}+n-1)$, differ. Surjectivity onto $(0,1)$ makes $\rho^{-1}$ defined at every value in $(0,1)$, so attention can be generated from any estimated kernel with values in $(0,1)$, including values sampled at new sizes, and not only from entries of observed attention graphs.

\paragraph{Bounded pre-softmax scores.}
\begin{lemma}
\label{lem:bounded-scores}
% Let $\rho\colon\mathbb{R}\to(0,1)$ be increasing, and let $(\bar S_n)$ be symmetric pre-softmax matrices with $|(\bar S_n)_{ij}-c|\le B$ for all $n$, $i$, and $j$. Then $A_n=\rho(\bar S_n-c)$ has entries in $[\rho(-B),\rho(B)]$, and every cut-distance limit $W$ of $(W_{A_n})$ satisfies $\rho(-B)\le W\le\rho(B)$ almost everywhere.
Let $\rho$ be an increasing bijection as in Proposition~\ref{prop:invertible}, and let $(\bar S_n)$ be symmetric pre-softmax matrices with $|(\bar S_n)_{ij}-c|\le B$ for all $n$, $i$, and $j$. Then the attention graphs $A_n=\rho(\bar S_n-c)$ of \Cref{eq:attn-object} have entries in $[\rho(-B),\rho(B)]$, and every cut-distance limit $W$ of $(W_{A_n})$ satisfies $\rho(-B)\le W\le\rho(B)$ almost everywhere.
\end{lemma}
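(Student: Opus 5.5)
The proof has two parts: an entrywise bound, which is immediate from monotonicity, and the transfer of that bound to any cut-distance limit. For the first part, fix $n,i,j$. The hypothesis gives $-B\le(\bar S_n)_{ij}-c\le B$. Since $\rho$ is increasing, applying it preserves the order, so $\rho(-B)\le (A_n)_{ij}\le\rho(B)$. Hence the induced graphon $W_{A_n}$ of \Cref{eqn:inducedgraphon} takes values in $[\rho(-B),\rho(B)]$ pointwise on $[0,1]^2$. The same holds for every relabeling $W_{A_n}^\phi$, because a measure-preserving bijection only permutes points and does not change the range of values.

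For the second part, the plan is to show that the set $\mathcal{K}=\{U:[0,1]^2\to\mathbb{R}\text{ measurable},\ \rho(-B)\le U\le\rho(B)\text{ a.e.}\}$ is closed under cut-distance limits. Suppose $\delta_\square(W_{A_n},W)\to0$. Choose measure-preserving bijections $\phi_n$ with $\|W_{A_n}^{\phi_n}-W\|_\square\to0$, which is possible by the definition of $\delta_\square$ as an infimum. Set $U_n=W_{A_n}^{\phi_n}\in\mathcal{K}$. Suppose, for contradiction, that the set $E=\{W>\rho(B)\}$ has positive measure. Then there is $\varepsilon>0$ such that $E_\varepsilon=\{W>\rho(B)+\varepsilon\}$ has measure $\eta>0$.

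The key step is that cut-norm convergence controls integrals over product sets, not over the non-rectangular set $E_\varepsilon$. The direct estimate $\int_{E_\varepsilon}(W-U_n)\ge\varepsilon\eta$ therefore does not immediately contradict $\|W-U_n\|_\square\to0$. This is the main obstacle. I would resolve it with the standard fact (cf.\ \citep[Chapter~8]{lovasz2012large}) that cut-norm convergence of uniformly bounded kernels implies weak convergence: $\int (U_n-W)g\to0$ for every $g\in L^1([0,1]^2)$. The argument approximates $g=\mathbb{I}_{E_\varepsilon}$ in $L^1$ by finite linear combinations of indicators of rectangles $S\times T$. On each rectangle the difference is controlled by the cut norm, and the $L^1$ approximation error is controlled by the uniform sup-bound on $U_n$ and $W$.

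The rest is direct. Taking $g=\mathbb{I}_{E_\varepsilon}$ gives $\int_{E_\varepsilon}U_n\to\int_{E_\varepsilon}W\ge(\rho(B)+\varepsilon)\eta$. But $\int_{E_\varepsilon}U_n\le\rho(B)\eta$ for every $n$, which is a contradiction. The lower bound $W\ge\rho(-B)$ follows symmetrically with $\{W<\rho(-B)-\varepsilon\}$. The uniform boundedness needed for the weak-convergence step holds for $U_n$ by the first part. For $W$, I would assume the limit is bounded, as a graphon (or kernel) limit is. If needed, I would first truncate $W$ at a large constant, which does not affect the sets $E_\varepsilon$ under consideration.
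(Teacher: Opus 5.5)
Your proof is correct, but it moves from rectangles to arbitrary measurable sets differently from the paper. The paper uses $\varepsilon_n\to0$ only on rectangles. It passes to the limit there to get $\int_{E\times F}(W-a)\ge0$ for all measurable $E,F$. It then extends this single inequality, for the fixed signed measure $U\mapsto\int_U(W-a)$, to every measurable $U\subseteq[0,1]^2$ by the monotone class theorem, and finishes with $U=\{W<a\}$. You instead extend the \emph{convergence} itself: you prove $\int(U_n-W)g\to0$ by approximating $g$ with rectangle step functions, then test against $\mathbb{I}_{\{W>\rho(B)+\varepsilon\}}$ to reach a contradiction. Both arguments rest on rectangles generating the product $\sigma$-algebra. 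The paper's route is direct, needs only $W\in L^1$ so that the signed measure is finite, and involves no $n$-dependent approximation. Yours yields a stronger, reusable weak-convergence statement, but as you state it, it relies on a uniform bound for $U_n-W$. That bound is available because the paper's kernels take values in $[0,1]$.

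One side remark should be dropped. Truncating $W$ at a large constant is not a valid fallback, because $\|U_n-W\|_\square\to0$ does not give $\|U_n-\min(W,M)\|_\square\to0$. It is also unnecessary. Take $s=\mathbb{I}_R$ with $R$ a finite disjoint union of rectangles approximating $E_\varepsilon$. Then $\bigl|\int U_n(\mathbb{I}_{E_\varepsilon}-s)\bigr|\le\rho(B)\,|R\triangle E_\varepsilon|$ uniformly in $n$. The term $\int W(\mathbb{I}_{E_\varepsilon}-s)$ is a fixed quantity that absolute continuity makes small. So integrability of $W$ already suffices for your argument.
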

\begin{proof}
Monotonicity gives $a\le(A_n)_{ij}\le b$, with $a=\rho(-B)$ and $b=\rho(B)$. Let $W$ be a cut-distance limit of $(W_{A_n})$, and choose measure-preserving bijections $\phi_n$ with $\varepsilon_n\coloneqq\|W_{A_n}-W^{\phi_n}\|_\square\to0$. For measurable $E,F\subseteq[0,1]$,
\[
\int_{\phi_n(E)\times\phi_n(F)}W\;=\;\int_{E\times F}W^{\phi_n}\;\ge\;\int_{E\times F}W_{A_n}-\varepsilon_n\;\ge\;a\,|E|\,|F|-\varepsilon_n.
\]
Every pair of measurable sets arises as $(\phi_n(E),\phi_n(F))$, with $|\phi_n(E)|=|E|$ and $|\phi_n(F)|=|F|$, so $\int_{E\times F}(W-a)\ge-\varepsilon_n$ for all measurable $E,F$. Letting $n\to\infty$, the signed measure $U\mapsto\int_U(W-a)$ is nonnegative on measurable rectangles, hence on finite disjoint unions of them, and by the monotone class theorem on every measurable $U\subseteq[0,1]^2$. Thus $W\ge a$ almost everywhere, and $W\le b$ follows in the same way.
\end{proof}
% With bounded scores, the mass of $A$ therefore stays away from zero at every size, and any limit is not the zero graphon.
With bounded scores, the attention graphs of \Cref{eq:attn-object} therefore stay in the bounded setting of \Cref{sec:theory} at every size, with mass bounded away from zero, and by Proposition~\ref{prop:invertible} they still determine the post-softmax attention.

\paragraph{Admissible maps.}
% By Proposition~\ref{prop:invertible}, any bijection $\rho$ yields an object from which attention can be recovered, and a continuous injective map on $\mathbb{R}$ is strictly monotone. Increasing examples are the cumulative distribution functions of continuous distributions with full support, such as the logistic sigmoid $\sigma(x)=1/(1+e^{-x})$, the rescaled hyperbolic tangent $(1+\tanh x)/2=\sigma(2x)$, the standard Gaussian cumulative distribution function, and $\tfrac12+\tfrac1\pi\arctan x$. Maps that merge distinct scores, such as clipping or thresholding, cannot be inverted. We use $\sigma$, whose inverse is the log-odds function, so the pre-softmax scores are recovered as $\bar S_{ij}=\log\bigl(A_{ij}/(1-A_{ij})\bigr)+c$. Block averages of values in $[\rho(-B),\rho(B)]$ remain in that interval, so $\rho^{-1}$ is finite on every estimated kernel and sampled attention is well defined.}
Continuous bijections $\rho\colon\mathbb{R}\to(0,1)$ are strictly monotone, and the increasing ones are the cumulative distribution functions of continuous distributions with full support, such as the logistic sigmoid $\sigma(x)=1/(1+e^{-x})$, the rescaled hyperbolic tangent $(1+\tanh x)/2=\sigma(2x)$, the standard Gaussian cumulative distribution function, and $\tfrac12+\tfrac1\pi\arctan x$. Maps that merge distinct scores, such as clipping or thresholding, cannot be inverted. We use $\sigma$, whose inverse is the log-odds function, so the pre-softmax scores are recovered as $\bar S_{ij}=\log\bigl(A_{ij}/(1-A_{ij})\bigr)+c$. Block averages of values in $[\rho(-B),\rho(B)]$ remain in that interval, so $\rho^{-1}$ is finite on every estimated kernel and sampled attention is well defined.

\section{Additional details of \Cref{sec:method}}
\label{app:method}

%\red{L: I don't think we need this next paragraph. Calculating empirical variances is trivial and I added a sentence above explaining that the critical value of the hypothesis test is the bound. Would cut the whole thing or leave very little.}

\paragraph{Variance-based concentration analysis.}
The theory in Section~\ref{sec:theory} predicts that, under a stable kernel view, attention-induced graphs should concentrate on cut-distance as $n$ grows.
To test this prediction empirically, we quantify how tightly per-graph kernel estimates cluster around the dataset-level estimate.
For each sample $r$, define the cut-norm residual
\begin{equation}
\Delta_r \;\coloneqq\; \|\hat W_r - \hat W\|_\square,
\end{equation}
and summarize dispersion by the empirical variance
\begin{equation}
\label{eq:emp-var}
\widehat{\Var}_{\mathrm{emp}} \;\coloneqq\; \frac{1}{m-1}\sum_{r=1}^m (\Delta_r-\bar\Delta)^2,
\qquad
\bar\Delta\coloneqq\frac{1}{m}\sum_{r=1}^m\Delta_r.
\end{equation}
We compute $\widehat{\Var}_{\mathrm{emp}}$ within size bins (fixed $n$ range) to obtain a function of graph size, and compare its decay to the proxy scaling suggested by Theorem~\ref{thm:bound_2}:
\begin{equation}
\label{eq:theory-proxy}
\Var_{\mathrm{th}}(n) \;\propto\; \left(n^{-\tfrac{2\min(\alpha,1)}{\min(\alpha,1)+1}}\right).
\end{equation}
Specifically, given that the smoothness parameter $\alpha$ is latent, we took a conservative approach and use the fastest rate possible, i.e., $\Var_{\mathrm{th}}(n) = n^{-1}$ as the theoretical threshold. In the plots we report $\widehat{\Var}_{\mathrm{emp}}(n)$ and the proxy $\Var_{\mathrm{th}}(n)$.
The resulting decision rule rejects $H_0$ at size $n$ when $\widehat{\Var}_{\mathrm{emp}}(n)>Cn^{-1}$, with $C=1$ (Sec.~\ref{sec:ht}).
The rate in Thm.~\ref{thm:bound_2} holds up to a constant $C$ that depends on the regularity of $W$ and on the canonicalization, and may thus differ across datasets. Taking $C=1$ fixes the units of the comparison without estimating $C$, and since $C$ enters multiplicatively, it shifts the threshold without changing its dependence on $n$.

% \red{L: Just to double check - the below is for computing the cut norm right? Not the cut distance. Because once we are in the structured setting (which is essentially what sort-and-smooth is doing), there is no need to loop over all labelings - everything is labeled the same way. In any case I think the paragraph below has too much detail, can be moved to the appendices perhaps.} \blue{C: Yes. It's to compute the cut norm.}

\paragraph{Approximating cut-norm for block graphons.}
All of our kernels are represented as $K\times K$ block graphons (step functions), so evaluating distances requires computing the cut-norm of a $K\times K$ matrix.
Exact computation of the cut-norm is NP-hard, so we use an approximate algorithm that uses an SDP relaxation combined with a rounding technique, and a fast optimization method with orthogonality constraints \cite{cutnorm_git}. In our experiments, the resulting distance estimates are stable with respect to the number of restarts, and we fix this parameter throughout for fair cross-dataset comparisons.

\begin{algorithm}[t]
\caption{Estimating attention graphons and measuring concentration (fixed $\ell,h$)}
\label{alg:method}
\begin{algorithmic}[1]
\REQUIRE Trained Graph Transformer; dataset graphs $\{G_r\}_{r=1}^m$; target size $N$; block resolution $k$.
\FOR{$r=1$ to $m$}
    % \STATE Extract attention $P^{(\ell,h)}(G_r)$ and symmetrize $A_r \leftarrow \tfrac{1}{2}(P^{(\ell,h)}+P^{(\ell,h)\top})$.
    \STATE Extract logits $S^{(\ell,h)}(G_r)$ and form $A_r \leftarrow \rho\bigl(\tfrac{1}{2}(S^{(\ell,h)}+S^{(\ell,h)\top})-c\bigr)$ via \eqref{eq:attn-object}.
    \STATE Size-normalize: $\tilde A_r \leftarrow \mathrm{Pad}(A_r,N)$.
    \STATE Canonicalize: $\bar A_r \leftarrow M_\pi^r\tilde A_r{M_\pi^r}^\top$ by sorting nodes by $d_i(\tilde A_r)$.
    \STATE Block-average: compute $\hat\Theta_r\in\mathbb{R}^{K\times K}$ via \eqref{eq:block-avg}.
\ENDFOR
\STATE Aggregate template: $\hat\Theta\leftarrow \frac{1}{m}\sum_{r=1}^m \hat\Theta_r$.
\STATE Compute discrepancies $\Delta_r\leftarrow\|\hat\Theta_r - \hat\Theta\|_\square$ and $\widehat{\Var}_{\mathrm{emp}}$ via \eqref{eq:emp-var}.
\STATE Report concentration curves and check the hypothesis test decision rule.
% and the ratio $\widehat{\Var}_{\mathrm{emp}}/\Var_{\mathrm{th}}(n;\alpha)$.
\end{algorithmic}
\end{algorithm}

\section{Experimental details}\label{app:exp_details}
Experiments were conducted on a server with 2x NVIDIA RTX 6000 Ada Generation (48GB) GPUs, 500GB of RAM, and an AMD EPYC 7453 28-Core Processor. Both servers used Ubuntu 22.04.4 LTS as a Linux distro.

We evaluate attention-graphon diagnostics on molecular graphs, social networks, and 3D geometry. We use 8 real-world graph classification benchmarks: 4 bioinformatics datasets (MUTAG \cite{mutag1,mutag2}, PROTEINS \cite{proteins2,proteins1}, NCI1, NCI109 \cite{nci1,nci12}), 3 social interaction datasets (IMDB-MULTI, COLLAB, REDDIT-MULTI-5K \cite{socialnet}), and a 3D point-cloud benchmark represented as graphs (ModelNet10 \cite{modelnet}), and  3 synthetic node-classification settings: a sparse contextual SBM (cSBM) \cite{deshpande2018contextual}, and two graphon datasets (SmoothW and SharpW).

We used a 90/10 split for training and test sets for each dataset. For datasets where node features are absent, we computed them using random-walk positional encodings with a walk length of 16. We performed 10 runs for each experiment and reported the mean and standard deviation, except for the attention graphons, for which we reported only the mean for better visualization.

For fairer comparison across datasets, we set up a one-layer GPS with a hidden dimension size of 32, and no local message passing. Wherever noted otherwise, we used a one-head attention in each experiment. We used PyTorch Geometric (PyG) \cite{fey2019pyg} as our framework\footnote{Within PyG, you can build a GPS model without message passing by setting the convolution in \texttt{GPSConv} to \texttt{None}.}. We trained this model for both node and graph classification, depending on the dataset, employing cross-entropy as the loss function. Finally, we used Adam as optimizer, with a half-life learning rate decay every 20 epochs, starting at $1e-3$.

\begin{table*}[ht]
    \centering
    \caption{{GT training hyperparameters for each dataset.}}
    \resizebox{\columnwidth}{!}{%
    \begin{tabular}{lccccccccccc}
    \toprule
    \textbf{Parameter} & \textbf{MUTAG} & \textbf{PROTEINS} & \textbf{NCI1} & \textbf{NCI109} & \textbf{COLLAB} & \textbf{IMDB-MULTI} & \textbf{REDDIT-MULTI-5K} & \textbf{ModelNet10} & \textbf{NoisyCSBM} & \textbf{SmoothW} & \textbf{SharpW} \\
    \midrule
    Batch size   & 128   & 128   & 128   & 128   & 128 & 128 & 128 & 128 & 128 & 128 & 128\\
    Learning rate   & 0.001     & 0.001     & 0.001     & 0.001 & 0.001 & 0.001 & 0.001     & 0.001     & 0.001     & 0.001 & 0.001\\
    Hidden dimension & 32 & 32 & 32 & 32 & 32 & 32 & 32 & 32 & 32 & 32 & 32\\
    Num. of layers & 1 & 1 & 1 & 1 & 1 & 1 & 1 & 1 & 1 & 1 & 1\\
    \bottomrule
    \end{tabular}
    }
    \label{tab:gt_hyperparams}
\end{table*}

\subsection{Synthetic graphons}
\begin{table}[ht]
    \centering
    \caption{Graphons used for the synthetic dataset experiments. The first is a smooth graphon, whereas the second is a high-frequency injected smooth graphon. For all experiments, the frequency used was $\omega=16$.}
    \begin{tabular}{c|c}
         &  $W(x, y)$\\
         \bottomrule
         \emph{SmoothW} & $xy$\\
         \emph{SharpW} & $\dfrac{\exp(-xy) + |\sin (\omega(x + y))|}{2}$
    \end{tabular}
    \label{tab:synth_graphons}
\end{table}

Table~\ref{tab:synth_graphons} presents the graphons for the synthetic dataset experiments. \emph{SmoothW} stands for a smooth graphon, while \emph{SharpW} stands for a high-frequency injected smooth graphon, which has sharp transitions.

To build the datasets, we generate graphs sampled from these graphons by sampling $n$ latent positions $x_1, \ldots, x_n \sim \mathrm{Unif}[0,1]$ and connecting nodes $i$ and $j$ independently with probability $W(x_i, x_j)$. Node labels for the downstream classification task are derived from the latent positions via terciles, yielding three balanced classes. Node features are computed using random walk positional encodings with a walk length of 16.

\subsection{Spectral summaries of estimated attention graphons}
\label{app:spectral}

Let $\hat\Theta\in\mathbb{R}^{K\times K}$ be a block matrix estimate of an attention graphon.
We use three simple summaries:
\begin{itemize}
\item \textbf{Effective rank \cite{olivier2007effective}:} let $\sigma_1\ge\cdots\ge\sigma_K\ge 0$ be the singular values of $\hat\Theta$ and
$p_i=\sigma_i/\sum_j\sigma_j$. The effective rank is $\mathrm{erank}(\hat\Theta)=\exp\bigl(-\sum_i p_i\log p_i\bigr)$.
\item \textbf{Top-$5$ energy:} the fraction of Frobenius energy captured by the top singular values,
$\mathrm{E}_5(\hat\Theta)=\frac{\sum_{i=1}^5\sigma_i^2}{\sum_{j=1}^K\sigma_j^2}$.
\item \textbf{Spectral gap:} 
% for the symmetric part $\hat{\Theta}_{\text{sym.}} = \tfrac{1}{2}(\hat\Theta+\hat\Theta^\top)$, 
let $\lambda_1\ge\lambda_2$ be the
top eigenvalues of $\hat{\Theta}$ and define $\mathrm{gap}(\hat\Theta)=\lambda_1-\lambda_2$.
\end{itemize}

\subsection{Lipschitz constant computation}
\label{app:lips_comp}
To estimate the Lipschitz constant of an attention graphon, we proceed as follows: given a $K \times K$ histogram estimate of the graphon, we first construct an interpolant over the unit square $[0,1]^2$ by mapping the discrete grid indices to equally spaced points in $[0,1]$. We then evaluate this interpolant on a finer grid of resolution $R \times R$ (with $R = 200$ in our experiments) to obtain a smooth approximation of the graphon surface. The partial derivatives with respect to both coordinates are computed via finite differences on this dense grid. The Lipschitz constant is then estimated as the maximum gradient magnitude over all grid points, i.e., $\hat{L}_W = \max_{ij} \sqrt{(\partial_x W_{ij})^2 + (\partial_y W_{ij})^2}$.
% which provides an upper bound on the rate of change of the graphon across the domain.

\section{Framework pipeline}

\begin{figure*}[ht!]
\centering
\includegraphics[width=0.9\linewidth]{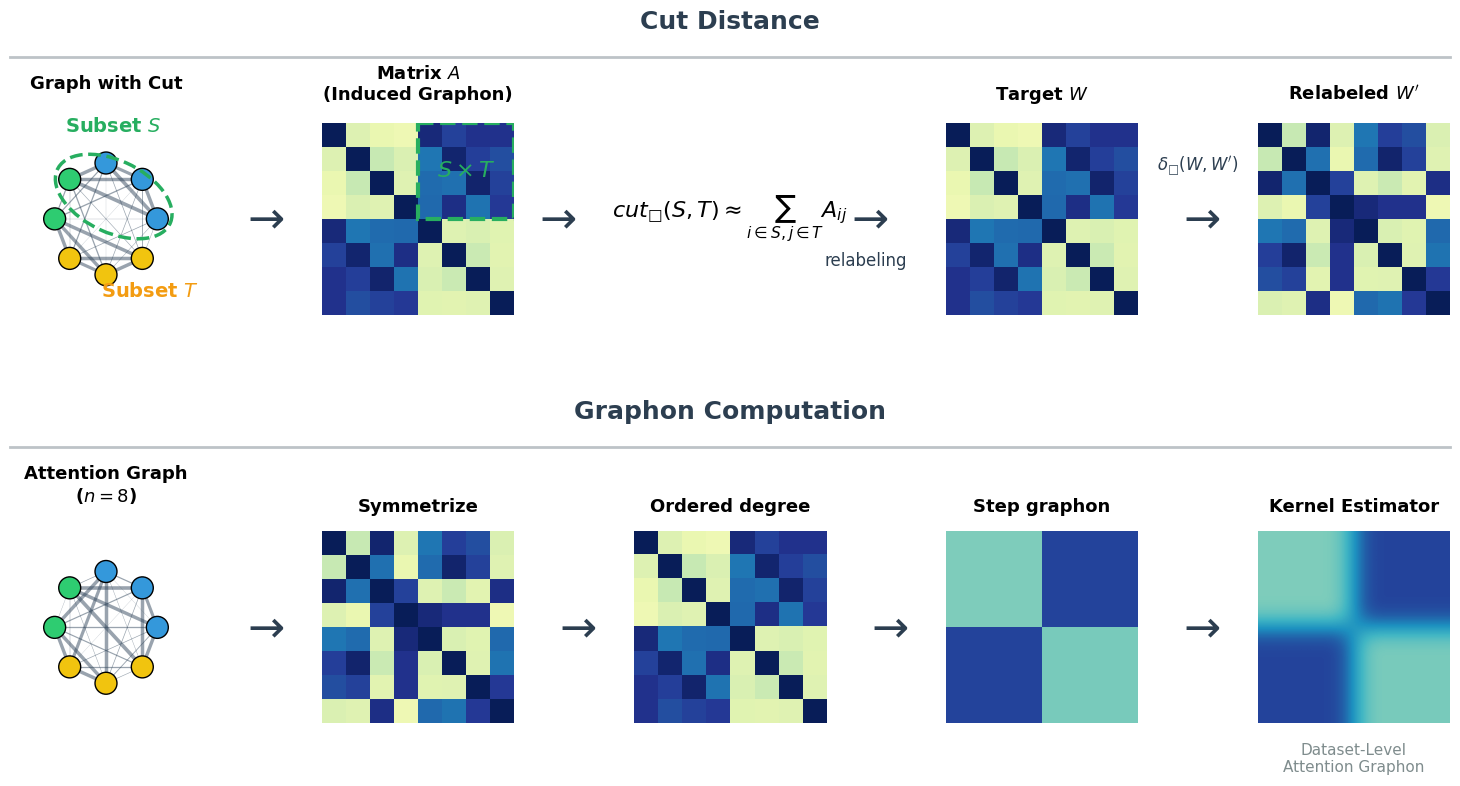}
\caption{\textbf{Visual explanation of the cut distance $\delta_\square$ and graphon computation.} \textit{Top (cut distance)}: The cut distance roughly measures the largest discrepancy over node subsets $S, T$. \textit{Bottom (graphon computation)}: From a dense, weighted attention graph with $n = 8$, the matrix is symmetrized, canonicalized via degree sorting, block-averaged, and finally used to estimate a dataset-level graphon.}
\label{fig:pipeline}
\end{figure*}

\section{Additional results}
\label{app:add_exp}

We report an extensive set of additional experiments conducted to further
stress-test our claims across architectures, datasets, and design choices.
These cover (i) additional GT architectures (Graphormer-GD, GRIT) \cite{zhang2023rethinkingexpressivepowergnns,ma2023graphinductivebiasestransformers}, (ii)
larger-scale benchmarks (OGBmolhiv, LRGBPeptides) \cite{hu2020open,dwivedi2022LRGB}, (iii) algorithmic
reasoning tasks from CLRS (BFS, Dijkstra) \cite{velivckovic2022clrs}, (iv) transferability, (v)
multi-head attention, (vi) runtime, and (vii) sensitivity to the two main
hyperparameters of our estimation pipeline (block size $k$ and
canonicalization method). 
% Throughout, we address the three questions (Q1)--(Q3) from the main text.

\paragraph{Cut-norm variance: empirical vs.\ theoretical scaling.}
\Cref{app:fig_cutvar_all} reports the comparison between the empirical and
theoretical cut-norm across all datasets for GPS. \Cref{fig:ht_gps_h4}
extends this to the 4-head setting and additionally covers the algorithmic
reasoning datasets (BFS, Dijkstra) and the larger-scale benchmarks
(LRGBPeptides, OGBmolhiv). \Cref{fig:ht_graphormer_h1,fig:ht_grit_h1}
repeat the analysis for Graphormer-GD and GRIT respectively. Across all
architectures and benchmarks, the empirical variance tracks---and stays
below---the regularity-aware proxy from \Cref{thm:bound_2}, confirming that the
observed decay is a genuine property of learned attention rather than an
architecture-specific artifact. \Cref{fig:ht_gps_stronger} reports the stronger
hypothesis test, which remains informative on all datasets considered.

\paragraph{Statistical calibration: bootstrap, permutation test, and synthetic null.}
\Cref{fig:ht_gps_stronger} reports a calibrated version of the cut-norm
variance test on GPS across BFS, COLLAB, Dijkstra, LRGBPeptides, MUTAG,
NCI1, PROTEINS, and REDDIT-MULTI-5K. Each panel overlays three
complementary diagnostics on top of the empirical variance curve:
$95\%$ bootstrap confidence intervals (shaded band), a synthetic null
distribution (mean $\pm$ 1 std), and a permutation-test $p$-value
reported in-panel.

\emph{Bootstrap confidence intervals.}
For each graph size $n$ and dataset, we construct $95\%$ bootstrap CIs
by resampling per-graph cut-norm distances ($500$ resamples). Across all
datasets and models, intervals are narrow and lie well below the
theoretical bound, confirming that the variance estimates are statistically
stable and not driven by finite-sample noise.

\emph{Permutation test for monotone decay.}
We test whether the log--log slope of variance vs.\ $n$ is significantly
negative under a permutation null. With four graph sizes, the minimum
achievable $p$-value is $1/24\approx 0.042$. On several datasets (e.g.,
BFS $p=0.036$, NCI1 $p=0.043$), observed $p$-values are at or near this
minimum, meaning the decay ordering is among the most extreme outcomes
under the null. On datasets where theory predicts weaker convergence
(e.g., MUTAG, REDDIT-MULTI-5K), $p$-values are correspondingly larger,
consistent with reduced statistical power rather than absence of the
effect.

\emph{Synthetic null simulation.}
We pass random symmetric matrices through the same estimation pipeline
to characterize the expected variance under a no-structure baseline. On
multiple datasets, the experimental variance at larger $n$ falls below
the null mean, providing direct evidence that trained attention matrices
are more concentrated than random ones; where variance remains above the
null, this diagnostic makes weaker convergence visible rather than being
masked.

Taken together, the three calibrations rule out sampling noise, spurious
orderings, and unstructured attention as explanations, and support the
interpretation that the observed decay reflects a genuine size-dependent
convergence of the learned kernel.

\paragraph{Single-head attention graphons.}
\Cref{app:fig_est_graphons} shows the estimated attention graphons for GPS across
all datasets and target sizes, complementing \Cref{fig:est_graphons} of the main text.
\Cref{fig:graphon_est_graphormer_h1_1,fig:graphon_est_graphormer_h1_2}, and
\Cref{fig:graphon_est_grit_h1_1,fig:graphon_est_grit_h1_2} report the analogous
estimates for Graphormer-GD and GRIT. In every case, the estimates stabilize
within each dataset as $N$ grows while remaining clearly distinct across
datasets.
% indicating that the limiting kernel is a property of the data distribution rather than of the specific GT architecture, though some differences can be seen across models.

\paragraph{Multi-head attention graphons.}
% \Cref{fig:est_graphons_multihead} shows per-head graphon estimates at $N=128$,
% and 
\Cref{fig:graphon_est_gps_h4} shows per-head graphon estimates at $N=128$ for 
% extends this to 
LRGBPeptides,
ModelNet10, OGBmolhiv, and REDDIT-MULTI-5K at dataset-appropriate sizes.
Heads converge to visibly different kernels, supporting the view that each
head implements a distinct interaction mechanism while the per-head limit is
still dataset-determined.

\paragraph{Transferability.}
\Cref{fig:transferability_gps_h1_1,fig:transferability_gps_h1_2} plot the transferability gap as
a function of graph size for GPS (single-head) across all eleven benchmarks.
\Cref{fig:transferability_gps_h4_1,fig:transferability_gps_h4_2} repeat the analysis in
the 4-head setting, and \Cref{fig:transferability_graphormer_h1,fig:transferability_grit_h1}
cover Graphormer-GD and GRIT. The gap decreases consistently with $N$ across
architectures and datasets, indicating that GTs trained on small graphs
extrapolate to larger ones along the predicted scaling.

\paragraph{Runtime.}
\Cref{fig:runtime} reports per-graph runtime (log scale) of our estimation
% pipeline, decomposed into degree-based canonicalization and block averaging
% (\emph{Sort+smooth}), the approximate cut-norm computation (\emph{Cutnorm}),
% and the total. The sort-and-smooth step is nearly constant in $n$, while
% the cut-norm computation dominates; in absolute terms the pipeline processes
% each graph in milliseconds, so the procedure remains tractable on the
pipeline, decomposed into estimation, which computes the attention graph, orders its nodes, and computes the block averages, the approximate cut-norm computation (\emph{Cutnorm}), and the total. Estimation is nearly constant in $n$, while the cut-norm computation dominates, and the pipeline processes each graph in tens of milliseconds, so the procedure remains tractable on the
larger-scale benchmarks (LRGBPeptides, OGBmolhiv).

\paragraph{Factorized attention.}
% Sampled attention is block-constant, $A=S\hat\Theta S^\top$, where $S\in\{0,1\}^{n\times K}$ assigns nodes to blocks. The attention output can therefore be computed as $AV=S\bigl(\hat\Theta(S^\top V)\bigr)$, which costs $\ccalO(nd_v)$ to aggregate $V$ by block, $\ccalO(K^2d_v)$ to apply $\hat\Theta$, and $\ccalO(nd_v)$ to scatter the result back to the nodes. The factorization is exact because the kernel is block-constant by construction. \Cref{tab:factorized} compares its runtime with that of the dense product for $d_v=32$. The maximum relative error between the two outputs is of order $10^{-15}$.}
Sampled attention is block-constant. The sampled logit between a node in block $a$ and a node in block $b$ is $\rho^{-1}(\hat\Theta_{ab})+c$, so the row-wise softmax gives $\tilde P=Z\tilde\Theta Z^\top$, where $Z\in\{0,1\}^{n\times K}$ assigns nodes to blocks and $\tilde\Theta_{ab}=\exp\bigl(\rho^{-1}(\hat\Theta_{ab})\bigr)/\sum_{b'}|B_{b'}|\exp\bigl(\rho^{-1}(\hat\Theta_{ab'})\bigr)$. The attention output can therefore be computed as $\tilde PV=Z\bigl(\tilde\Theta(Z^\top V)\bigr)$, which costs $\ccalO(K^2)$ to form $\tilde\Theta$, $\ccalO(nd_v)$ to aggregate $V$ by block, $\ccalO(K^2d_v)$ to apply $\tilde\Theta$, and $\ccalO(nd_v)$ to scatter the result back to the nodes. The factorization is exact because the kernel is block-constant by construction. \Cref{tab:factorized} compares its runtime with that of the dense product for $d_v=32$. The maximum relative error between the two outputs is of order $10^{-15}$.

\paragraph{Sensitivity to block size $k$.}
\Cref{fig:k_sensitivity_graphon} shows the estimated graphon for
$K\in\{16,32,64,128\}$ on BFS, COLLAB, Dijkstra, LRGBPeptides, and PROTEINS:
larger $k$ yields finer-grained estimates but the kernels remain
qualitatively stable. \Cref{fig:k_sensitivity_hypothesis} examines the effect of
% $k$ on the hypothesis test: larger $k$ generally yields lower empirical
% variance, with the caveat that on datasets with many small graphs (e.g.,
% MUTAG) choosing $k$ larger than typical graph sizes introduces zero-valued
% regions; even so, at $K=16$ the variance remains below the theoretical
% bound. Our default $K=64$ balances resolution and robustness.
$k$ on the hypothesis test. For fixed $k$, the variance curves nearly coincide and stay below the theoretical bound, and the growing resolution $K_n=\lceil\sqrt n\rceil$ differs only at the smallest sizes, so the default $K=64$ does not drive our conclusions.

\paragraph{Sensitivity to canonicalization.}
\Cref{fig:sensitivity_canon_hypothesis} compares degree sorting against spectral
(Fiedler) sorting for the hypothesis test on MUTAG, PROTEINS, COLLAB, and
LRGBPeptides: despite differences in absolute values, the two methods
exhibit nearly identical decay patterns, with curves tightly aligned in
log-log scale.
Figures~\ref{fig:sensitivity_canon_graphon_mutag}-\ref{fig:sensitivity_canon_graphon_lrbbpep} show the actual
per-size graphon estimates under both canonicalizations. The observed
variance decay therefore reflects genuine properties of the learned
attention rather than an artifact of the node-ordering strategy.

\paragraph{Node features versus graph structure.}
The descriptor of each node combines its features and its structural role (Sec.~\ref{sec:attn-hyp}), so an estimated kernel depends on both. To separate their contributions, we replace the node features with i.i.d.\ noise and rerun the pipeline. On SmoothW and SharpW, where the features are random-walk encodings and hence functions of the topology, randomizing them destroys concentration (\Cref{tab:featabl}). On PROTEINS, whose node attributes are informative in their own right, the variance is nearly unchanged, while the estimated kernel shifts by about $0.16$ in relative $L_1$ distance. Features thus enter the kernel in both cases. On the synthetic families they carry the structural signal that stabilizes attention, whereas on PROTEINS the stability survives their removal.

\paragraph{Effect of symmetrization.}
% \Cref{tab:asym} reports the relative residual $\|P-A\|_F/\|P\|_F$ between the attention matrix $P$ and its symmetrization $A$ in \Cref{eq:sym-attn}, as a range over the sizes of each sweep.}
\Cref{tab:asym} reports the relative residual $\|P-\bar P\|_F/\|P\|_F$ between the attention matrix $P$ and its symmetrization $\bar P=\tfrac12(P+P^\top)$, as a range over the sizes of each sweep.

\paragraph{Attention restricted to input edges.}
We train the same GPS architecture on NoisyCSBM with attention masked to the input edges and self-loops. Masked pre-softmax scores are $-\infty$, so $A$ vanishes off the edges, and its mass follows the attended density, falling from $0.035$ at $n=128$ to $0.0044$ at $n=1024$ (\Cref{fig:masked_csbm}). The induced graphons therefore converge to the zero graphon, yet the variance stays four to six orders of magnitude below $n^{-1}$, and the test does not reject. With full attention, which ignores the edges, the kernel mass stays near $1/2$.

\begin{figure*}[ht!]
\centering
\begin{subfigure}[t]{0.32\textwidth}
  \centering\includegraphics[width=\linewidth]{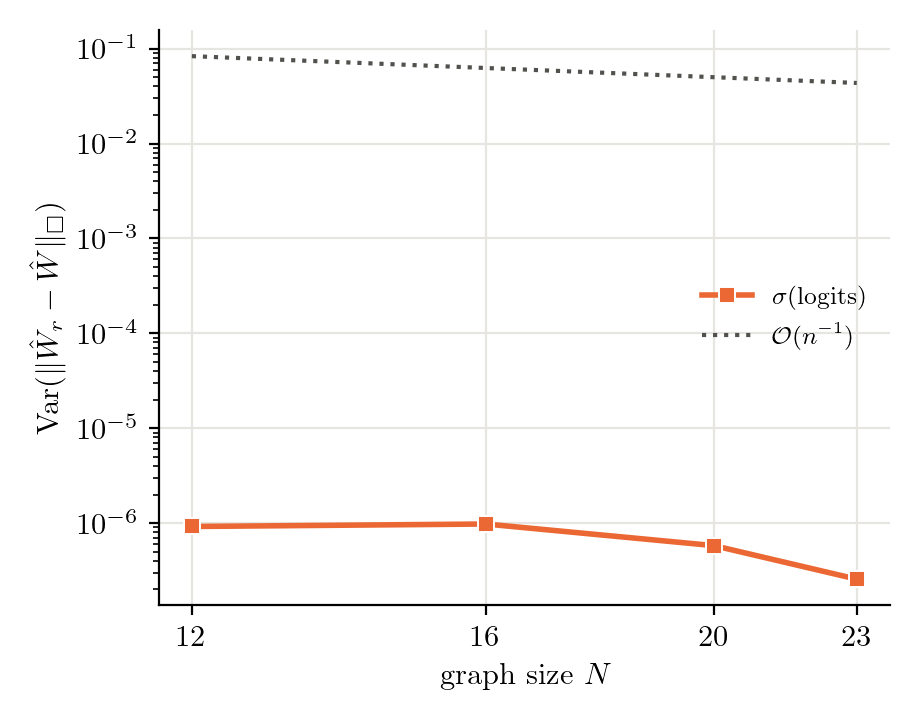}
  % \centering\includegraphics[width=\linewidth]{Figures/exp1_mutag_min_max_6_variance.png}
  \caption{\centering{\textbf{MUTAG}}}
\end{subfigure}
\begin{subfigure}[t]{0.32\textwidth}
  \centering\includegraphics[width=\linewidth]{Figures/variance_PROTEINS_fixed.png}
  % \centering\includegraphics[width=\linewidth]{Figures/exp1_proteins_min_max_6_variance.png}
  \caption{\centering{\textbf{PROTEINS}}}
\end{subfigure}
\begin{subfigure}[t]{0.32\textwidth}
  \centering\includegraphics[width=\linewidth]{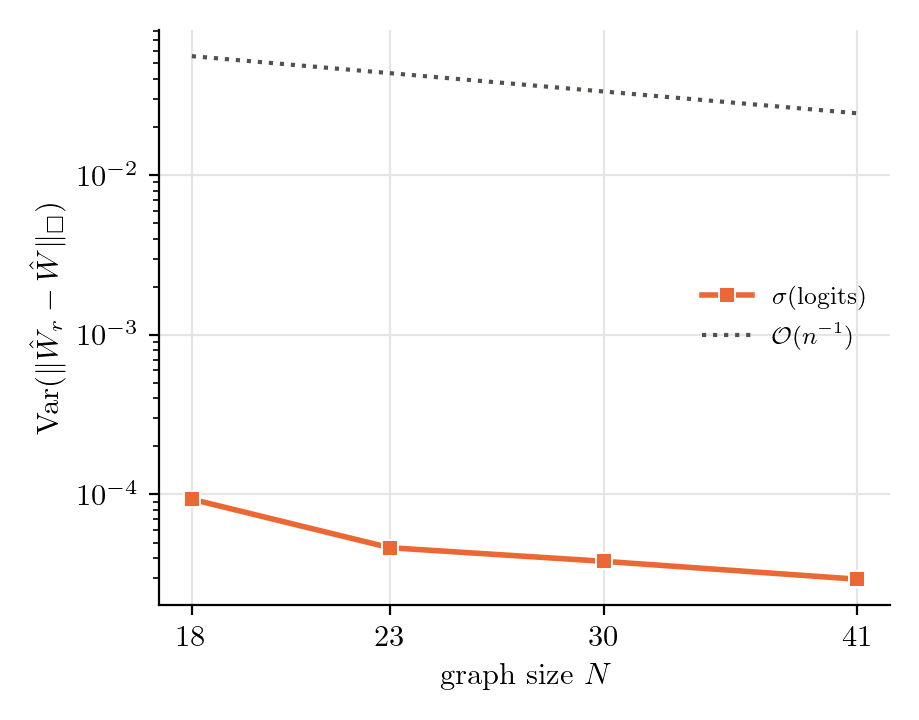}
  \caption{\centering{\textbf{NCI1}}}
\end{subfigure}

\vspace{0.3em}

\begin{subfigure}[t]{0.32\textwidth}
  \centering\includegraphics[width=\linewidth]{Figures/variance_NCI109_fixed.png}
  \caption{\centering{\textbf{NCI109}}}
\end{subfigure}
\begin{subfigure}[t]{0.32\textwidth}
  \centering\includegraphics[width=\linewidth]{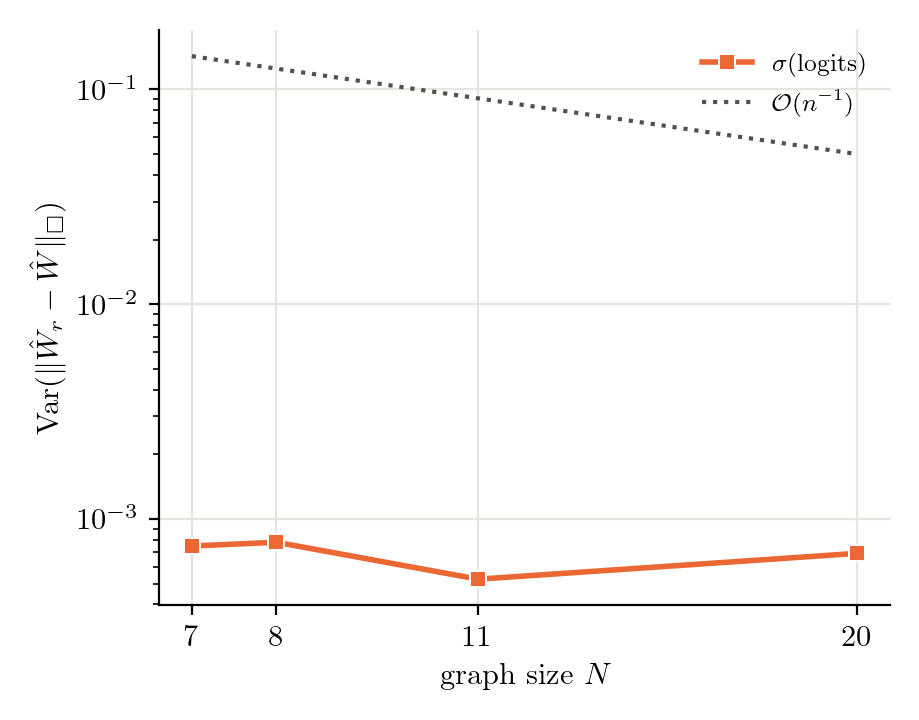}
  \caption{\centering{\textbf{IMDB}\textbf{-}\textbf{MULTI}}}
\end{subfigure}
\begin{subfigure}[t]{0.32\textwidth}
  \centering\includegraphics[width=\linewidth]{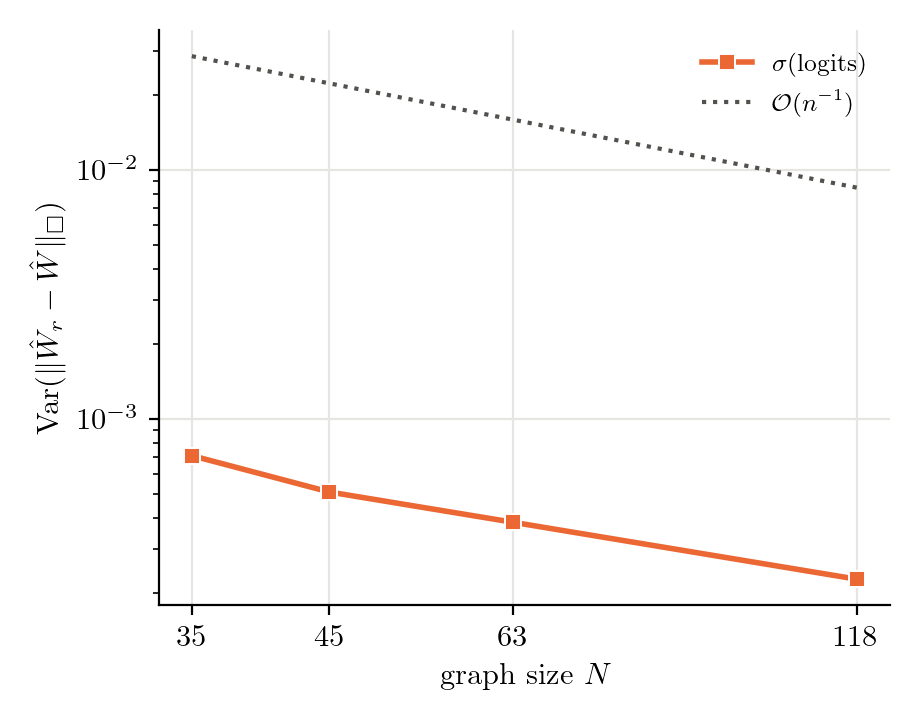}
  \caption{\centering{\textbf{COLLAB}}}
\end{subfigure}

\vspace{0.3em}

\begin{subfigure}[t]{0.32\textwidth}
  \centering\includegraphics[width=\linewidth]{Figures/variance_REDDIT-MULTI_fixed.png}
  \caption{\centering{\textbf{REDDIT}\textbf{-}\textbf{MULTI}\textbf{-}\textbf{5K}}}
\end{subfigure}
\begin{subfigure}[t]{0.32\textwidth}
  \centering\includegraphics[width=\linewidth]{Figures/variance_ModelNet10_fixed.png}
  \caption{\centering{\textbf{ModelNet10}}}
\end{subfigure}
\begin{subfigure}[t]{0.32\textwidth}
  \centering\includegraphics[width=\linewidth]{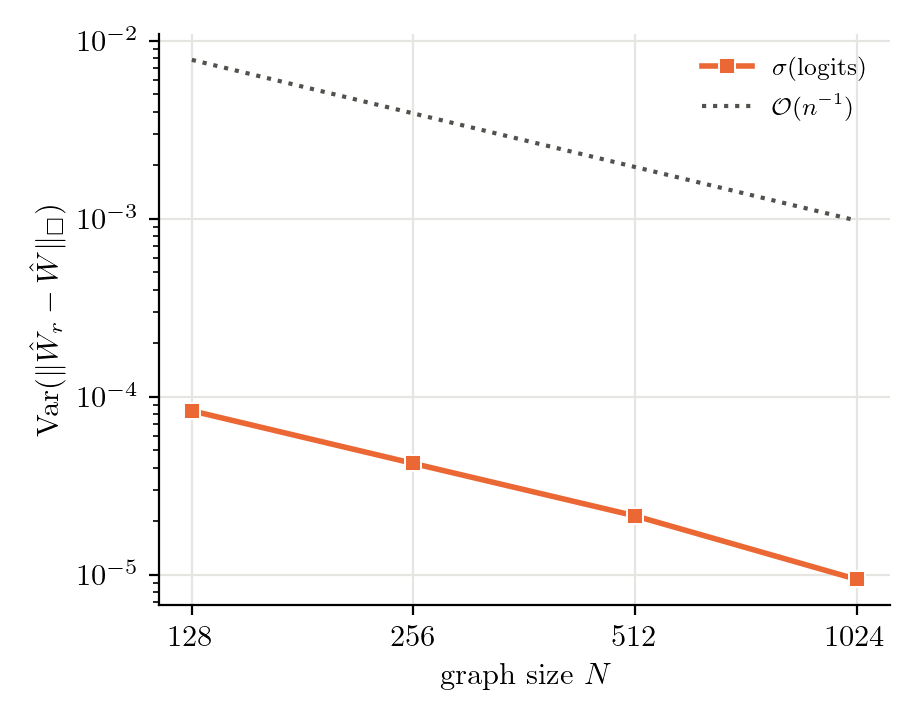}
  \caption{\centering{\textbf{cSBM}}}
\end{subfigure}

\caption{\textbf{Empirical cut-norm variance versus a regularity-aware theoretical proxy (GPS, single-head).}
% Each panel compares the empirical variance of $\|\hat W_r - \hat W\|_\square$ \blue{(blue)} against the size-dependent scaling
Each panel compares the \orange{empirical variance} of $\|\hat W_r - \hat W\|_\square$ against the size-dependent \textcolor{darkgray}{scaling proxy} $\ccalO(n^{-1})$
% proxy from Theorem~\ref{thm:bound_2} \green{(green)}.
from Theorem~\ref{thm:bound_2}.
% The target sizes $N$ are dataset-dependent and are listed in each caption.
}
\label{app:fig_cutvar_all}
\vskip -0.1in
\end{figure*}

\begin{comment}
\paragraph{Single-head attention graphons}
In \Cref{app:fig_est_graphons}, we show the attention graphon convergence for all the datasets presented in \Cref{fig:est_graphons}. It is worth noting the different patterns that each benchmark converges to as the graph sizes grow.
\end{comment}
\begin{figure*}[ht!]
    \vskip 0.1in
    \begin{center}
        % {\includegraphics[width=\linewidth]{Figures/est_graphons.png}}
        {\includegraphics[width=\linewidth]{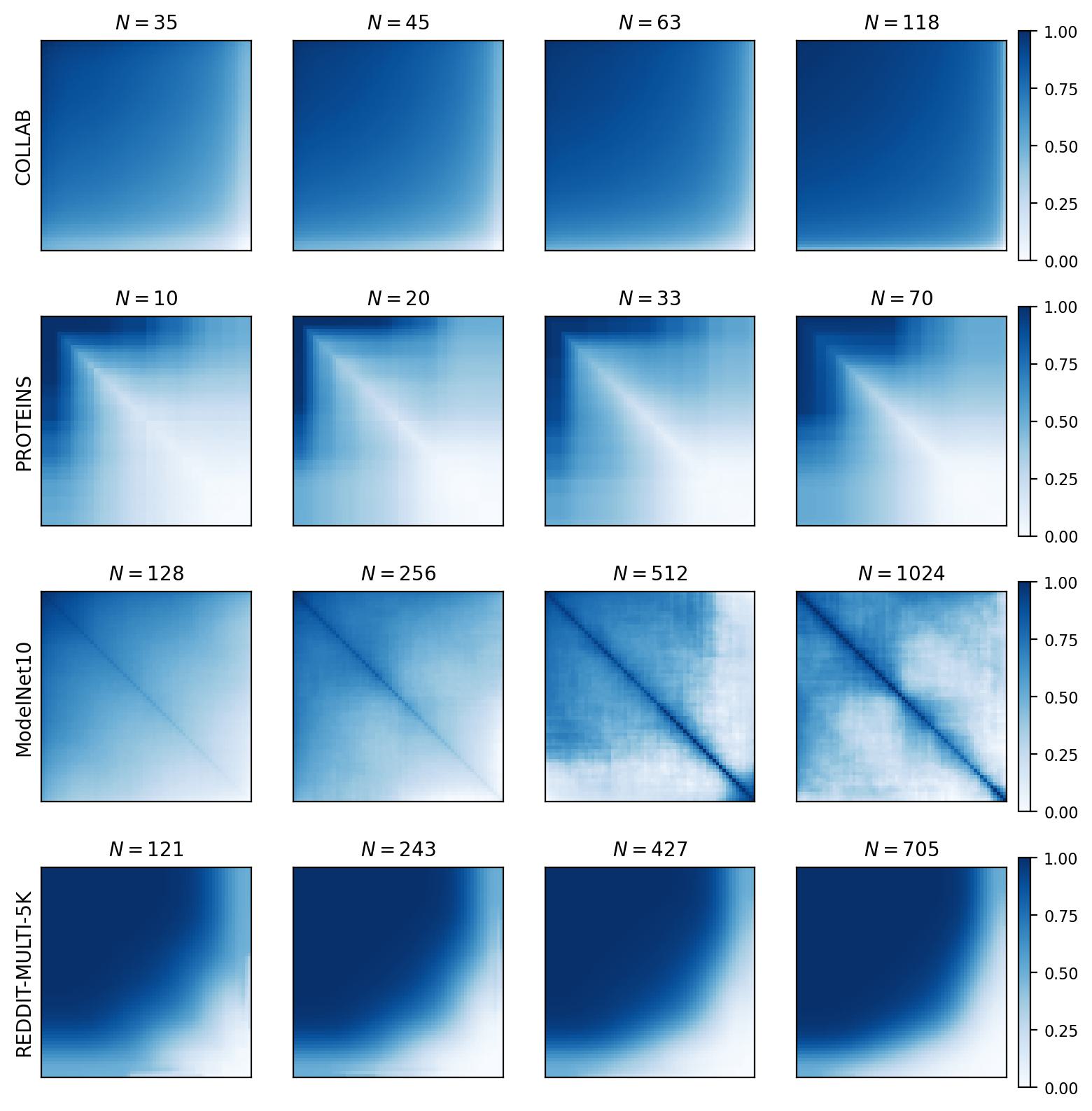}}
        \caption{
        \textbf{Estimated attention graphons across datasets and graph sizes (GPS, single-head).}
        Each row corresponds to a dataset, and each panel shows the $K{\times}K$ (here $K=64$) block-step estimate of the
        dataset-level attention kernel at a target size $N$.
        % The target sizes in this figure are: COLLAB ($N\in\{64,128,256,498\}$), PROTEINS ($N\in\{100,200,400,600\}$),
        The target sizes in this figure are: COLLAB ($N\in\{35,45,63,118\}$), PROTEINS ($N\in\{10,20,33,70\}$),
        % ModelNet10 ($N\in\{128,256,512,1024\}$), and REDDIT-MULTI-5K ($N\in\{128,256,512,1024\}$).
        ModelNet10 ($N\in\{128,256,512,1024\}$), and REDDIT-MULTI-5K ($N\in\{121,243,427,705\}$).
        Kernel estimates stabilize within each dataset as $N$ grows while remaining distinct across datasets.
        }
        \label{app:fig_est_graphons}
    \end{center}
    \vskip -0.1in
\end{figure*}

\begin{comment}
\paragraph{Multi-head attention graphons}
Figure~\ref{fig:est_graphons_multihead} shows that different heads converge to different dataset-dependent kernels,
supporting the view that each head implements a distinct interaction mechanism.
\end{comment}

\begin{comment}
\begin{figure*}[ht!]
    \centering
    \includegraphics[width=\linewidth]{Figures/est_graphons_multihead_2.png}
    \caption{\textbf{Multi-head attention graphons. (GPS, 4 heads).} Dataset-level graphon estimates for multiple heads ($h \in \{1, 2, 3, 4\}$) at a fixed size
    ($N=128$). Different heads exhibit distinct kernel structures.}
    \label{fig:est_graphons_multihead}
\end{figure*}
\end{comment}

\begin{figure*}[ht!]
\centering
% Row 1: lrgbpeptides
\begin{subfigure}[t]{0.19\textwidth}
  \centering\includegraphics[width=\linewidth]{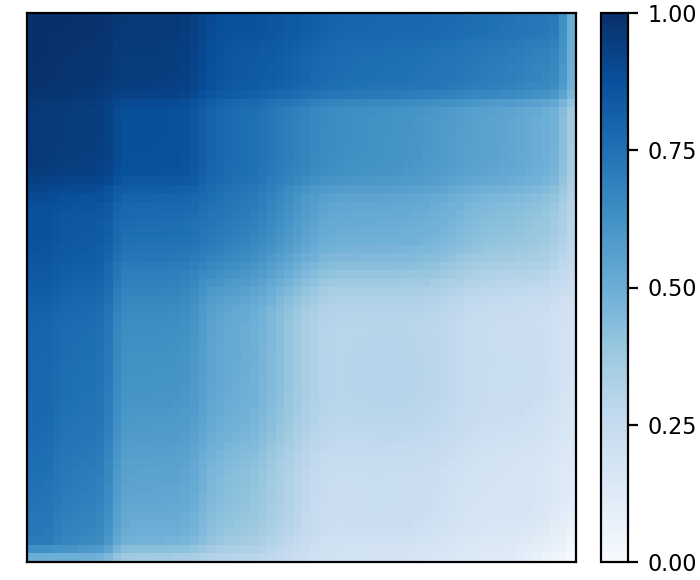}
\end{subfigure}
\begin{subfigure}[t]{0.19\textwidth}
  \centering\includegraphics[width=\linewidth]{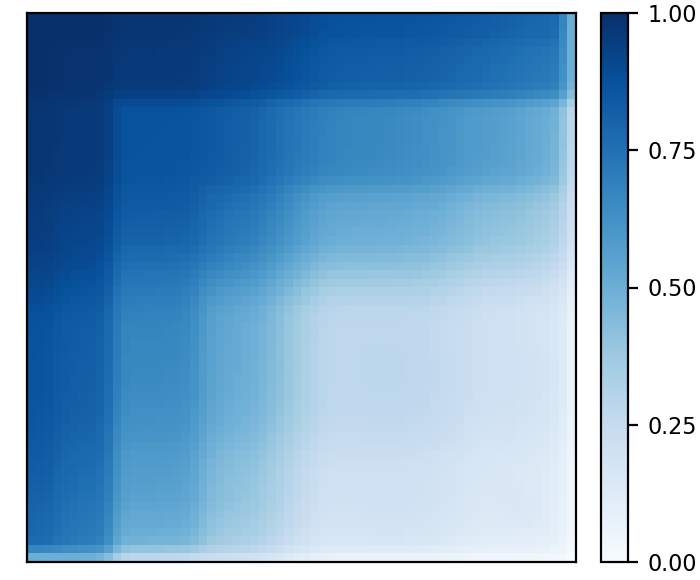}
\end{subfigure}
\begin{subfigure}[t]{0.19\textwidth}
  \centering\includegraphics[width=\linewidth]{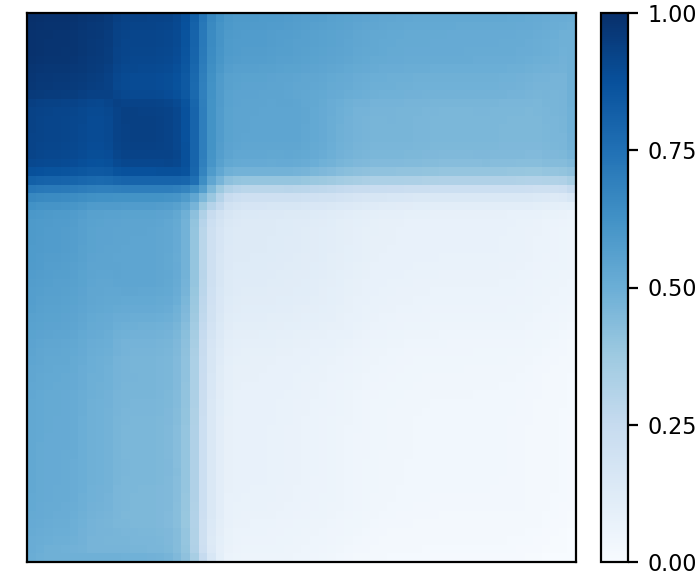}
\end{subfigure}
\begin{subfigure}[t]{0.19\textwidth}
  \centering\includegraphics[width=\linewidth]{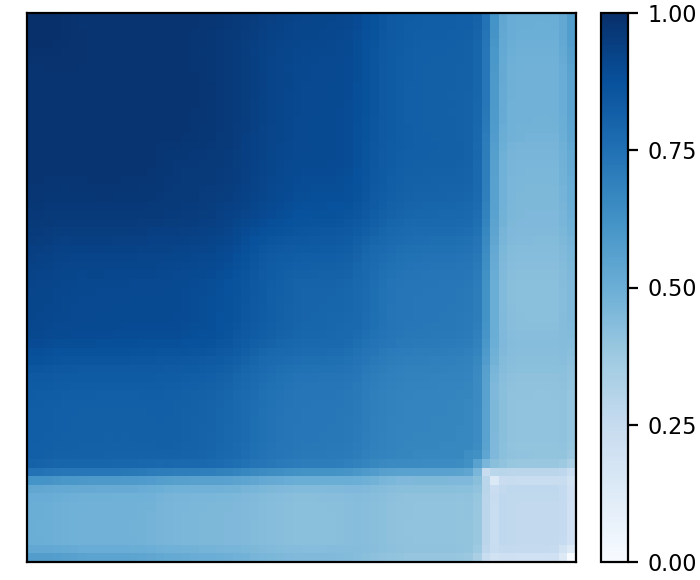}
\end{subfigure}
\begin{subfigure}[t]{0.19\textwidth}
  \centering\includegraphics[width=\linewidth]{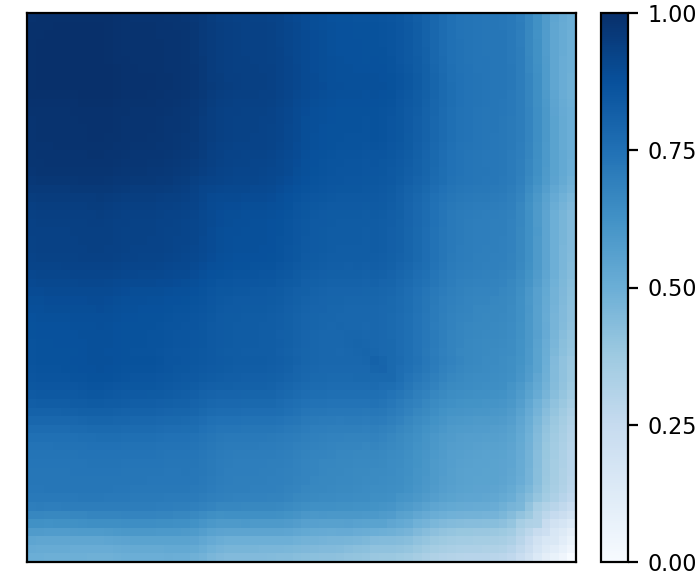}
\end{subfigure}

\vspace{0.3em}

% Row 2: modelnet
\begin{subfigure}[t]{0.19\textwidth}
  \centering\includegraphics[width=\linewidth]{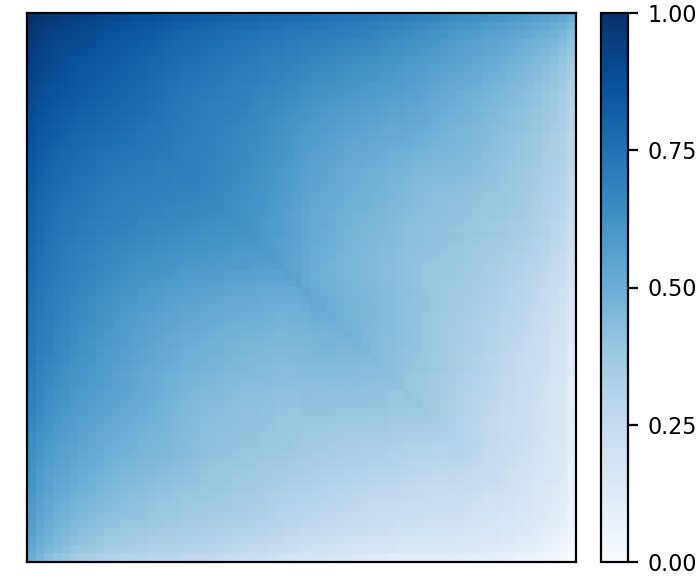}
\end{subfigure}
\begin{subfigure}[t]{0.19\textwidth}
  \centering\includegraphics[width=\linewidth]{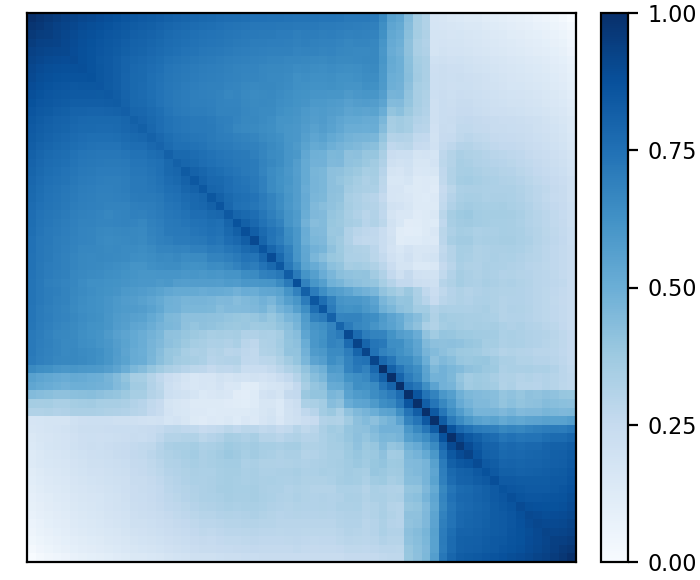}
\end{subfigure}
\begin{subfigure}[t]{0.19\textwidth}
  \centering\includegraphics[width=\linewidth]{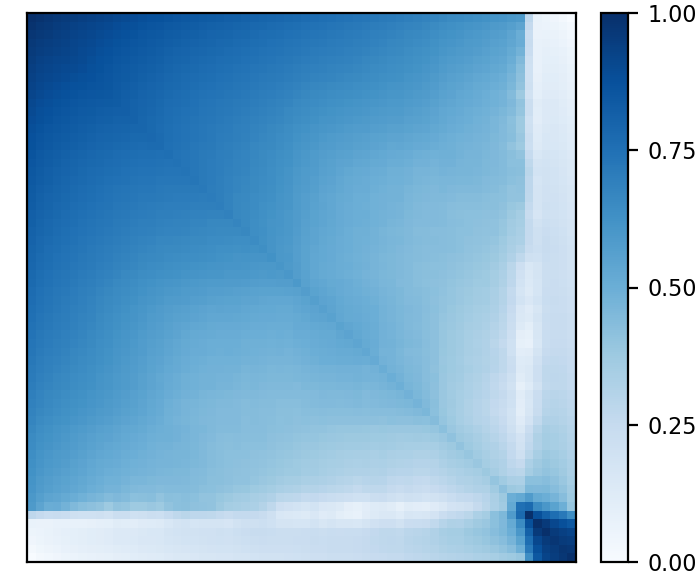}
\end{subfigure}
\begin{subfigure}[t]{0.19\textwidth}
  \centering\includegraphics[width=\linewidth]{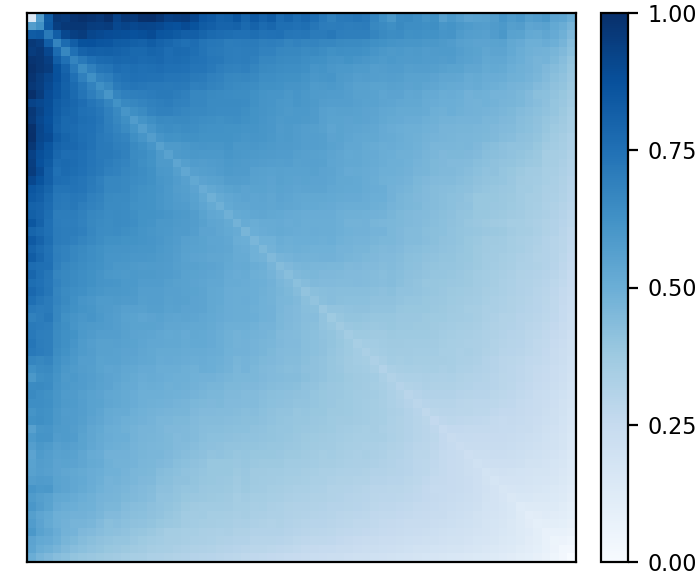}
\end{subfigure}
\begin{subfigure}[t]{0.19\textwidth}
  \centering\includegraphics[width=\linewidth]{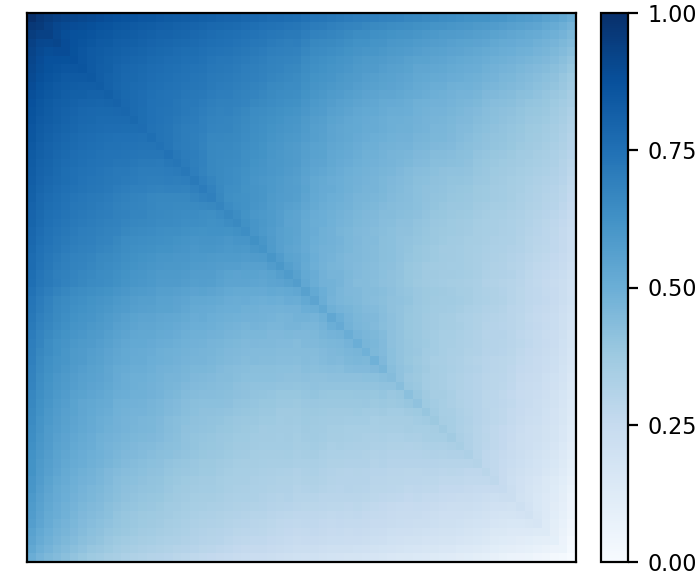}
\end{subfigure}

\vspace{0.3em}

% Row 3: ogbmolhiv
\begin{subfigure}[t]{0.19\textwidth}
  \centering\includegraphics[width=\linewidth]{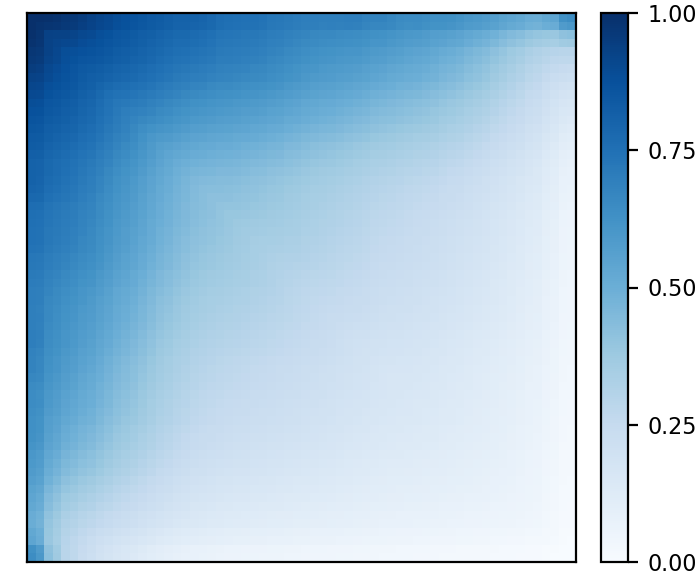}
\end{subfigure}
\begin{subfigure}[t]{0.19\textwidth}
  \centering\includegraphics[width=\linewidth]{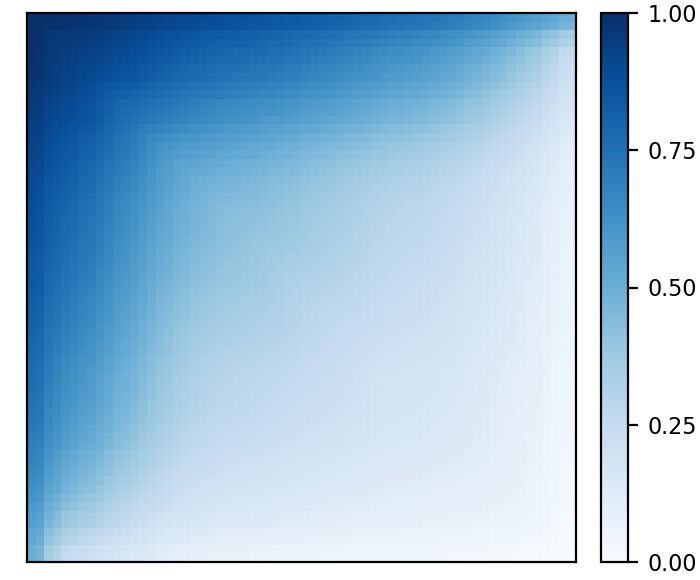}
\end{subfigure}
\begin{subfigure}[t]{0.19\textwidth}
  \centering\includegraphics[width=\linewidth]{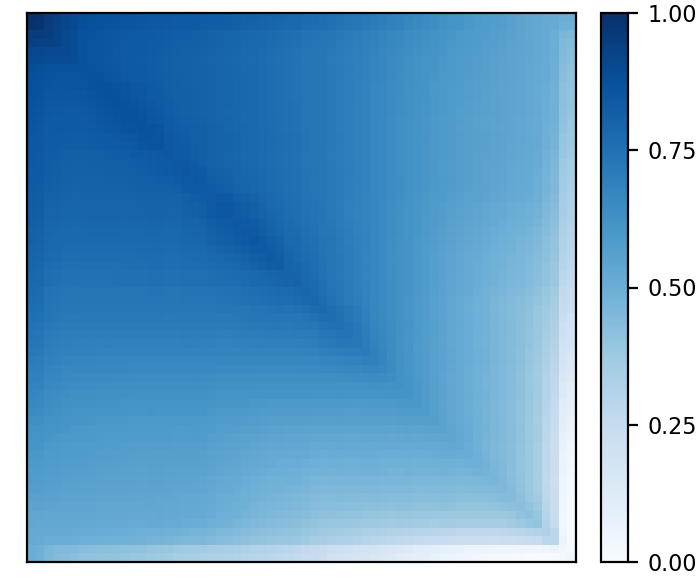}
\end{subfigure}
\begin{subfigure}[t]{0.19\textwidth}
  \centering\includegraphics[width=\linewidth]{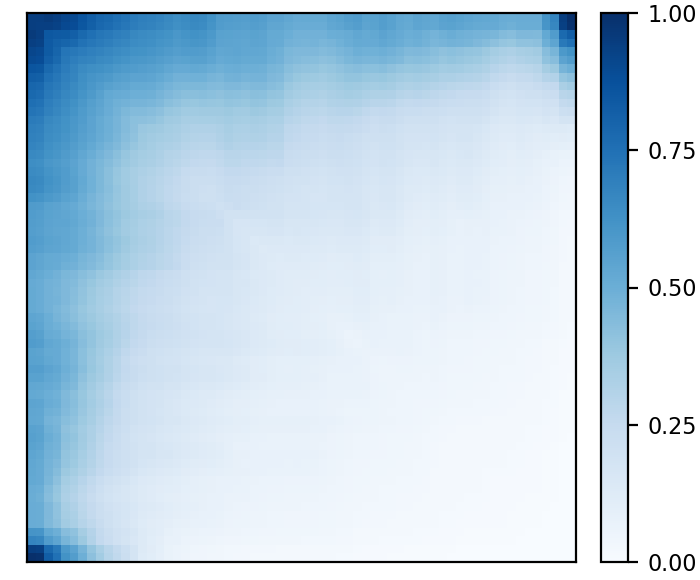}
\end{subfigure}
\begin{subfigure}[t]{0.19\textwidth}
  \centering\includegraphics[width=\linewidth]{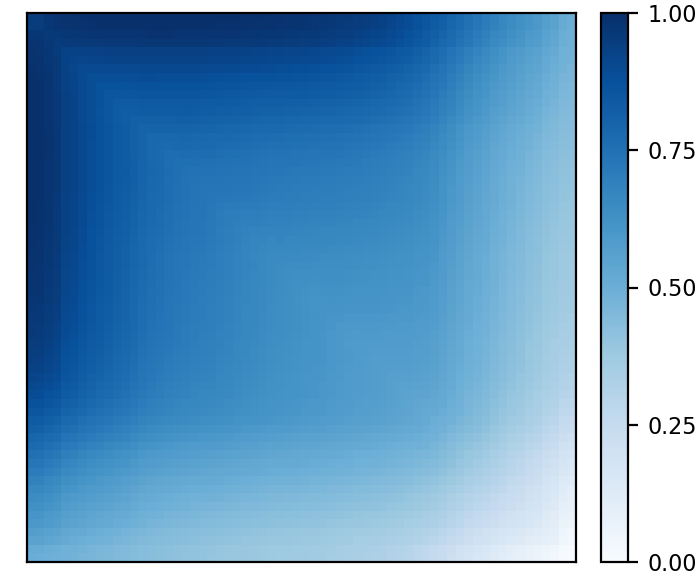}
\end{subfigure}

\vspace{0.3em}

% Row 4: redditmulti5k
\begin{subfigure}[t]{0.19\textwidth}
  \centering\includegraphics[width=\linewidth]{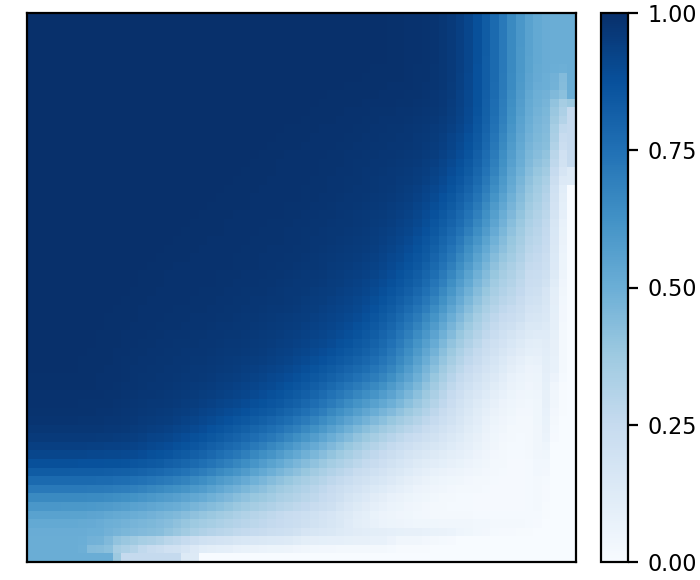}
\end{subfigure}
\begin{subfigure}[t]{0.19\textwidth}
  \centering\includegraphics[width=\linewidth]{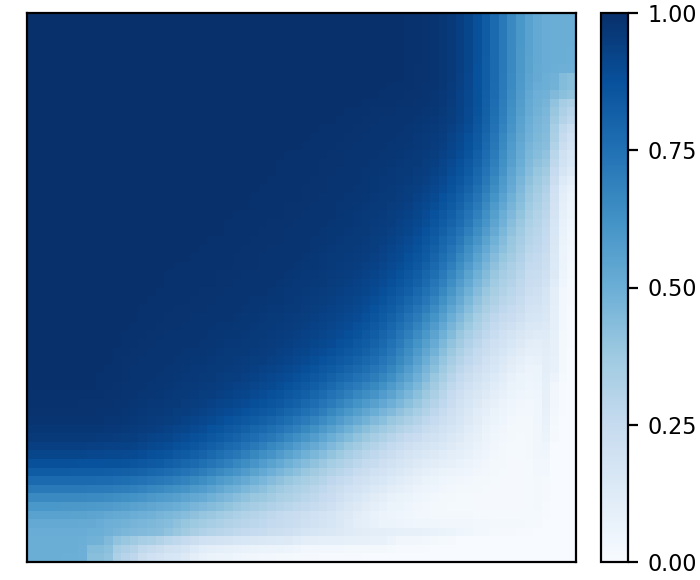}
\end{subfigure}
\begin{subfigure}[t]{0.19\textwidth}
  \centering\includegraphics[width=\linewidth]{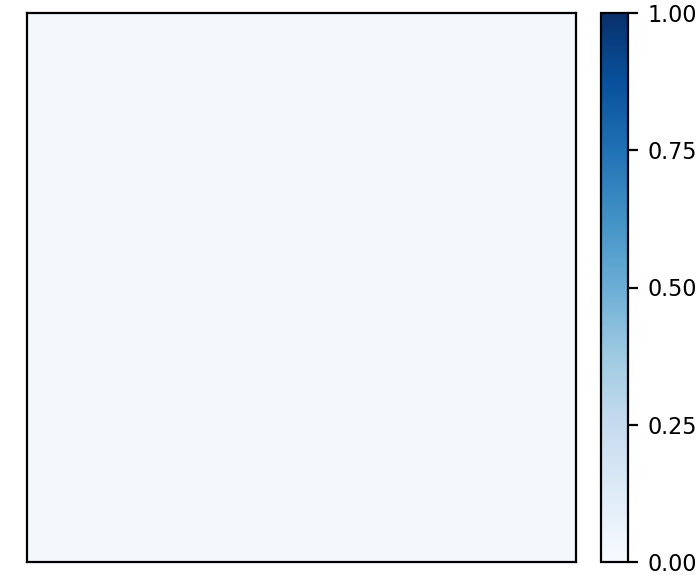}
\end{subfigure}
\begin{subfigure}[t]{0.19\textwidth}
  \centering\includegraphics[width=\linewidth]{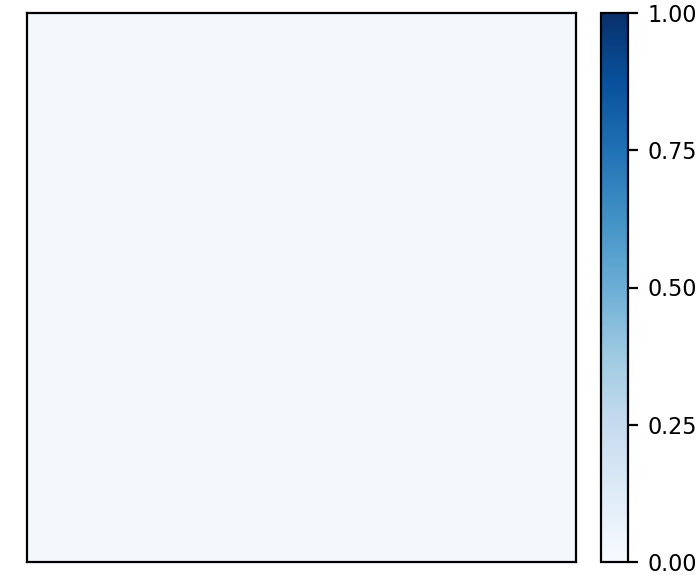}
\end{subfigure}
\begin{subfigure}[t]{0.19\textwidth}
  \centering\includegraphics[width=\linewidth]{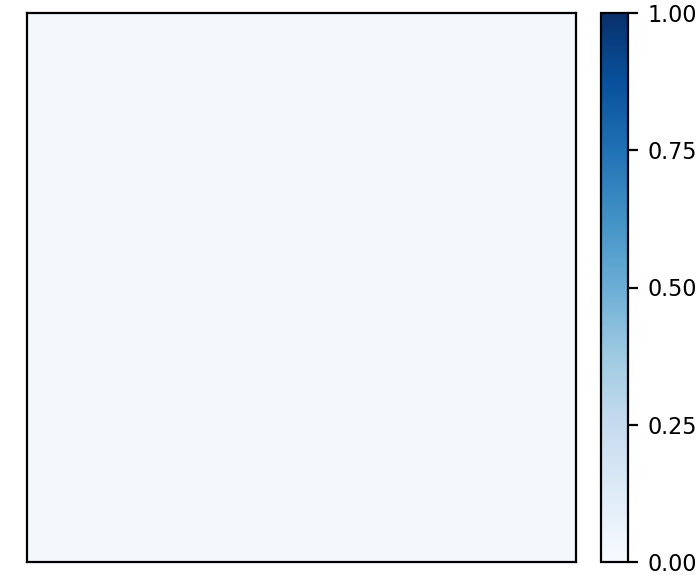}
\end{subfigure}

% \caption{\textbf{Multi-head attention graphons (GPS, 4 heads).} Each row corresponds to a dataset (LRGBPeptides, ModelNet10, OGBmolhiv, REDDIT-MULTI-5K), and each column shows the average graphon followed by head-specific graphon estimates ($h \in \{1,2,3,4\}$). Each dataset has a different fixed size ($N \in \{64, 256, 512\}$), as stated in the figures' titles. Different heads exhibit distinct kernel structures.}
\caption{\textbf{Multi-head attention graphons (GPS, 4 heads).} Each row corresponds to a dataset (LRGBPeptides, ModelNet10, OGBmolhiv, REDDIT-MULTI-5K), and each column shows the average graphon followed by head-specific graphon estimates ($h \in \{1,2,3,4\}$). Each dataset has a different fixed size ($N \in \{36, 259, 427, 512\}$), as stated in the figures' titles. Different heads exhibit distinct kernel structures.}
\label{fig:graphon_est_gps_h4}
\end{figure*}

\begin{figure*}[ht!]
\centering
% Row 1: collab
\begin{subfigure}[t]{0.24\textwidth}
  \centering\includegraphics[width=\linewidth]{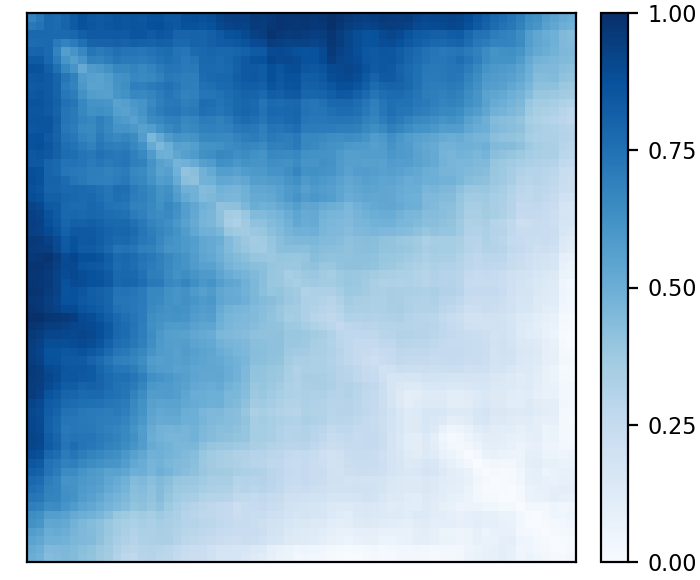}
\end{subfigure}
\begin{subfigure}[t]{0.24\textwidth}
  \centering\includegraphics[width=\linewidth]{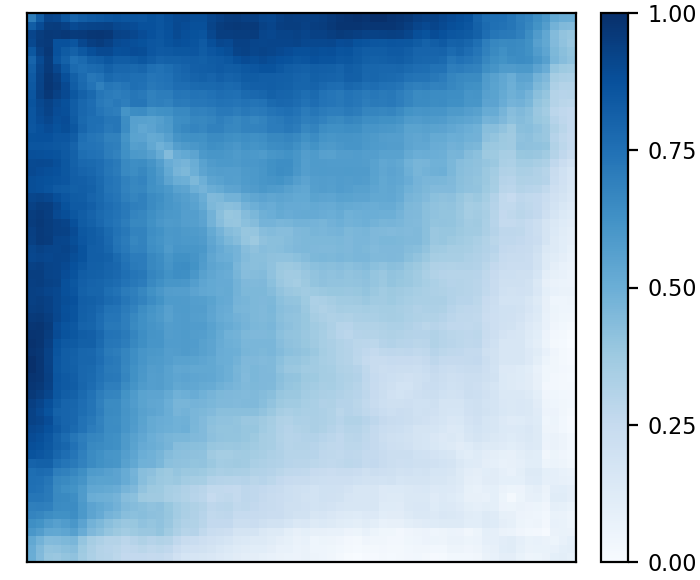}
\end{subfigure}
\begin{subfigure}[t]{0.24\textwidth}
  \centering\includegraphics[width=\linewidth]{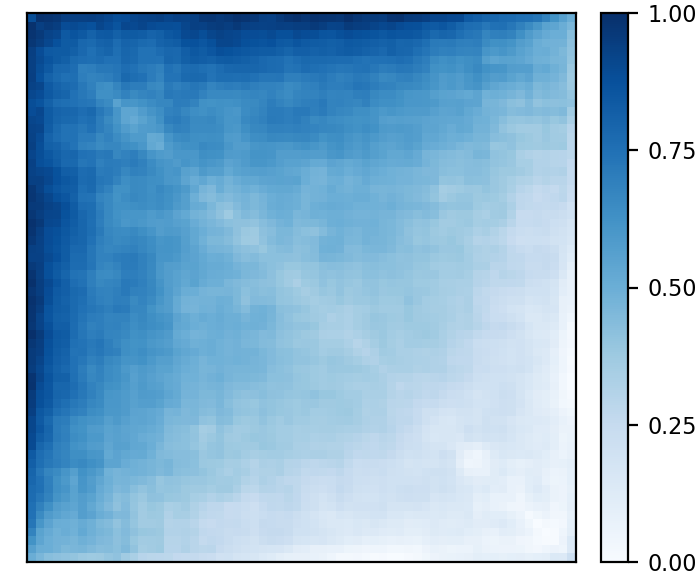}
\end{subfigure}
\begin{subfigure}[t]{0.24\textwidth}
  \centering\includegraphics[width=\linewidth]{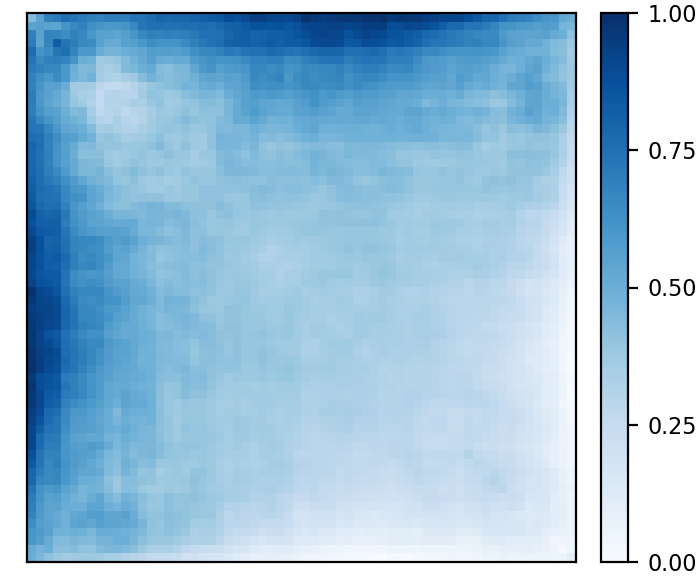}
\end{subfigure}

\vspace{0.3em}

% Row 2: imdbmulti
\begin{subfigure}[t]{0.24\textwidth}
  \centering\includegraphics[width=\linewidth]{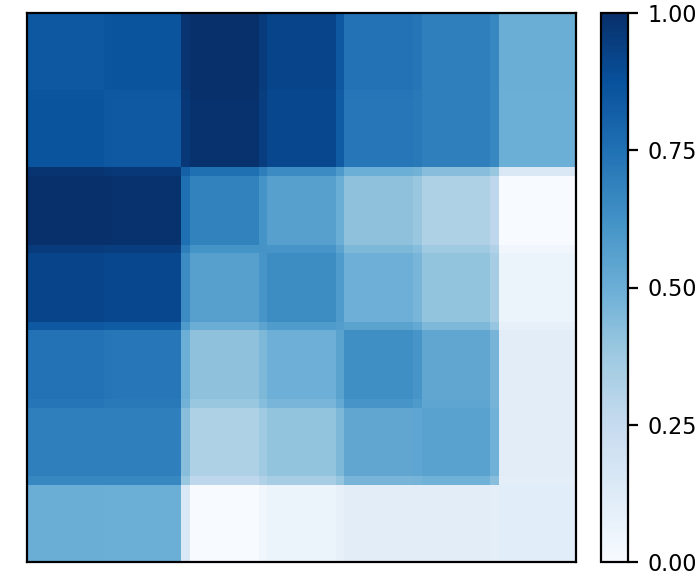}
\end{subfigure}
\begin{subfigure}[t]{0.24\textwidth}
  \centering\includegraphics[width=\linewidth]{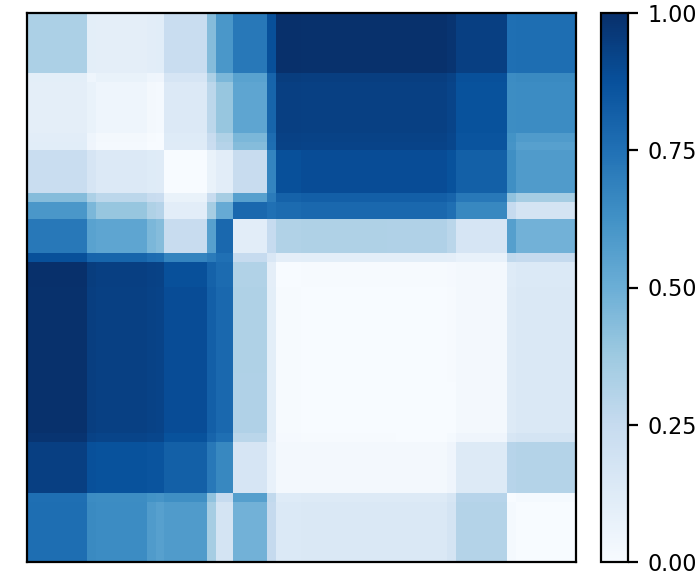}
\end{subfigure}
\begin{subfigure}[t]{0.24\textwidth}
  \centering\includegraphics[width=\linewidth]{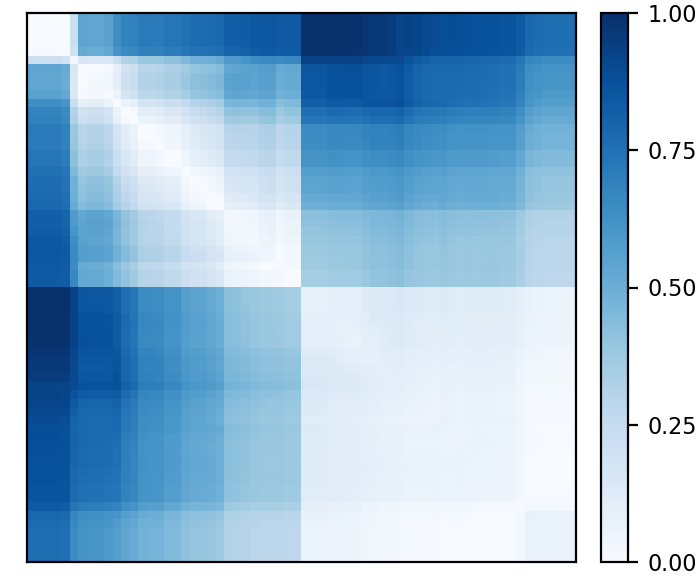}
\end{subfigure}
\begin{subfigure}[t]{0.24\textwidth}
  \centering\includegraphics[width=\linewidth]{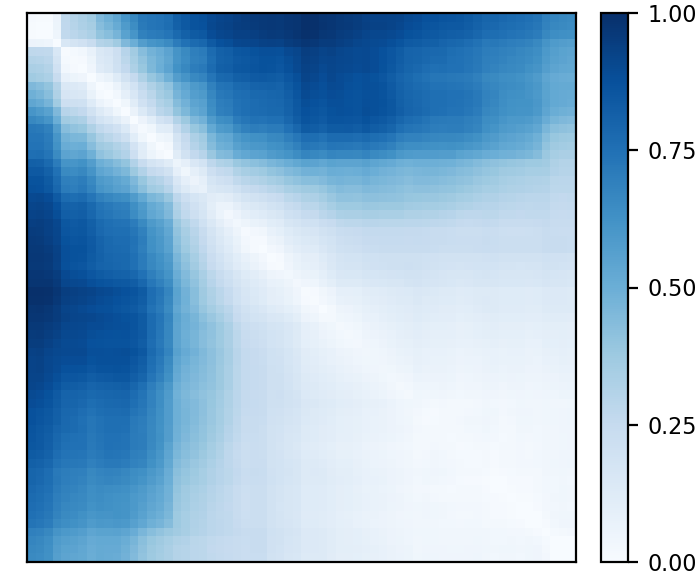}
\end{subfigure}

\vspace{0.3em}

% Row 3: lrgbpeptides
\begin{subfigure}[t]{0.24\textwidth}
  \centering\includegraphics[width=\linewidth]{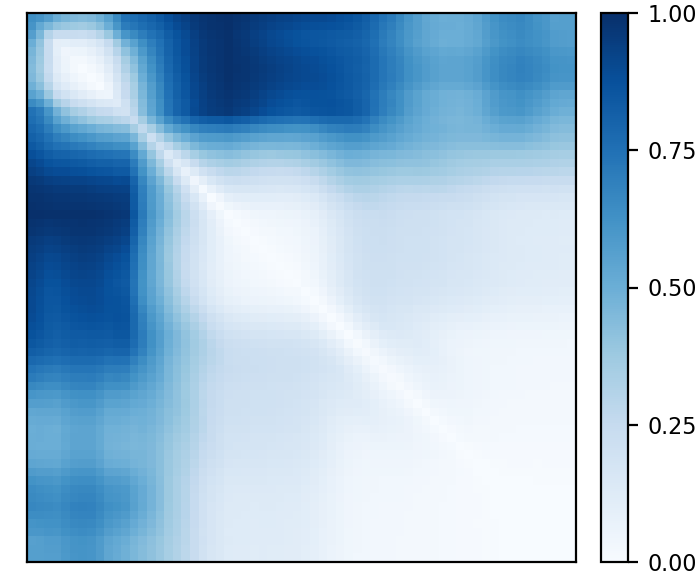}
\end{subfigure}
\begin{subfigure}[t]{0.24\textwidth}
  \centering\includegraphics[width=\linewidth]{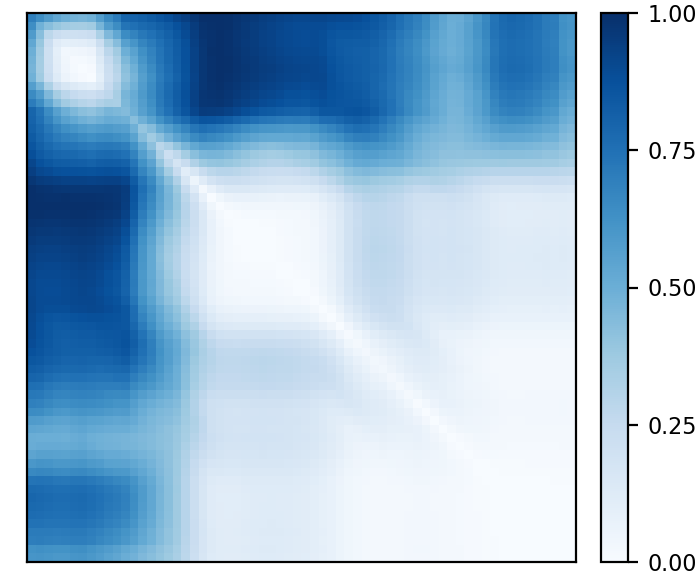}
\end{subfigure}
\begin{subfigure}[t]{0.24\textwidth}
  \centering\includegraphics[width=\linewidth]{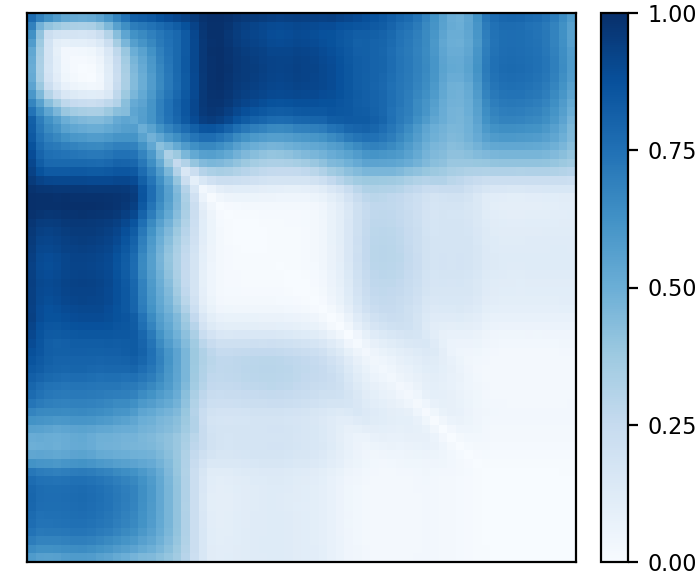}
\end{subfigure}
\begin{subfigure}[t]{0.24\textwidth}
  \centering\includegraphics[width=\linewidth]{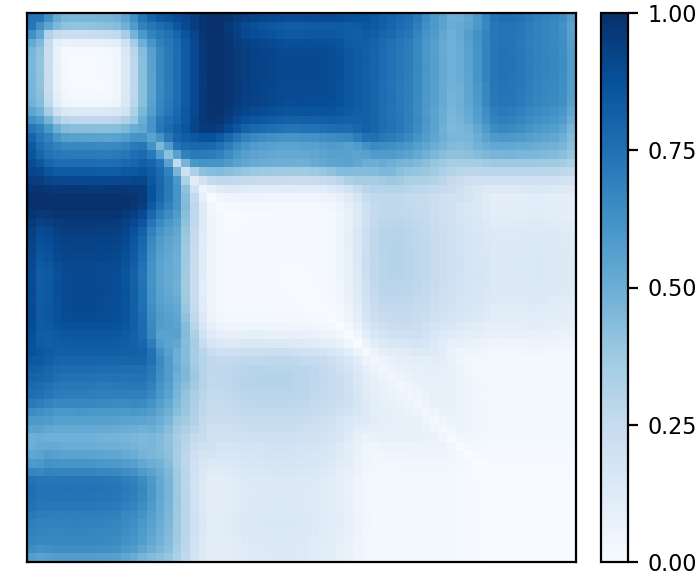}
\end{subfigure}

\vspace{0.3em}

% Row 4: modelnet
\begin{subfigure}[t]{0.24\textwidth}
  \centering\includegraphics[width=\linewidth]{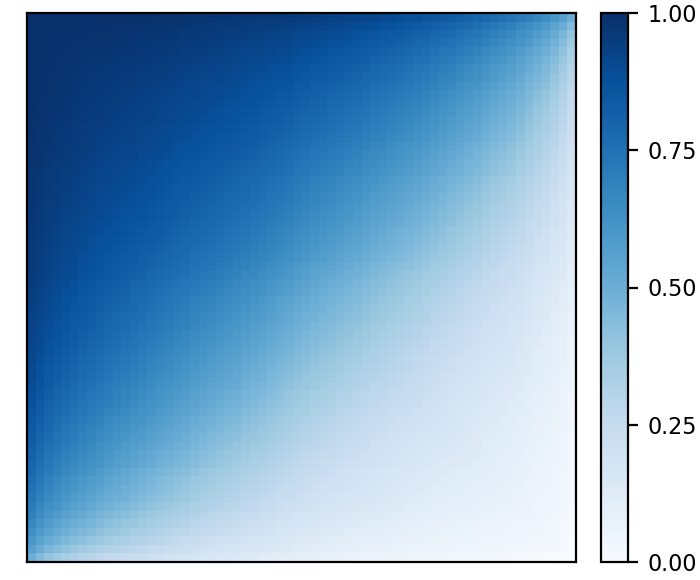}
\end{subfigure}
\begin{subfigure}[t]{0.24\textwidth}
  \centering\includegraphics[width=\linewidth]{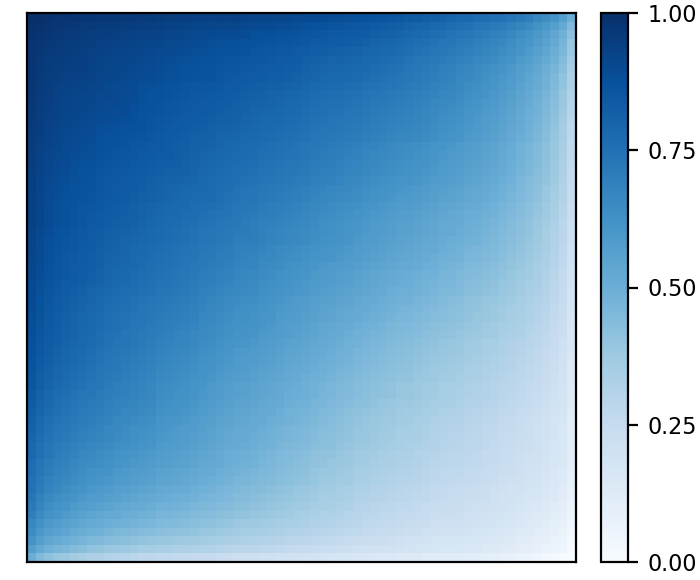}
\end{subfigure}
\begin{subfigure}[t]{0.24\textwidth}
  \centering\includegraphics[width=\linewidth]{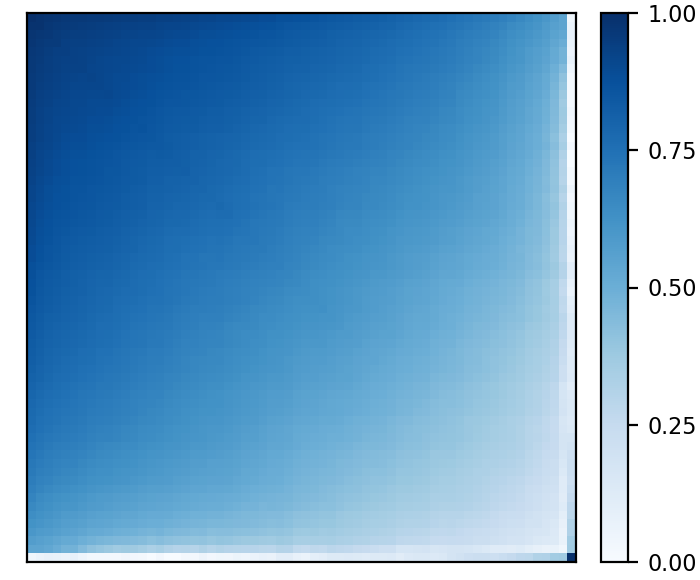}
\end{subfigure}
\begin{subfigure}[t]{0.24\textwidth}
  \centering\includegraphics[width=\linewidth]{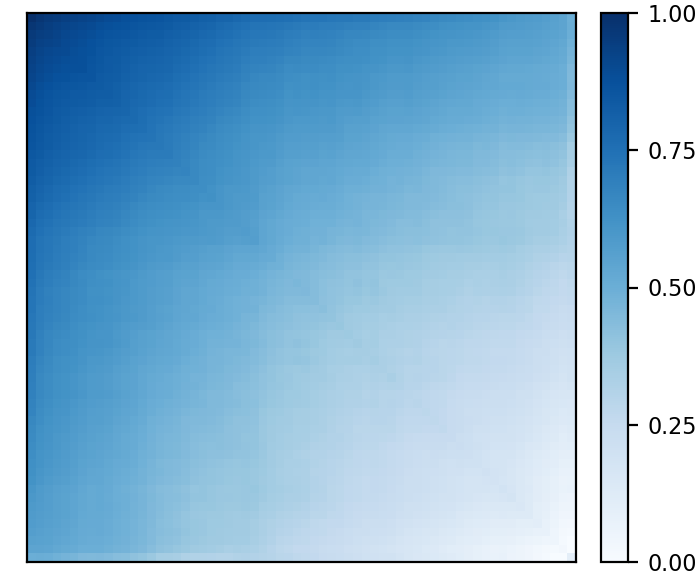}
\end{subfigure}

\vspace{0.3em}

% Row 5: mutag
\begin{subfigure}[t]{0.24\textwidth}
  \centering\includegraphics[width=\linewidth]{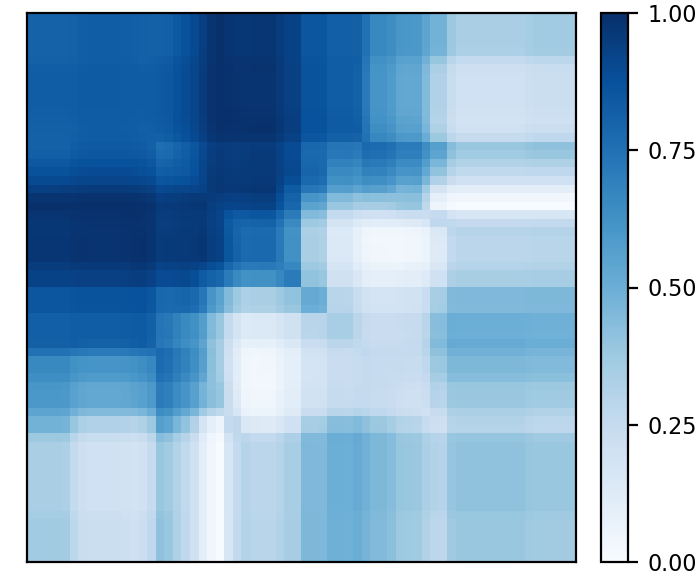}
\end{subfigure}
\begin{subfigure}[t]{0.24\textwidth}
  \centering\includegraphics[width=\linewidth]{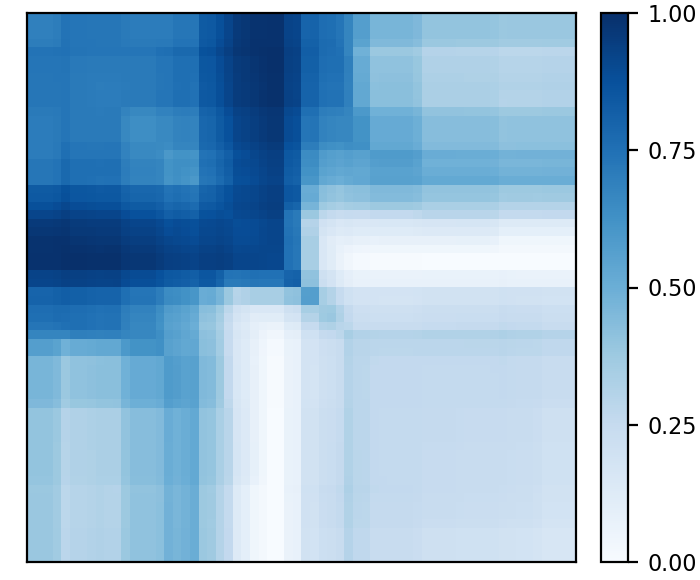}
\end{subfigure}
\begin{subfigure}[t]{0.24\textwidth}
  \centering\includegraphics[width=\linewidth]{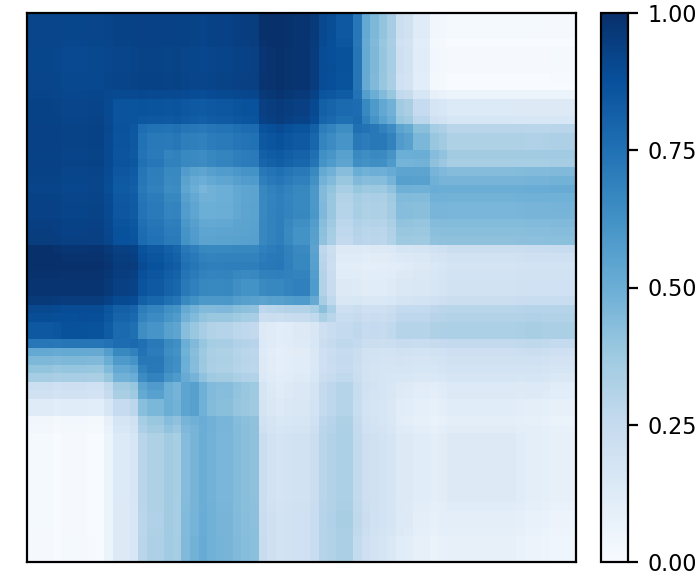}
\end{subfigure}
\begin{subfigure}[t]{0.24\textwidth}
  \centering\includegraphics[width=\linewidth]{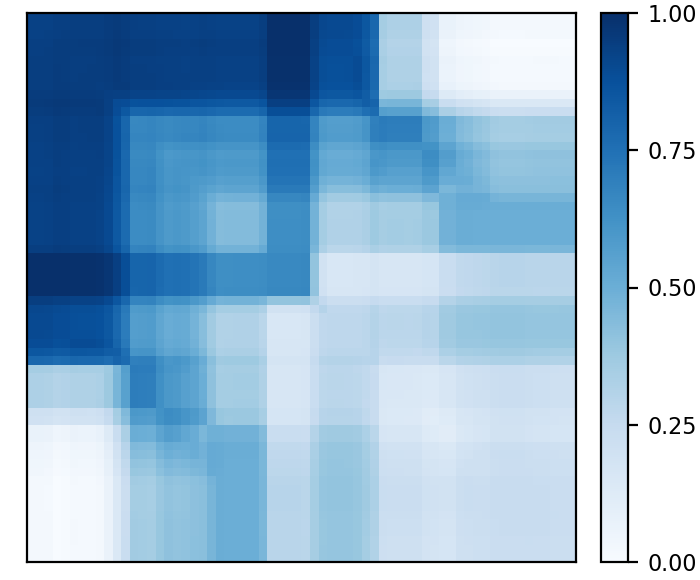}
\end{subfigure}

\caption{\textbf{Estimated attention graphons across datasets and graph sizes (Graphormer-GD, single-head).} Each row corresponds to a dataset (COLLAB, IMDB-MULTI, LRGBPeptides, ModelNet10, MUTAG), and each column shows the block-step estimate of the dataset-level attention kernel at increasing target sizes $N$. Kernel estimates stabilize within each dataset as $N$ grows.}
\label{fig:graphon_est_graphormer_h1_1}
\end{figure*}

\begin{figure*}[ht!]
\centering
% Row 1: nci1
\begin{subfigure}[t]{0.24\textwidth}
  \centering\includegraphics[width=\linewidth]{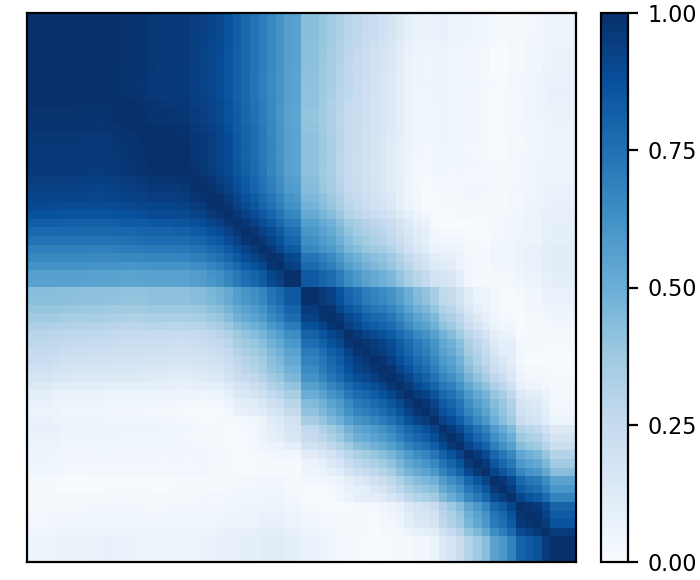}
\end{subfigure}
\begin{subfigure}[t]{0.24\textwidth}
  \centering\includegraphics[width=\linewidth]{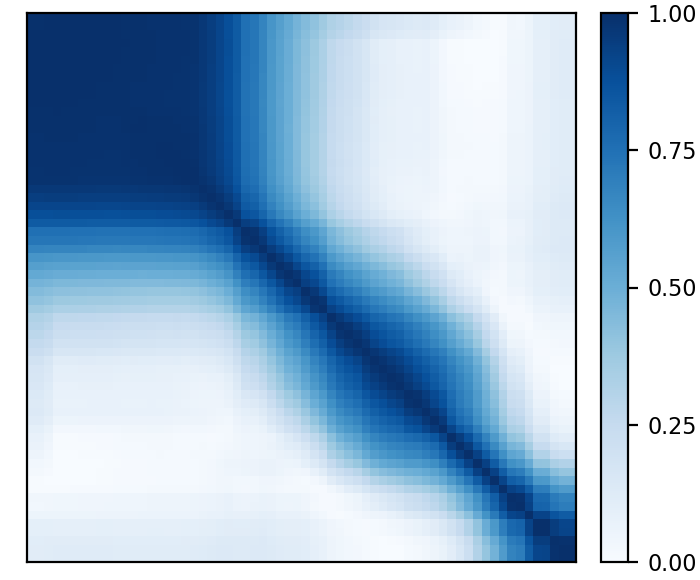}
\end{subfigure}
\begin{subfigure}[t]{0.24\textwidth}
  \centering\includegraphics[width=\linewidth]{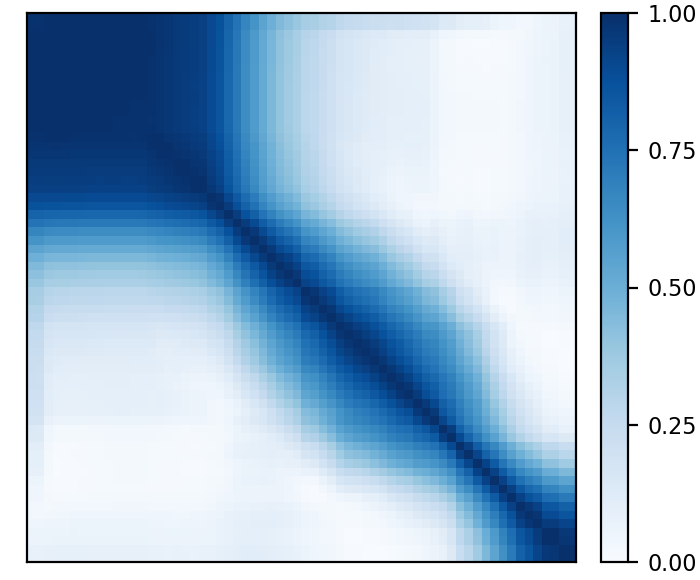}
\end{subfigure}
\begin{subfigure}[t]{0.24\textwidth}
  \centering\includegraphics[width=\linewidth]{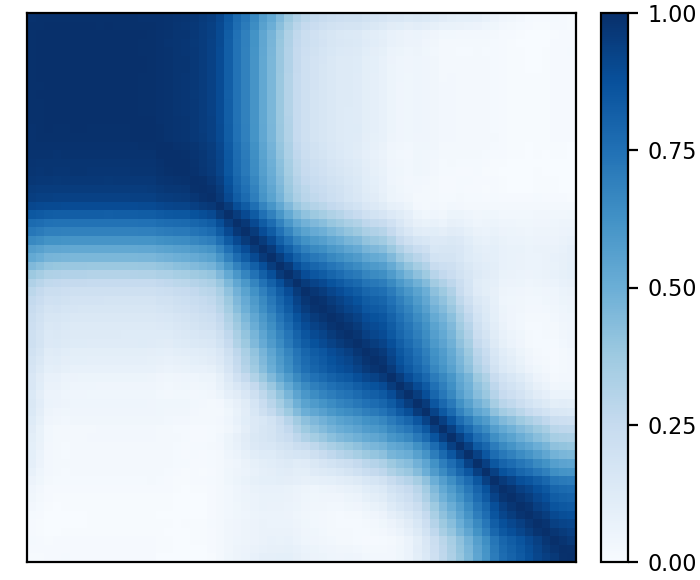}
\end{subfigure}

\vspace{0.3em}

% Row 2: nci109
\begin{subfigure}[t]{0.24\textwidth}
  \centering\includegraphics[width=\linewidth]{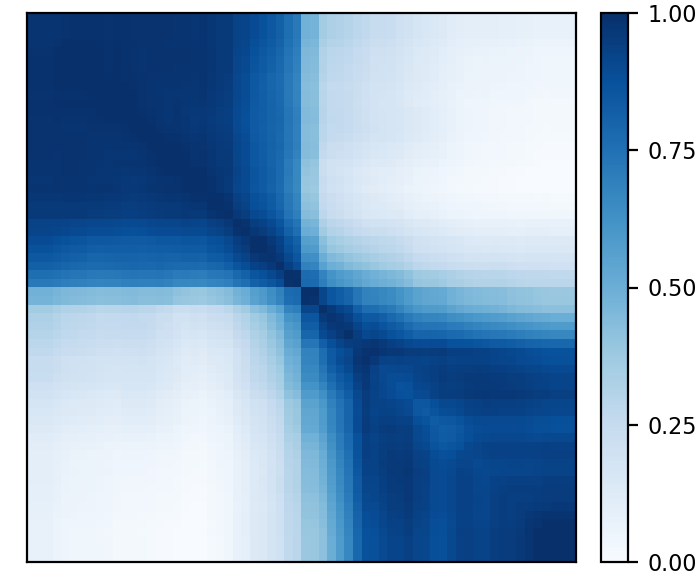}
\end{subfigure}
\begin{subfigure}[t]{0.24\textwidth}
  \centering\includegraphics[width=\linewidth]{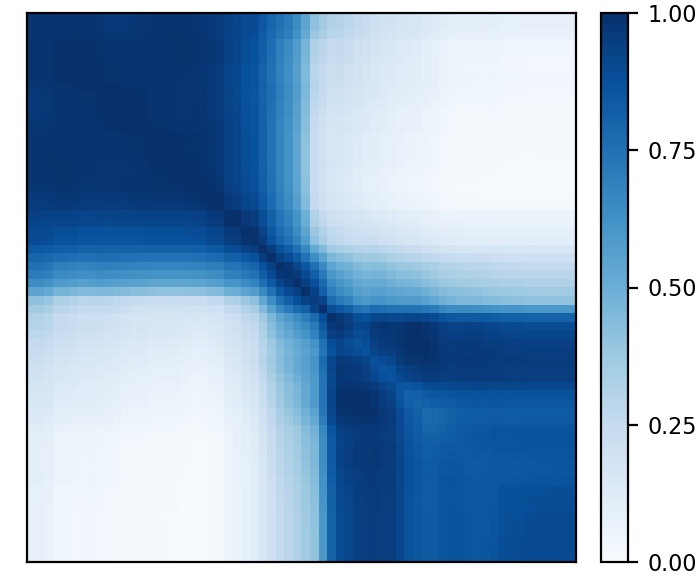}
\end{subfigure}
\begin{subfigure}[t]{0.24\textwidth}
  \centering\includegraphics[width=\linewidth]{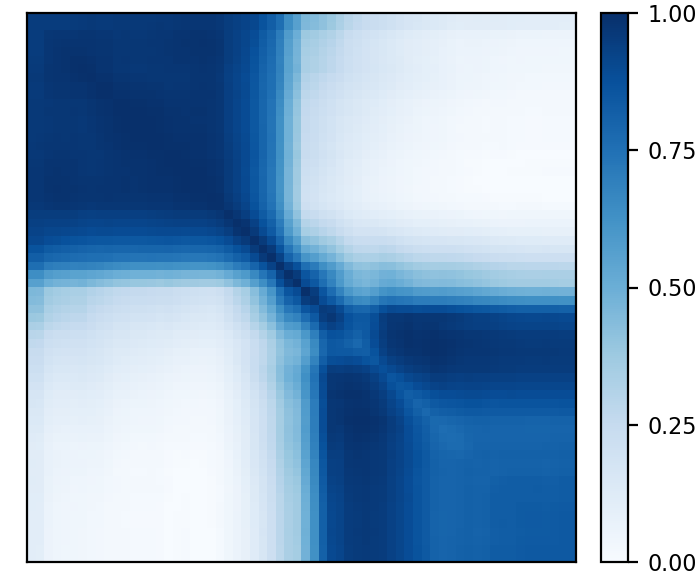}
\end{subfigure}
\begin{subfigure}[t]{0.24\textwidth}
  \centering\includegraphics[width=\linewidth]{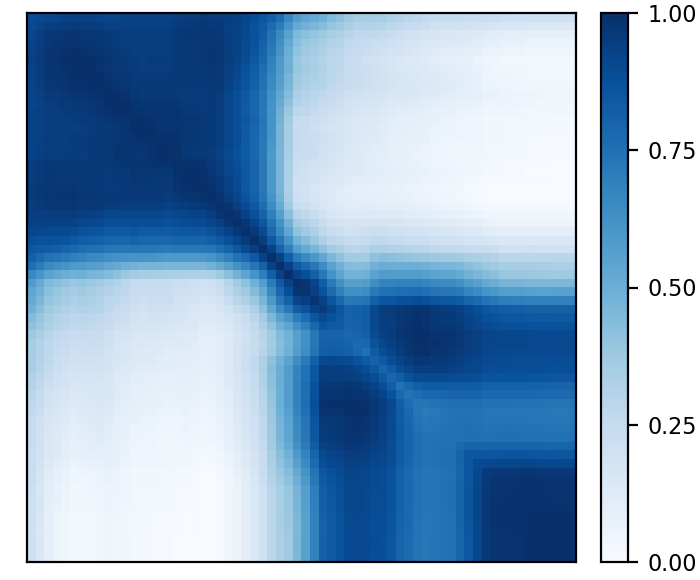}
\end{subfigure}

\vspace{0.3em}

% Row 3: ogbmolhiv
\begin{subfigure}[t]{0.24\textwidth}
  \centering\includegraphics[width=\linewidth]{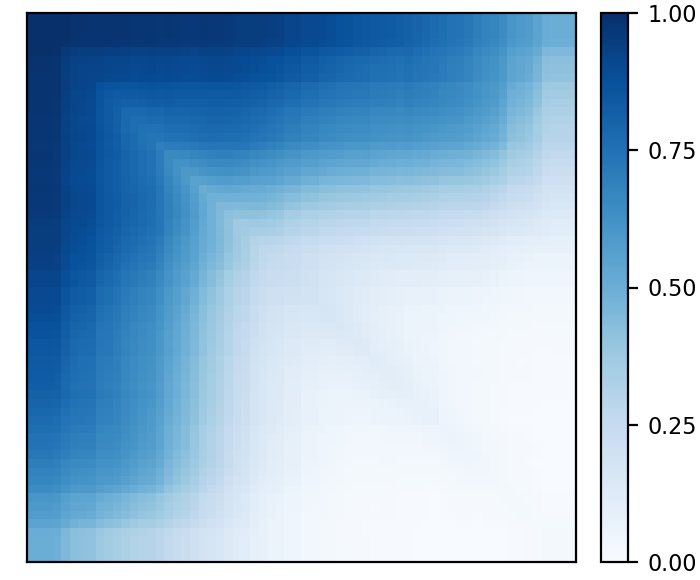}
\end{subfigure}
\begin{subfigure}[t]{0.24\textwidth}
  \centering\includegraphics[width=\linewidth]{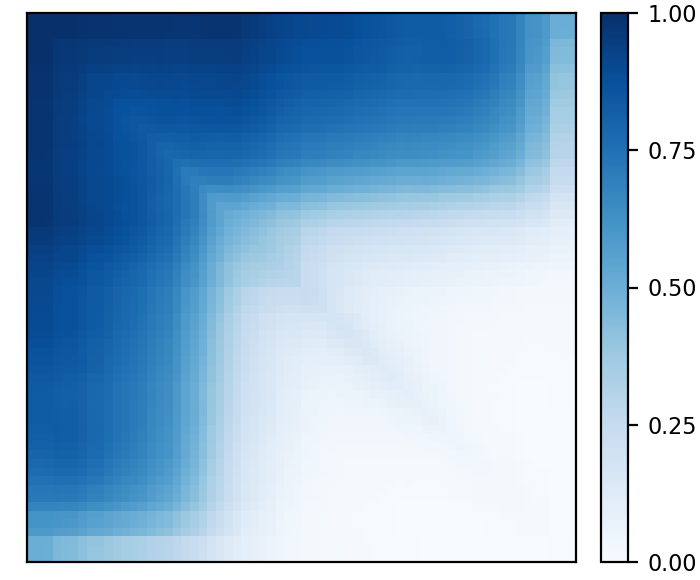}
\end{subfigure}
\begin{subfigure}[t]{0.24\textwidth}
  \centering\includegraphics[width=\linewidth]{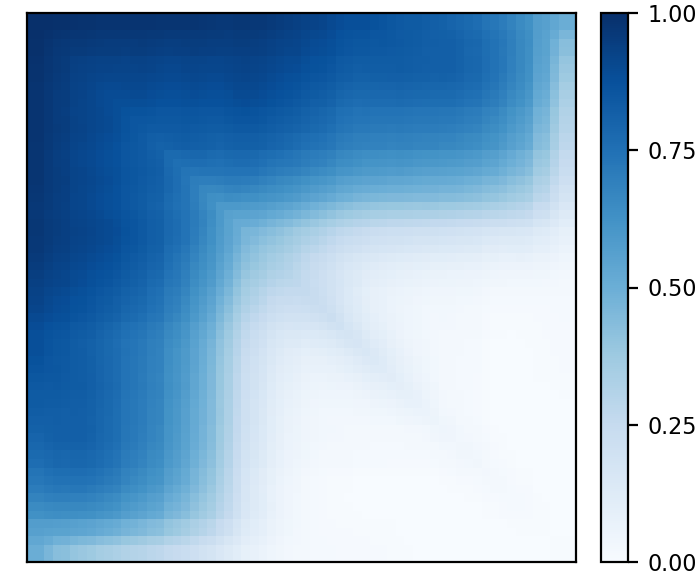}
\end{subfigure}
\begin{subfigure}[t]{0.24\textwidth}
  \centering\includegraphics[width=\linewidth]{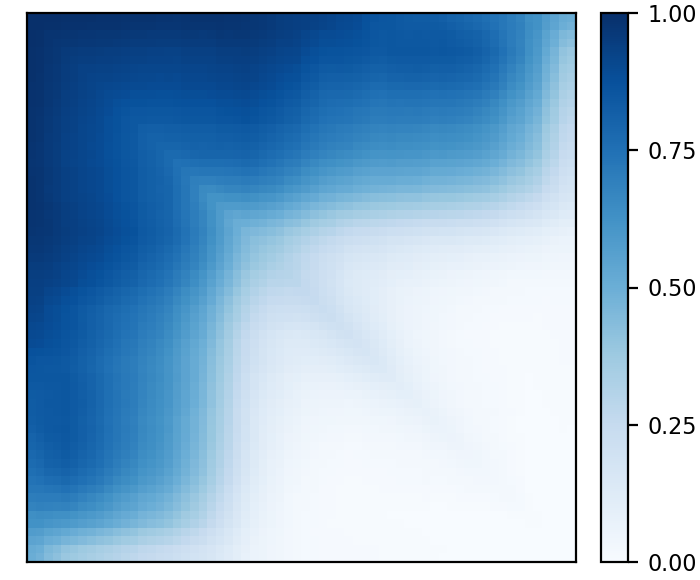}
\end{subfigure}

\vspace{0.3em}

% Row 4: proteins
\begin{subfigure}[t]{0.24\textwidth}
  \centering\includegraphics[width=\linewidth]{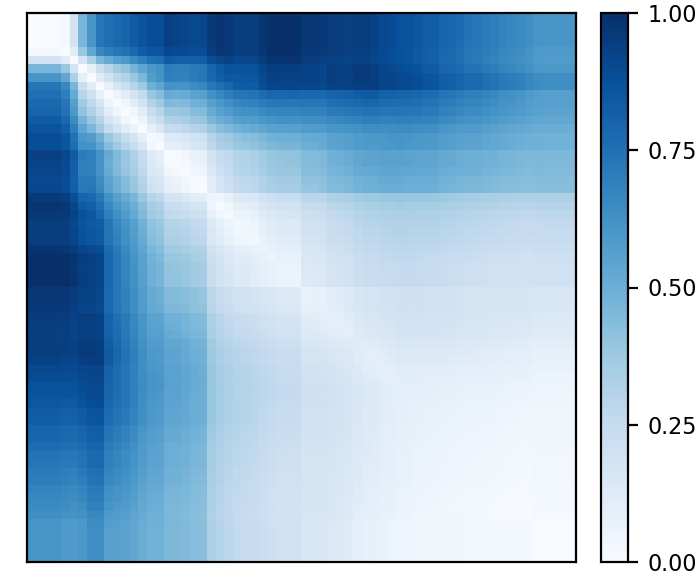}
\end{subfigure}
\begin{subfigure}[t]{0.24\textwidth}
  \centering\includegraphics[width=\linewidth]{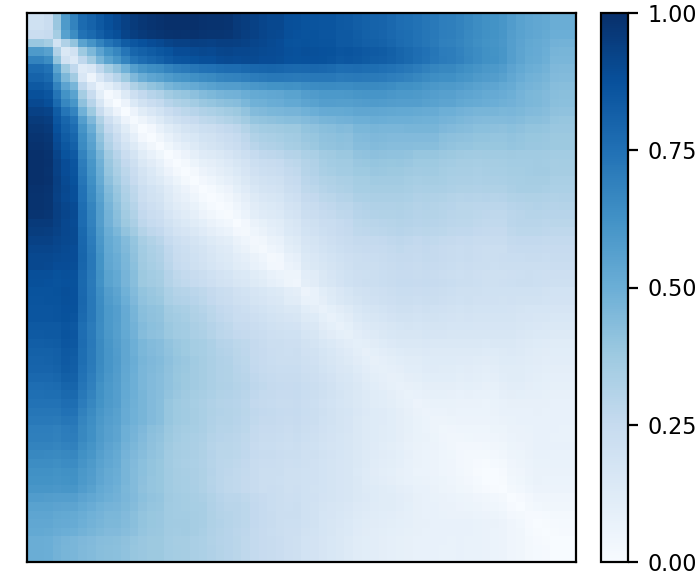}
\end{subfigure}
\begin{subfigure}[t]{0.24\textwidth}
  \centering\includegraphics[width=\linewidth]{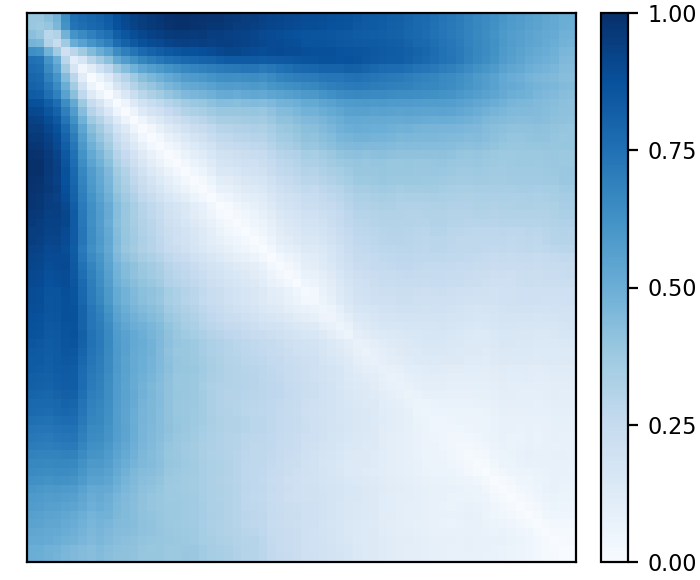}
\end{subfigure}
\begin{subfigure}[t]{0.24\textwidth}
  \centering\includegraphics[width=\linewidth]{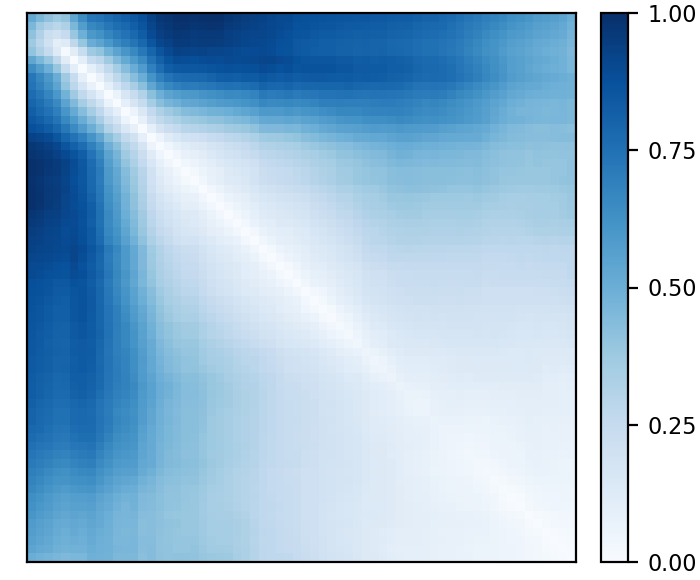}
\end{subfigure}

\vspace{0.3em}

% Row 5: redditmulti5k
\begin{subfigure}[t]{0.24\textwidth}
  \centering\includegraphics[width=\linewidth]{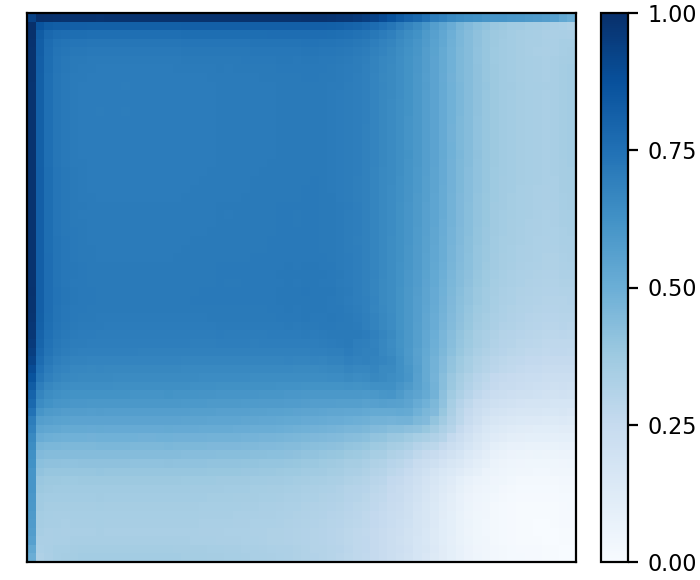}
\end{subfigure}
\begin{subfigure}[t]{0.24\textwidth}
  \centering\includegraphics[width=\linewidth]{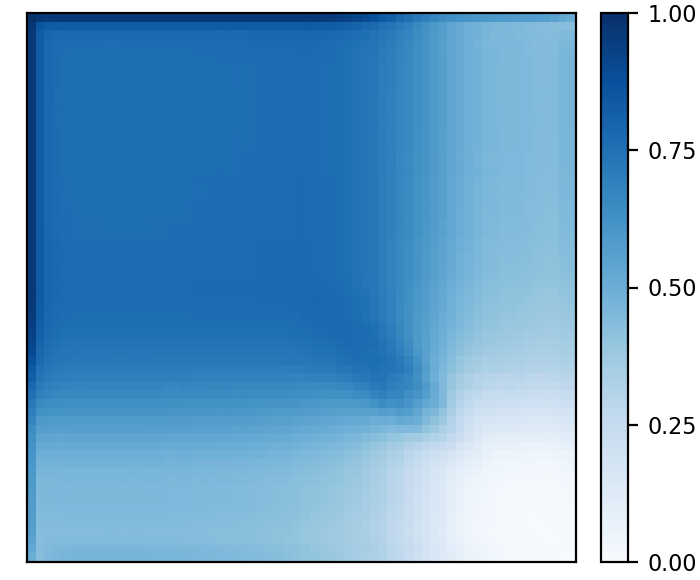}
\end{subfigure}
\begin{subfigure}[t]{0.24\textwidth}
  \centering\includegraphics[width=\linewidth]{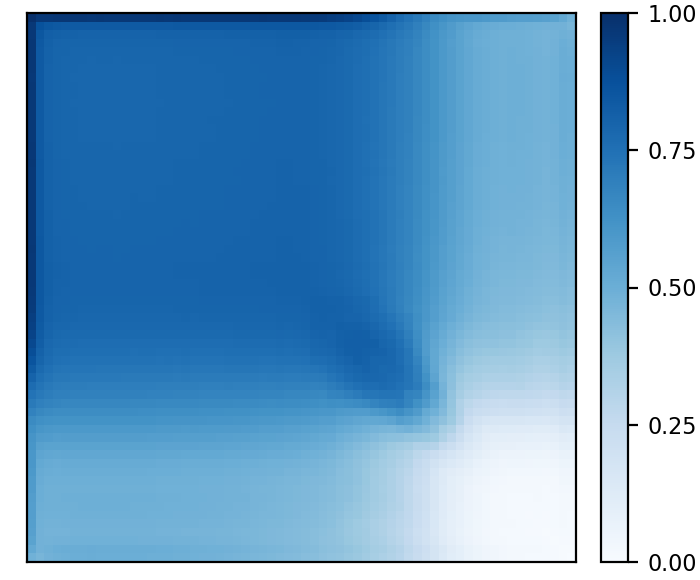}
\end{subfigure}
\begin{subfigure}[t]{0.24\textwidth}
  \centering\includegraphics[width=\linewidth]{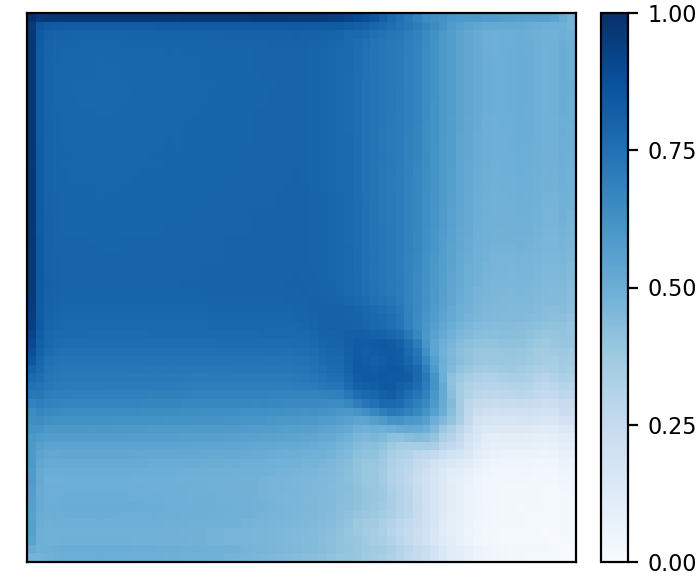}
\end{subfigure}

\caption{\textbf{Estimated attention graphons across datasets and graph sizes (Graphormer-GD, single-head).} Each row corresponds to a dataset (NCI1, NCI109, OGBmolhiv, PROTEINS, REDDIT-MULTI-5K), and each column shows the block-step estimate of the dataset-level attention kernel at increasing target sizes $N$. Kernel estimates stabilize within each dataset as $N$ grows.}
\label{fig:graphon_est_graphormer_h1_2}
\end{figure*}

\begin{figure*}[ht!]
\centering
% Row 1: collab
\begin{subfigure}[t]{0.24\textwidth}
  \centering\includegraphics[width=\linewidth]{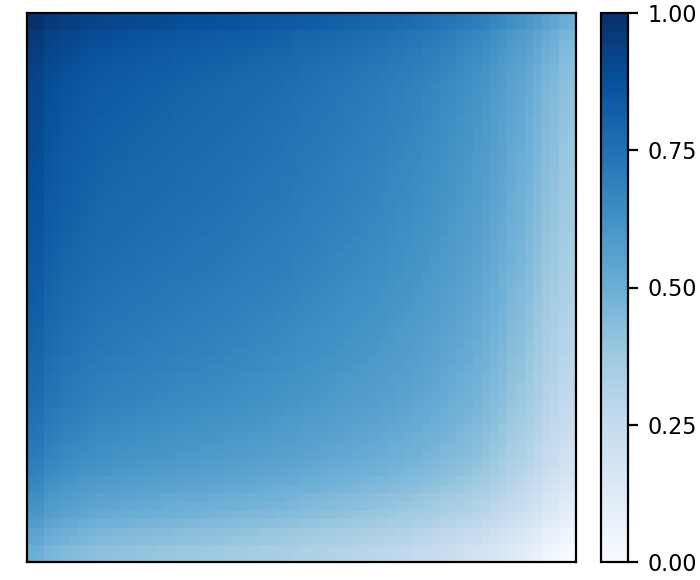}
\end{subfigure}
\begin{subfigure}[t]{0.24\textwidth}
  \centering\includegraphics[width=\linewidth]{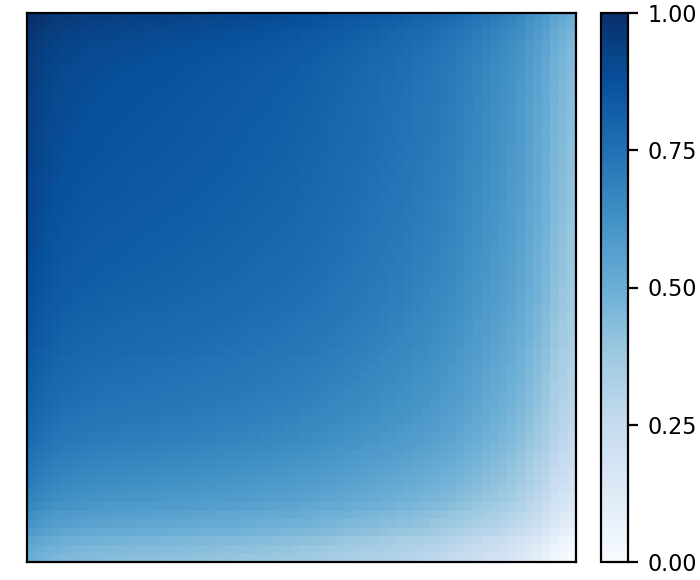}
\end{subfigure}
\begin{subfigure}[t]{0.24\textwidth}
  \centering\includegraphics[width=\linewidth]{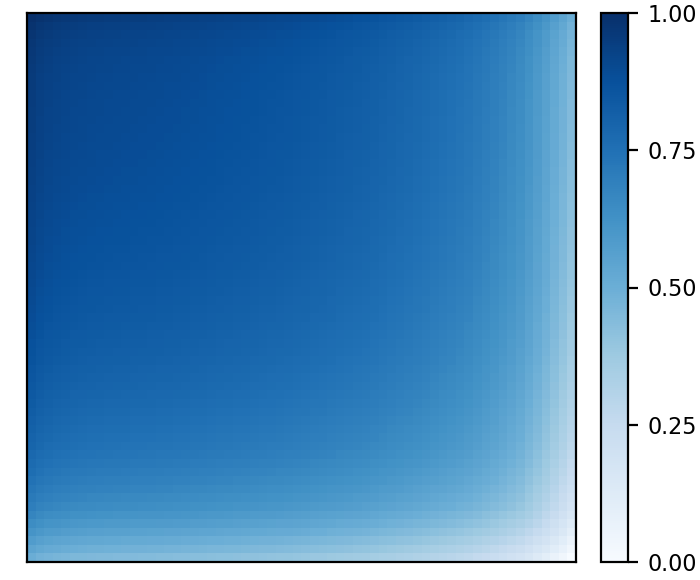}
\end{subfigure}
\begin{subfigure}[t]{0.24\textwidth}
  \centering\includegraphics[width=\linewidth]{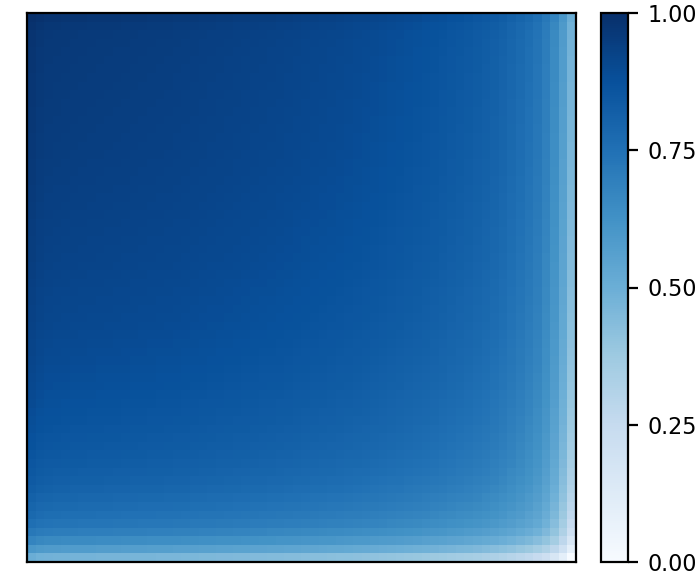}
\end{subfigure}

\vspace{0.3em}

% Row 2: imdbmulti
\begin{subfigure}[t]{0.24\textwidth}
  \centering\includegraphics[width=\linewidth]{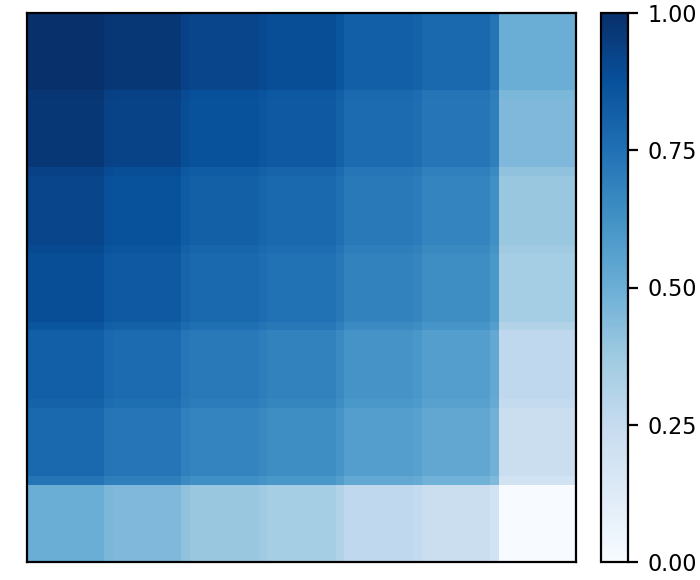}
\end{subfigure}
\begin{subfigure}[t]{0.24\textwidth}
  \centering\includegraphics[width=\linewidth]{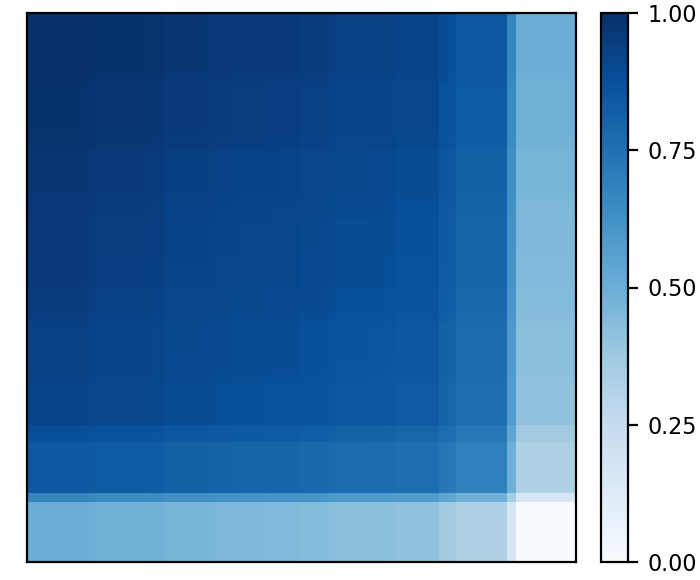}
\end{subfigure}
\begin{subfigure}[t]{0.24\textwidth}
  \centering\includegraphics[width=\linewidth]{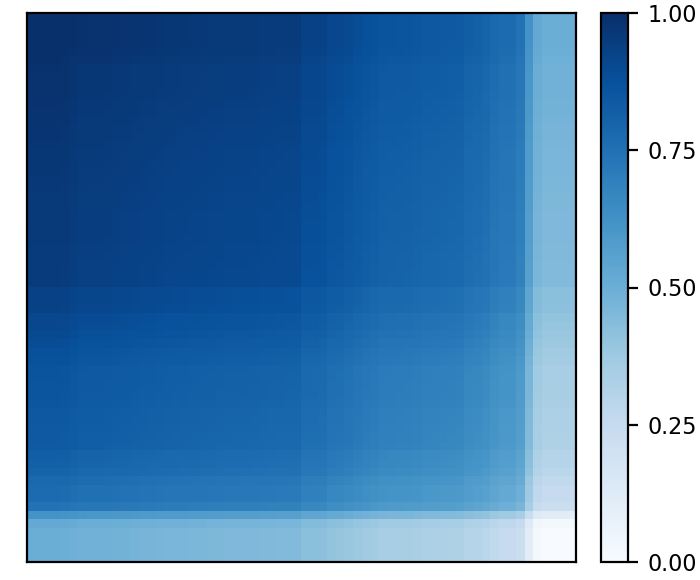}
\end{subfigure}
\begin{subfigure}[t]{0.24\textwidth}
  \centering\includegraphics[width=\linewidth]{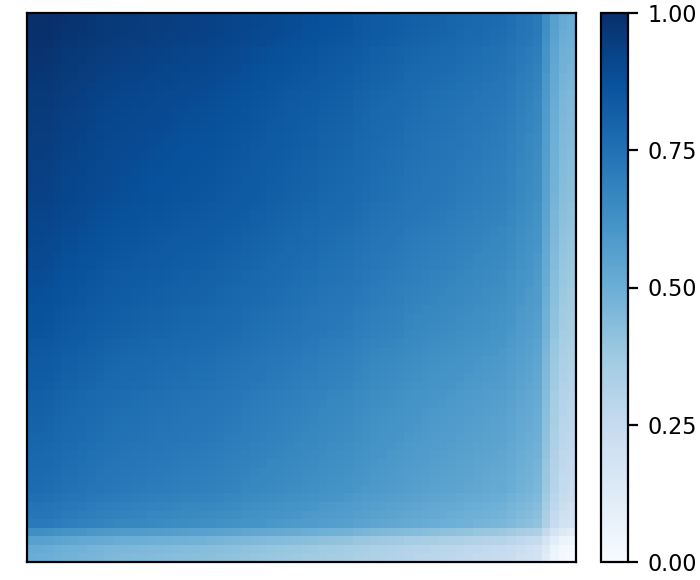}
\end{subfigure}

\vspace{0.3em}

% Row 3: lrgbpeptides
\begin{subfigure}[t]{0.24\textwidth}
  \centering\includegraphics[width=\linewidth]{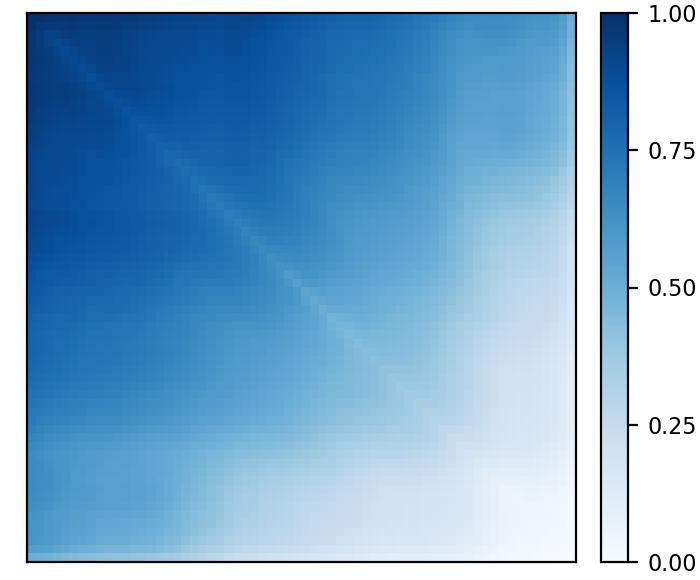}
\end{subfigure}
\begin{subfigure}[t]{0.24\textwidth}
  \centering\includegraphics[width=\linewidth]{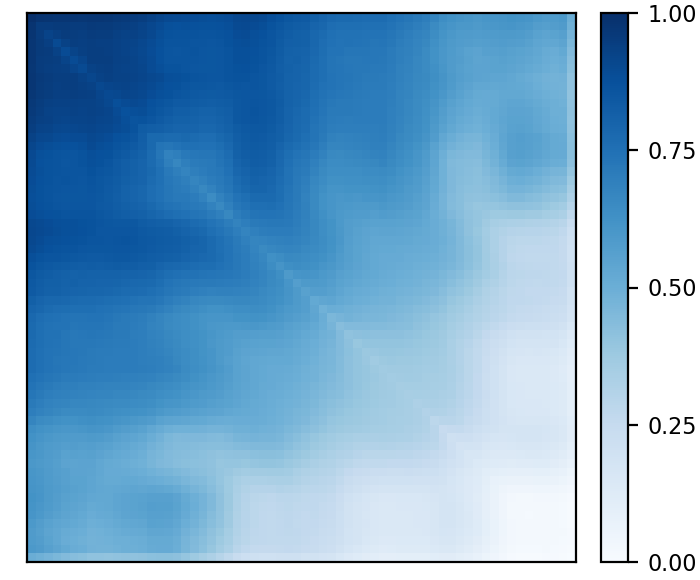}
\end{subfigure}
\begin{subfigure}[t]{0.24\textwidth}
  \centering\includegraphics[width=\linewidth]{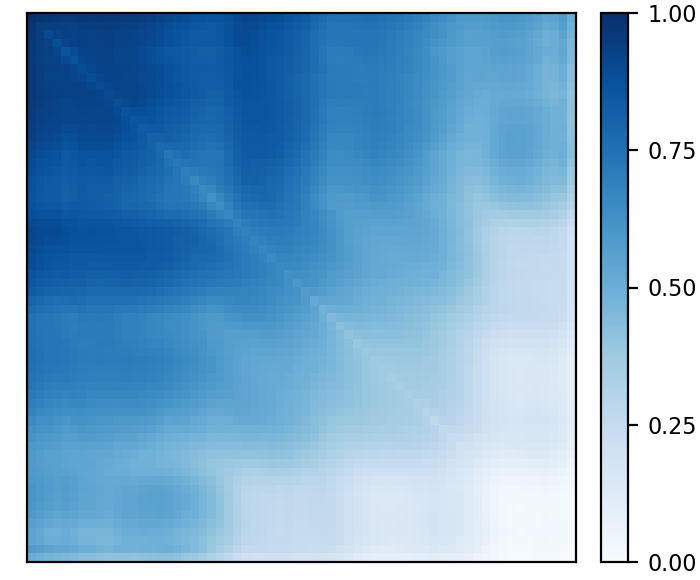}
\end{subfigure}
\begin{subfigure}[t]{0.24\textwidth}
  \centering\includegraphics[width=\linewidth]{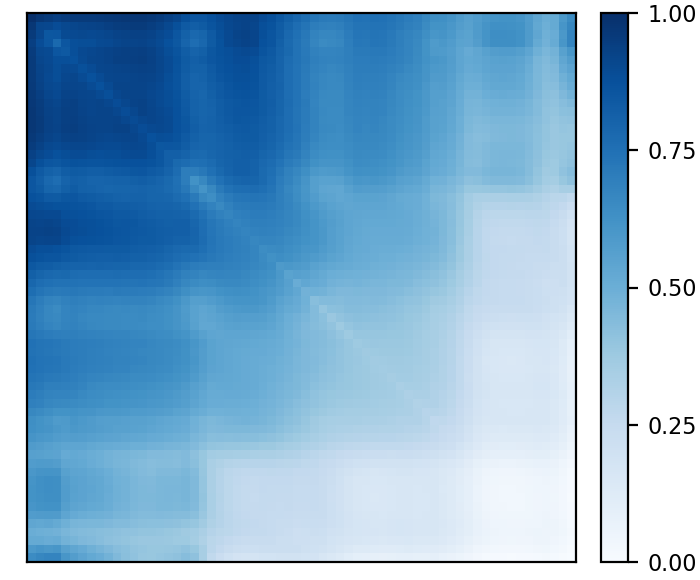}
\end{subfigure}

\vspace{0.3em}

% Row 4: modelnet
\begin{subfigure}[t]{0.24\textwidth}
  \centering\includegraphics[width=\linewidth]{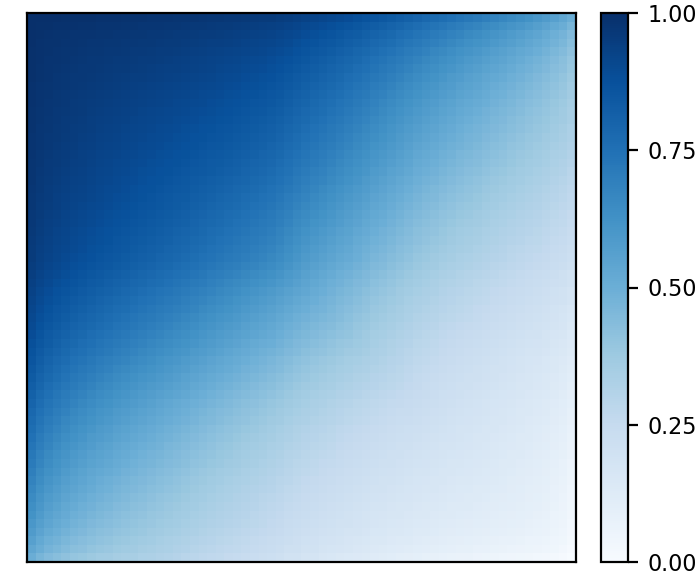}
\end{subfigure}
\begin{subfigure}[t]{0.24\textwidth}
  \centering\includegraphics[width=\linewidth]{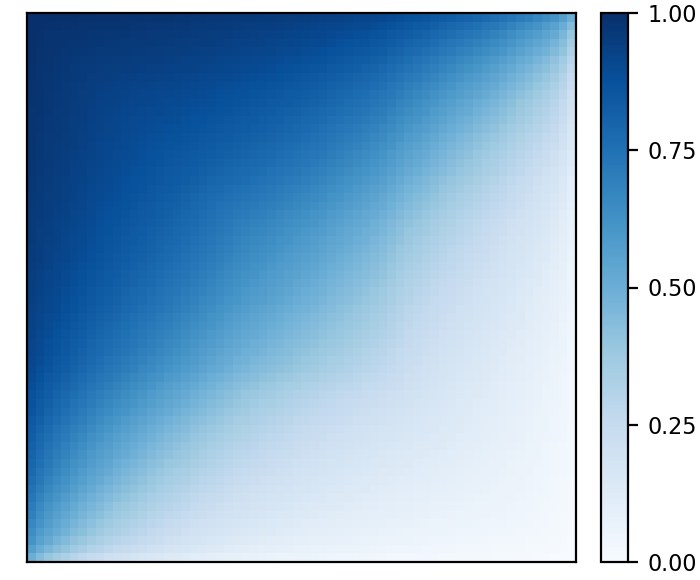}
\end{subfigure}
\begin{subfigure}[t]{0.24\textwidth}
  \centering\includegraphics[width=\linewidth]{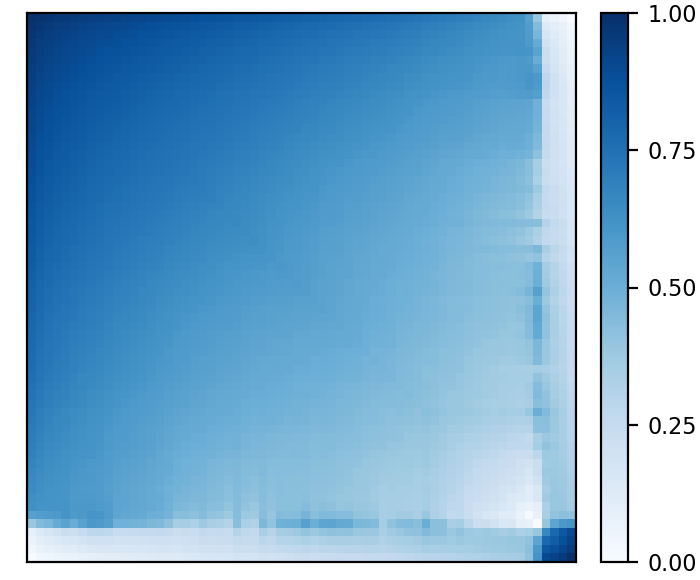}
\end{subfigure}
\begin{subfigure}[t]{0.24\textwidth}
  \centering\includegraphics[width=\linewidth]{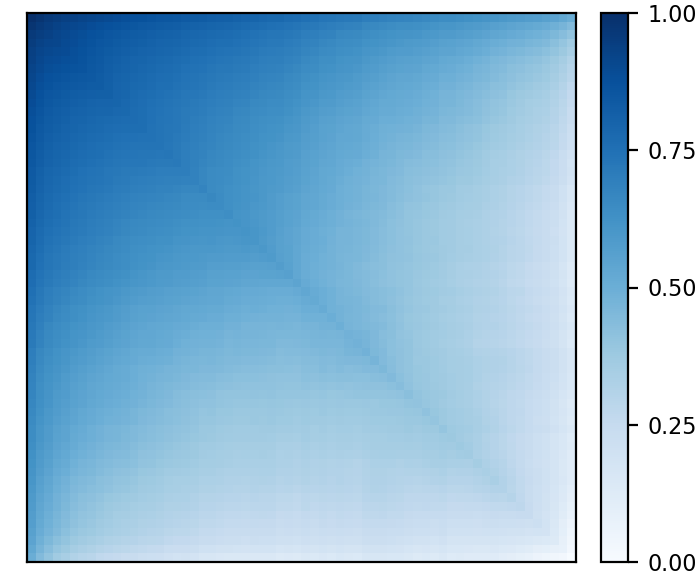}
\end{subfigure}

\vspace{0.3em}

% Row 5: mutag
\begin{subfigure}[t]{0.24\textwidth}
  \centering\includegraphics[width=\linewidth]{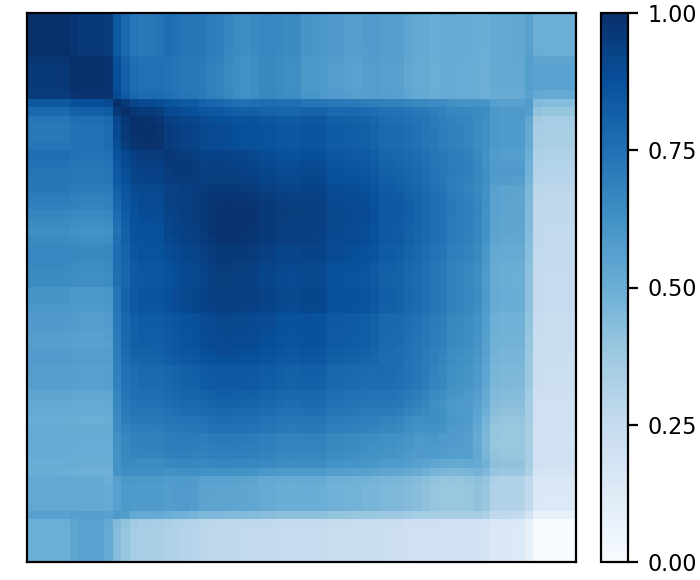}
\end{subfigure}
\begin{subfigure}[t]{0.24\textwidth}
  \centering\includegraphics[width=\linewidth]{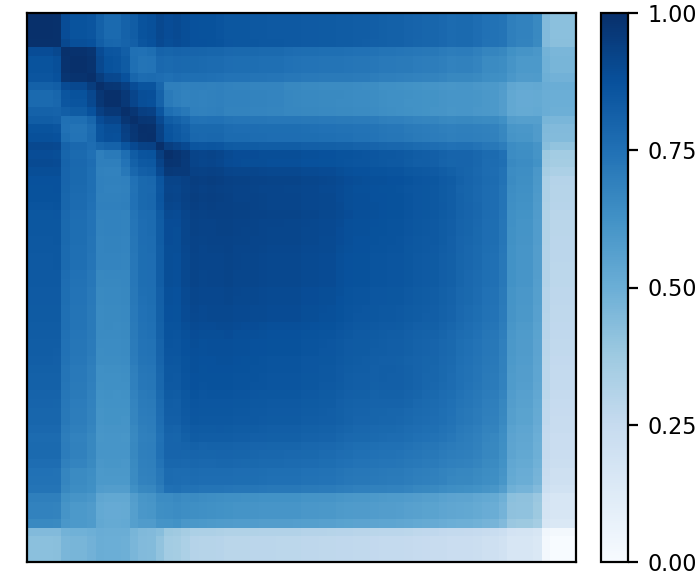}
\end{subfigure}
\begin{subfigure}[t]{0.24\textwidth}
  \centering\includegraphics[width=\linewidth]{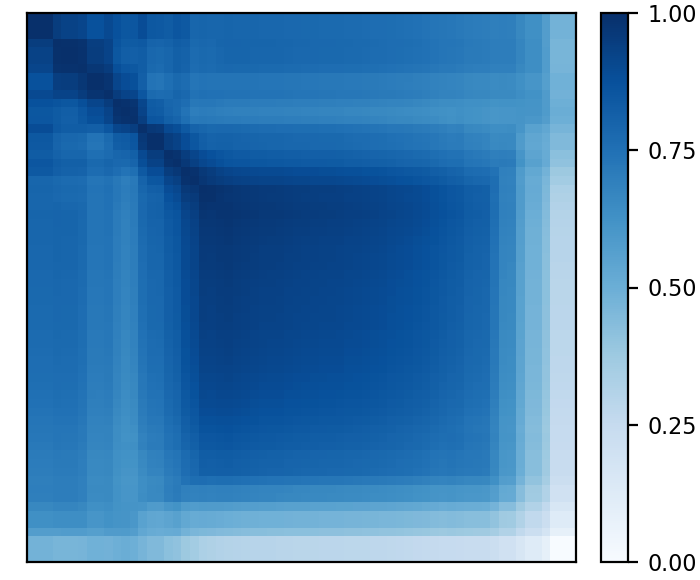}
\end{subfigure}
\begin{subfigure}[t]{0.24\textwidth}
  \centering\includegraphics[width=\linewidth]{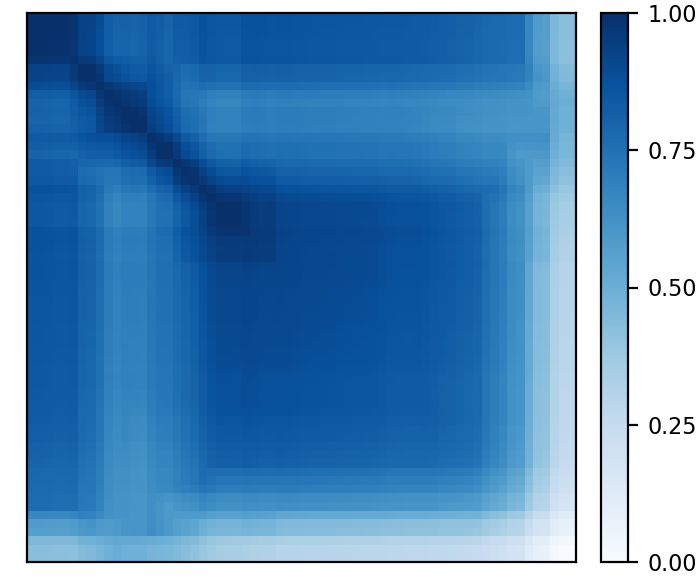}
\end{subfigure}

\caption{\textbf{Estimated attention graphons across datasets and graph sizes (GRIT, single-head).} Each row corresponds to a dataset (COLLAB, IMDB-MULTI, LRGBPeptides, ModelNet10, MUTAG), and each column shows the block-step estimate of the dataset-level attention kernel at increasing target sizes $N$. Kernel estimates stabilize within each dataset as $N$ grows.}
\label{fig:graphon_est_grit_h1_1}
\end{figure*}

\begin{figure*}[ht!]
\centering
% Row 1: nci1
\begin{subfigure}[t]{0.24\textwidth}
  \centering\includegraphics[width=\linewidth]{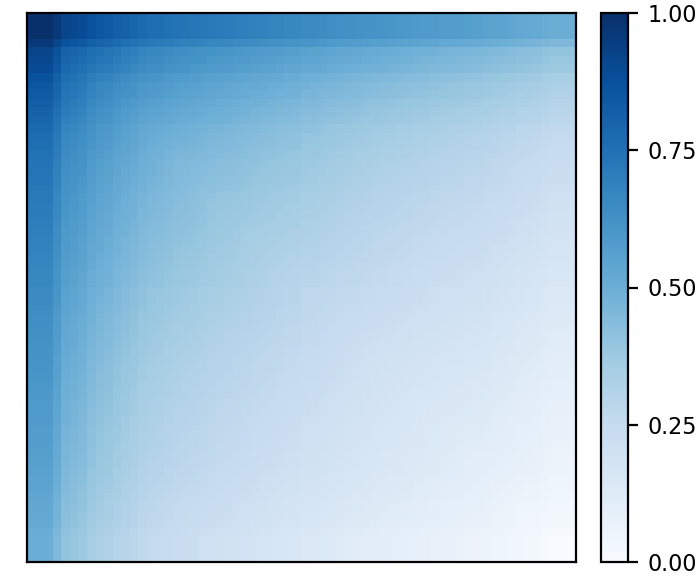}
\end{subfigure}
\begin{subfigure}[t]{0.24\textwidth}
  \centering\includegraphics[width=\linewidth]{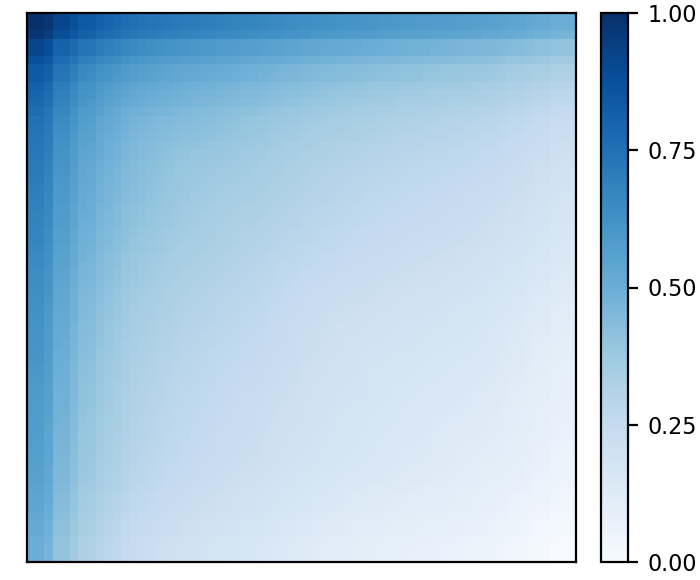}
\end{subfigure}
\begin{subfigure}[t]{0.24\textwidth}
  \centering\includegraphics[width=\linewidth]{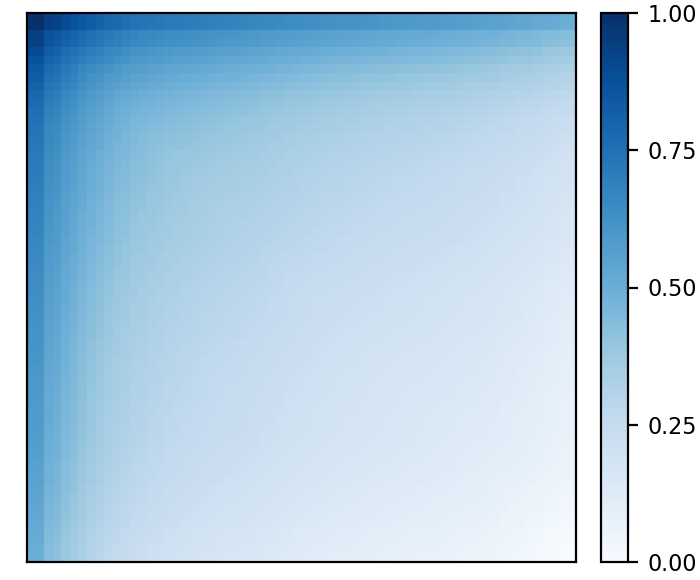}
\end{subfigure}
\begin{subfigure}[t]{0.24\textwidth}
  \centering\includegraphics[width=\linewidth]{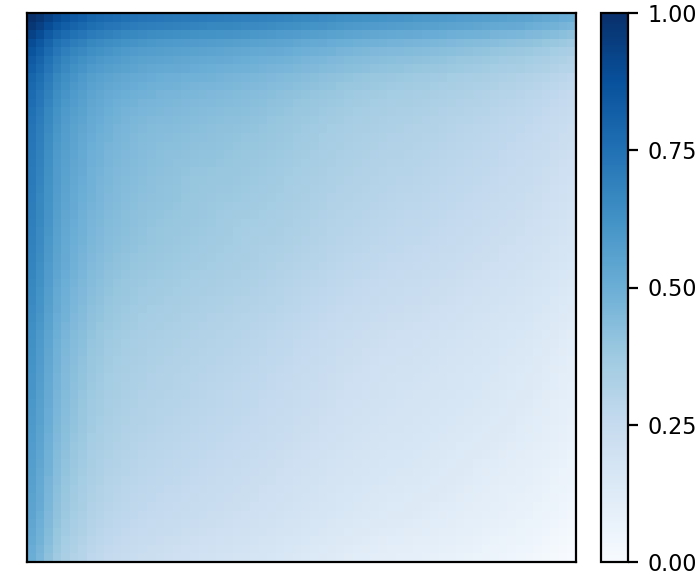}
\end{subfigure}

\vspace{0.3em}

% Row 2: nci109
\begin{subfigure}[t]{0.24\textwidth}
  \centering\includegraphics[width=\linewidth]{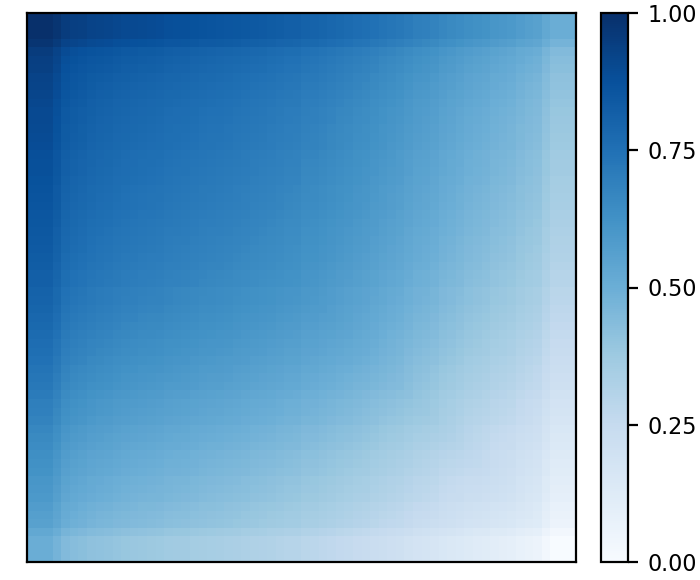}
\end{subfigure}
\begin{subfigure}[t]{0.24\textwidth}
  \centering\includegraphics[width=\linewidth]{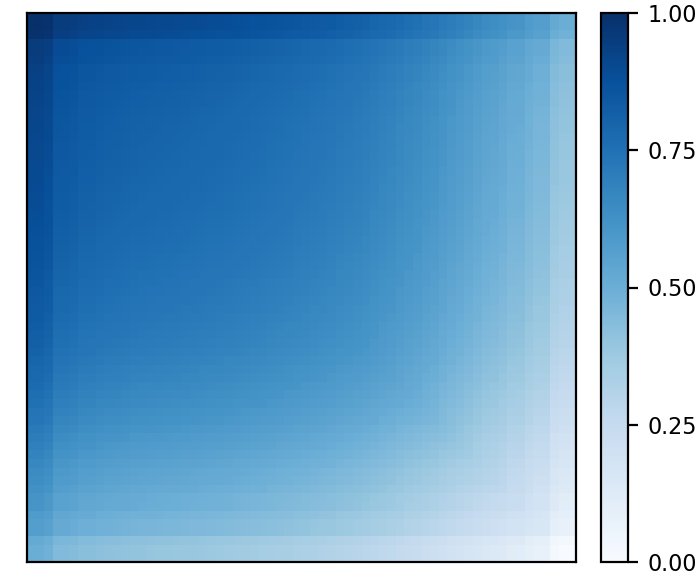}
\end{subfigure}
\begin{subfigure}[t]{0.24\textwidth}
  \centering\includegraphics[width=\linewidth]{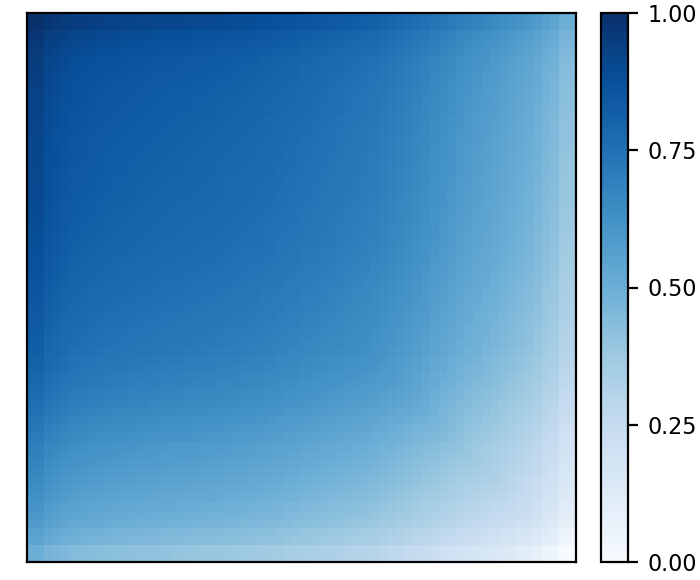}
\end{subfigure}
\begin{subfigure}[t]{0.24\textwidth}
  \centering\includegraphics[width=\linewidth]{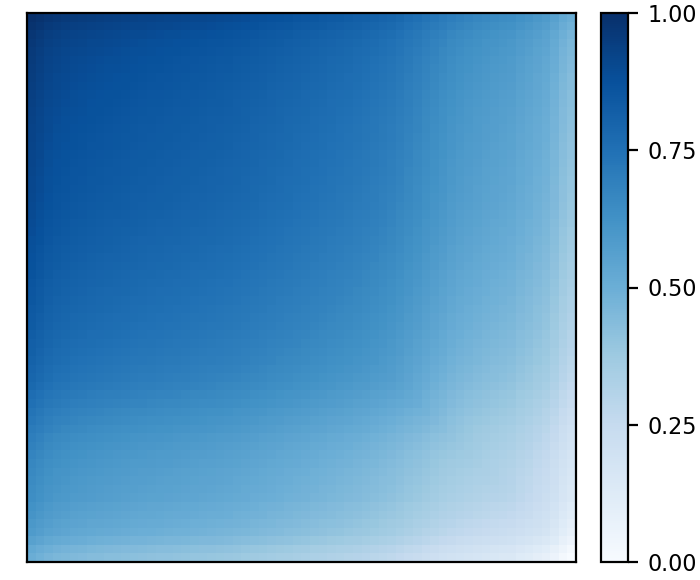}
\end{subfigure}

\vspace{0.3em}

% Row 3: ogbmolhiv
\begin{subfigure}[t]{0.24\textwidth}
  \centering\includegraphics[width=\linewidth]{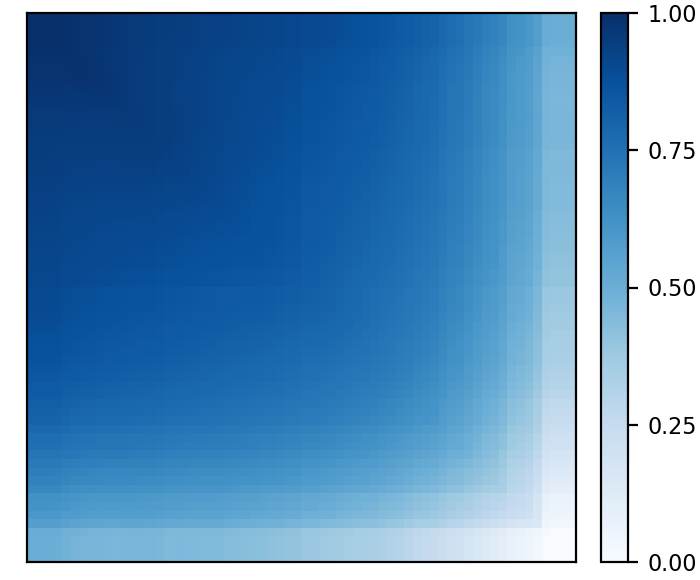}
\end{subfigure}
\begin{subfigure}[t]{0.24\textwidth}
  \centering\includegraphics[width=\linewidth]{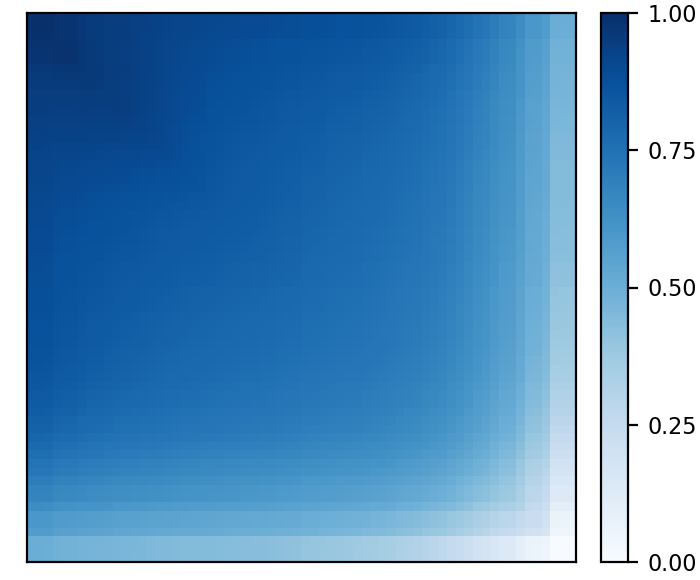}
\end{subfigure}
\begin{subfigure}[t]{0.24\textwidth}
  \centering\includegraphics[width=\linewidth]{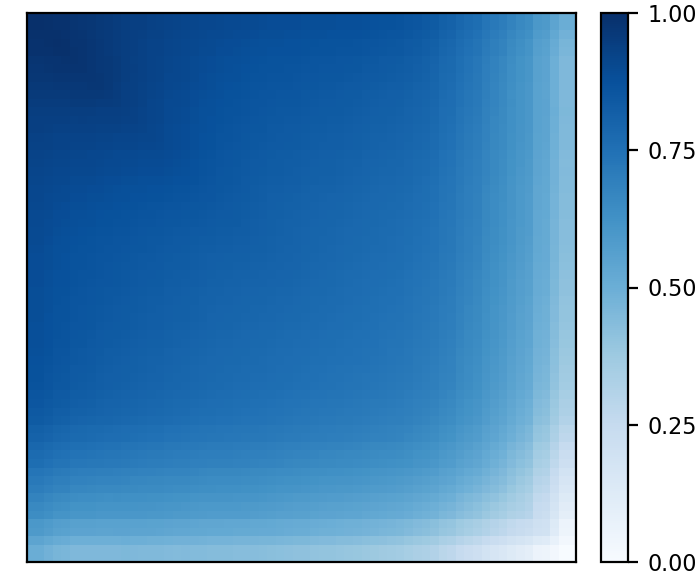}
\end{subfigure}
\begin{subfigure}[t]{0.24\textwidth}
  \centering\includegraphics[width=\linewidth]{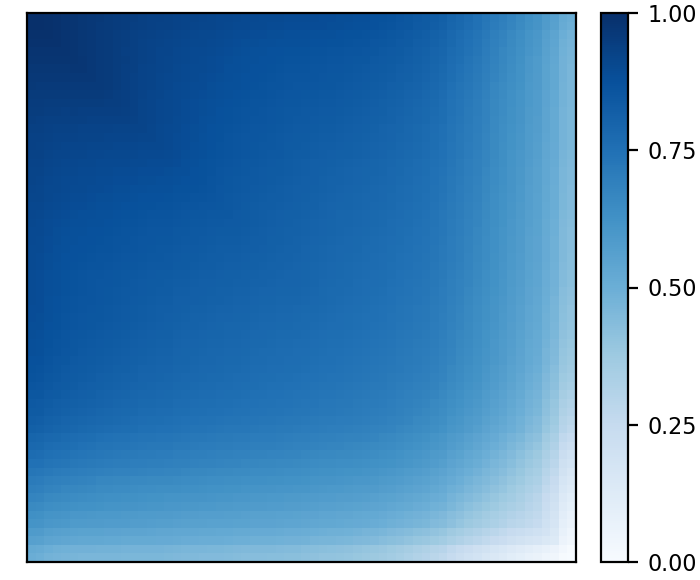}
\end{subfigure}

\vspace{0.3em}

% Row 4: proteins
\begin{subfigure}[t]{0.24\textwidth}
  \centering\includegraphics[width=\linewidth]{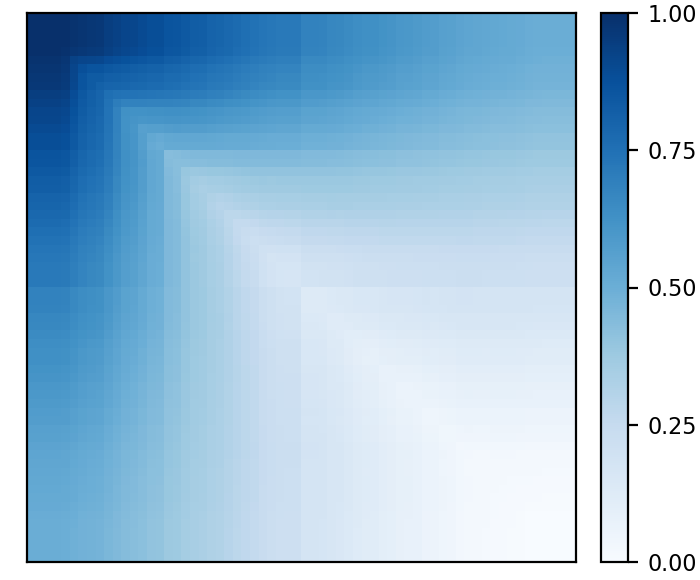}
\end{subfigure}
\begin{subfigure}[t]{0.24\textwidth}
  \centering\includegraphics[width=\linewidth]{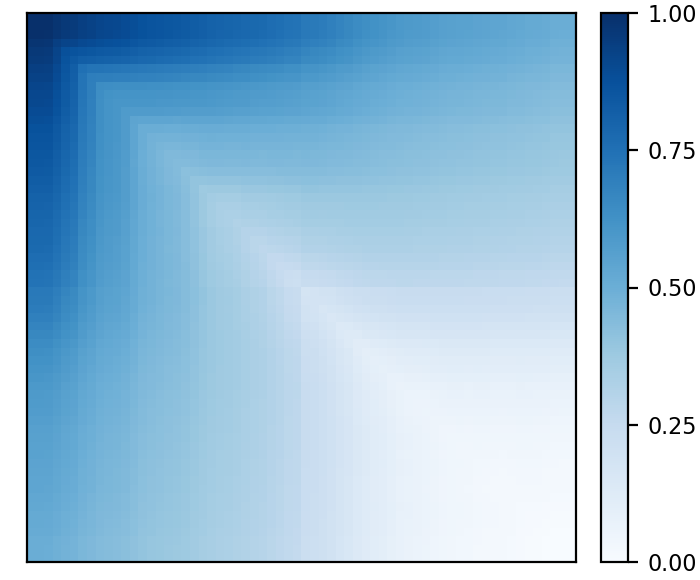}
\end{subfigure}
\begin{subfigure}[t]{0.24\textwidth}
  \centering\includegraphics[width=\linewidth]{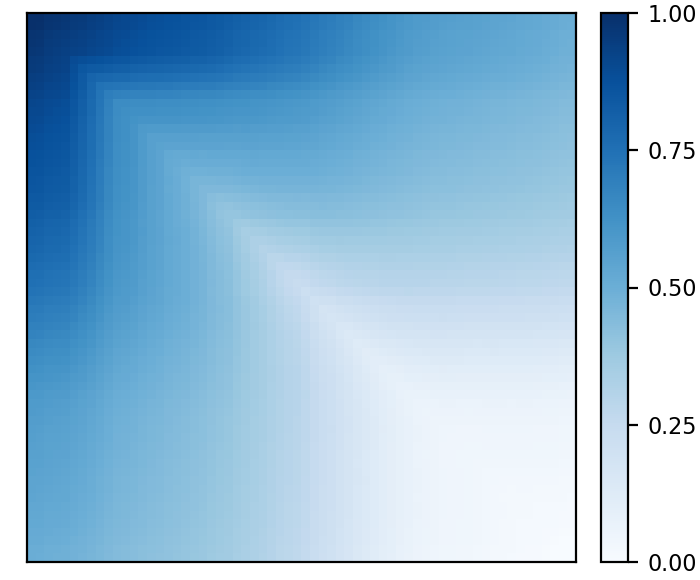}
\end{subfigure}
\begin{subfigure}[t]{0.24\textwidth}
  \centering\includegraphics[width=\linewidth]{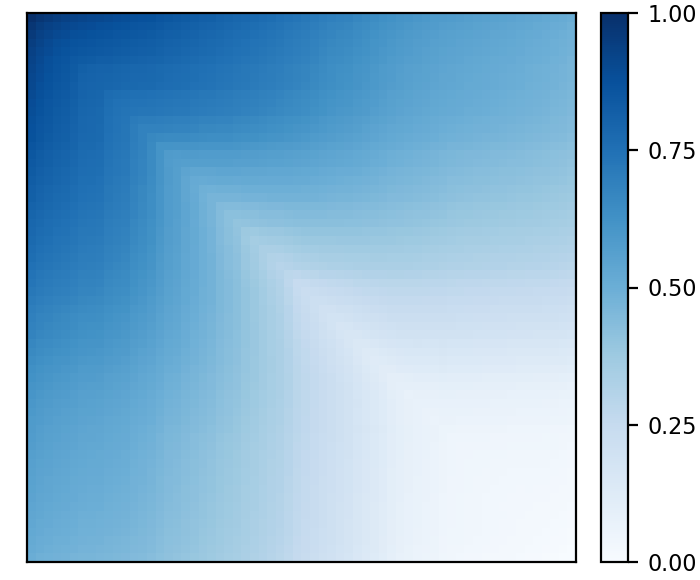}
\end{subfigure}

\vspace{0.3em}

% Row 5: redditmulti5k
\begin{subfigure}[t]{0.24\textwidth}
  \centering\includegraphics[width=\linewidth]{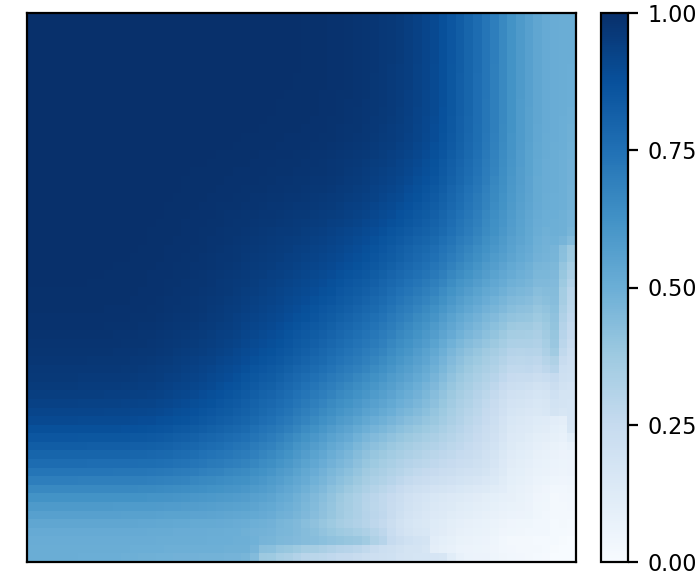}
\end{subfigure}
\begin{subfigure}[t]{0.24\textwidth}
  \centering\includegraphics[width=\linewidth]{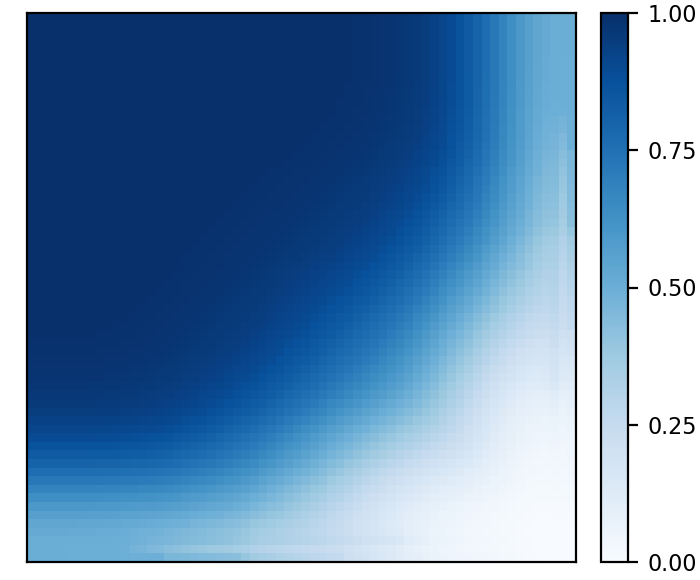}
\end{subfigure}
\begin{subfigure}[t]{0.24\textwidth}
  \centering\includegraphics[width=\linewidth]{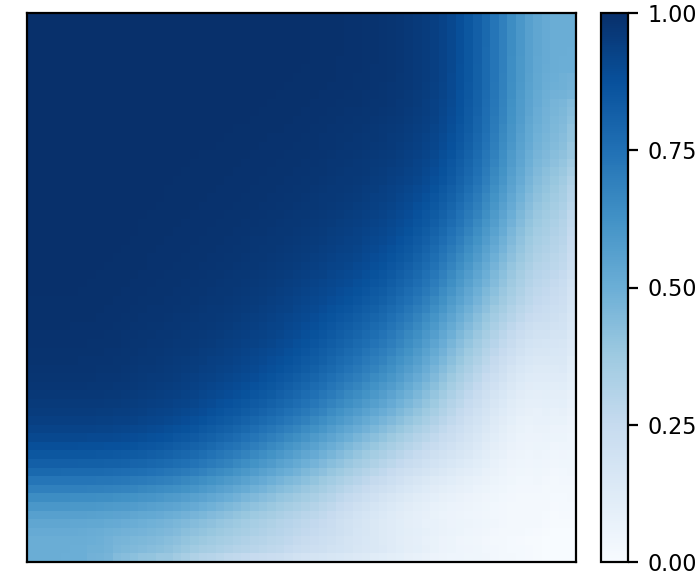}
\end{subfigure}
\begin{subfigure}[t]{0.24\textwidth}
  \centering\includegraphics[width=\linewidth]{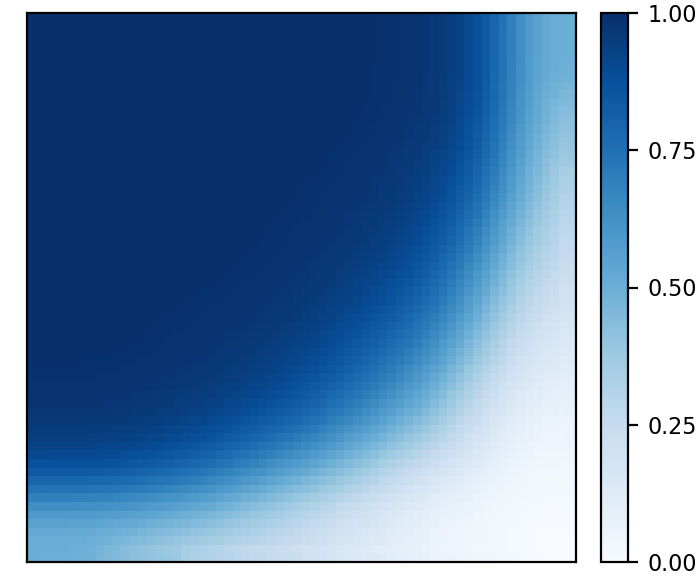}
\end{subfigure}

\caption{\textbf{Estimated attention graphons across datasets and graph sizes (GRIT, single-head).} Each row corresponds to a dataset (NCI1, NCI109, OGBmolhiv, PROTEINS, REDDIT-MULTI-5K), and each column shows the block-step estimate of the dataset-level attention kernel at increasing target sizes $N$. Kernel estimates stabilize within each dataset as $N$ grows.}
\label{fig:graphon_est_grit_h1_2}
\end{figure*}

\begin{figure*}[ht!]
\centering
% Row 1
\begin{subfigure}[t]{0.24\textwidth}
  \centering\includegraphics[width=\linewidth]{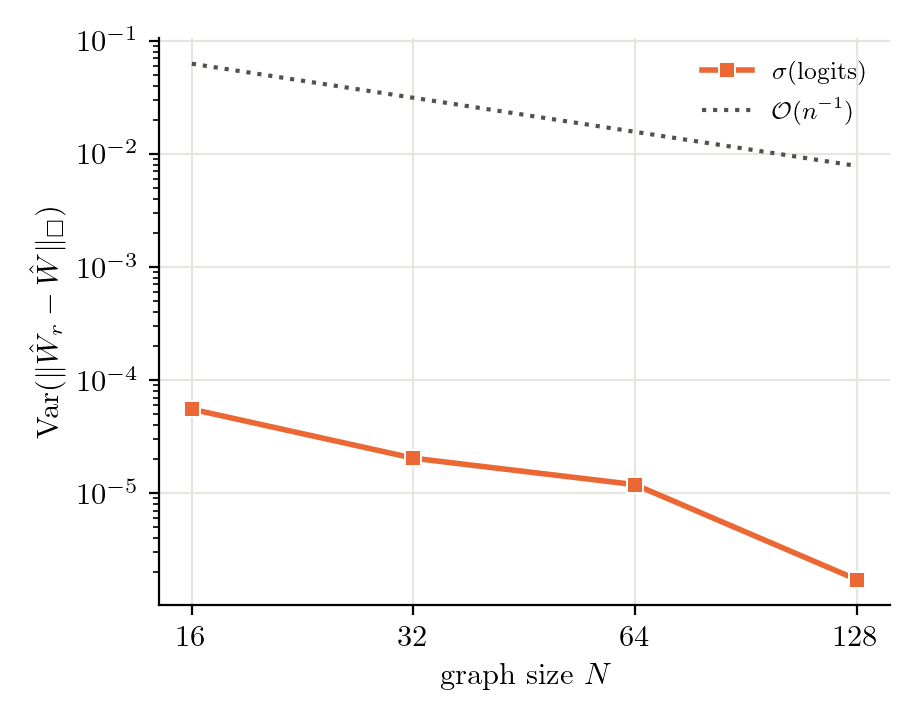}
  \caption{\centering{\textbf{BFS}}}
\end{subfigure}
\begin{subfigure}[t]{0.24\textwidth}
  \centering\includegraphics[width=\linewidth]{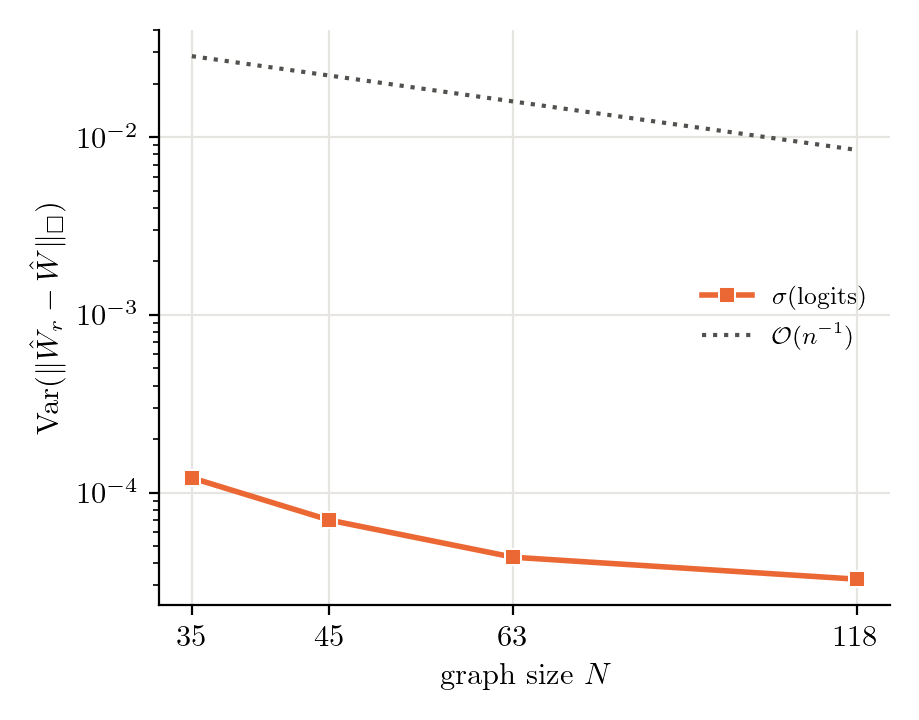}
  \caption{\centering{\textbf{COLLAB}}}
\end{subfigure}
\begin{subfigure}[t]{0.24\textwidth}
  \centering\includegraphics[width=\linewidth]{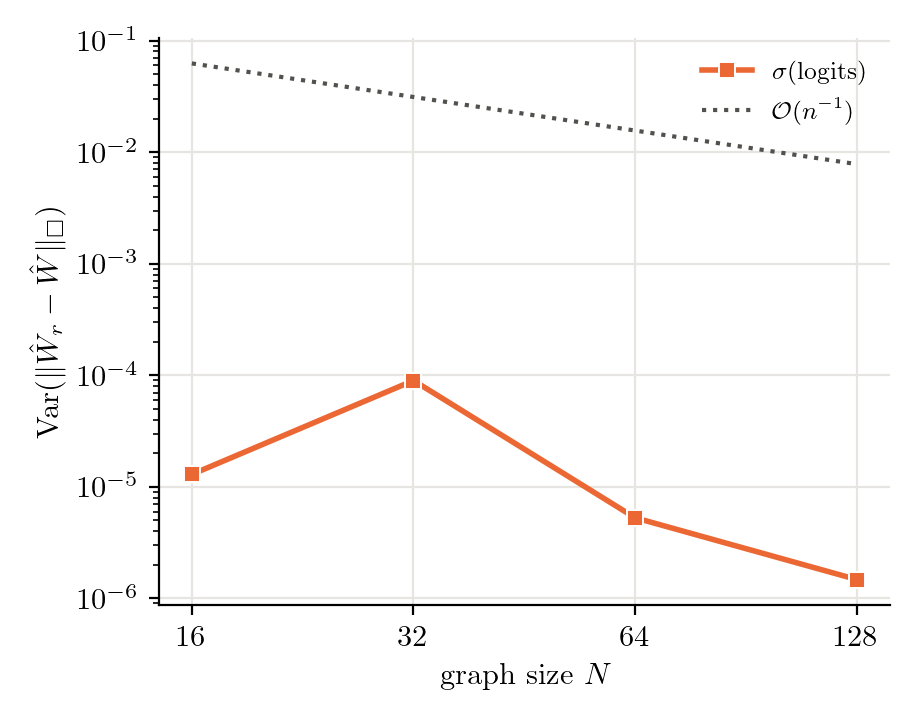}
  \caption{\centering{\textbf{Dijkstra}}}
\end{subfigure}
\begin{subfigure}[t]{0.24\textwidth}
  \centering\includegraphics[width=\linewidth]{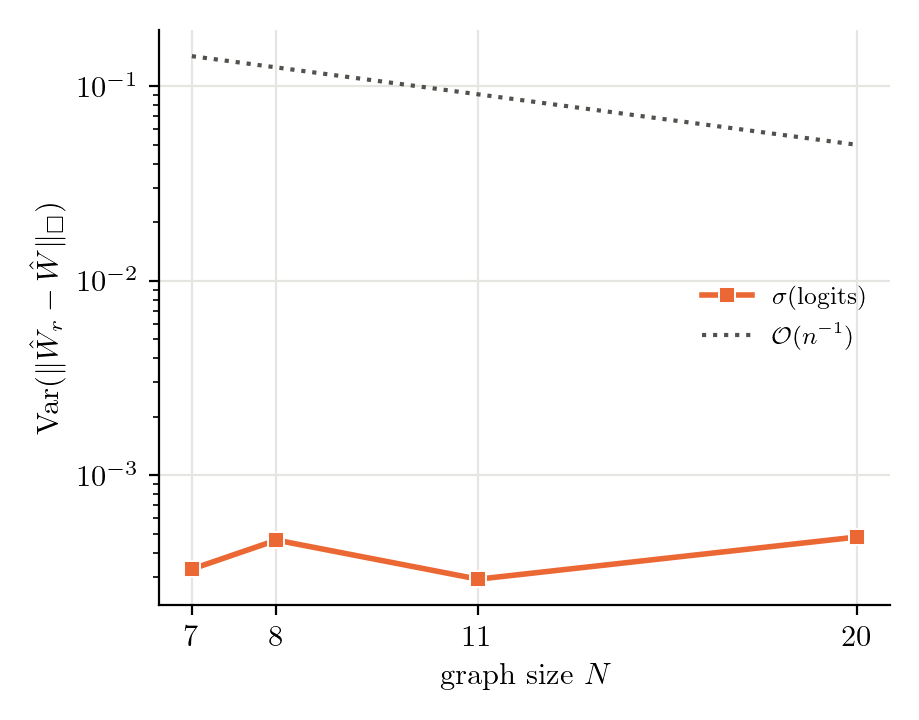}
  \caption{\centering{\textbf{IMDB-MULTI}}}
\end{subfigure}

\vspace{0.3em}

% Row 2
\begin{subfigure}[t]{0.24\textwidth}
  \centering\includegraphics[width=\linewidth]{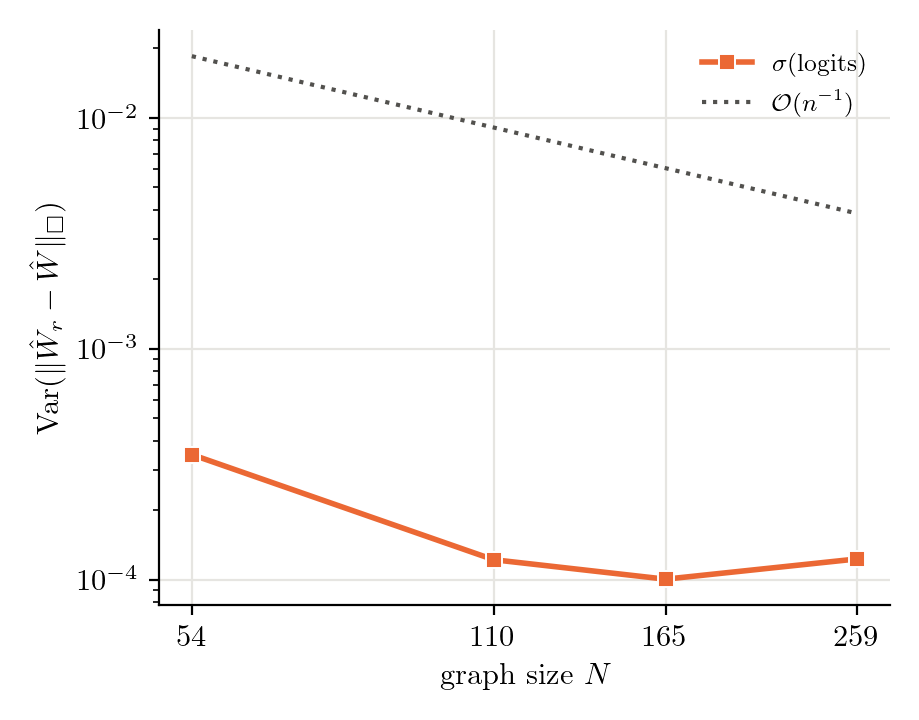}
  \caption{\centering{\textbf{LRGBPeptides}}}
\end{subfigure}
\begin{subfigure}[t]{0.24\textwidth}
  \centering\includegraphics[width=\linewidth]{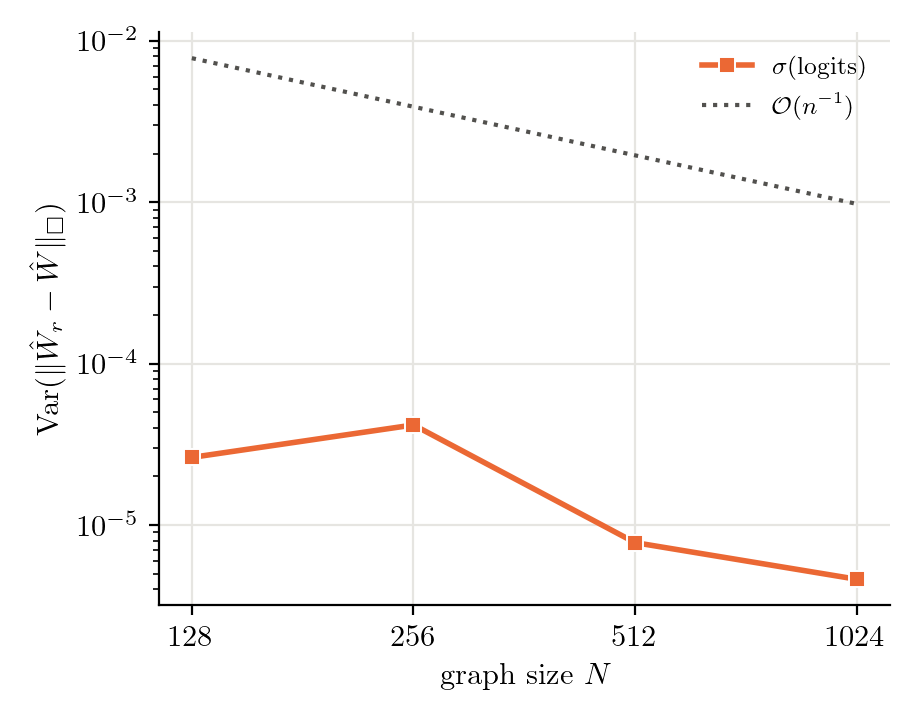}
  \caption{\centering{\textbf{ModelNet10}}}
\end{subfigure}
\begin{subfigure}[t]{0.24\textwidth}
  \centering\includegraphics[width=\linewidth]{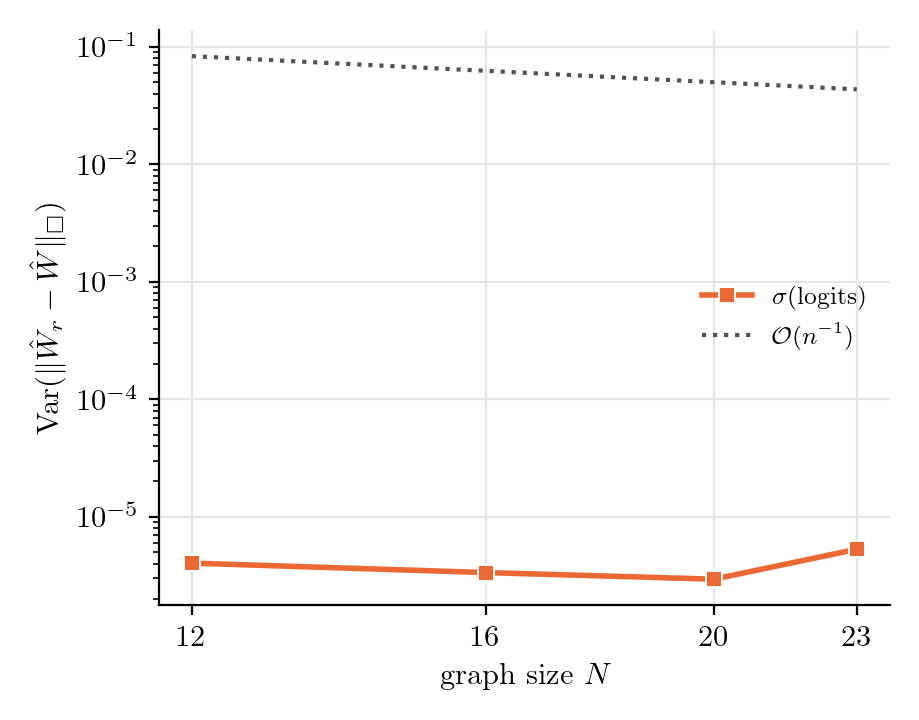}
  \caption{\centering{\textbf{MUTAG}}}
\end{subfigure}
\begin{subfigure}[t]{0.24\textwidth}
  \centering\includegraphics[width=\linewidth]{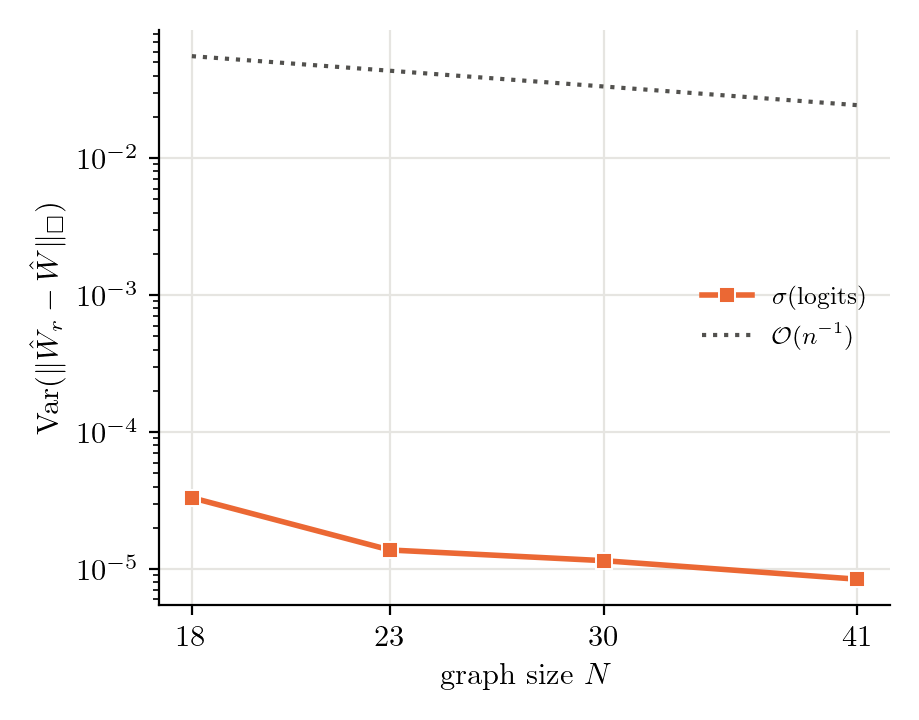}
  \caption{\centering{\textbf{NCI1}}}
\end{subfigure}

\vspace{0.3em}

% Row 3
\begin{subfigure}[t]{0.24\textwidth}
  \centering\includegraphics[width=\linewidth]{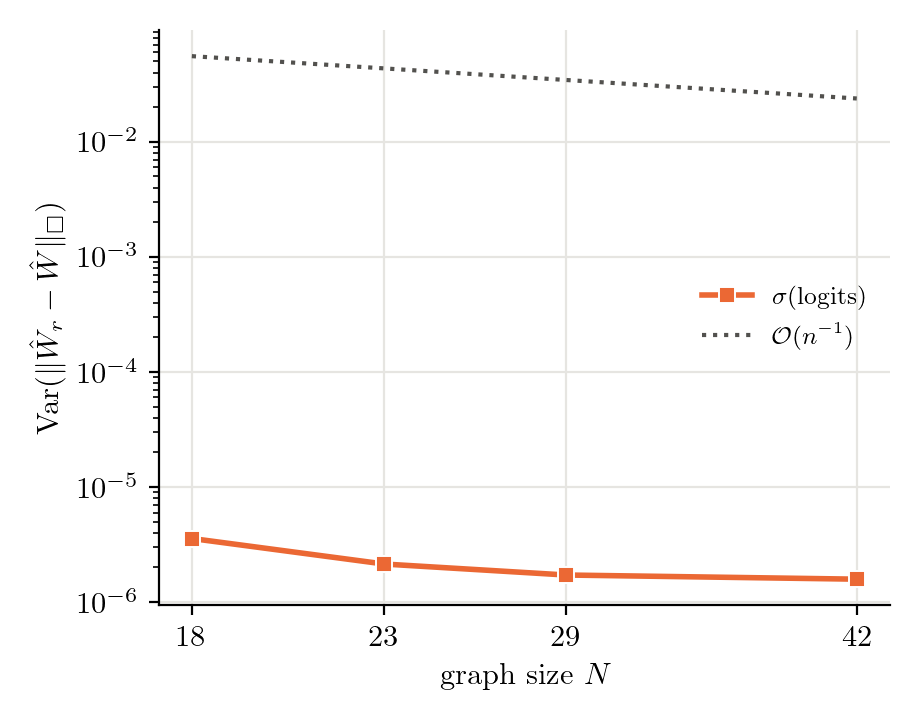}
  \caption{\centering{\textbf{NCI109}}}
\end{subfigure}
\begin{subfigure}[t]{0.24\textwidth}
  \centering\includegraphics[width=\linewidth]{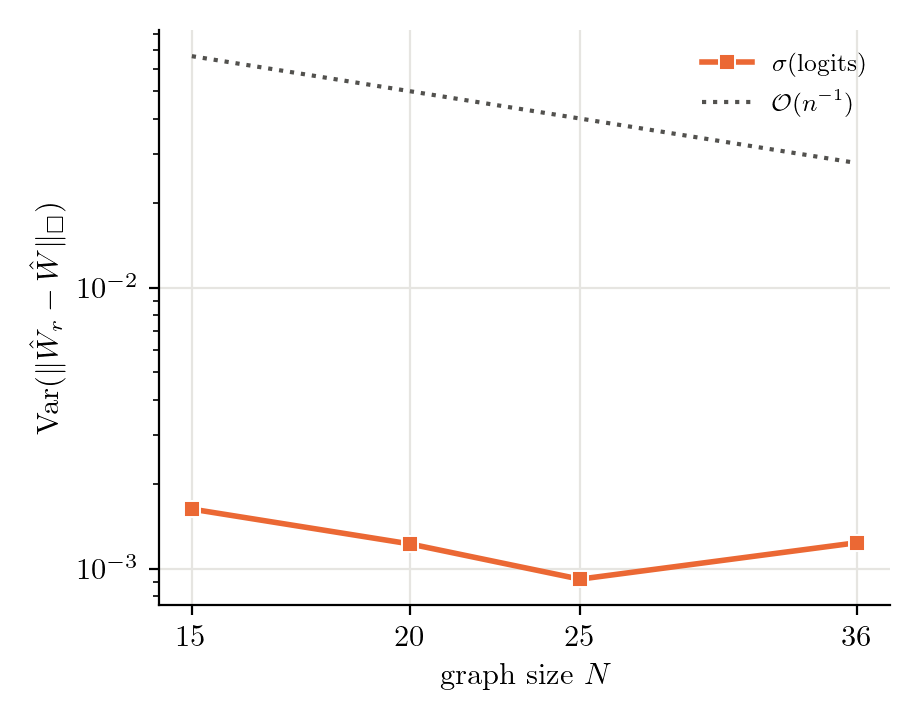}
  \caption{\centering{\textbf{OGBmolhiv}}}
\end{subfigure}
\begin{subfigure}[t]{0.24\textwidth}
  \centering\includegraphics[width=\linewidth]{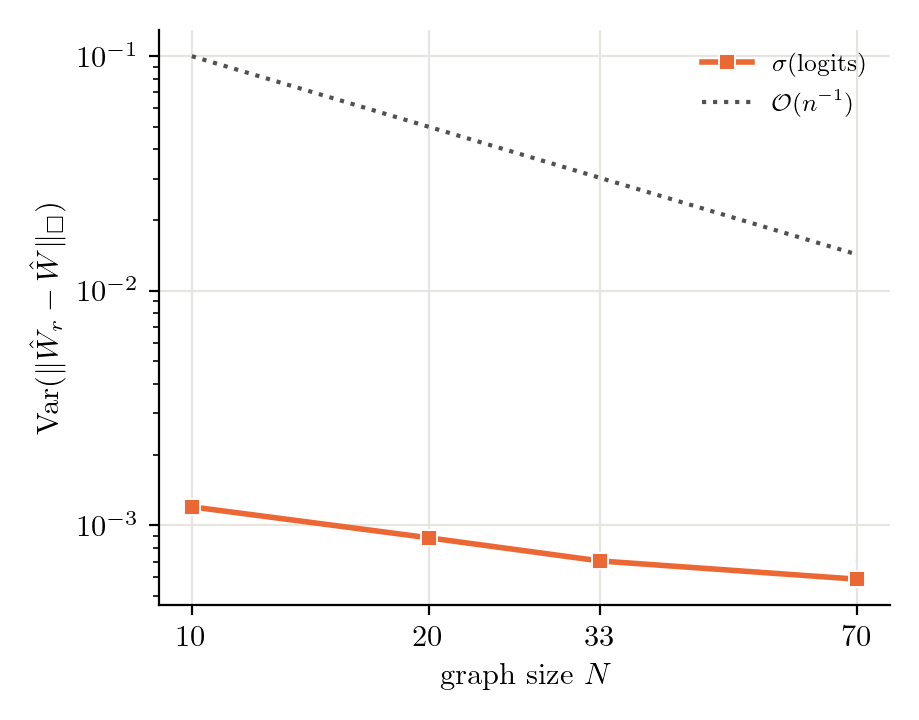}
  \caption{\centering{\textbf{PROTEINS}}}
\end{subfigure}
\begin{subfigure}[t]{0.24\textwidth}
  \centering\includegraphics[width=\linewidth]{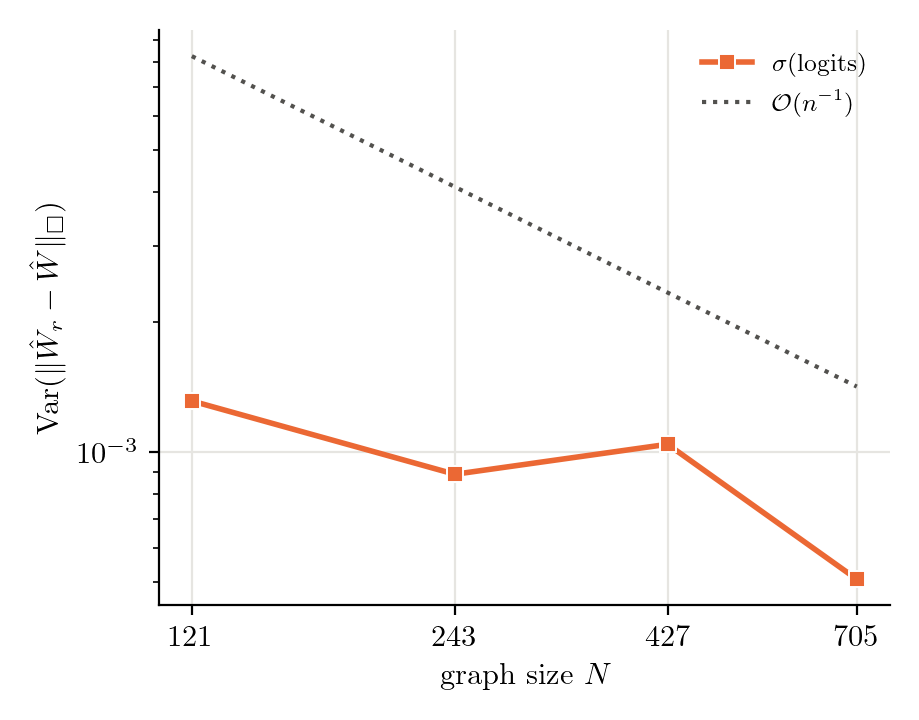}
  \caption{\centering{\textbf{REDDIT-MULTI-5K}}}
\end{subfigure}

% \caption{\textbf{Empirical cut-norm variance versus theoretical proxy (GPS, 4 heads).} Each panel compares the empirical variance of $\|\hat W_r - \hat W\|_\square$ against the size-dependent scaling proxy from Theorem~4.2. The target sizes $N$ are dataset-dependent.}
\caption{\textbf{Empirical cut-norm variance versus theoretical proxy (GPS, 4 heads).} Each panel compares the \orange{empirical variance} of $\|\hat W_r - \hat W\|_\square$ against the size-dependent \textcolor{darkgray}{scaling proxy} $\ccalO(n^{-1})$ from Theorem~4.2. The target sizes $N$ are dataset-dependent.}
\label{fig:ht_gps_h4}
\end{figure*}

\begin{figure*}[ht!]
\centering
% Row 1
\begin{subfigure}[t]{0.24\textwidth}
  \centering\includegraphics[width=\linewidth]{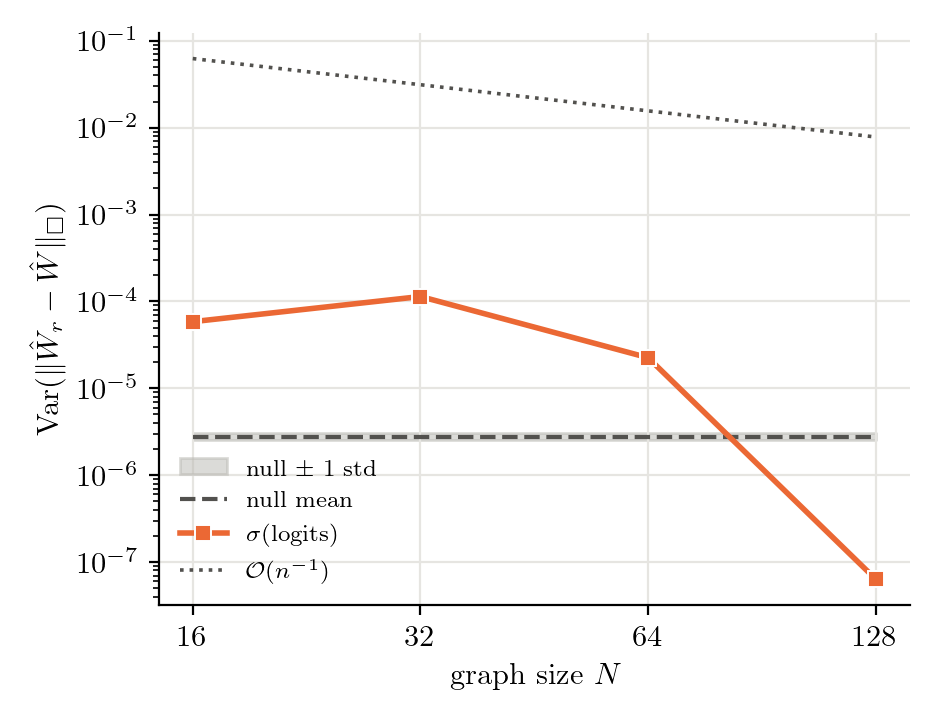}
  \caption{\centering{\textbf{BFS}}}
\end{subfigure}
\begin{subfigure}[t]{0.24\textwidth}
  \centering\includegraphics[width=\linewidth]{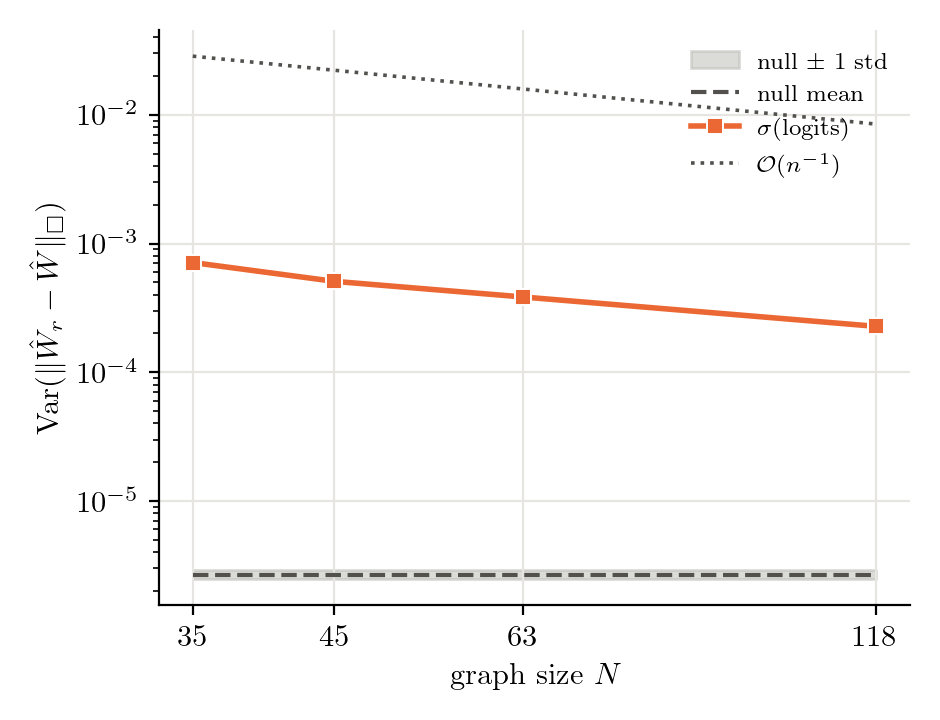}
  \caption{\centering{\textbf{COLLAB}}}
\end{subfigure}
\begin{subfigure}[t]{0.24\textwidth}
  \centering\includegraphics[width=\linewidth]{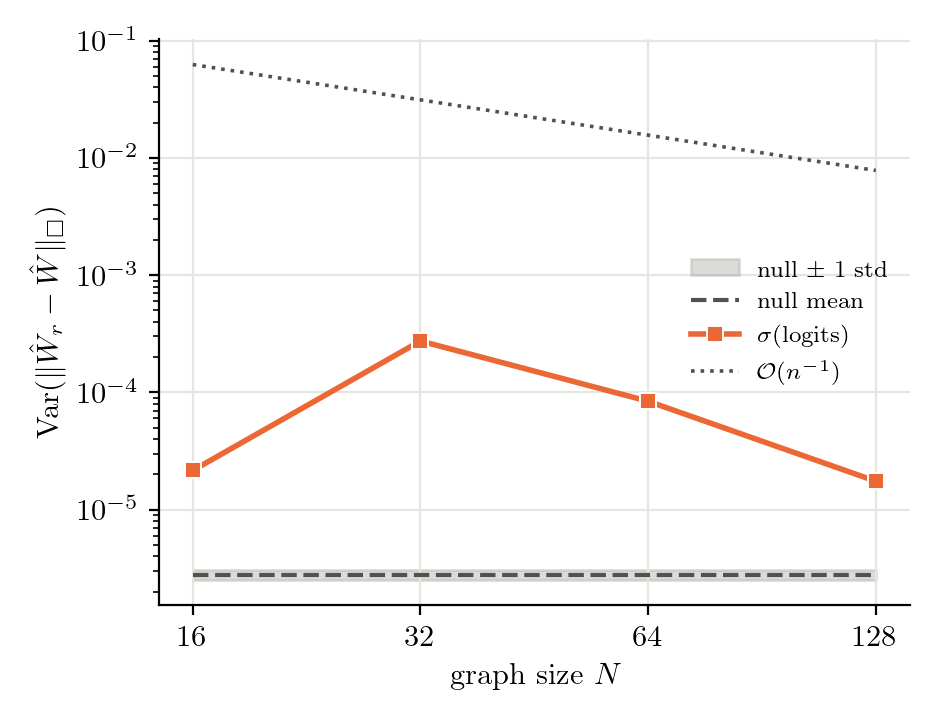}
  \caption{\centering{\textbf{Dijkstra}}}
\end{subfigure}
\begin{subfigure}[t]{0.24\textwidth}
  \centering\includegraphics[width=\linewidth]{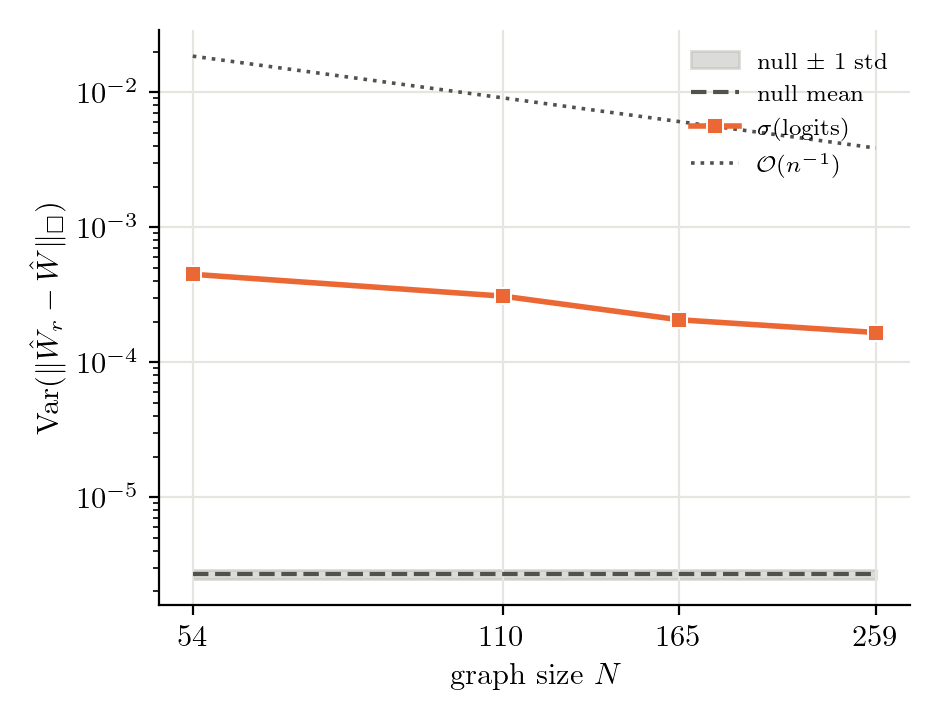}
  \caption{\centering{\textbf{LRGBPeptides}}}
\end{subfigure}

\vspace{0.3em}

% Row 2
\begin{subfigure}[t]{0.24\textwidth}
  \centering\includegraphics[width=\linewidth]{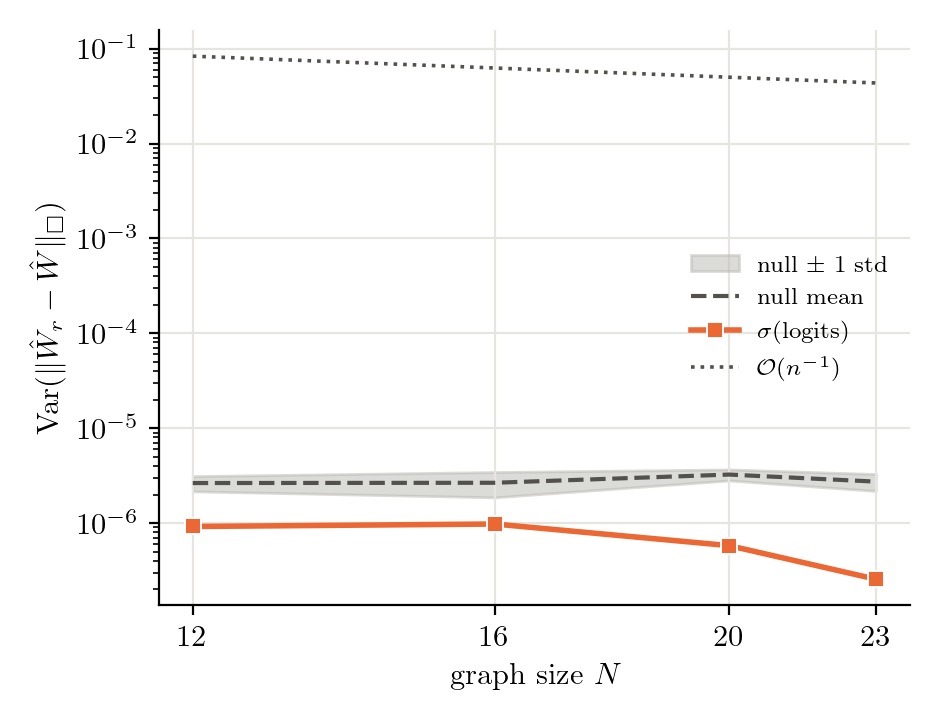}
  \caption{\centering{\textbf{MUTAG}}}
\end{subfigure}
\begin{subfigure}[t]{0.24\textwidth}
  \centering\includegraphics[width=\linewidth]{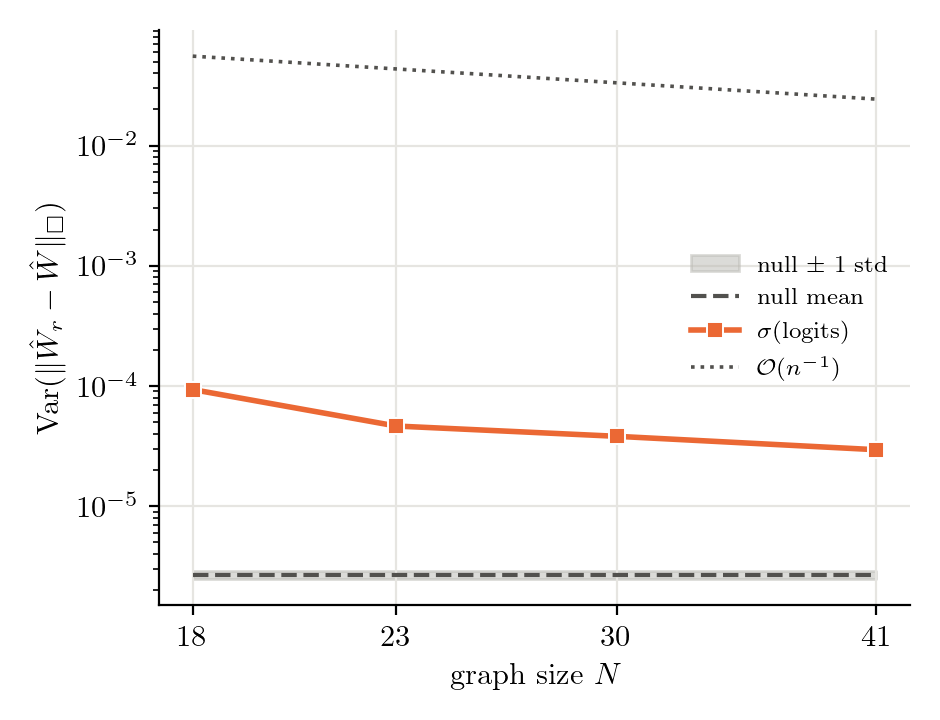}
  \caption{\centering{\textbf{NCI1}}}
\end{subfigure}
\begin{subfigure}[t]{0.24\textwidth}
  \centering\includegraphics[width=\linewidth]{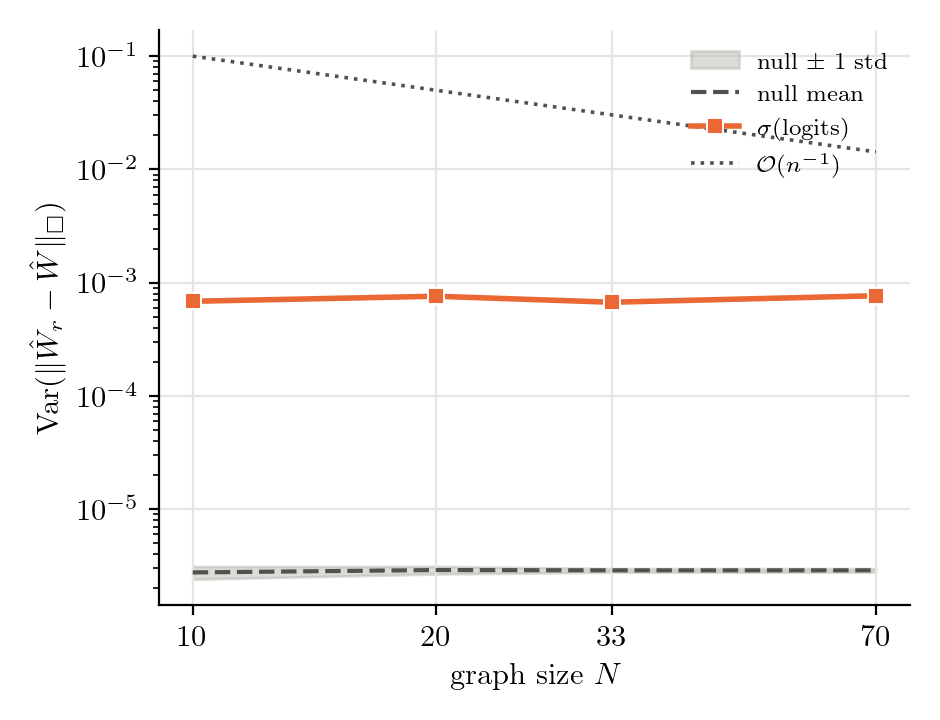}
  \caption{\centering{\textbf{PROTEINS}}}
\end{subfigure}
\begin{subfigure}[t]{0.24\textwidth}
  \centering\includegraphics[width=\linewidth]{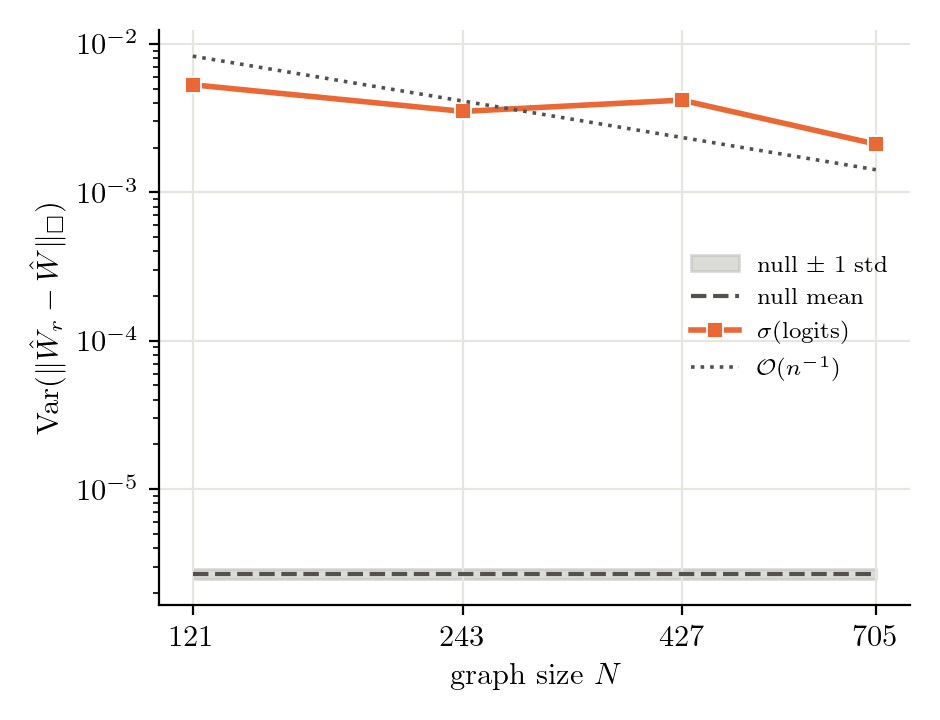}
  \caption{\centering{\textbf{REDDIT-MULTI-5K}}}
\end{subfigure}

% \caption{\textbf{Stronger hypothesis test results (GPS, single-head).} Each panel shows the results of the stronger hypothesis test for the corresponding dataset.}
\caption{\textbf{Stronger hypothesis test results (GPS, single-head).} Each panel shows the results of the stronger hypothesis test for the corresponding dataset. Each panel compares the \orange{empirical variance} against the synthetic null, shown as its \textcolor{darkgray}{mean} (dashed) with a \textcolor{gray}{band of one standard deviation}, and against the \textcolor{darkgray}{scaling proxy} $\ccalO(n^{-1})$ (dotted).}
\label{fig:ht_gps_stronger}
\end{figure*}

\begin{figure*}[ht!]
\centering
\begin{subfigure}[t]{0.19\textwidth}
  \centering\includegraphics[width=\linewidth]{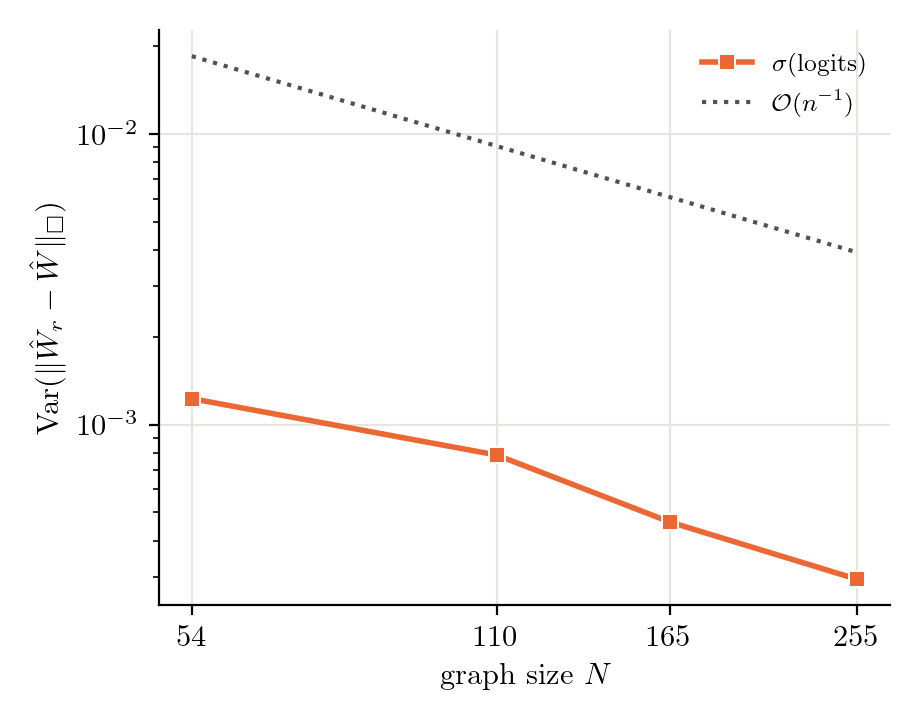}
  \caption{\centering{\textbf{LRGBPeptides}}}
\end{subfigure}
\begin{subfigure}[t]{0.19\textwidth}
  \centering\includegraphics[width=\linewidth]{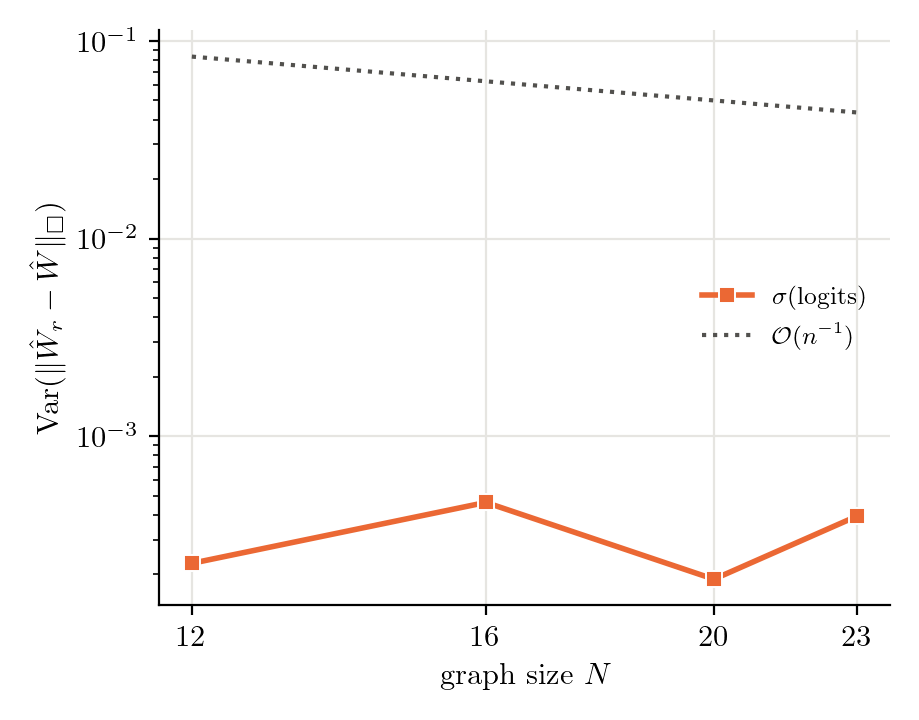}
  \caption{\centering{\textbf{MUTAG}}}
\end{subfigure}
\begin{subfigure}[t]{0.19\textwidth}
  \centering\includegraphics[width=\linewidth]{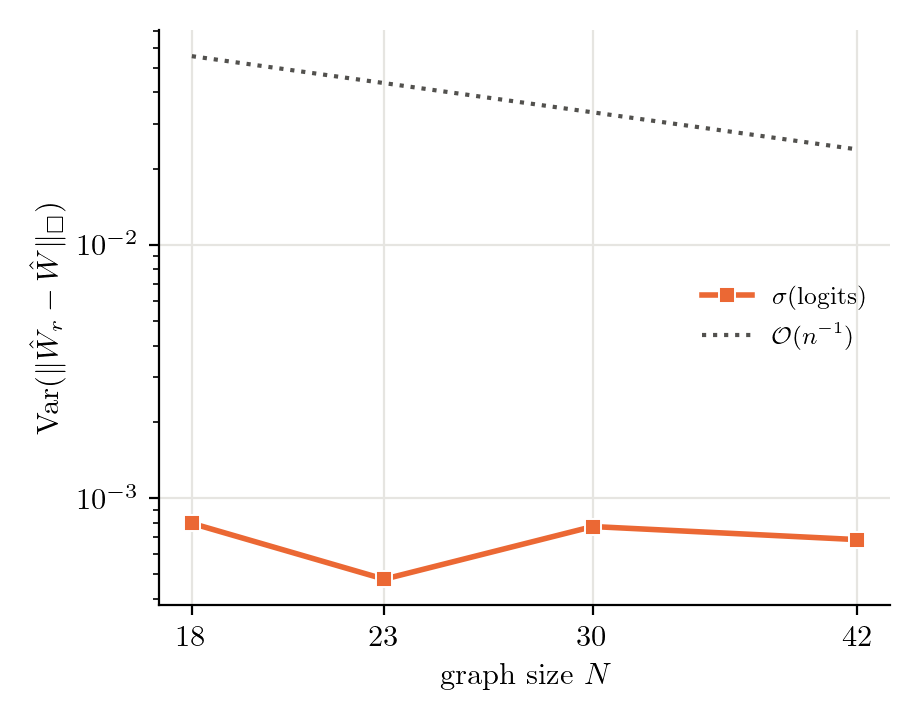}
  \caption{\centering{\textbf{NCI1}}}
\end{subfigure}
\begin{subfigure}[t]{0.19\textwidth}
  \centering\includegraphics[width=\linewidth]{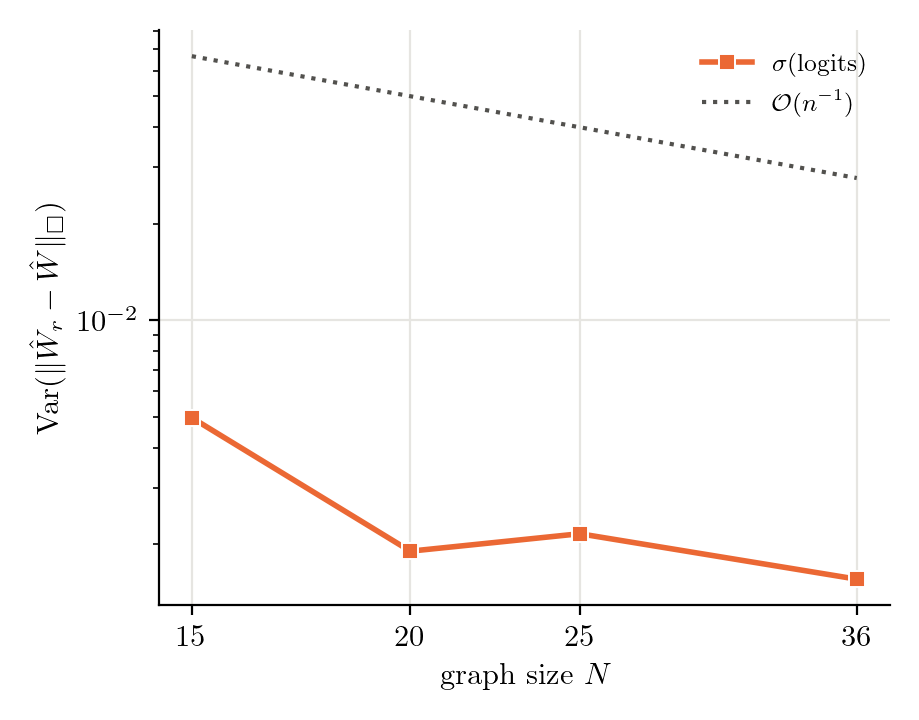}
  \caption{\centering{\textbf{OGBmolhiv}}}
\end{subfigure}
\begin{subfigure}[t]{0.19\textwidth}
  \centering\includegraphics[width=\linewidth]{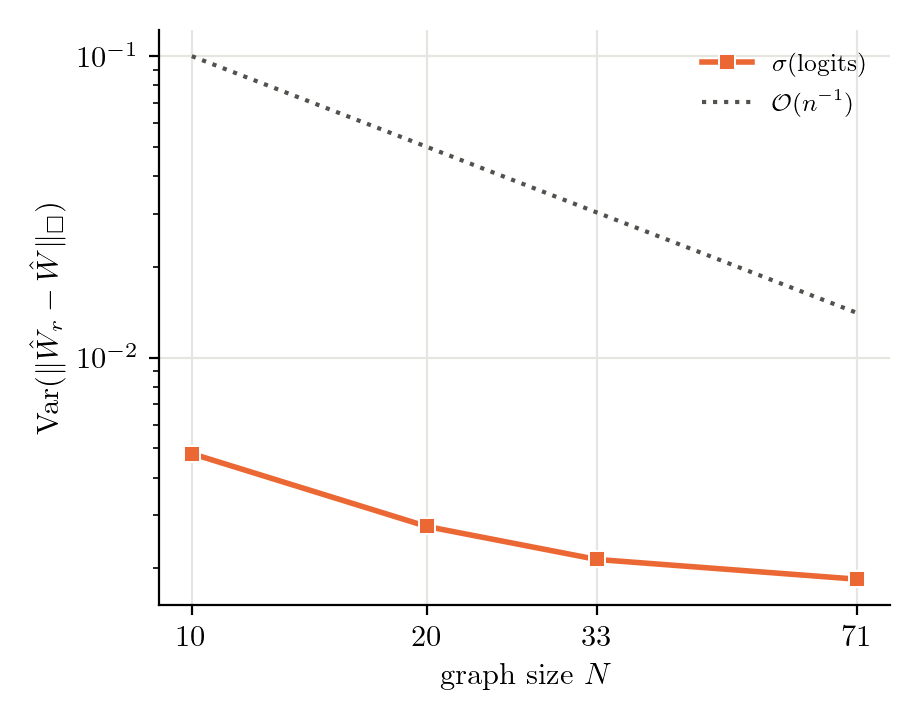}
  \caption{\centering{\textbf{PROTEINS}}}
\end{subfigure}

% \caption{\textbf{Empirical cut-norm variance versus theoretical proxy (Graphormer-GD, single-head).} Each panel compares the empirical variance of $\|\hat W_r - \hat W\|_\square$ against the size-dependent scaling proxy from Theorem~4.2. The target sizes $N$ are dataset-dependent.}
\caption{\textbf{Empirical cut-norm variance versus theoretical proxy (Graphormer-GD, single-head).} Each panel compares the \orange{empirical variance} of $\|\hat W_r - \hat W\|_\square$ against the size-dependent \textcolor{darkgray}{scaling proxy} $\ccalO(n^{-1})$ from Theorem~4.2. The target sizes $N$ are dataset-dependent.}
\label{fig:ht_graphormer_h1}
\end{figure*}

\begin{figure*}[ht!]
\centering
\begin{subfigure}[t]{0.19\textwidth}
  \centering\includegraphics[width=\linewidth]{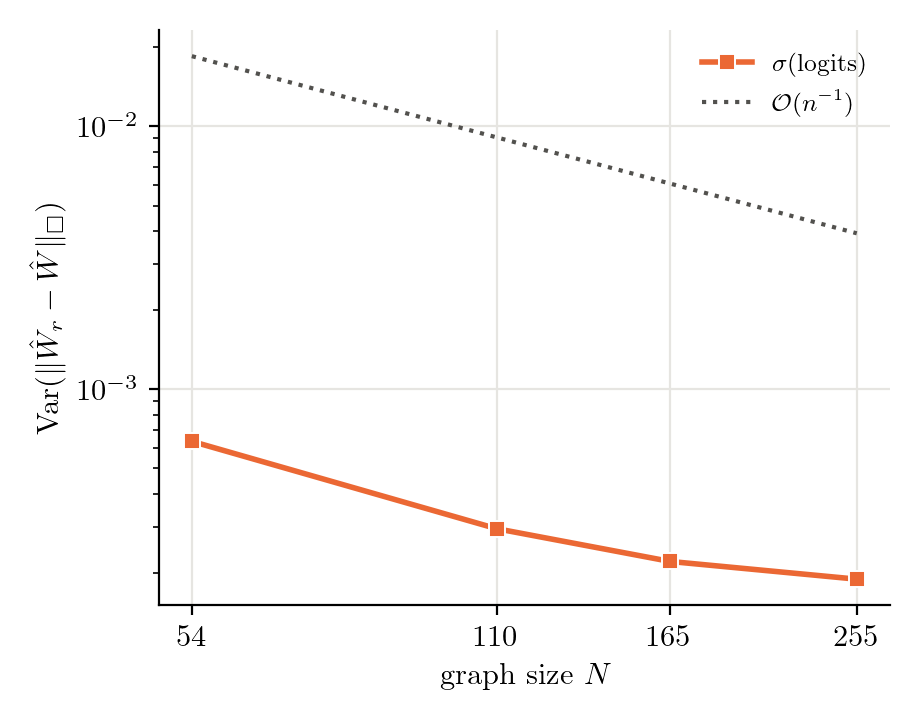}
  \caption{\centering{\textbf{LRGBPeptides}}}
\end{subfigure}
\begin{subfigure}[t]{0.19\textwidth}
  \centering\includegraphics[width=\linewidth]{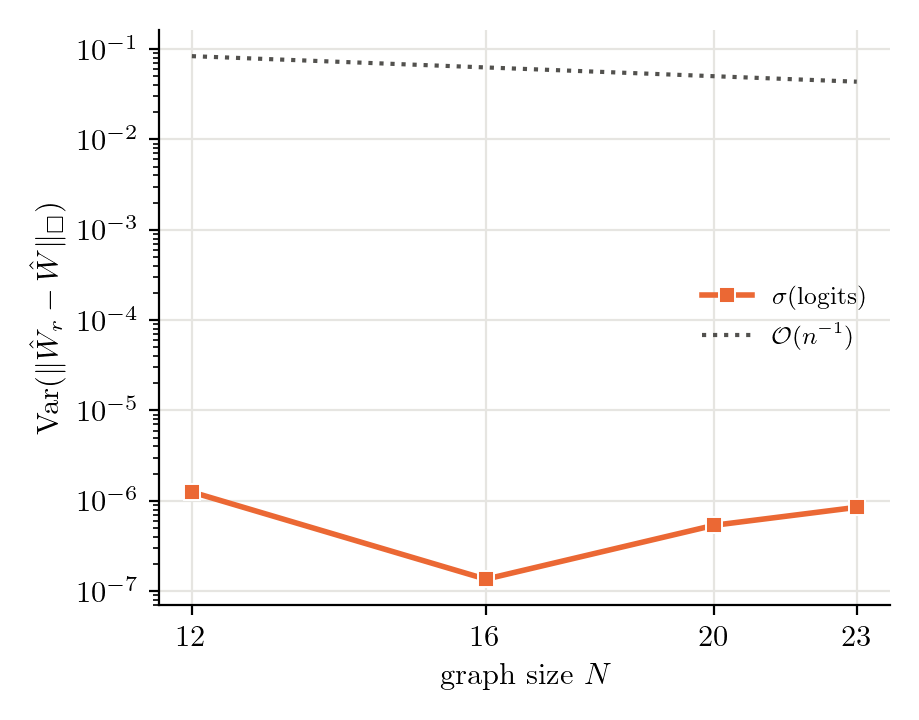}
  \caption{\centering{\textbf{MUTAG}}}
\end{subfigure}
\begin{subfigure}[t]{0.19\textwidth}
  \centering\includegraphics[width=\linewidth]{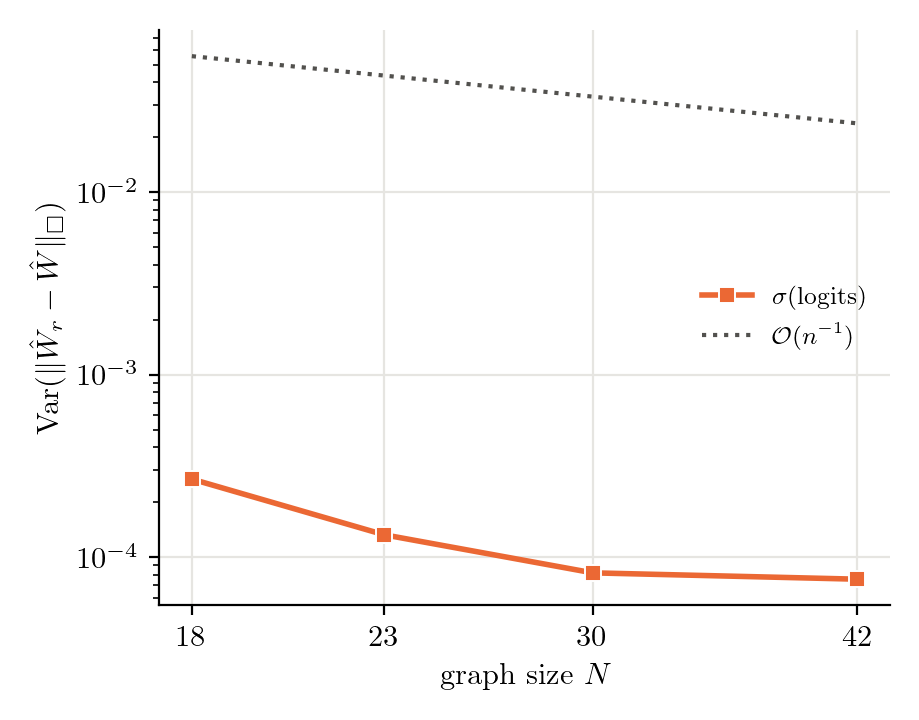}
  \caption{\centering{\textbf{NCI1}}}
\end{subfigure}
\begin{subfigure}[t]{0.19\textwidth}
  \centering\includegraphics[width=\linewidth]{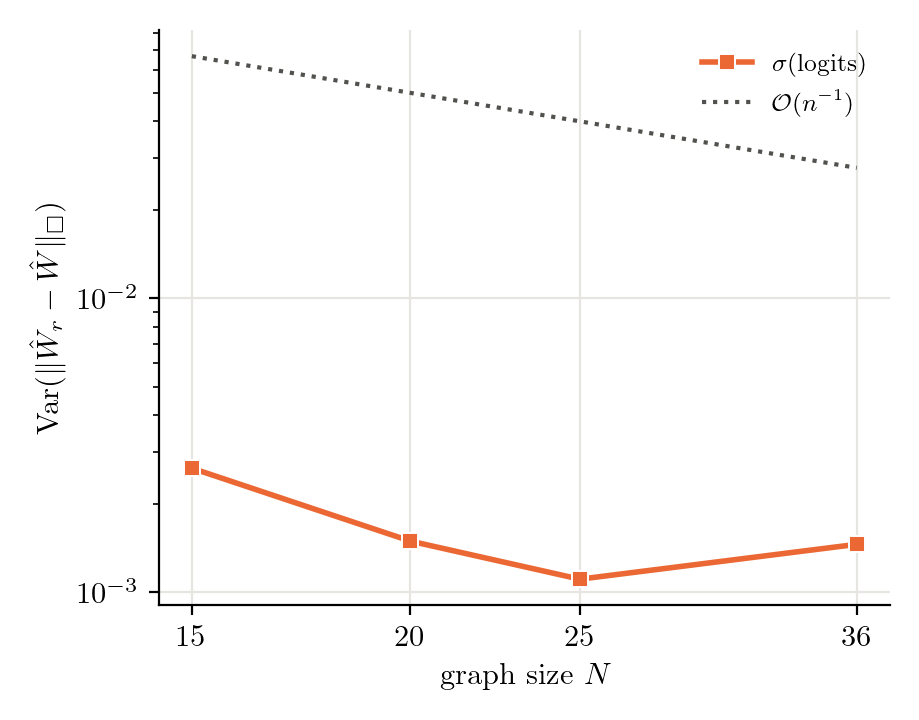}
  \caption{\centering{\textbf{OGBmolhiv}}}
\end{subfigure}
\begin{subfigure}[t]{0.19\textwidth}
  \centering\includegraphics[width=\linewidth]{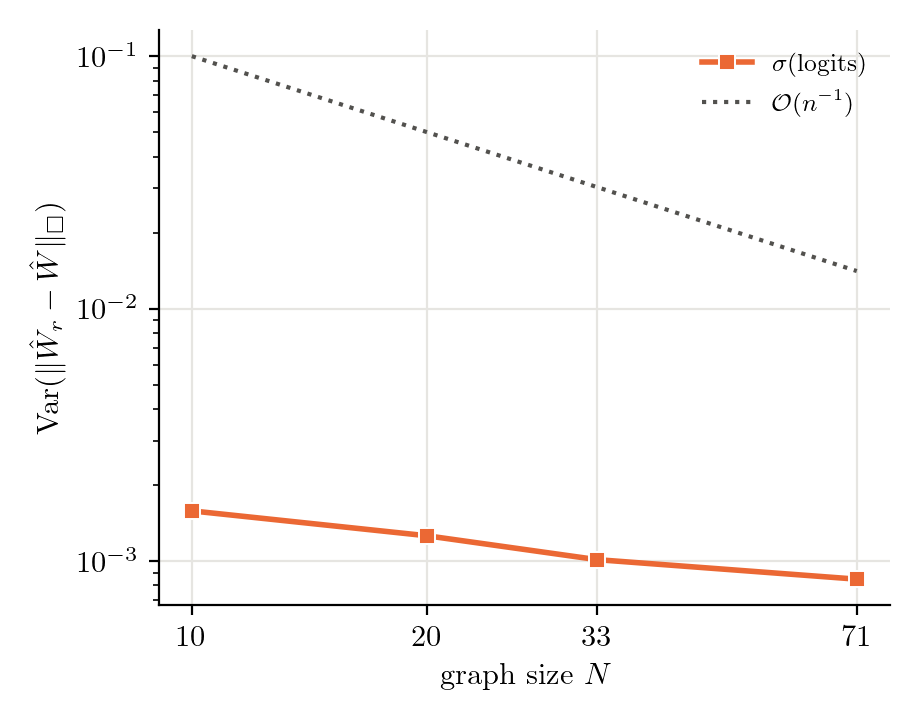}
  \caption{\centering{\textbf{PROTEINS}}}
\end{subfigure}

% \caption{\textbf{Empirical cut-norm variance versus theoretical proxy (GRIT, single-head).} Each panel compares the empirical variance of $\|\hat W_r - \hat W\|_\square$ against the size-dependent scaling proxy from Theorem~4.2. The target sizes $N$ are dataset-dependent.}
\caption{\textbf{Empirical cut-norm variance versus theoretical proxy (GRIT, single-head).} Each panel compares the \orange{empirical variance} of $\|\hat W_r - \hat W\|_\square$ against the size-dependent \textcolor{darkgray}{scaling proxy} $\ccalO(n^{-1})$ from Theorem~4.2. The target sizes $N$ are dataset-dependent.}
\label{fig:ht_grit_h1}
\end{figure*}

\begin{figure*}[ht!]
\centering
% Row 1
\begin{subfigure}[t]{0.32\textwidth}
  \centering\includegraphics[width=\linewidth]{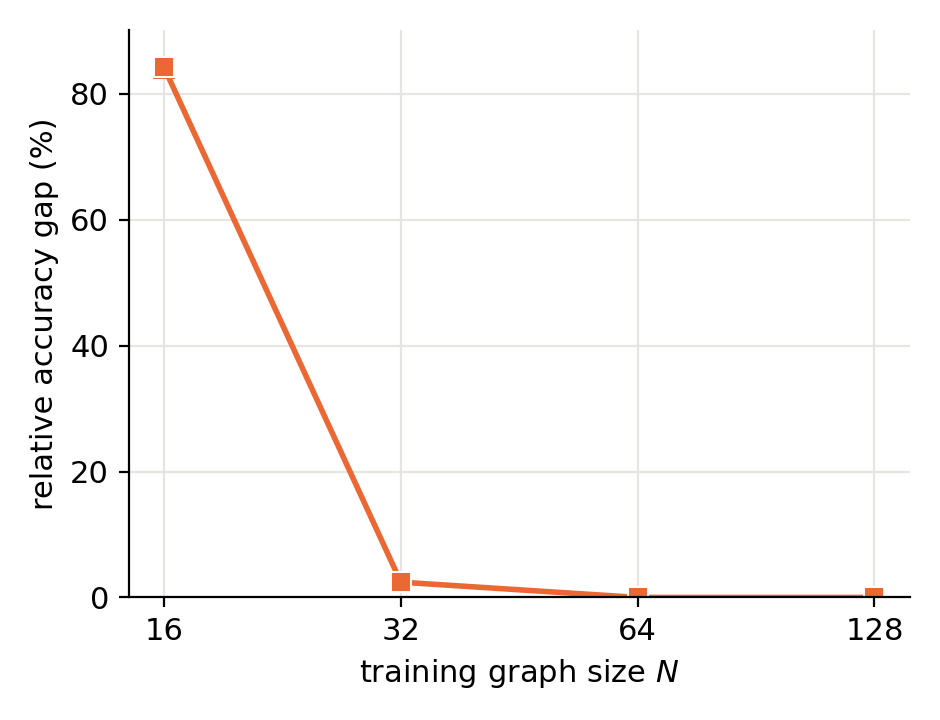}
  \caption{\centering{\textbf{BFS}}}
\end{subfigure}
\begin{subfigure}[t]{0.32\textwidth}
  \centering\includegraphics[width=\linewidth]{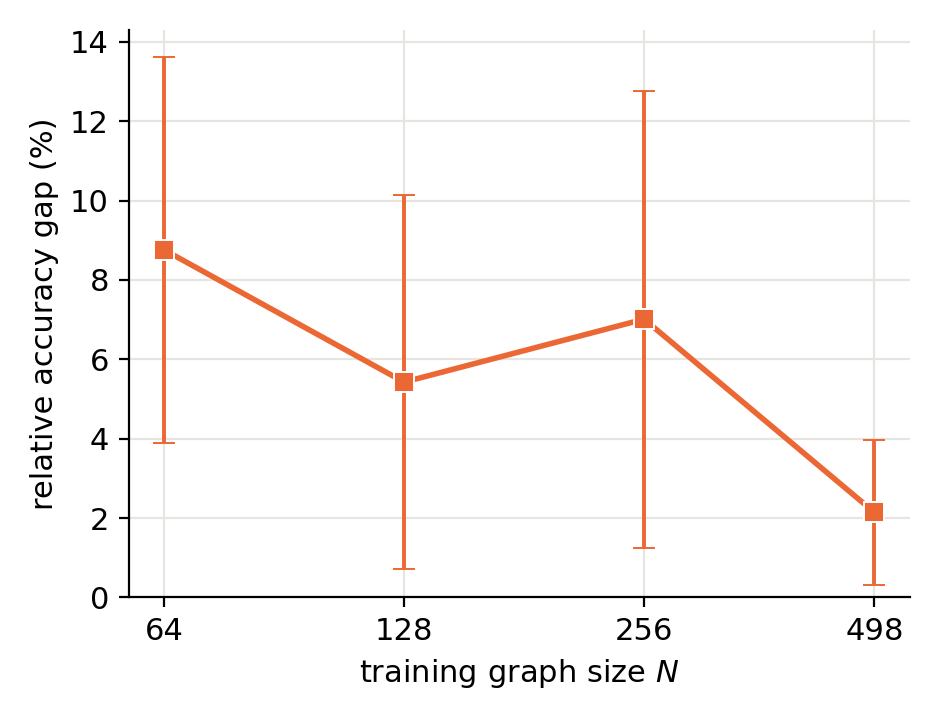}
  \caption{\centering{\textbf{COLLAB}}}
\end{subfigure}
\begin{subfigure}[t]{0.32\textwidth}
  \centering\includegraphics[width=\linewidth]{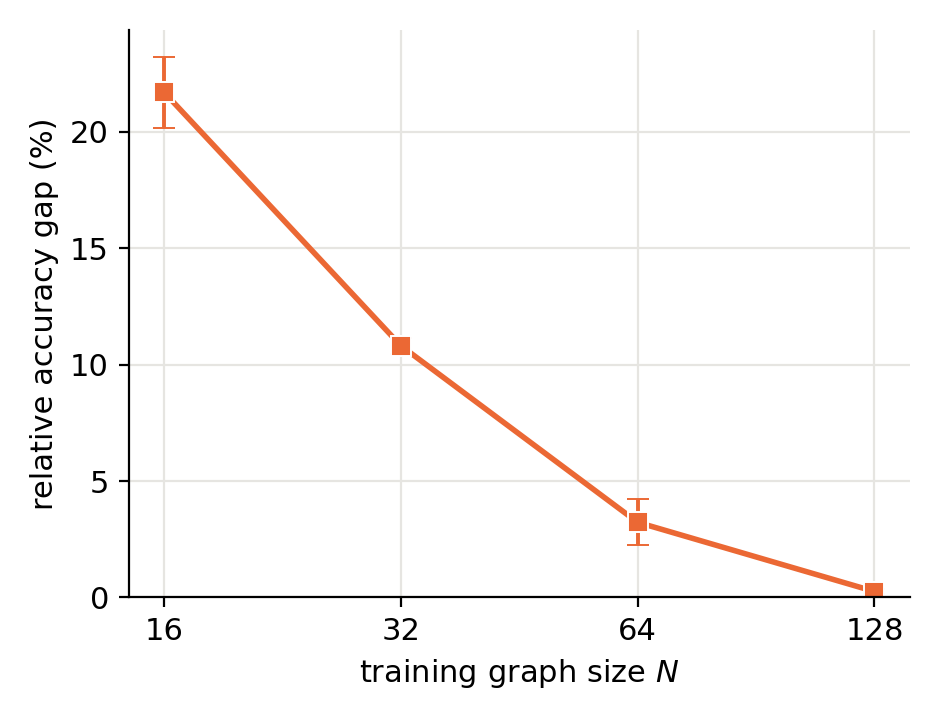}
  \caption{\centering{\textbf{Dijkstra}}}
\end{subfigure}

\vspace{0.3em}

% Row 2
\begin{subfigure}[t]{0.32\textwidth}
  \centering\includegraphics[width=\linewidth]{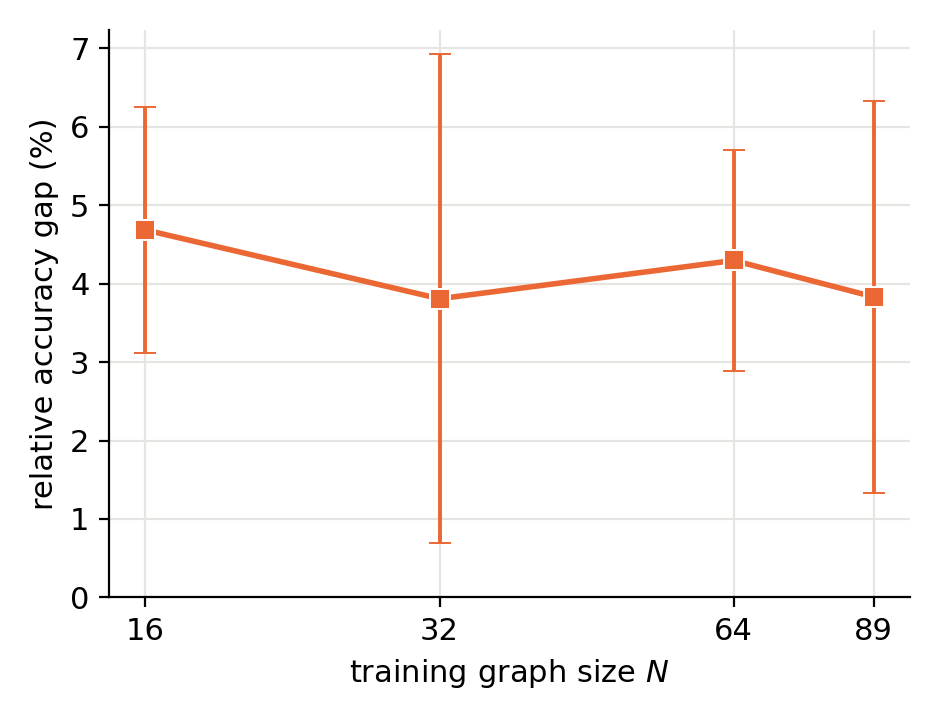}
  \caption{\centering{\textbf{IMDB-MULTI}}}
\end{subfigure}
\begin{subfigure}[t]{0.32\textwidth}
  \centering\includegraphics[width=\linewidth]{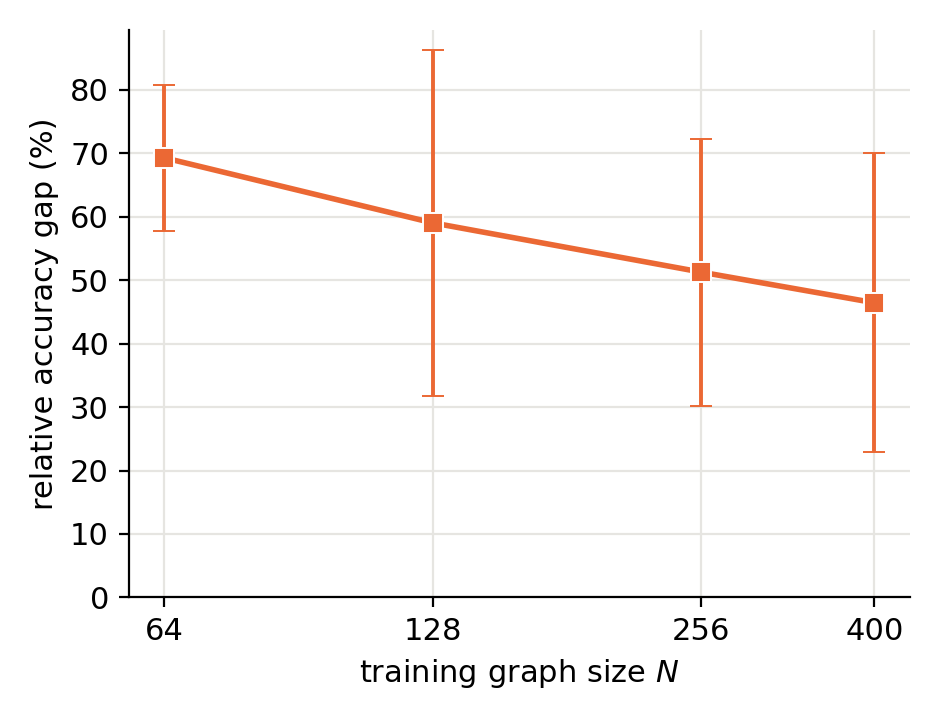}
  \caption{\centering{\textbf{LRGBPeptides}}}
\end{subfigure}
\begin{subfigure}[t]{0.32\textwidth}
  \centering\includegraphics[width=\linewidth]{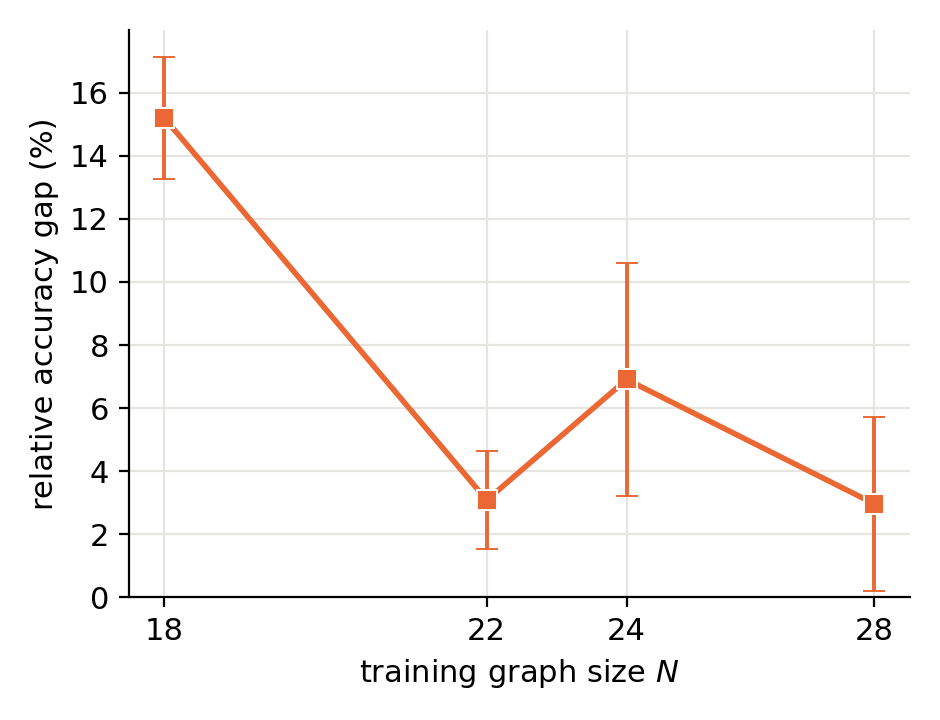}
  \caption{\centering{\textbf{MUTAG}}}
\end{subfigure}

\vspace{0.3em}

% Row 3
\begin{subfigure}[t]{0.32\textwidth}
  \centering\includegraphics[width=\linewidth]{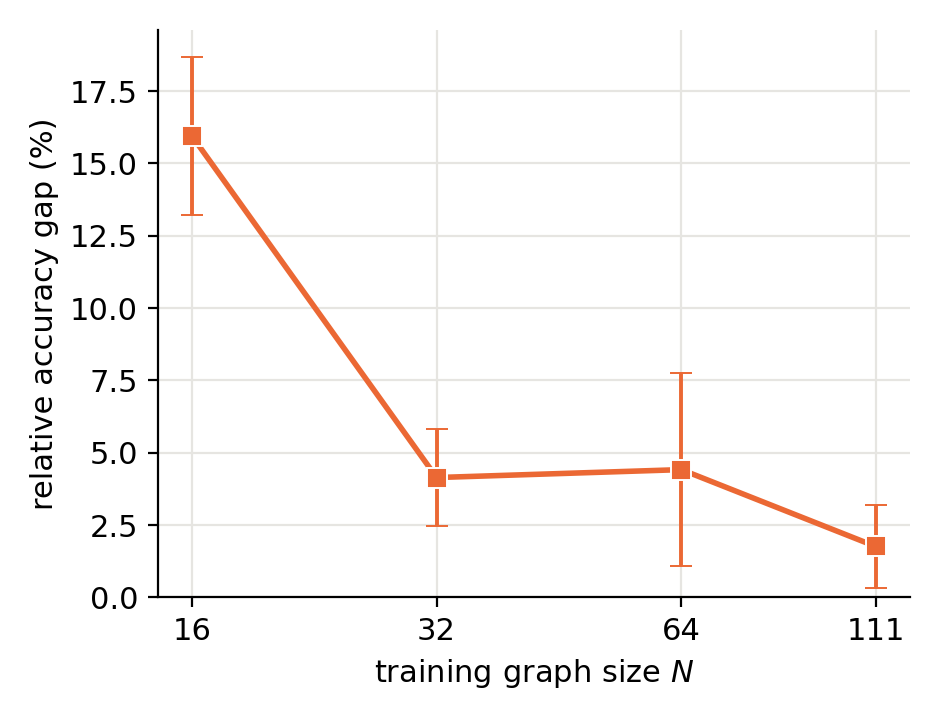}
  \caption{\centering{\textbf{NCI1}}}
\end{subfigure}
\begin{subfigure}[t]{0.32\textwidth}
  \centering\includegraphics[width=\linewidth]{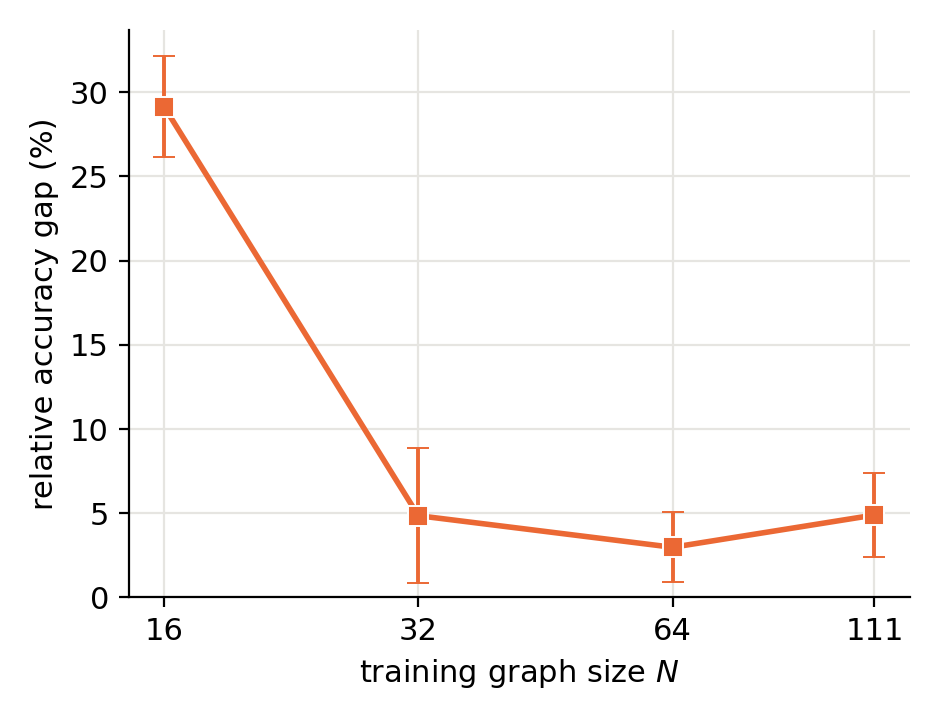}
  \caption{\centering{\textbf{NCI109}}}
\end{subfigure}
\begin{subfigure}[t]{0.32\textwidth}
  \centering\includegraphics[width=\linewidth]{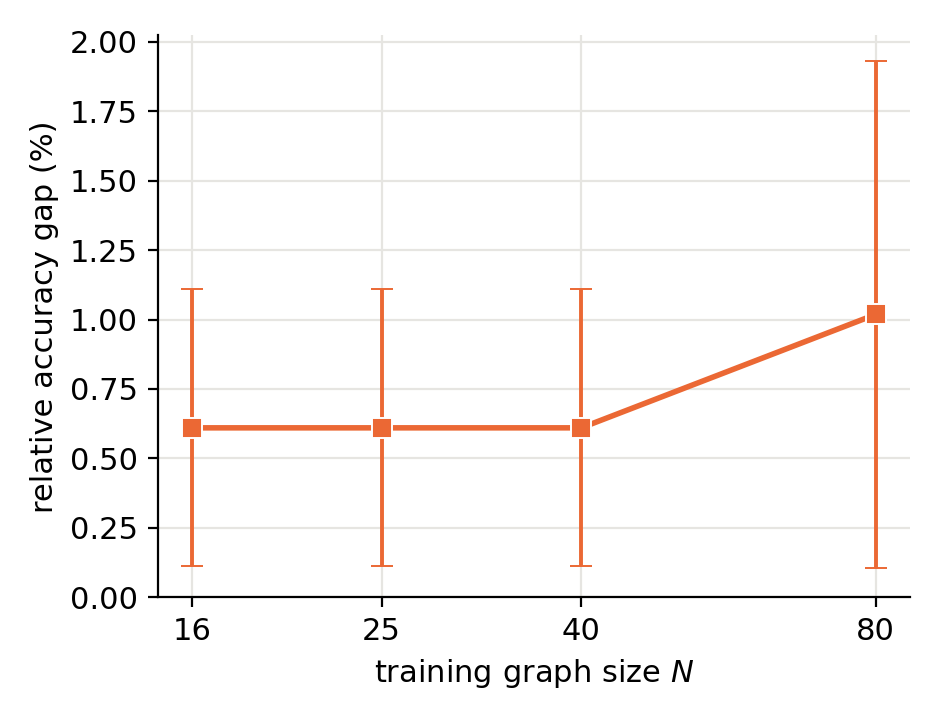}
  \caption{\centering{\textbf{OGBmolhiv}}}
\end{subfigure}

\caption{\textbf{Transferability gap across graph sizes (GPS, single-head).} Each panel shows the transferability gap as a function of graph size for the corresponding dataset.}
\label{fig:transferability_gps_h1_1}
\end{figure*}

\begin{figure*}[ht!]
\centering
\begin{subfigure}[t]{0.32\textwidth}
  \centering\includegraphics[width=\linewidth]{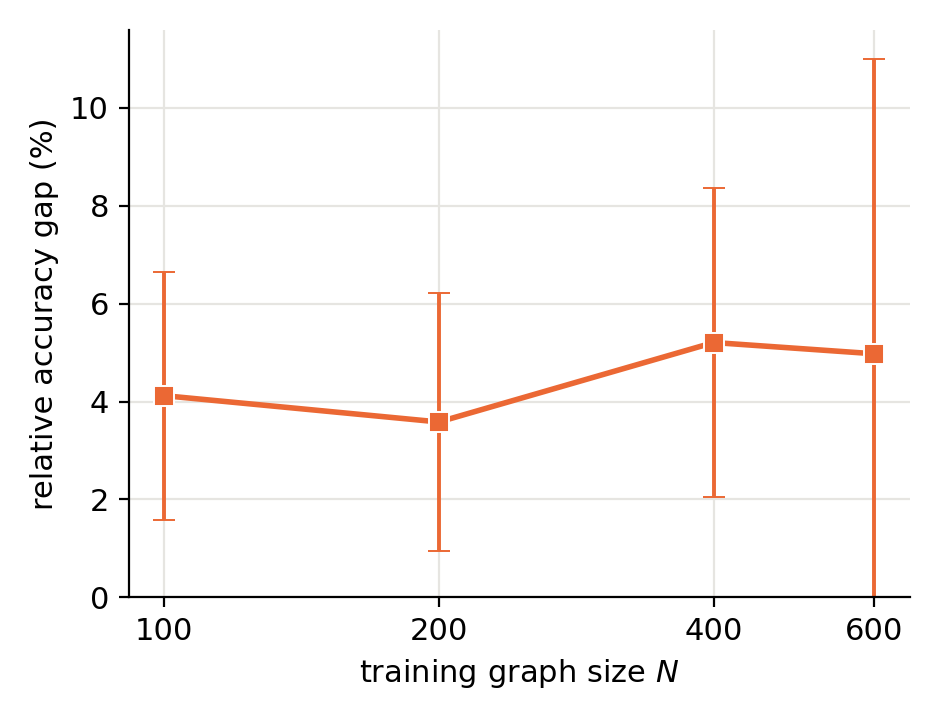}
  \caption{\centering{\textbf{PROTEINS}}}
\end{subfigure}
\begin{subfigure}[t]{0.32\textwidth}
  \centering\includegraphics[width=\linewidth]{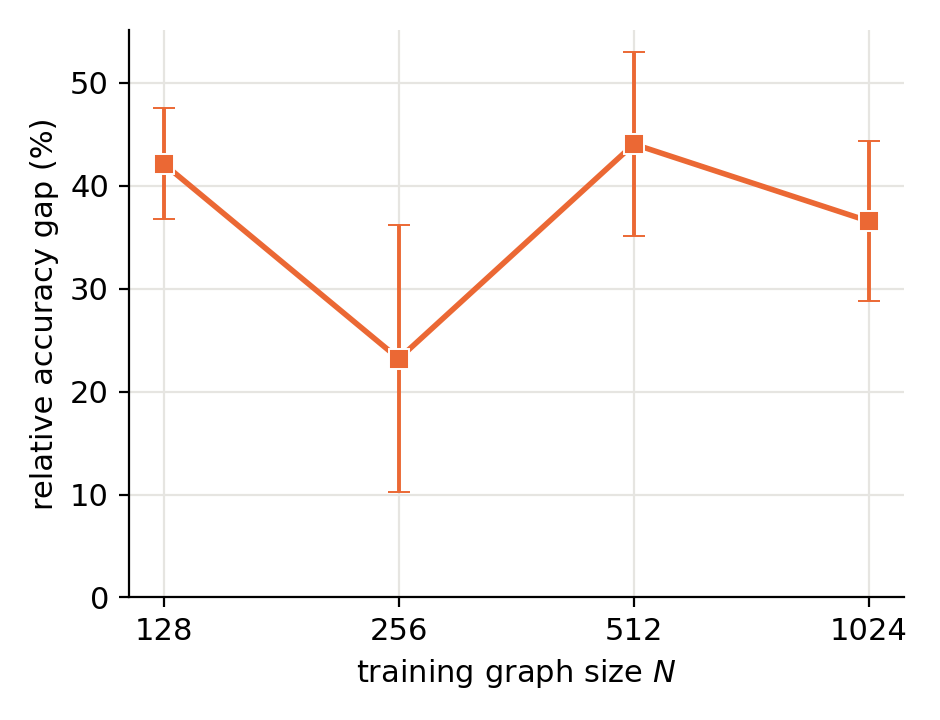}
  \caption{\centering{\textbf{REDDIT-MULTI-5K}}}
\end{subfigure}

\caption{\textbf{Transferability gap across graph sizes (GPS, single-head).} Each panel shows the transferability gap as a function of graph size for the corresponding dataset.}
\label{fig:transferability_gps_h1_2}
\end{figure*}

\begin{figure*}[ht!]
\centering
% Row 1
\begin{subfigure}[t]{0.32\textwidth}
  \centering\includegraphics[width=\linewidth]{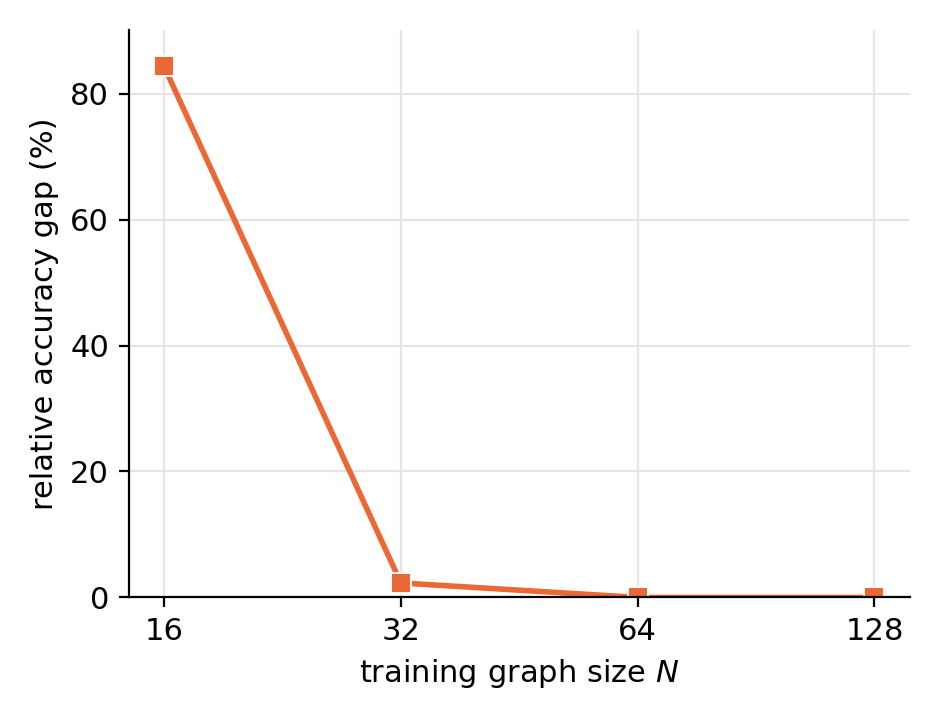}
  \caption{\centering{\textbf{BFS}}}
\end{subfigure}
\begin{subfigure}[t]{0.32\textwidth}
  \centering\includegraphics[width=\linewidth]{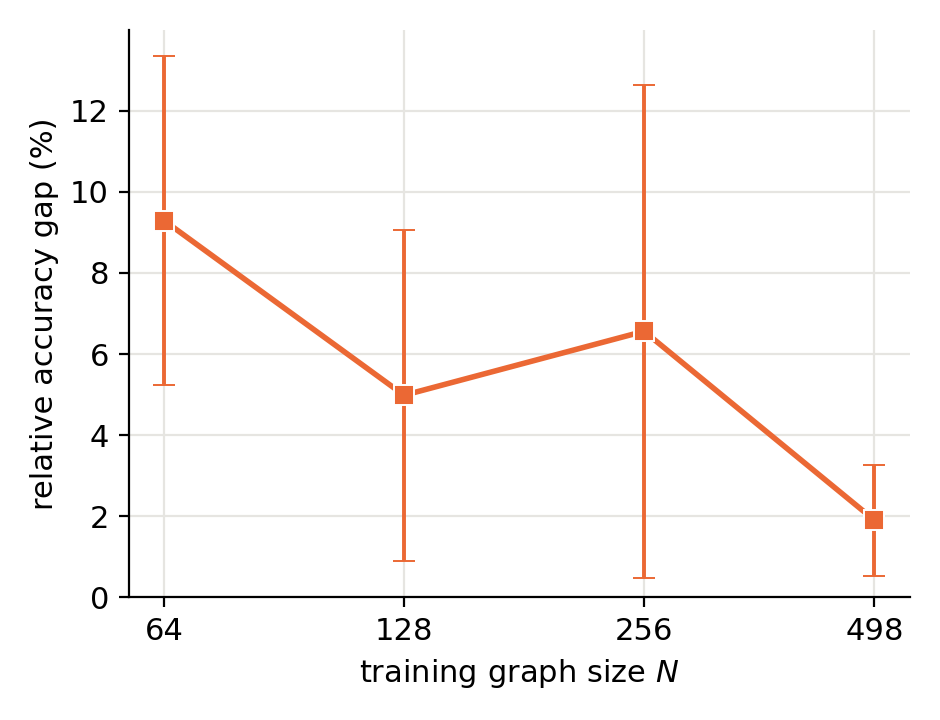}
  \caption{\centering{\textbf{COLLAB}}}
\end{subfigure}
\begin{subfigure}[t]{0.32\textwidth}
  \centering\includegraphics[width=\linewidth]{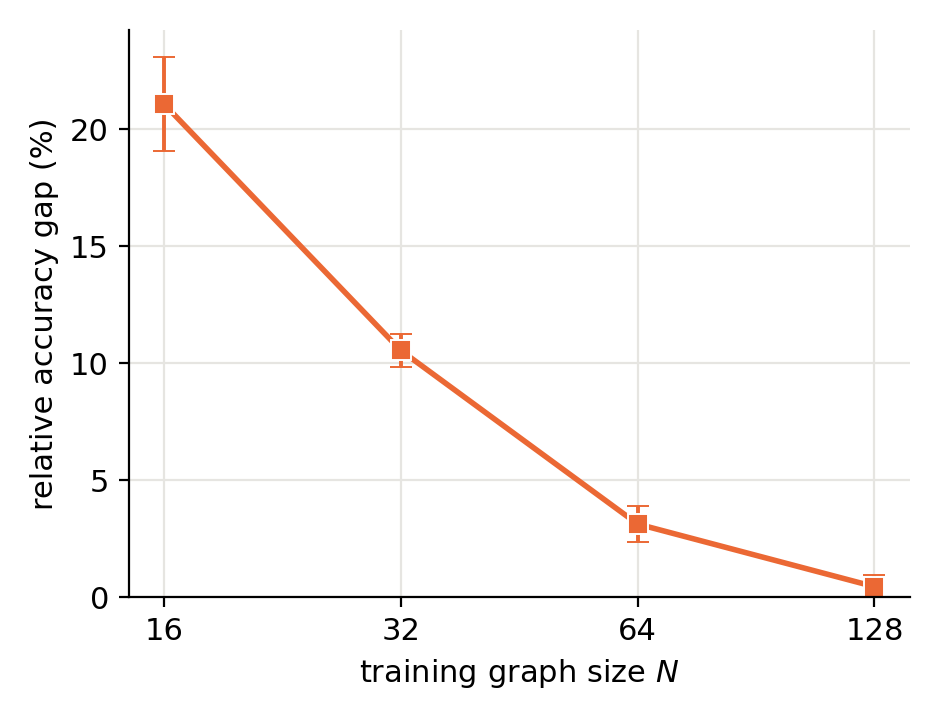}
  \caption{\centering{\textbf{Dijkstra}}}
\end{subfigure}

\vspace{0.3em}

% Row 2
\begin{subfigure}[t]{0.32\textwidth}
  \centering\includegraphics[width=\linewidth]{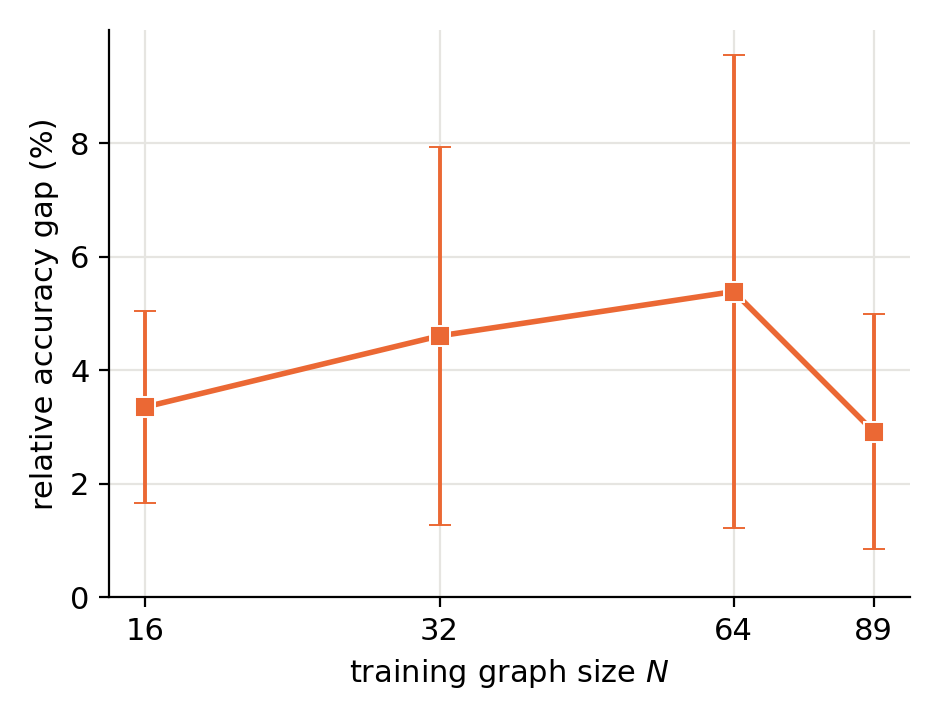}
  \caption{\centering{\textbf{IMDB-MULTI}}}
\end{subfigure}
\begin{subfigure}[t]{0.32\textwidth}
  \centering\includegraphics[width=\linewidth]{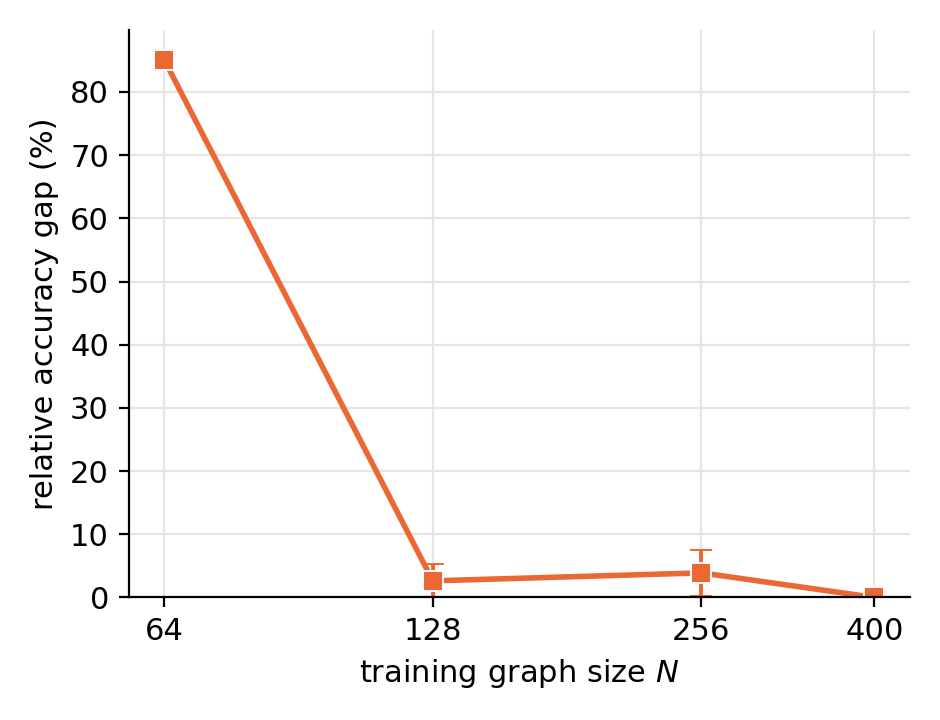}
  \caption{\centering{\textbf{LRGBPeptides}}}
\end{subfigure}
\begin{subfigure}[t]{0.32\textwidth}
  \centering\includegraphics[width=\linewidth]{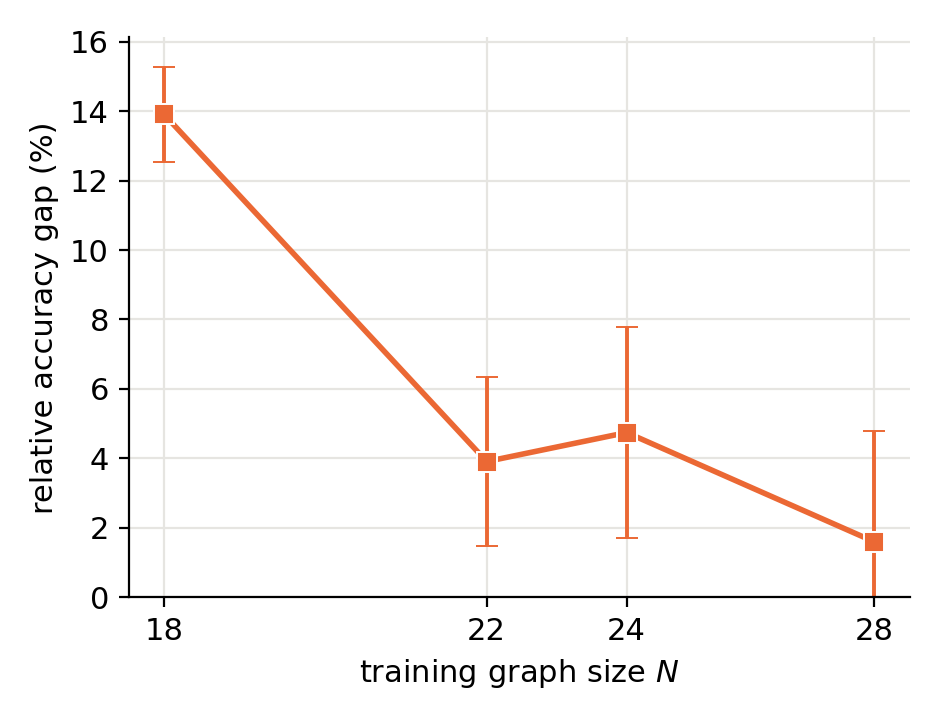}
  \caption{\centering{\textbf{MUTAG}}}
\end{subfigure}

\vspace{0.3em}

% Row 3
\begin{subfigure}[t]{0.32\textwidth}
  \centering\includegraphics[width=\linewidth]{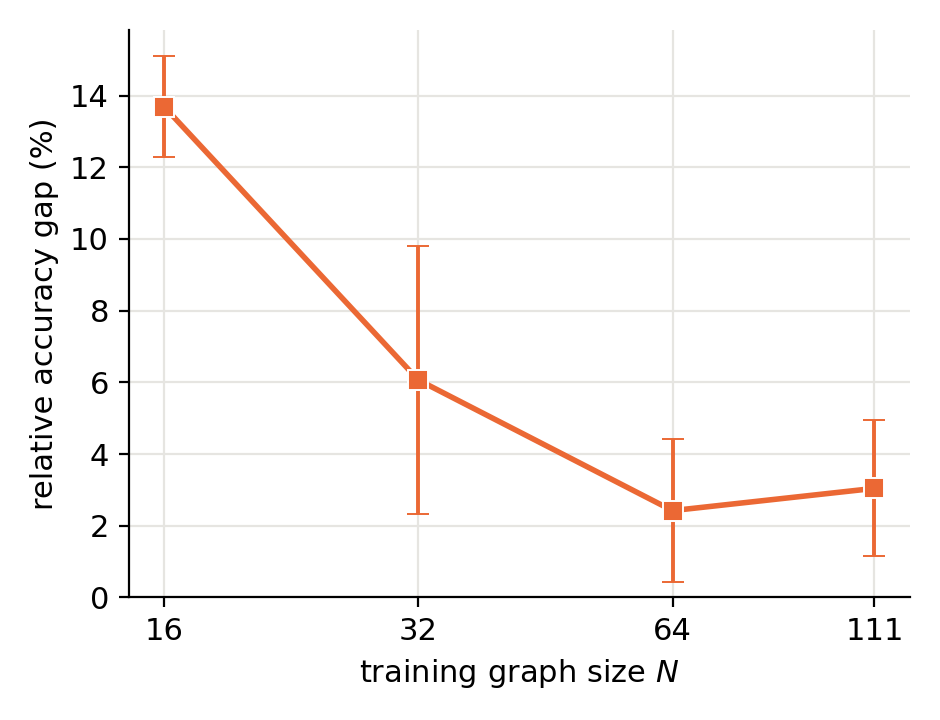}
  \caption{\centering{\textbf{NCI1}}}
\end{subfigure}
\begin{subfigure}[t]{0.32\textwidth}
  \centering\includegraphics[width=\linewidth]{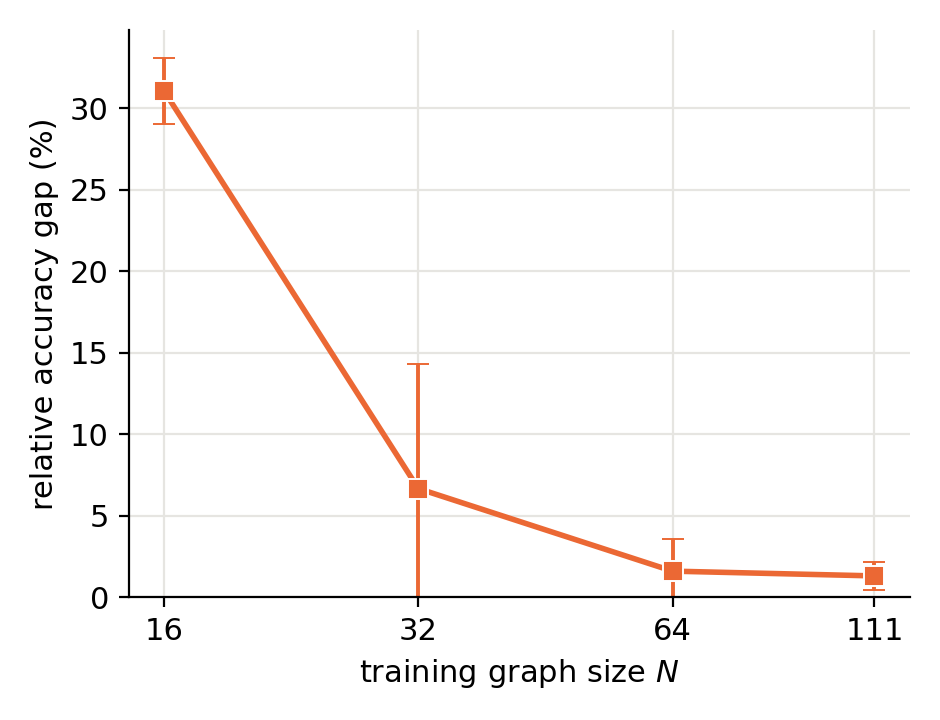}
  \caption{\centering{\textbf{NCI109}}}
\end{subfigure}
\begin{subfigure}[t]{0.32\textwidth}
  \centering\includegraphics[width=\linewidth]{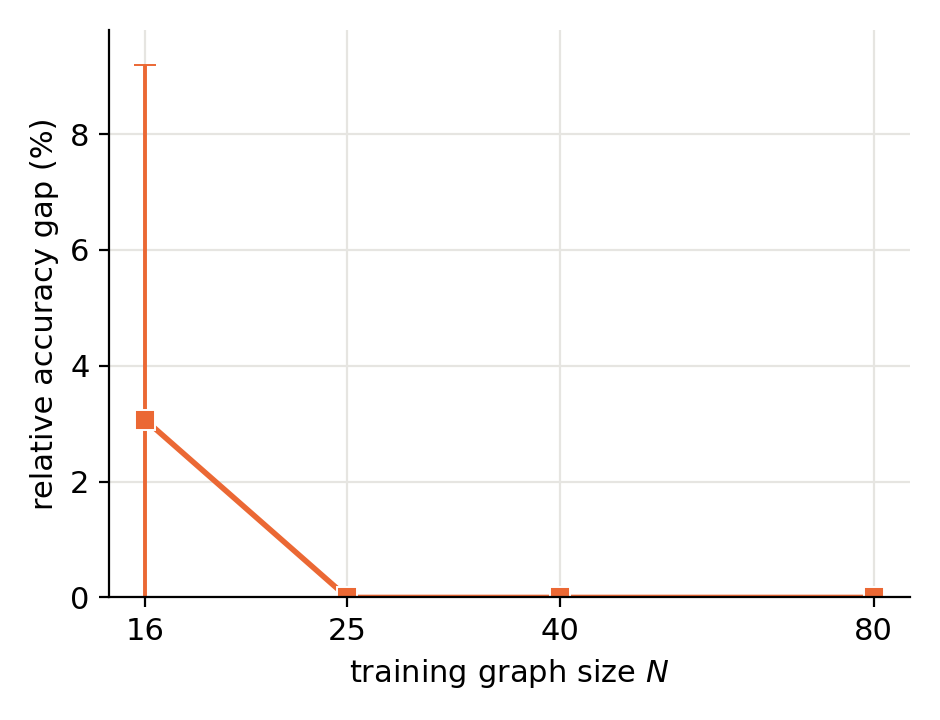}
  \caption{\centering{\textbf{OGBmolhiv}}}
\end{subfigure}

\caption{\textbf{Transferability gap across graph sizes (GPS, 4 heads).} Each panel shows the transferability gap as a function of graph size for the corresponding dataset.}
\label{fig:transferability_gps_h4_1}
\end{figure*}

\begin{figure*}[ht!]
\centering
\begin{subfigure}[t]{0.32\textwidth}
  \centering\includegraphics[width=\linewidth]{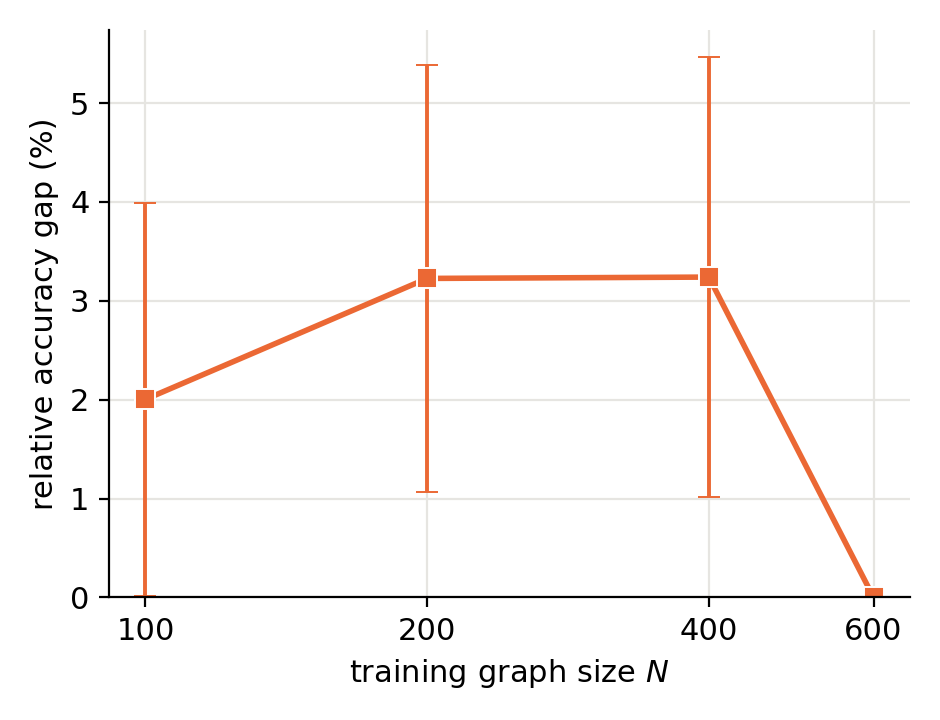}
  \caption{\centering{\textbf{PROTEINS}}}
\end{subfigure}
\begin{subfigure}[t]{0.32\textwidth}
  \centering\includegraphics[width=\linewidth]{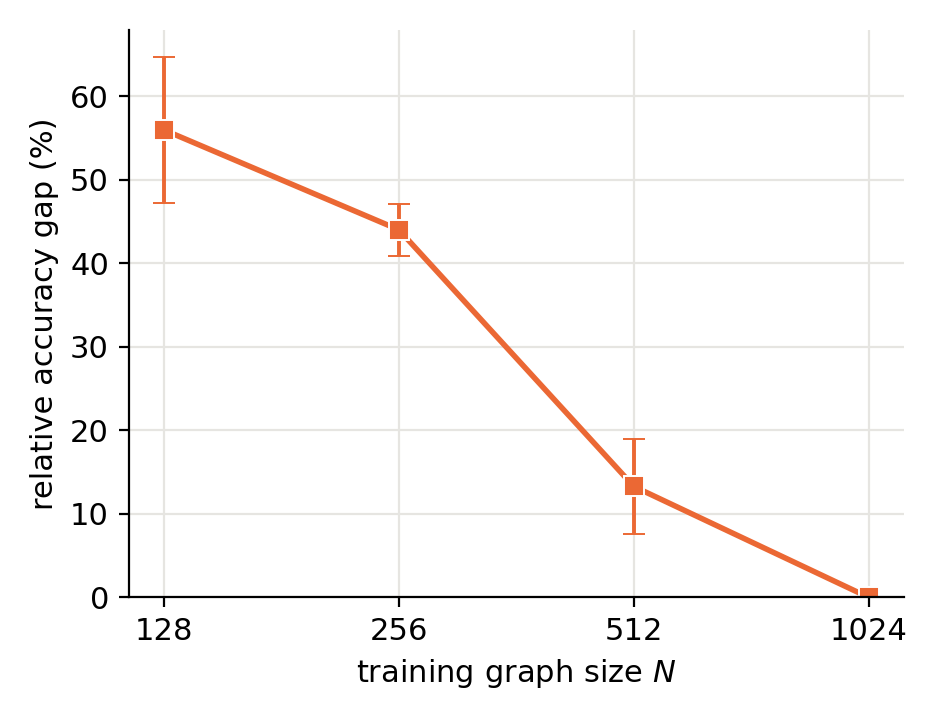}
  \caption{\centering{\textbf{REDDIT-MULTI-5K}}}
\end{subfigure}

\caption{\textbf{Transferability gap across graph sizes (GPS, 4 heads).} Each panel shows the transferability gap as a function of graph size for the corresponding dataset.}
\label{fig:transferability_gps_h4_2}
\end{figure*}

\begin{figure*}[ht!]
\centering
% Row 1
\begin{subfigure}[t]{0.32\textwidth}
  \centering\includegraphics[width=\linewidth]{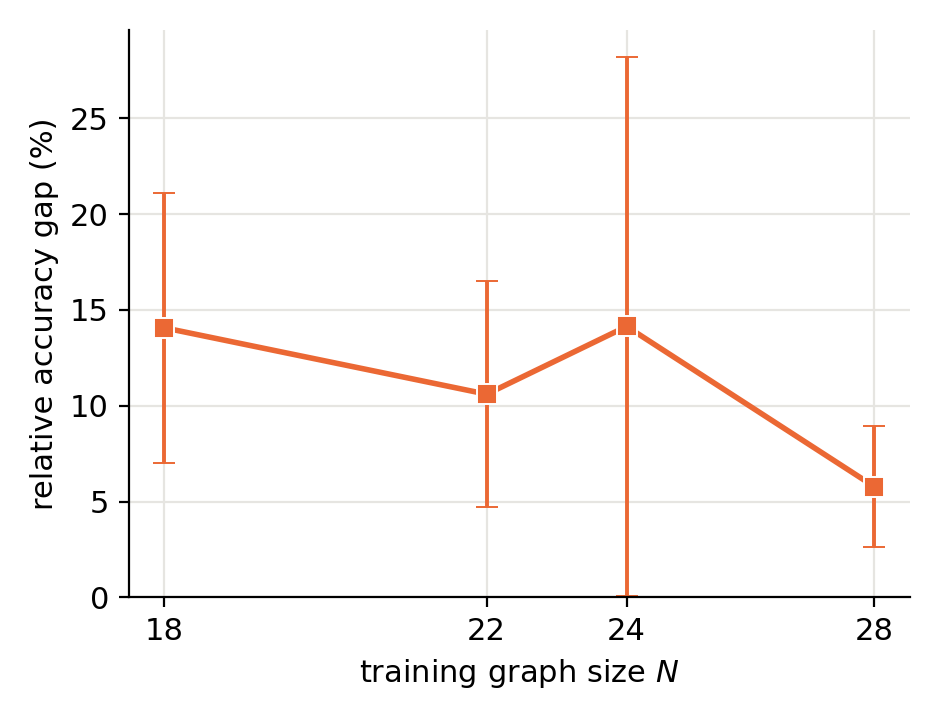}
  \caption{\centering{\textbf{MUTAG}}}
\end{subfigure}
\begin{subfigure}[t]{0.32\textwidth}
  \centering\includegraphics[width=\linewidth]{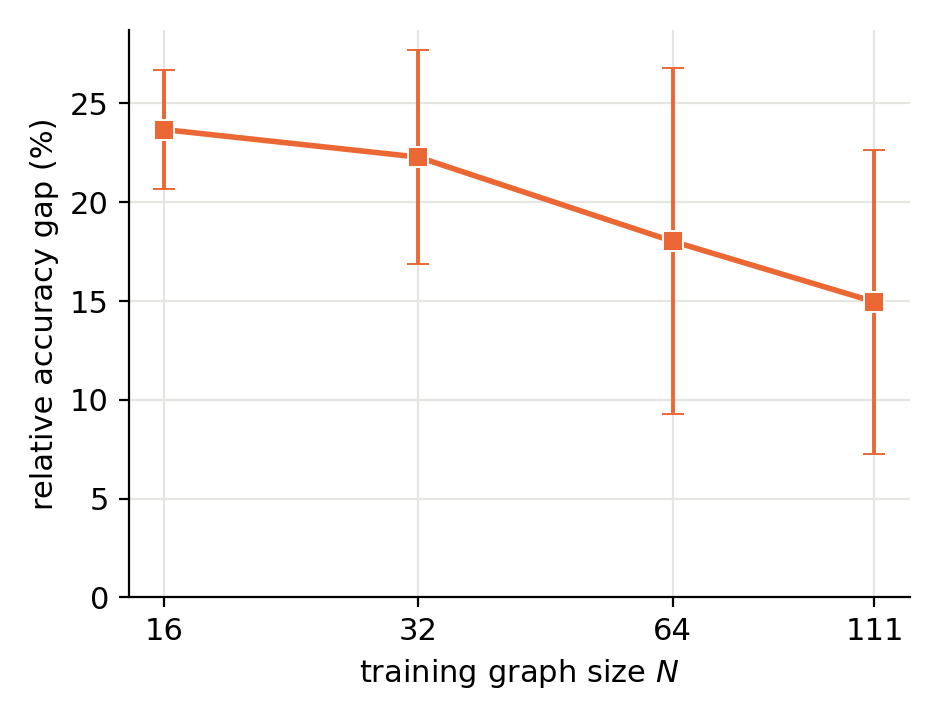}
  \caption{\centering{\textbf{NCI1}}}
\end{subfigure}

\vspace{0.3em}

% Row 2
\begin{subfigure}[t]{0.32\textwidth}
  \centering\includegraphics[width=\linewidth]{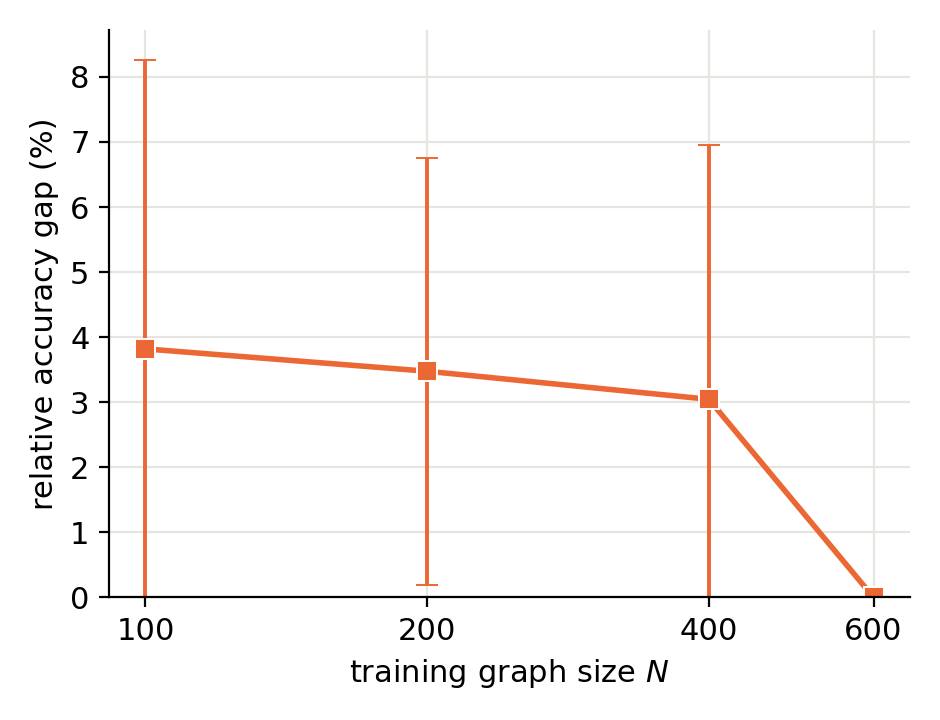}
  \caption{\centering{\textbf{PROTEINS}}}
\end{subfigure}
\begin{subfigure}[t]{0.32\textwidth}
  \centering\includegraphics[width=\linewidth]{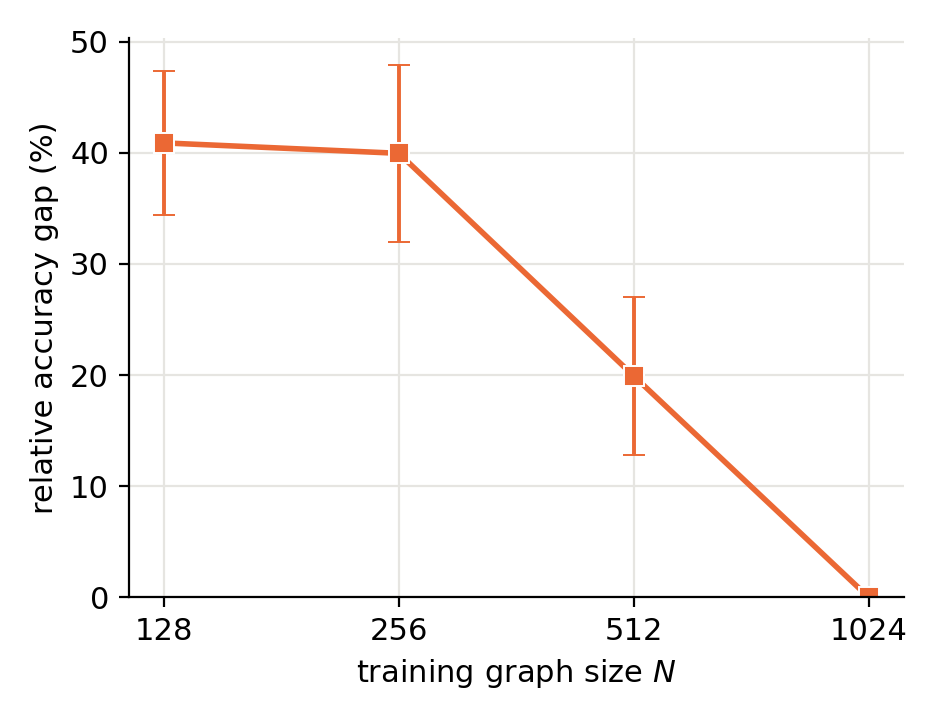}
  \caption{\centering{\textbf{REDDIT-MULTI-5K}}}
\end{subfigure}

\caption{\textbf{Transferability gap across graph sizes (Graphormer-GD, single-head).} Each panel shows the transferability gap as a function of graph size for the corresponding dataset.}
\label{fig:transferability_graphormer_h1}
\end{figure*}

\begin{figure*}[ht!]
\centering
% Row 1
\begin{subfigure}[t]{0.32\textwidth}
  \centering\includegraphics[width=\linewidth]{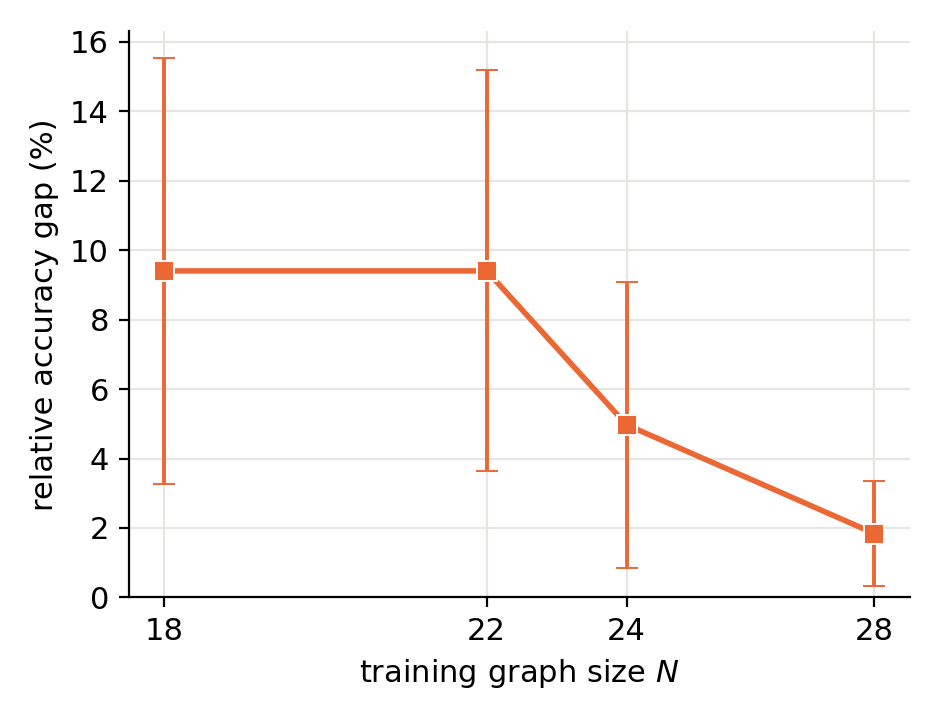}
  \caption{\centering{\textbf{MUTAG}}}
\end{subfigure}
\begin{subfigure}[t]{0.32\textwidth}
  \centering\includegraphics[width=\linewidth]{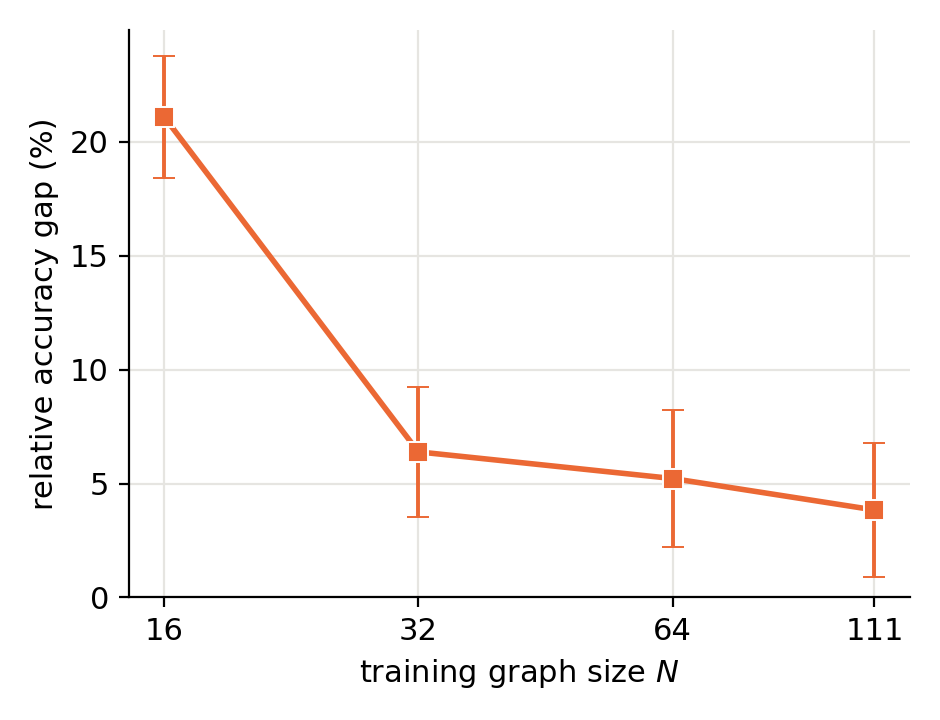}
  \caption{\centering{\textbf{NCI1}}}
\end{subfigure}

\vspace{0.3em}

% Row 2
\begin{subfigure}[t]{0.32\textwidth}
  \centering\includegraphics[width=\linewidth]{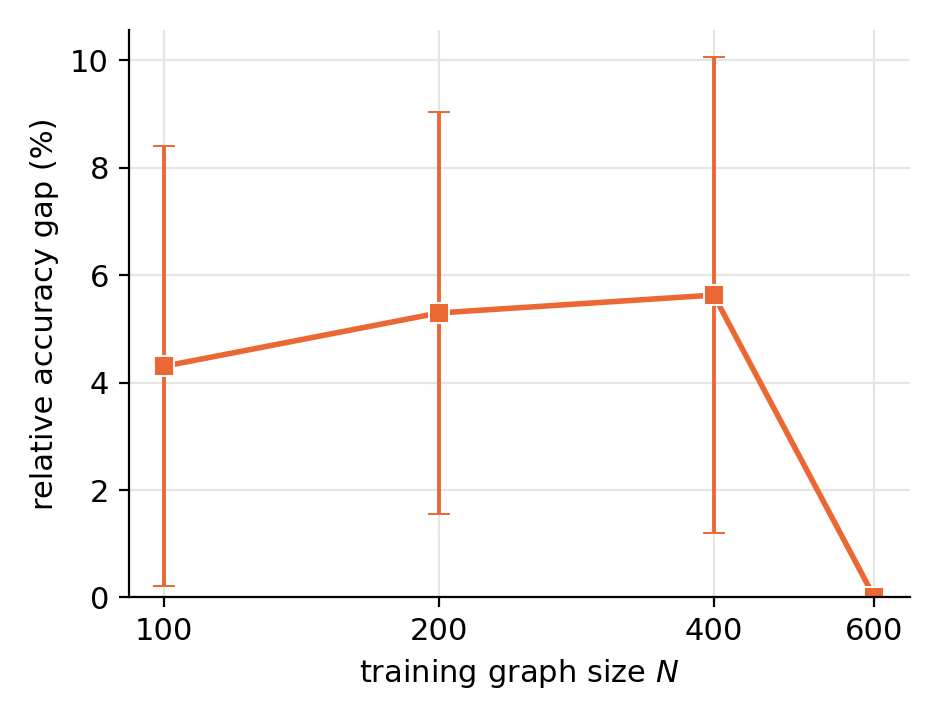}
  \caption{\centering{\textbf{PROTEINS}}}
\end{subfigure}
\begin{subfigure}[t]{0.32\textwidth}
  \centering\includegraphics[width=\linewidth]{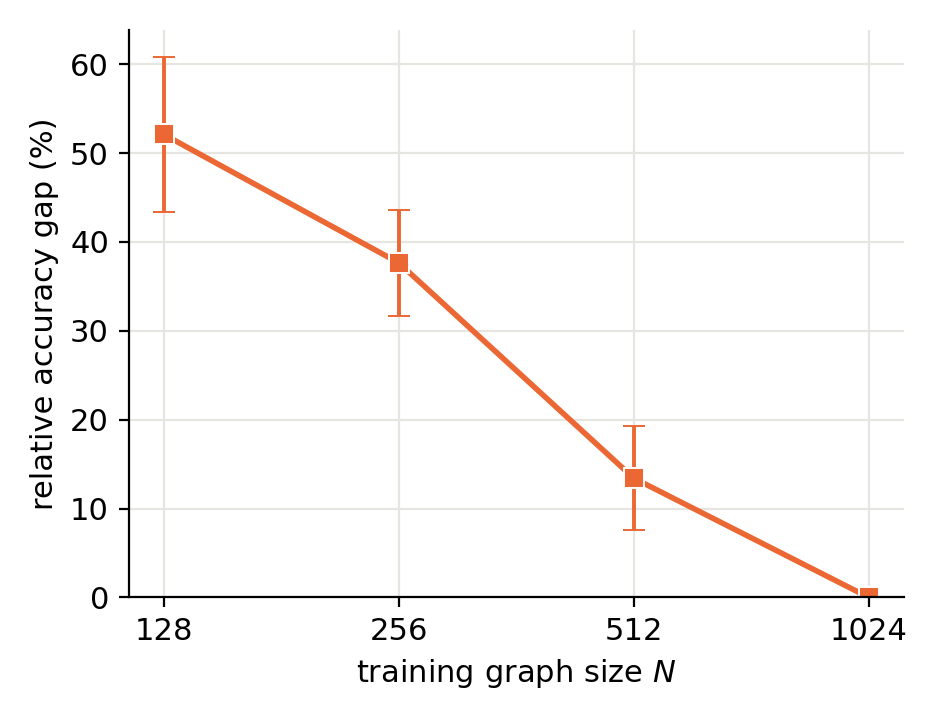}
  \caption{\centering{\textbf{REDDIT-MULTI-5K}}}
\end{subfigure}

\caption{\textbf{Transferability gap across graph sizes (GRIT, single-head).} Each panel shows the transferability gap as a function of graph size for the corresponding dataset.}
\label{fig:transferability_grit_h1}
\end{figure*}

\begin{figure*}[ht!]
\centering
% Row 1
\begin{subfigure}[t]{0.24\textwidth}
  \centering\includegraphics[width=\linewidth]{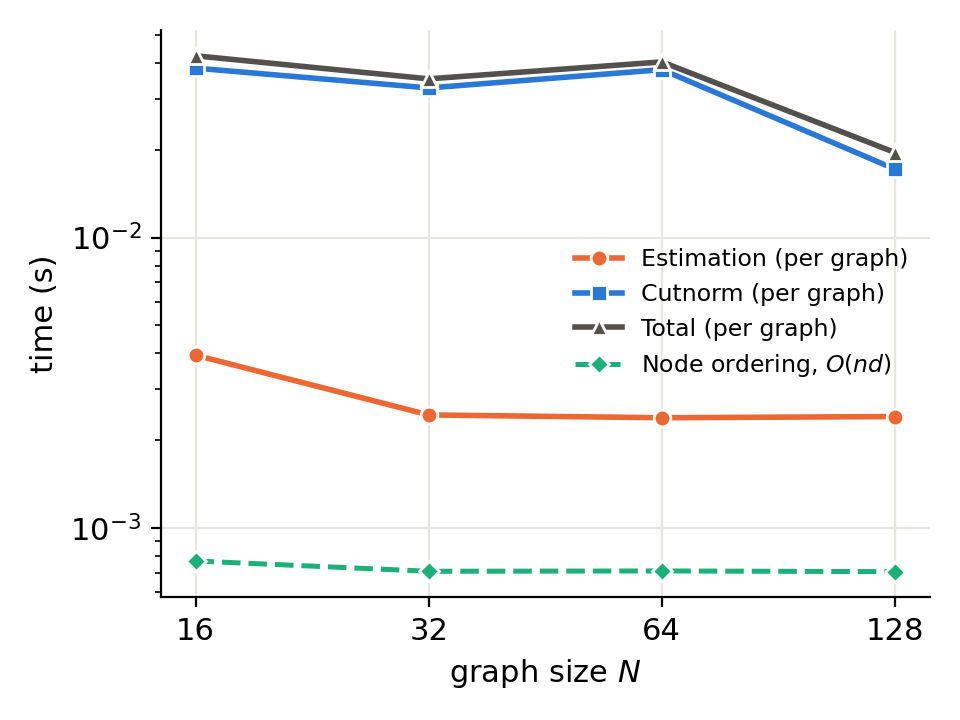}
  \caption{\centering{\textbf{BFS}}}
\end{subfigure}
\begin{subfigure}[t]{0.24\textwidth}
  \centering\includegraphics[width=\linewidth]{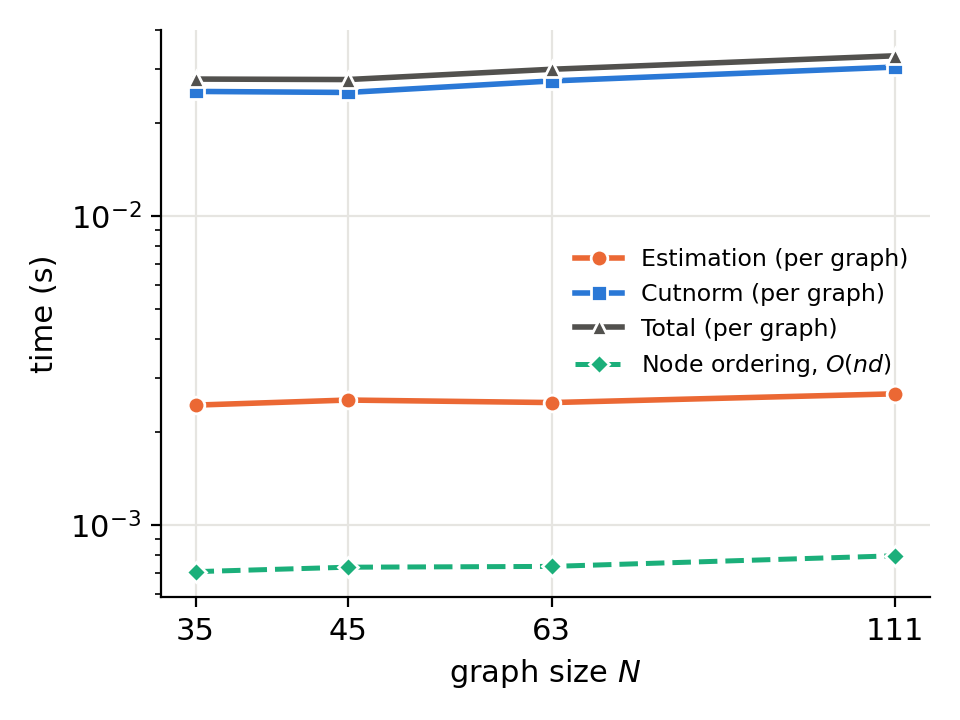}
  \caption{\centering{\textbf{COLLAB}}}
\end{subfigure}
\begin{subfigure}[t]{0.24\textwidth}
  \centering\includegraphics[width=\linewidth]{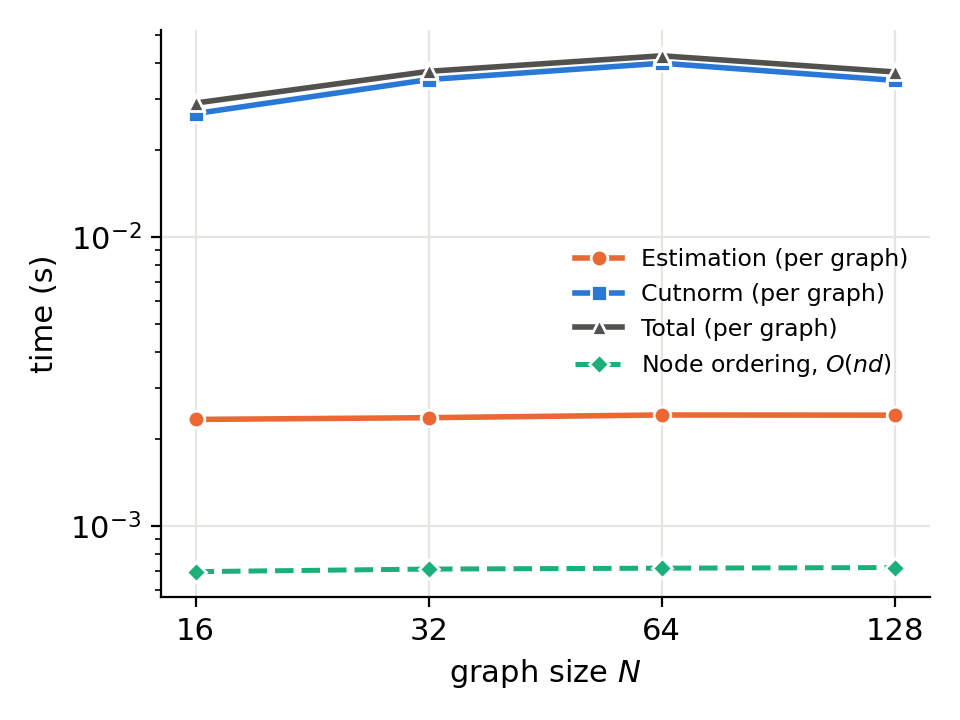}
  \caption{\centering{\textbf{Dijkstra}}}
\end{subfigure}
\begin{subfigure}[t]{0.24\textwidth}
  \centering\includegraphics[width=\linewidth]{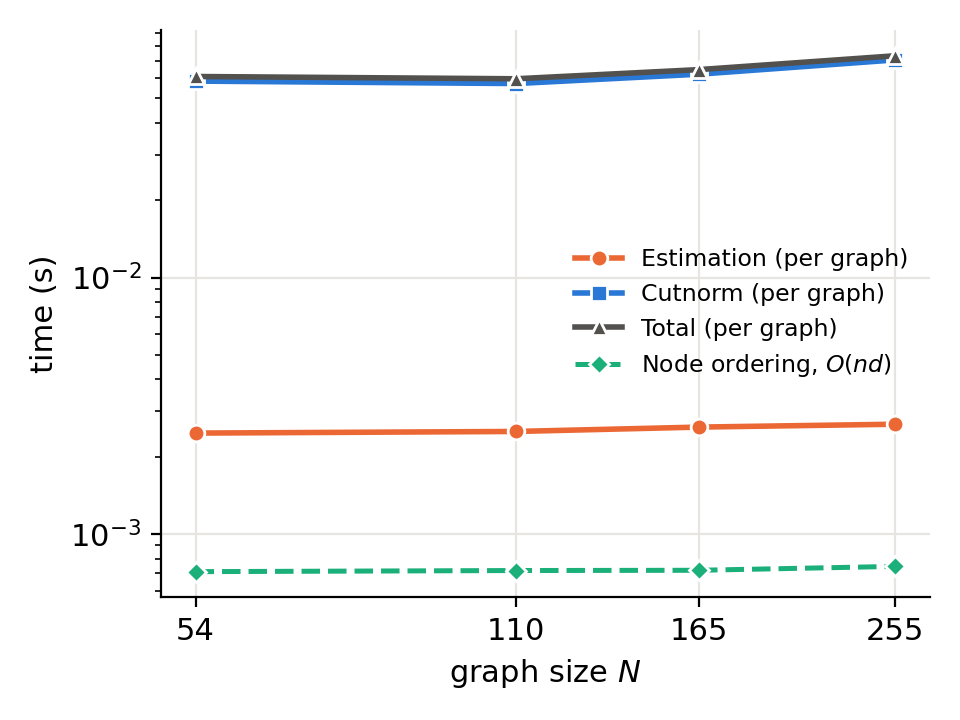}
  \caption{\centering{\textbf{LRGBPeptides}}}
\end{subfigure}

\vspace{0.3em}

% Row 2
\begin{subfigure}[t]{0.24\textwidth}
  \centering\includegraphics[width=\linewidth]{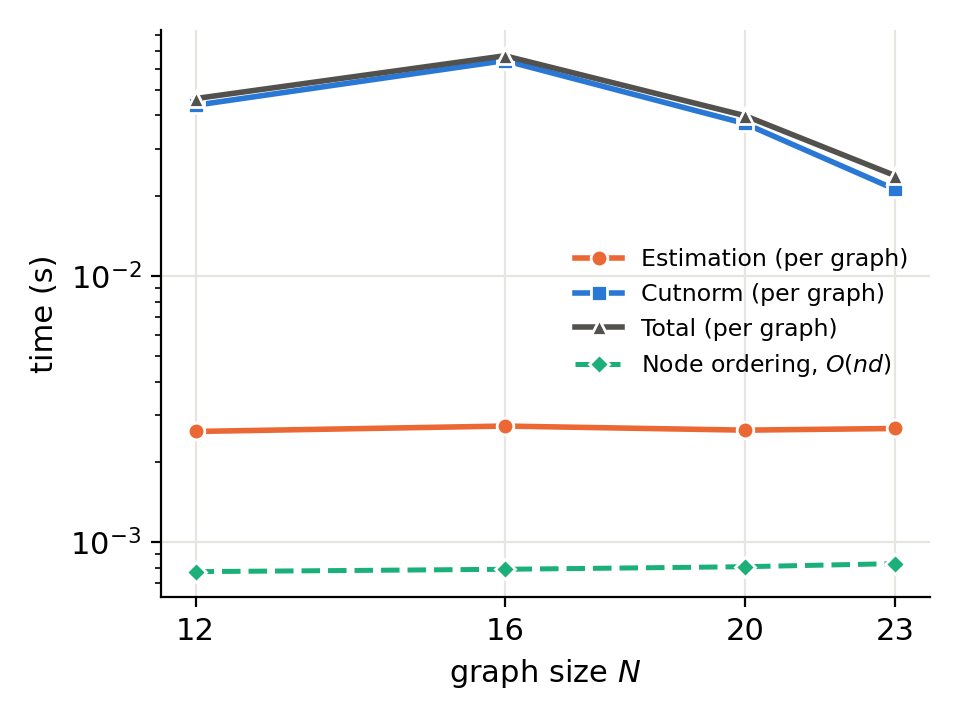}
  \caption{\centering{\textbf{MUTAG}}}
\end{subfigure}
\begin{subfigure}[t]{0.24\textwidth}
  \centering\includegraphics[width=\linewidth]{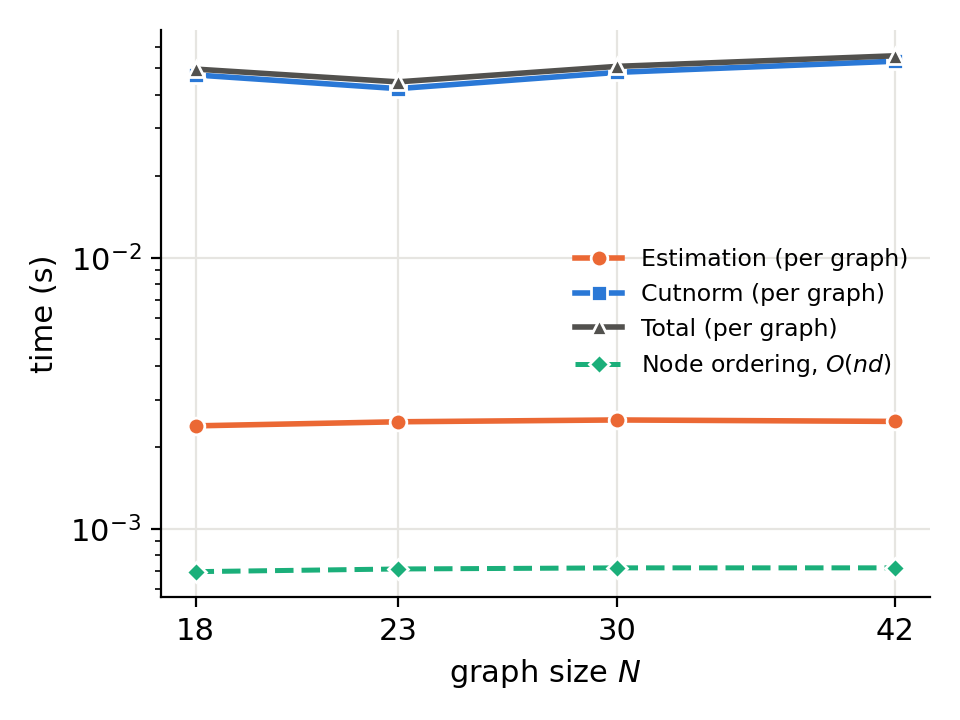}
  \caption{\centering{\textbf{NCI1}}}
\end{subfigure}
\begin{subfigure}[t]{0.24\textwidth}
  \centering\includegraphics[width=\linewidth]{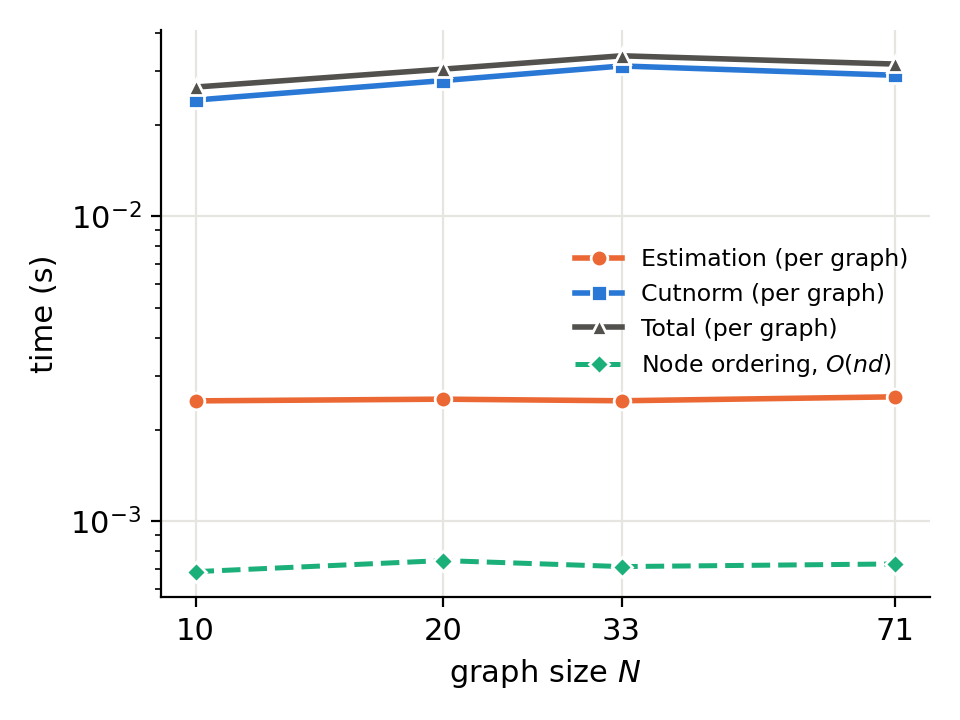}
  \caption{\centering{\textbf{PROTEINS}}}
\end{subfigure}

% \caption{\textbf{Runtime analysis of the graphon estimation pipeline.} Each panel shows the per-graph runtime (in seconds, log scale) as a function of graph size $n$ for the corresponding dataset. The runtime is decomposed into: \blue{Sort+smooth}, which includes degree-based canonicalization and block averaging; {\color{orange}Cutnorm}, the approximate cut-norm computation; and \green{Total}. The Sort+smooth step remains nearly constant across graph sizes, while the cut-norm computation dominates the total runtime. Overall, the pipeline processes each graph in milliseconds, demonstrating computational efficiency even for larger graphs.}
\caption{\textbf{Runtime analysis of the graphon estimation pipeline.} Each panel shows the per-graph runtime (in seconds, log scale) as a function of graph size $n$ for the corresponding dataset. The runtime is decomposed into \orange{Estimation}, which computes the attention graph, orders its nodes, and computes the block averages, and \textcolor[HTML]{2A78D6}{Cutnorm}, the approximate cut-norm computation, which add up to the \textcolor{darkgray}{Total}. The \textcolor[HTML]{1BAF7A}{Node ordering} curve times the $\ccalO(nd)$ computation of the ordering statistic from queries and keys, without forming the $n\times n$ matrix. Estimation remains nearly constant across graph sizes, while the cut-norm computation dominates the total runtime. Overall, the pipeline processes each graph in tens of milliseconds, even for larger graphs.}
\label{fig:runtime}
\end{figure*}

\begin{figure*}[ht!]
\centering
% Row 1: BFS
\begin{subfigure}[t]{0.24\textwidth}
  \centering\includegraphics[width=\linewidth]{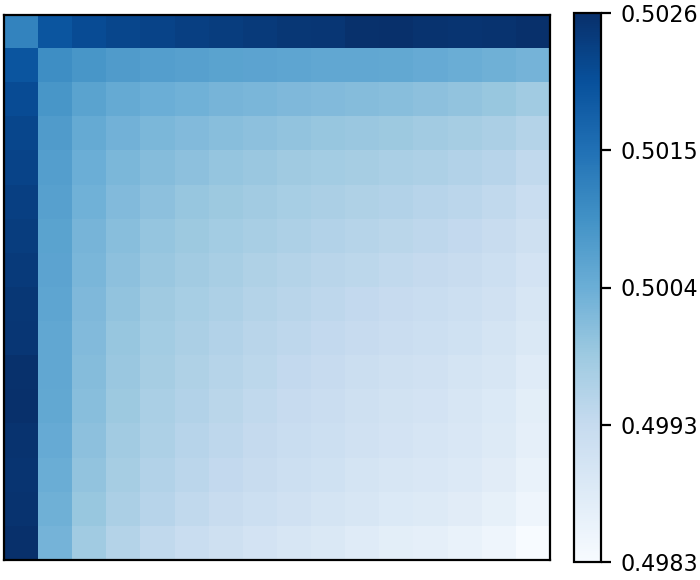}
\end{subfigure}
\begin{subfigure}[t]{0.24\textwidth}
  \centering\includegraphics[width=\linewidth]{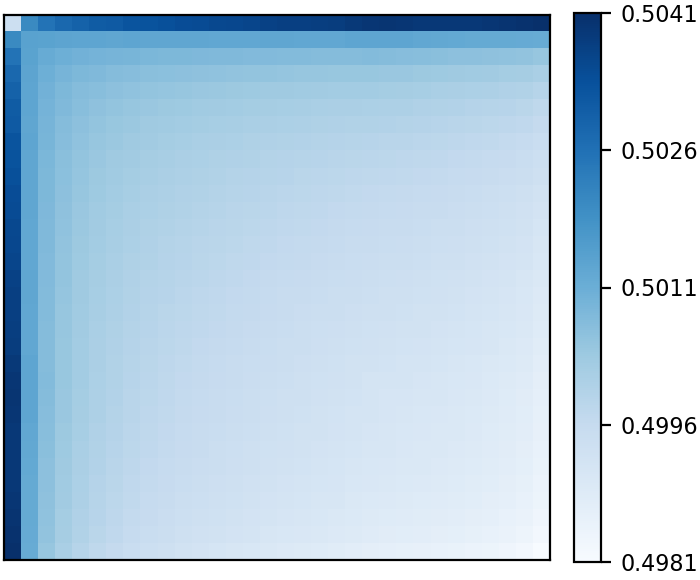}
\end{subfigure}
\begin{subfigure}[t]{0.24\textwidth}
  \centering\includegraphics[width=\linewidth]{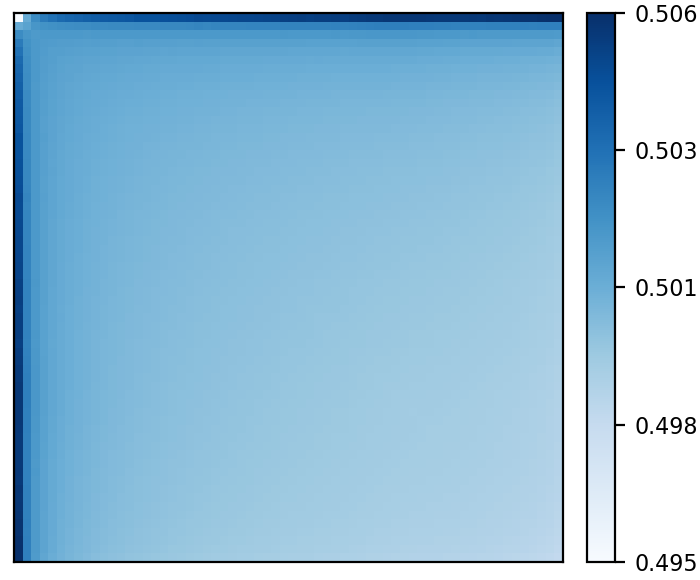}
\end{subfigure}
\begin{subfigure}[t]{0.24\textwidth}
  \centering\includegraphics[width=\linewidth]{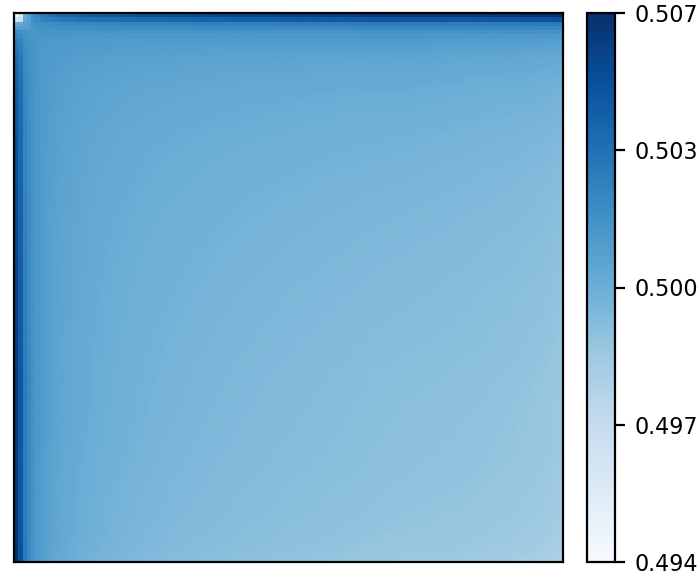}
\end{subfigure}

\vspace{0.3em}

% Row 2: COLLAB
\begin{subfigure}[t]{0.24\textwidth}
  \centering\includegraphics[width=\linewidth]{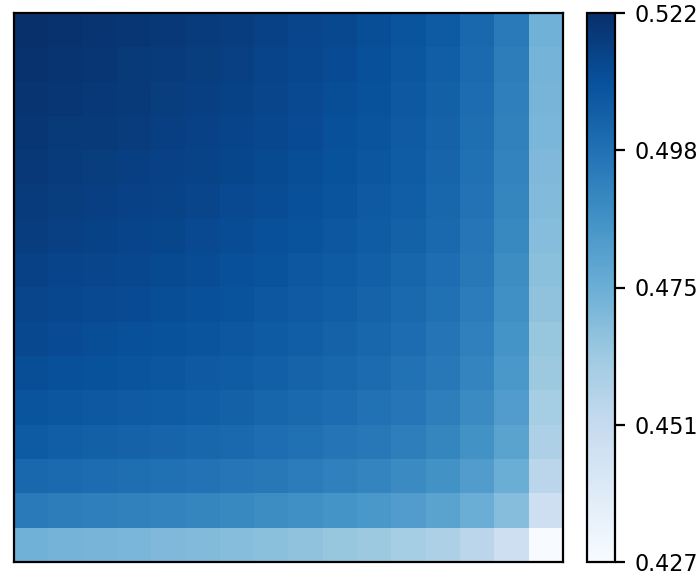}
\end{subfigure}
\begin{subfigure}[t]{0.24\textwidth}
  \centering\includegraphics[width=\linewidth]{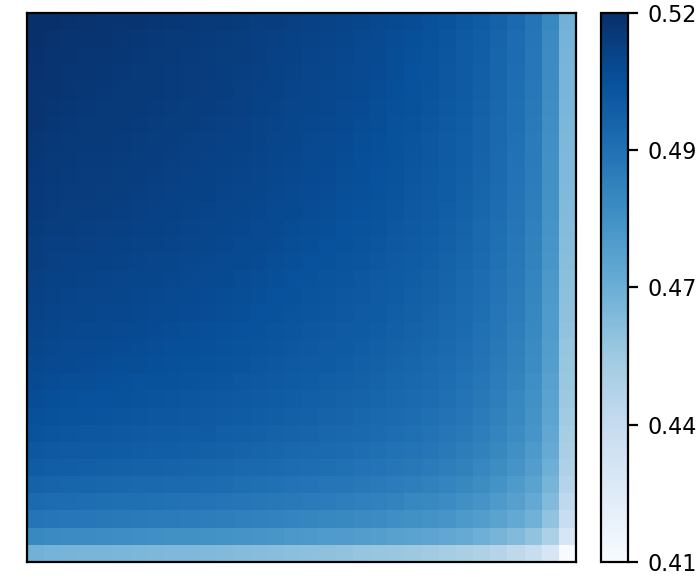}
\end{subfigure}
\begin{subfigure}[t]{0.24\textwidth}
  \centering\includegraphics[width=\linewidth]{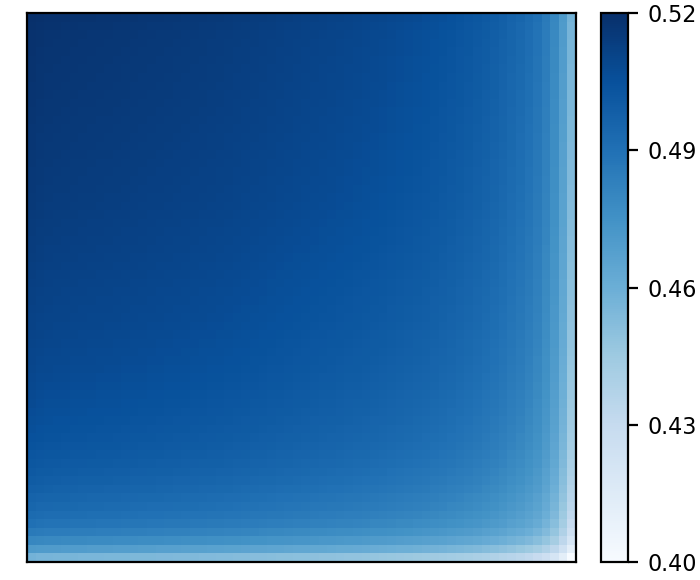}
\end{subfigure}
\begin{subfigure}[t]{0.24\textwidth}
  \centering\includegraphics[width=\linewidth]{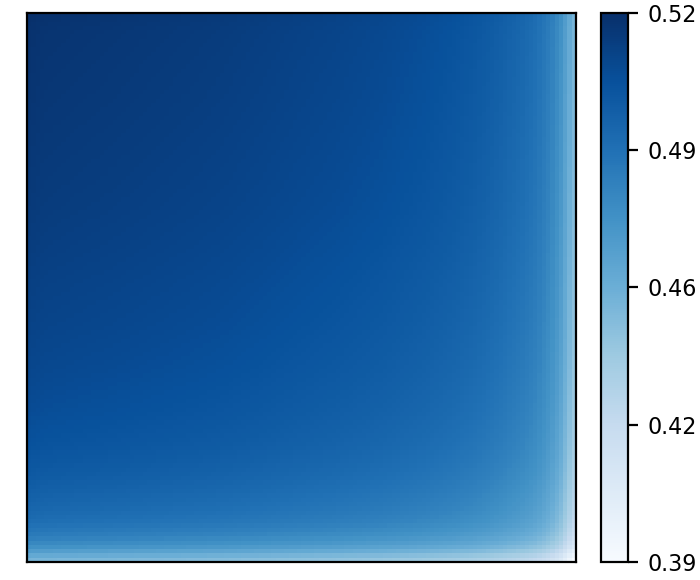}
\end{subfigure}

\vspace{0.3em}

% Row 3: Dijkstra
\begin{subfigure}[t]{0.24\textwidth}
  \centering\includegraphics[width=\linewidth]{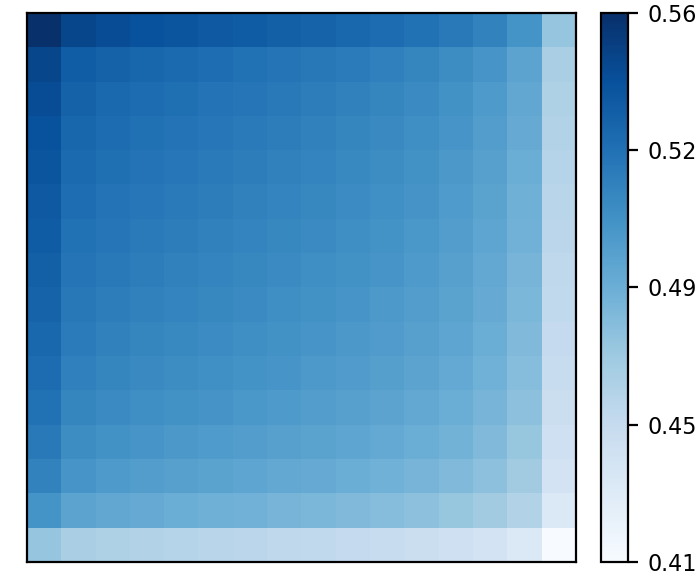}
\end{subfigure}
\begin{subfigure}[t]{0.24\textwidth}
  \centering\includegraphics[width=\linewidth]{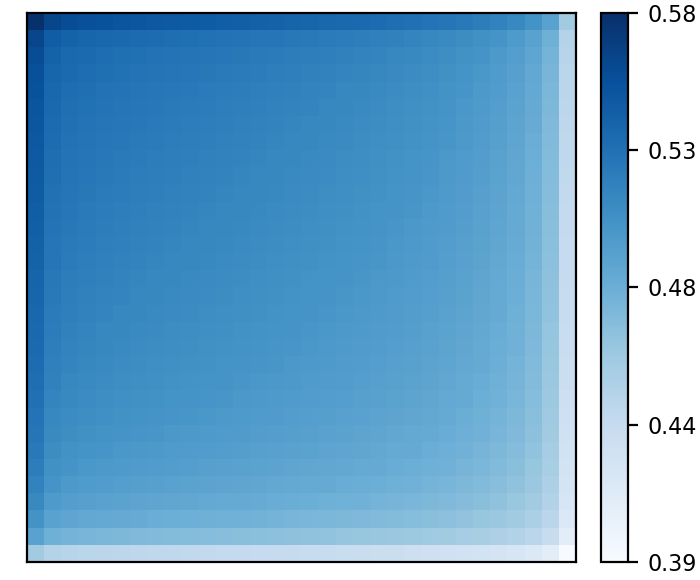}
\end{subfigure}
\begin{subfigure}[t]{0.24\textwidth}
  \centering\includegraphics[width=\linewidth]{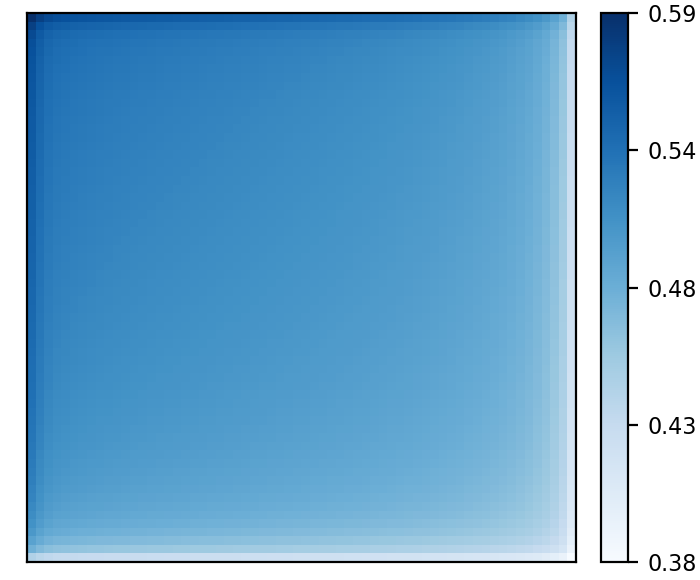}
\end{subfigure}
\begin{subfigure}[t]{0.24\textwidth}
  \centering\includegraphics[width=\linewidth]{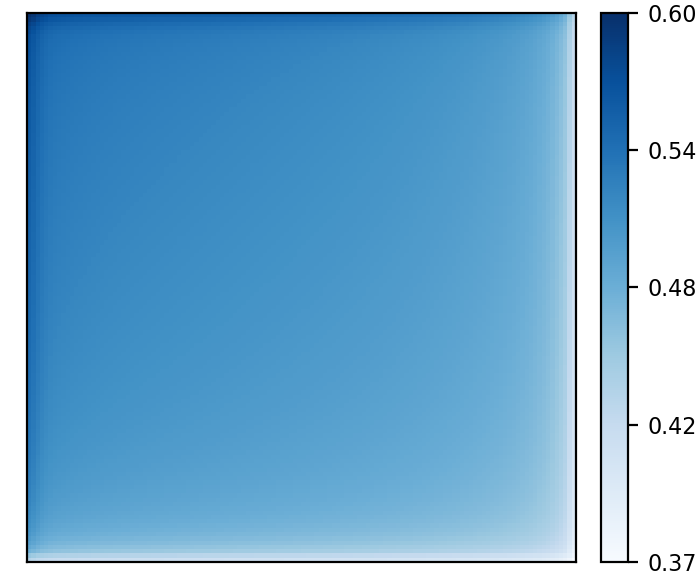}
\end{subfigure}

\vspace{0.3em}

% Row 4: LRGBPeptides
\begin{subfigure}[t]{0.24\textwidth}
  \centering\includegraphics[width=\linewidth]{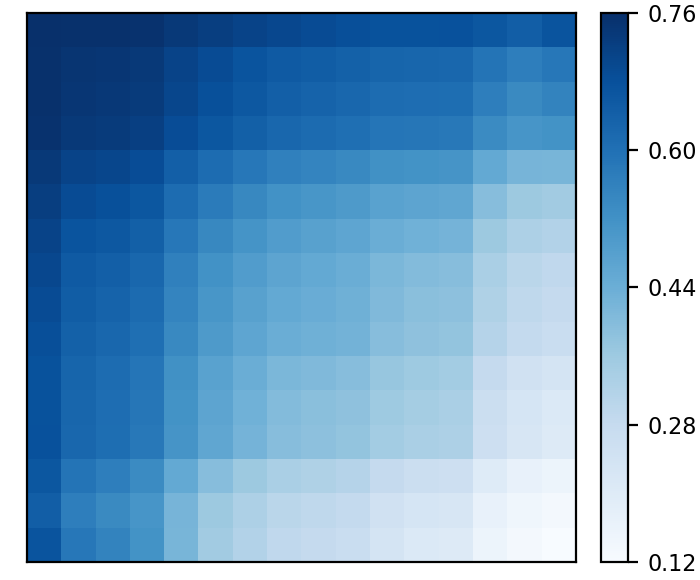}
\end{subfigure}
\begin{subfigure}[t]{0.24\textwidth}
  \centering\includegraphics[width=\linewidth]{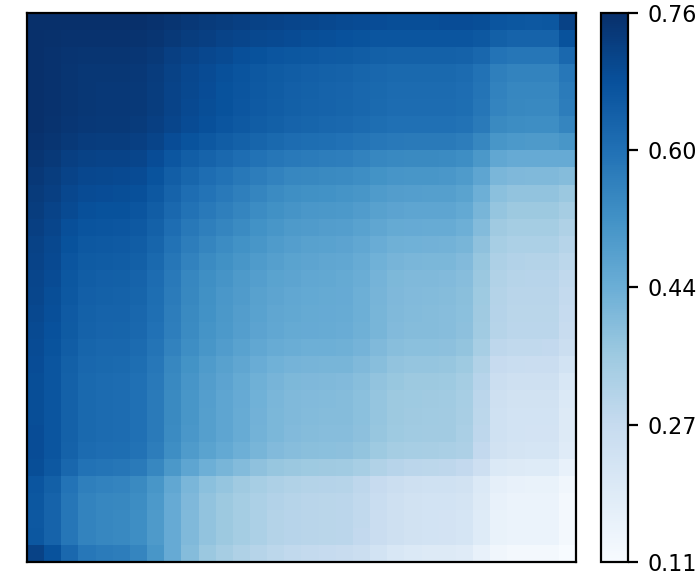}
\end{subfigure}
\begin{subfigure}[t]{0.24\textwidth}
  \centering\includegraphics[width=\linewidth]{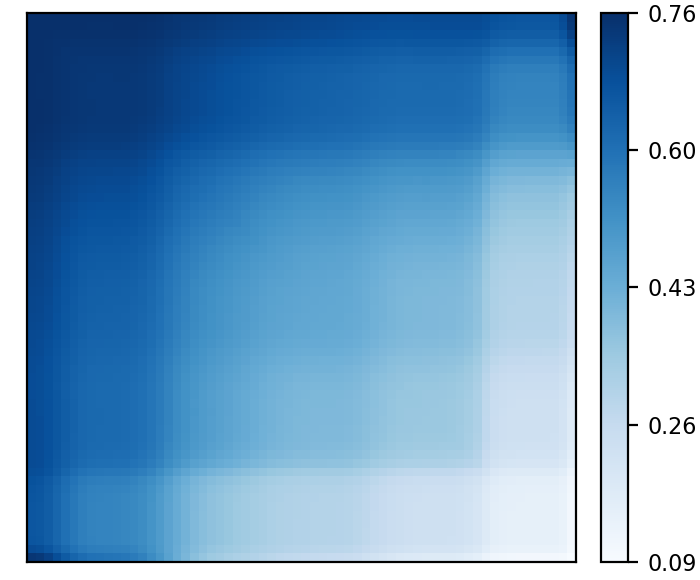}
\end{subfigure}
\begin{subfigure}[t]{0.24\textwidth}
  \centering\includegraphics[width=\linewidth]{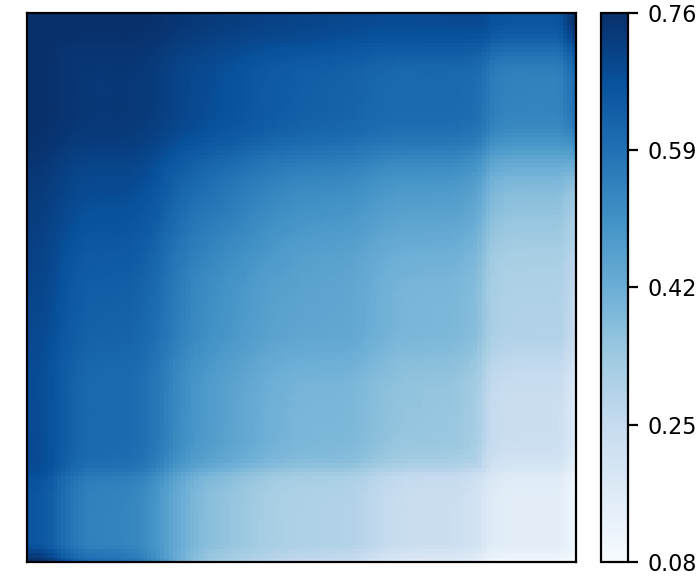}
\end{subfigure}

\vspace{0.3em}

% Row 5: PROTEINS
\begin{subfigure}[t]{0.24\textwidth}
  \centering\includegraphics[width=\linewidth]{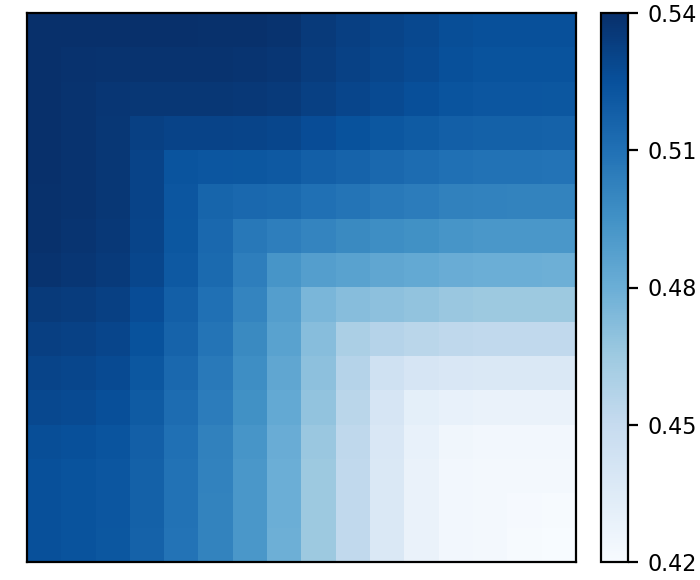}
\end{subfigure}
\begin{subfigure}[t]{0.24\textwidth}
  \centering\includegraphics[width=\linewidth]{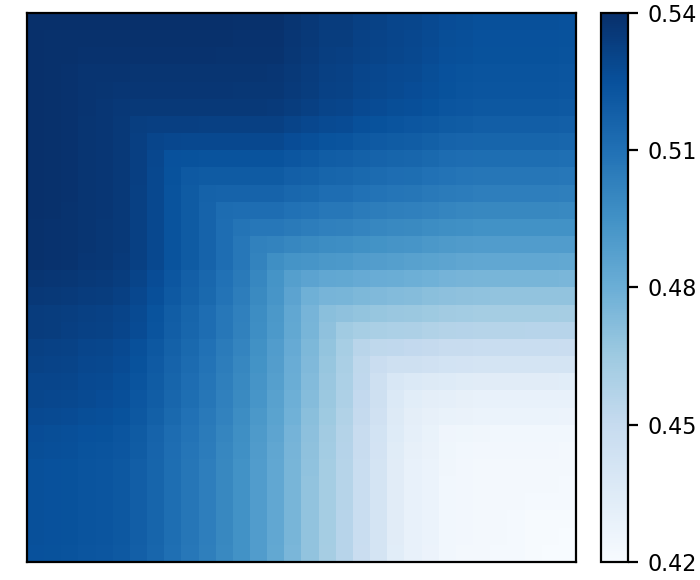}
\end{subfigure}
\begin{subfigure}[t]{0.24\textwidth}
  \centering\includegraphics[width=\linewidth]{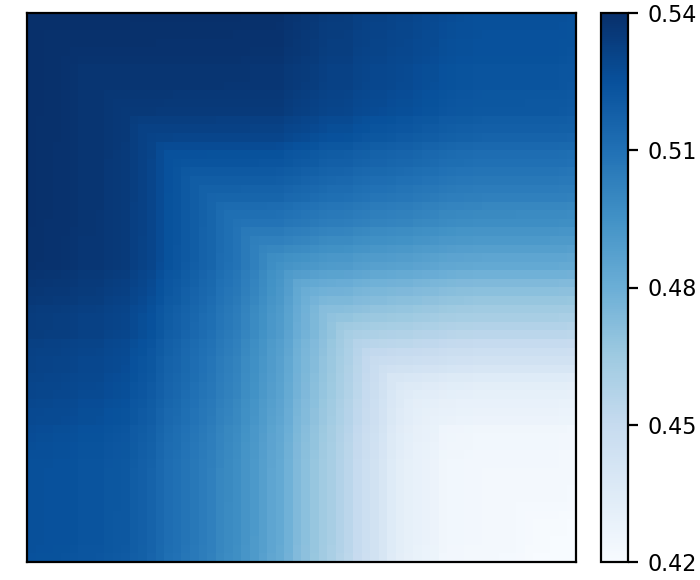}
\end{subfigure}
\begin{subfigure}[t]{0.24\textwidth}
  \centering\includegraphics[width=\linewidth]{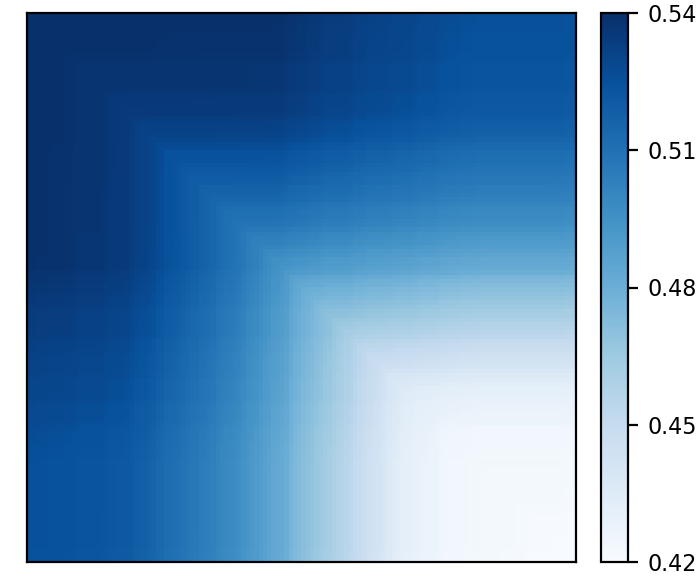}
\end{subfigure}

\caption{\textbf{Sensitivity of graphon estimation to block size $K$.} Each row corresponds to a dataset (BFS, COLLAB, Dijkstra, LRGBPeptides, PROTEINS), and each column shows the estimated graphon for increasing block sizes ($K \in \{16, 32, 64, 128\}$). Larger values of $K$ yield finer-grained estimates. The estimated graphons remain qualitatively stable across different values of $K$, demonstrating robustness of the estimation procedure to this hyperparameter choice.}
\label{fig:k_sensitivity_graphon}
\end{figure*}

\begin{figure*}[ht!]
\centering
\begin{subfigure}[t]{0.45\textwidth}
  \centering\includegraphics[width=\linewidth]{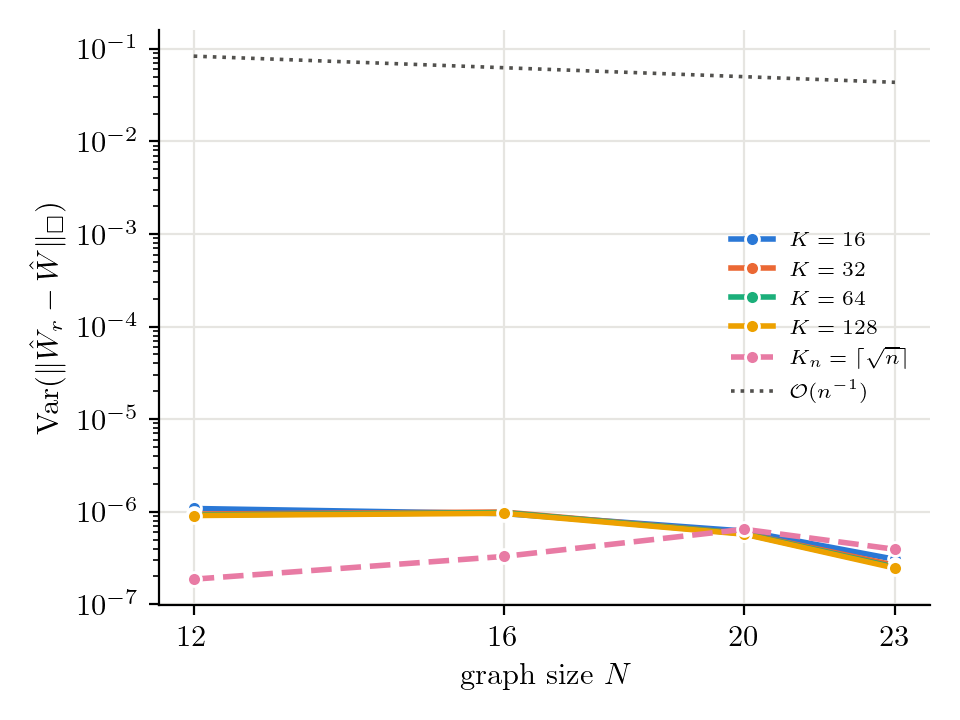}
  \caption{\centering{\textbf{MUTAG}}}
\end{subfigure}
\begin{subfigure}[t]{0.45\textwidth}
  \centering\includegraphics[width=\linewidth]{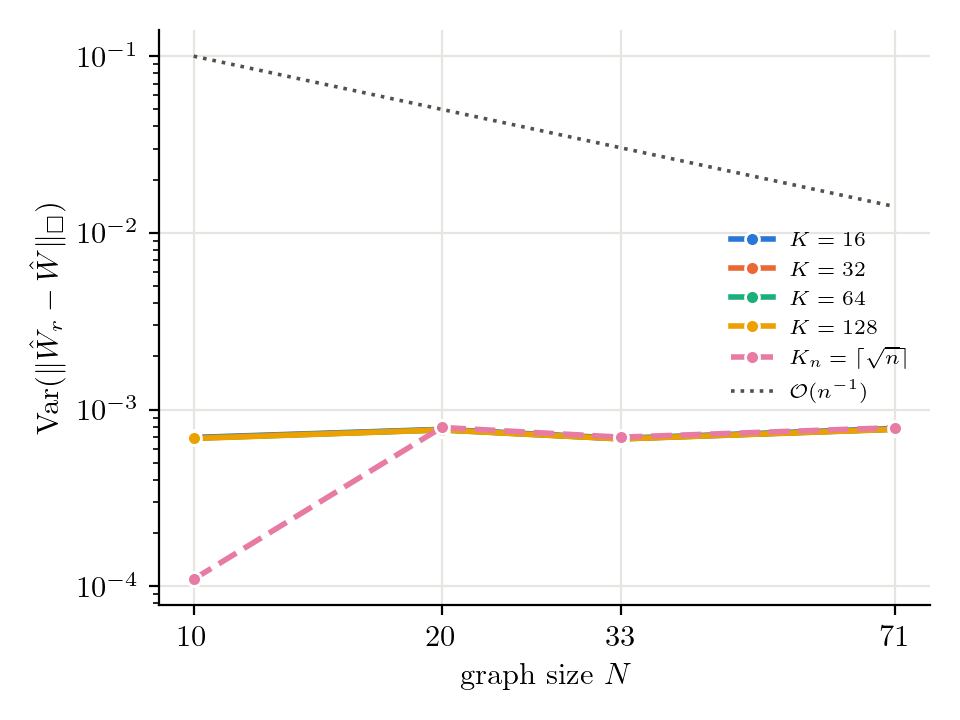}
  \caption{\centering{\textbf{PROTEINS}}}
\end{subfigure}

% \caption{\textbf{Sensitivity of hypothesis test to block size $K$.} Each panel shows the empirical variance of cut-distance as a function of graph size $n$ for different values of $K \in \{16, 32, 64, 128\}$, compared against the \green{theoretical bound}. Larger values of $K$ tend to produce lower variance estimates, particularly for datasets with smaller graphs (e.g., MUTAG), as the block-averaging procedure introduces zero-valued regions when $K$ exceeds the graph size. For MUTAG, even with $K=16$, the empirical variance remains below the theoretical bound. Our choice of $K=64$ balances resolution and robustness, since most graphs in our experiments have sizes exceeding this threshold.}
\caption{\textbf{Sensitivity of hypothesis test to block size $K$.} Each panel shows the empirical variance of the cut-norm as a function of graph size $n$ for $K\in\{16,32,64,128\}$ and for the growing resolution $K_n=\lceil\sqrt n\rceil$ (dashed), compared against the \textcolor{darkgray}{scaling proxy} $\ccalO(n^{-1})$ (dotted). The curves for fixed $K$ nearly coincide and lie well below the proxy, and the growing resolution gives a lower variance only at the smallest sizes, where it uses few blocks. The results therefore do not depend on the default $K=64$.}
\label{fig:k_sensitivity_hypothesis}
\end{figure*}

\begin{comment}
\begin{figure*}[ht!]
\centering
\includegraphics[width=\linewidth]{Figures/Rebuttal/framework.png}
\caption{\textbf{Visual explanation of the cut distance $\delta_\square$ and graphon computation.} \textit{Top (cut distance)}: The cut distance roughly measures the largest discrepancy over node subsets $S, T$. \textit{Bottom (graphon computation)}: From a dense, weighted attention graph with $n = 8$, the matrix is symmetrized, canonicalized via degree sorting, block-averaged, and finally used to estimate a dataset-level attention graphon.}
\label{fig:pipeline}
\end{figure*}
\end{comment}

\begin{figure*}[t]
\centering
\begin{subfigure}[t]{0.24\textwidth}
  \centering\includegraphics[width=\linewidth]{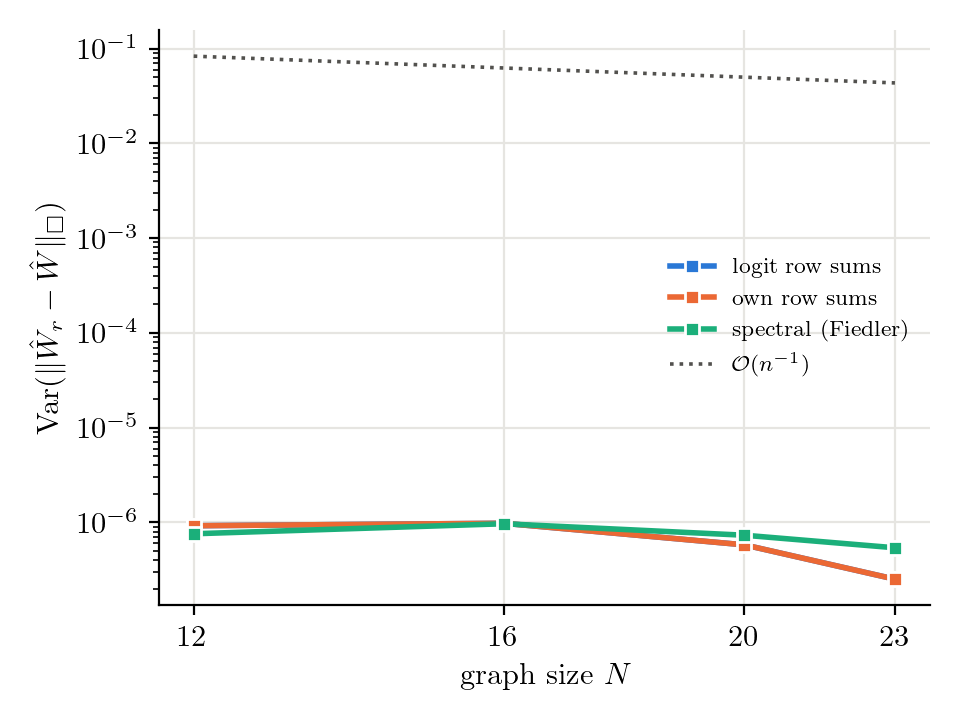}
  \caption{\centering{\textbf{MUTAG} %\\ ($N\in\{40,128,256,650\}$)
  }
  }
\end{subfigure}
\begin{subfigure}[t]{0.24\textwidth}
  \centering\includegraphics[width=\linewidth]{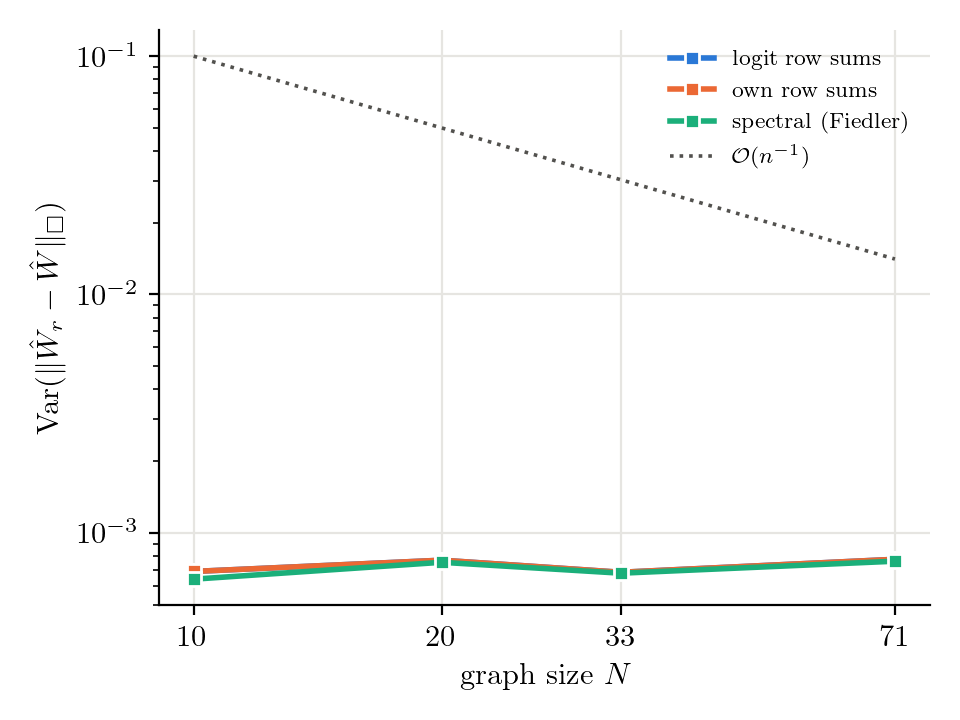}
  \caption{\centering{\textbf{PROTEINS}% \\ ($N\in\{128,256,512,1024\}$)
  }}
\end{subfigure}
\begin{subfigure}[t]{0.24\textwidth}
  \centering\includegraphics[width=\linewidth]{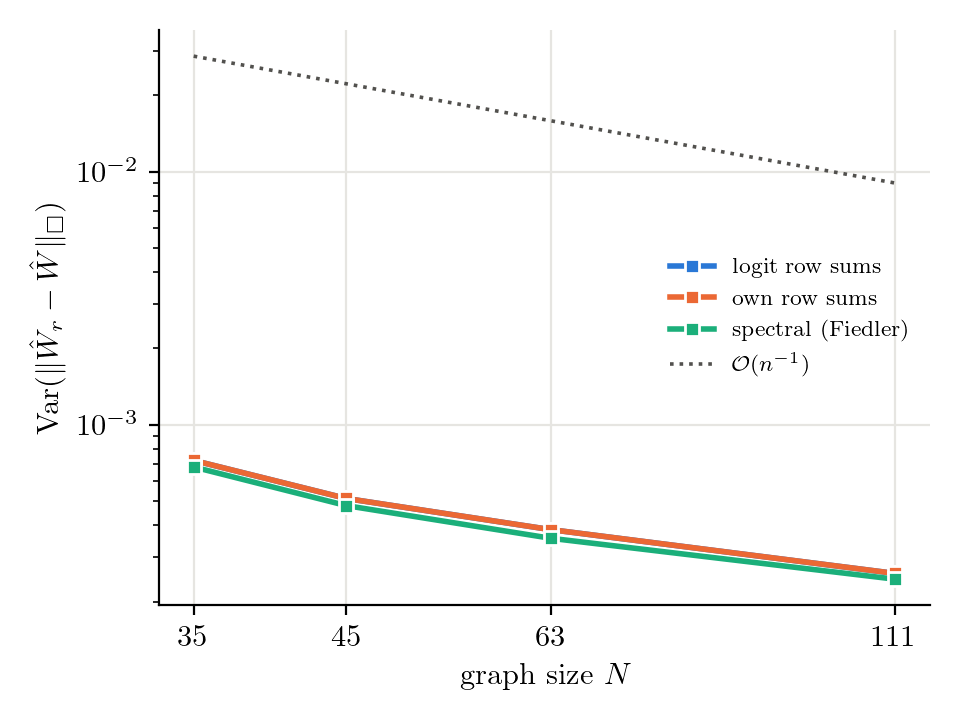}
  \caption{\centering{\textbf{COLLAB} %\\ ($N\in\{128,256,512,1024\}$)
  }}
\end{subfigure}
\begin{subfigure}[t]{0.24\textwidth}
  \centering\includegraphics[width=\linewidth]{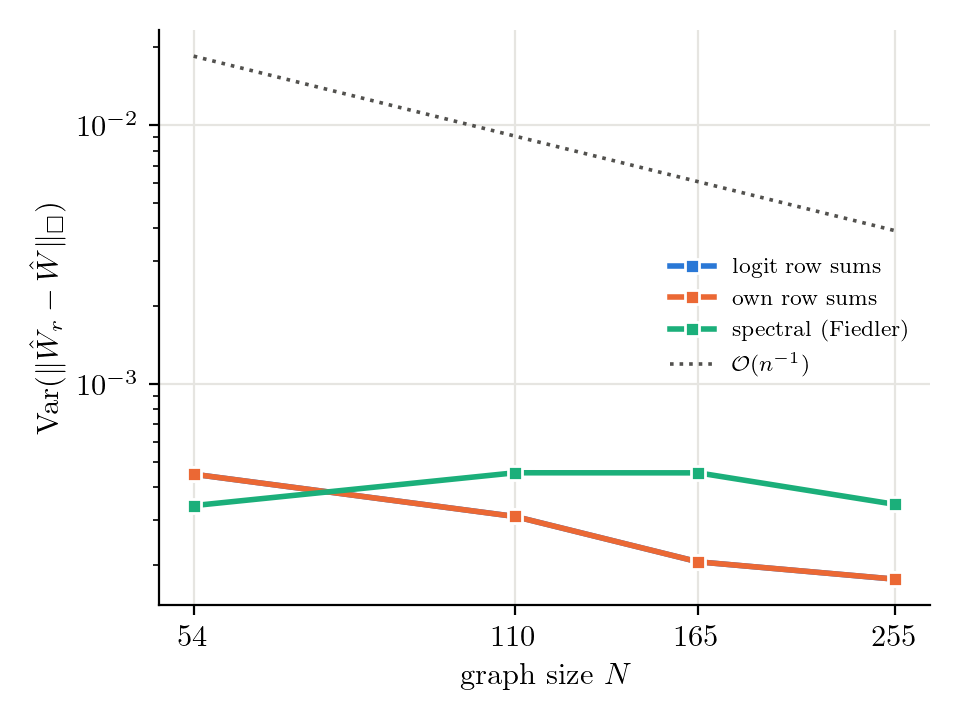}
  \caption{\centering{\textbf{LRGBPeptides} %\\ ($N\in\{128,256,512,1024\}$)
  }}
\end{subfigure}

\caption{\textbf{Sensitivity of hypothesis test to canonicalization method.} Each panel compares the empirical variance of cut-distance using two canonicalization methods: \blue{degree sorting} and \red{spectral sorting} (via Fiedler vector), against the \green{theoretical bound}. Despite differences in absolute values, the two methods exhibit nearly identical decay patterns across all datasets, with curves closely aligned in log-log scale. This demonstrates that our findings are robust to the choice of canonicalization method, and the observed variance decay reflects genuine properties of the learned attention rather than artifacts of a particular node ordering strategy.
}
\label{fig:sensitivity_canon_hypothesis}
%\vskip -0.1in
\end{figure*}

\begin{figure*}[ht!]
\centering
% Row 1
\begin{subfigure}[t]{0.45\textwidth}
  \centering\includegraphics[width=\linewidth]{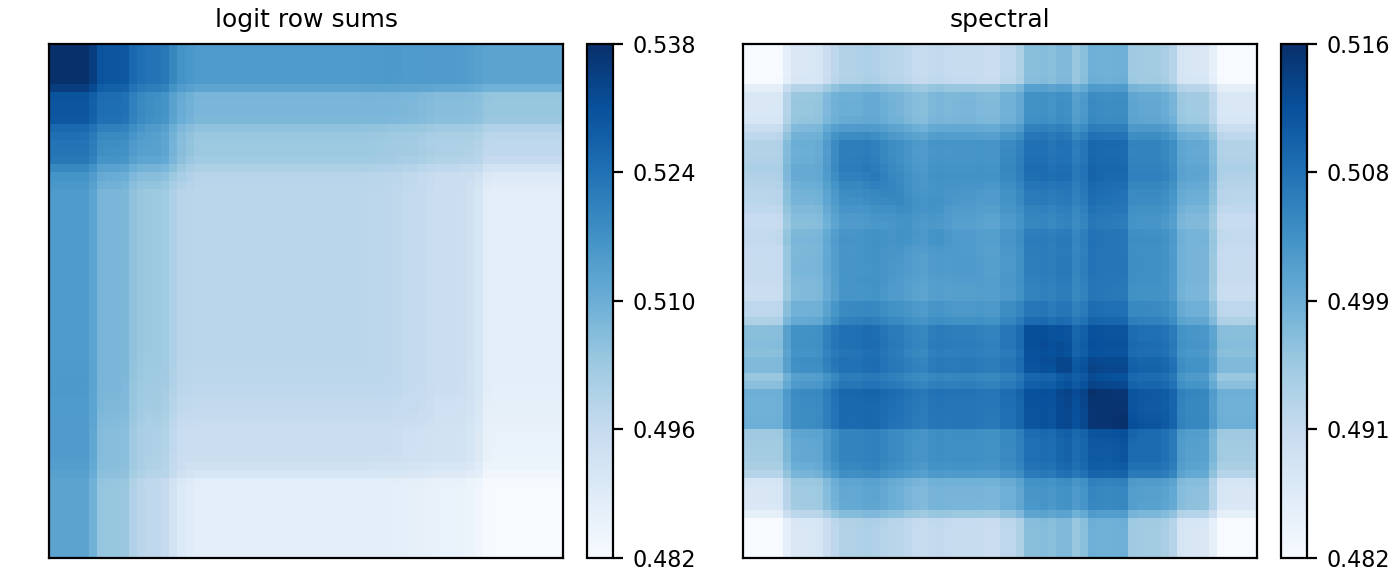}
  % \caption{\centering{\textbf{$N=18$}}}
  \caption{\centering{\textbf{$N=12$}}}
\end{subfigure}
\begin{subfigure}[t]{0.45\textwidth}
  \centering\includegraphics[width=\linewidth]{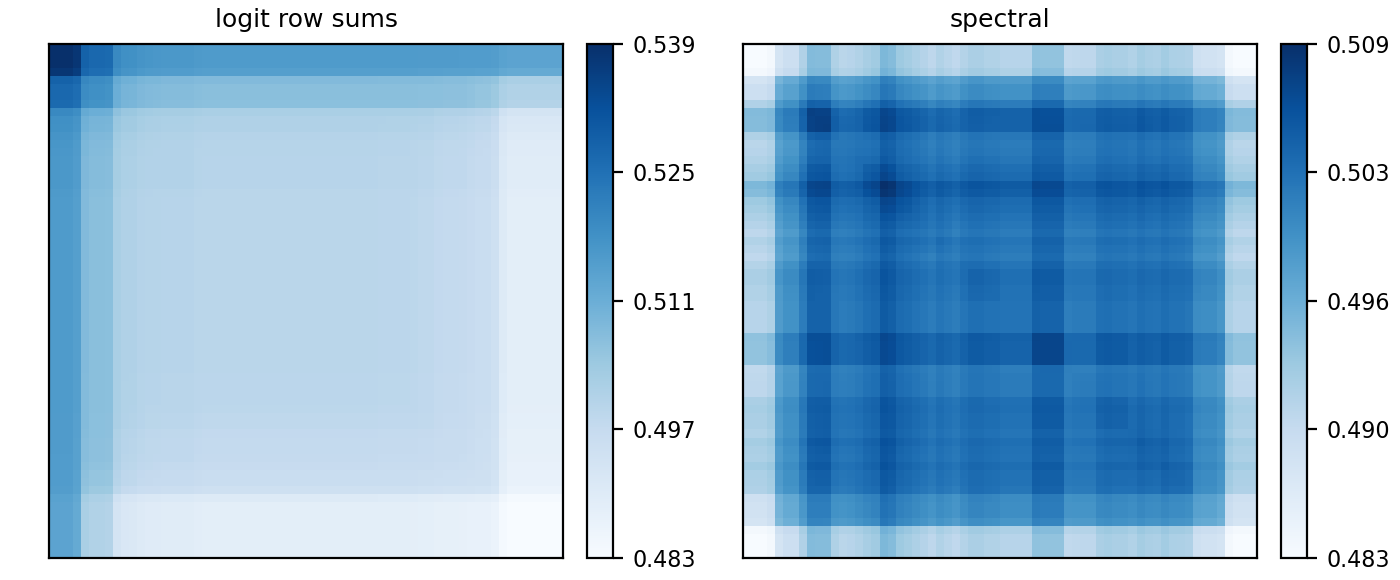}
  % \caption{\centering{\textbf{$N=22$}}}
  \caption{\centering{\textbf{$N=16$}}}
\end{subfigure}

\vspace{0.3em}

% Row 2
\begin{subfigure}[t]{0.45\textwidth}
  \centering\includegraphics[width=\linewidth]{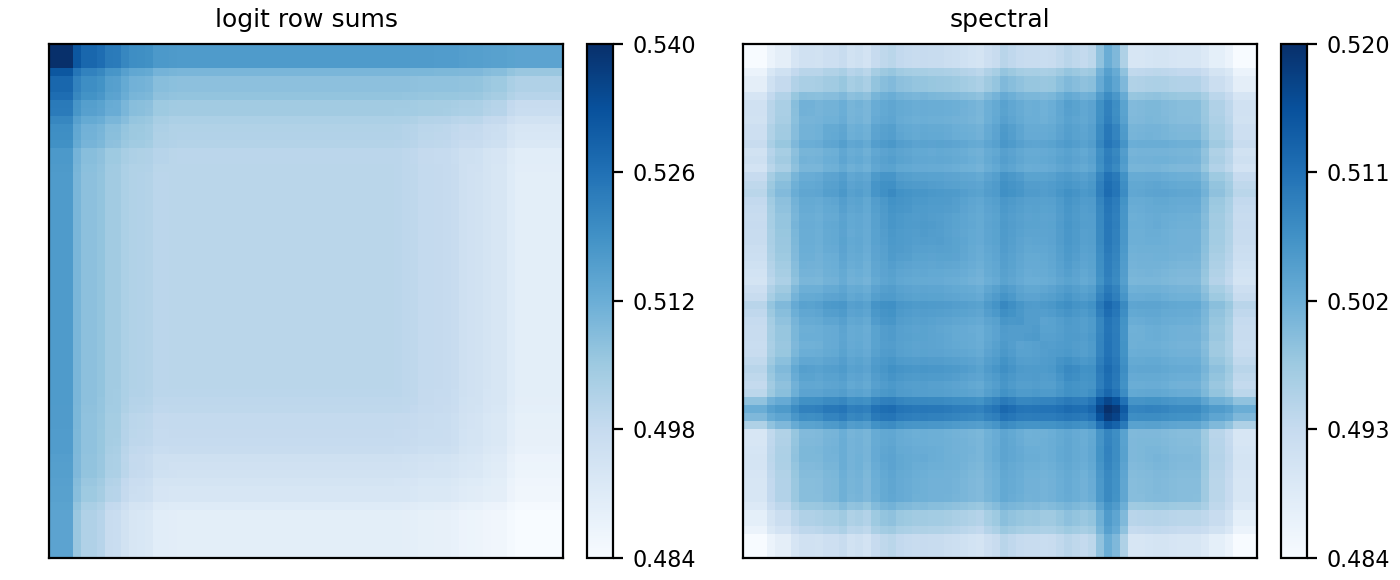}
  % \caption{\centering{\textbf{$N=24$}}}
  \caption{\centering{\textbf{$N=20$}}}
\end{subfigure}
\begin{subfigure}[t]{0.45\textwidth}
  \centering\includegraphics[width=\linewidth]{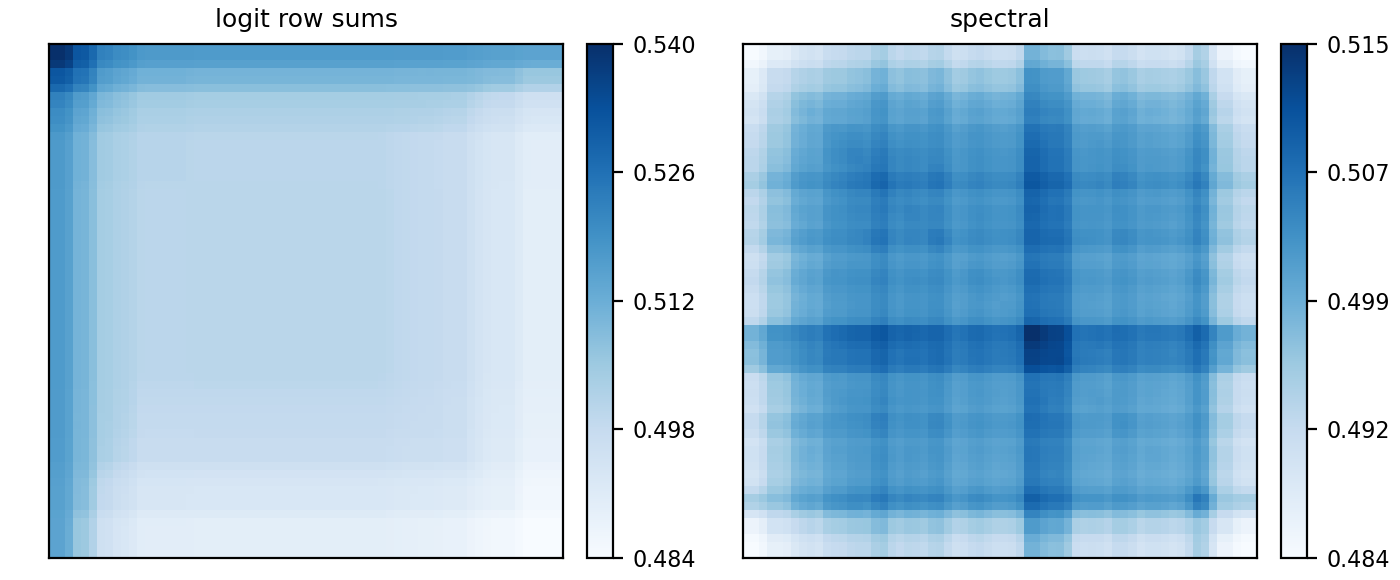}
  % \caption{\centering{\textbf{$N=28$}}}
  \caption{\centering{\textbf{$N=23$}}}
\end{subfigure}

\caption{\textbf{Sensitivity of hypothesis test to canonicalization method (MUTAG).} On each panel: on the \textbf{left} is the estimated graphon via degree sorting, while on the \textbf{right} via Fiedler vector sorting.}
\label{fig:sensitivity_canon_graphon_mutag}
\end{figure*}

\begin{figure*}[ht!]
\centering
% Row 1
\begin{subfigure}[t]{0.45\textwidth}
  \centering\includegraphics[width=\linewidth]{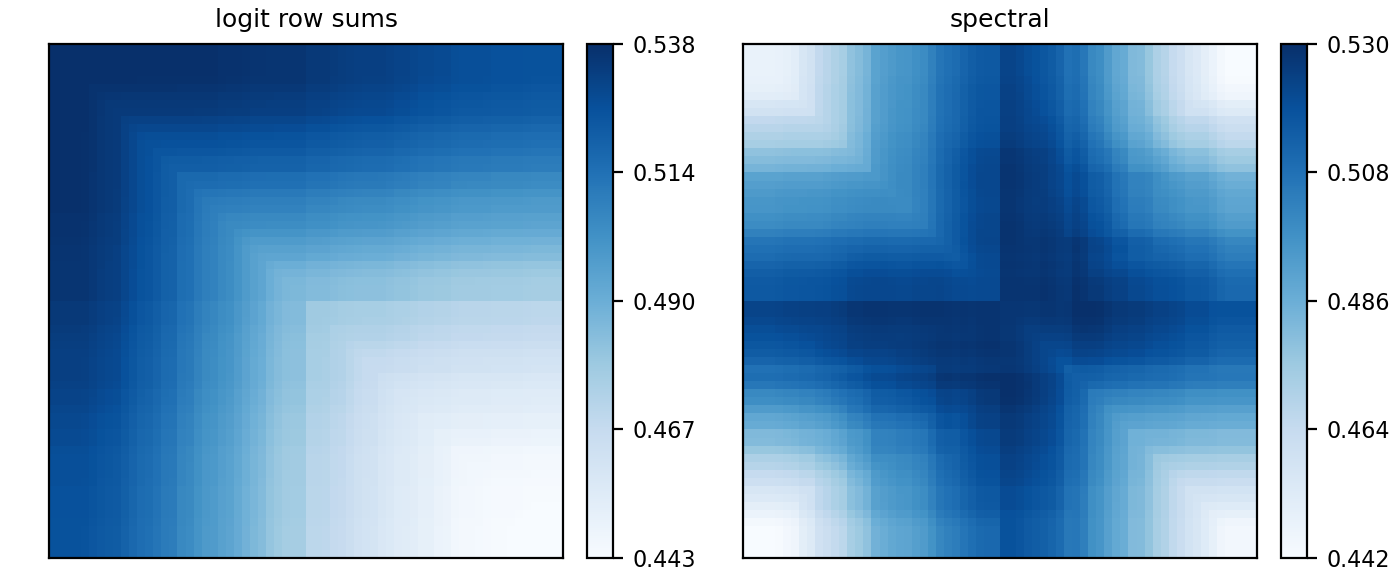}
  % \caption{\centering{\textbf{$N=100$}}}
  \caption{\centering{\textbf{$N=10$}}}
\end{subfigure}
\begin{subfigure}[t]{0.45\textwidth}
  \centering\includegraphics[width=\linewidth]{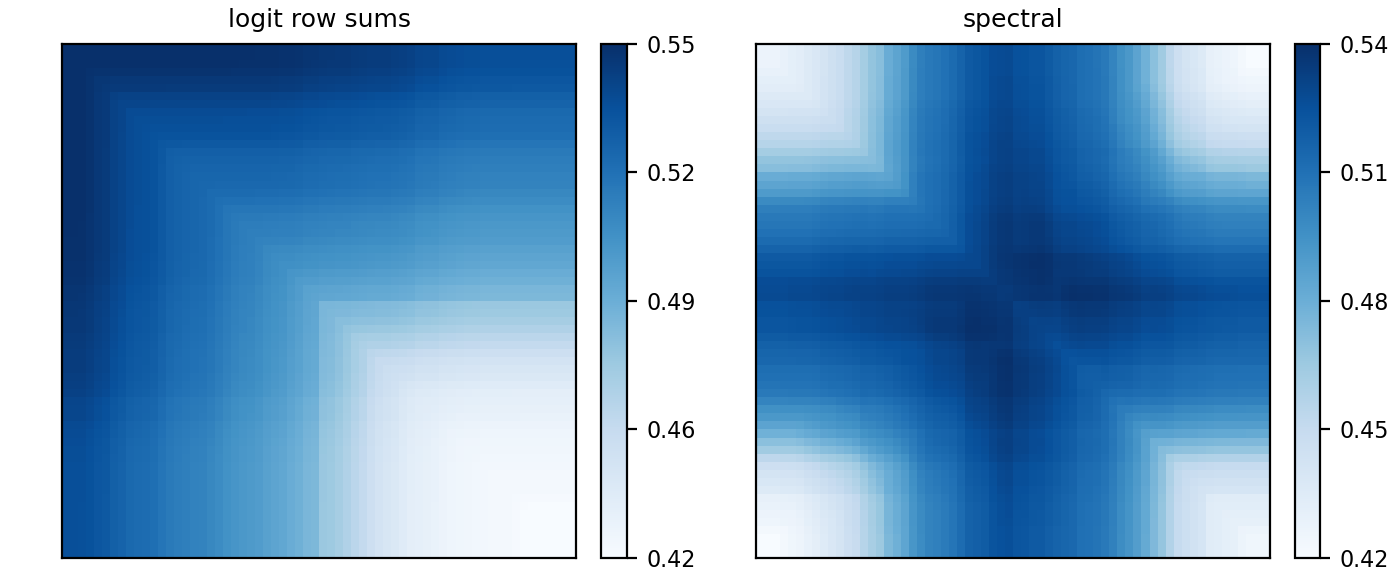}
  % \caption{\centering{\textbf{$N=200$}}}
  \caption{\centering{\textbf{$N=20$}}}
\end{subfigure}

\vspace{0.3em}

% Row 2
\begin{subfigure}[t]{0.45\textwidth}
  \centering\includegraphics[width=\linewidth]{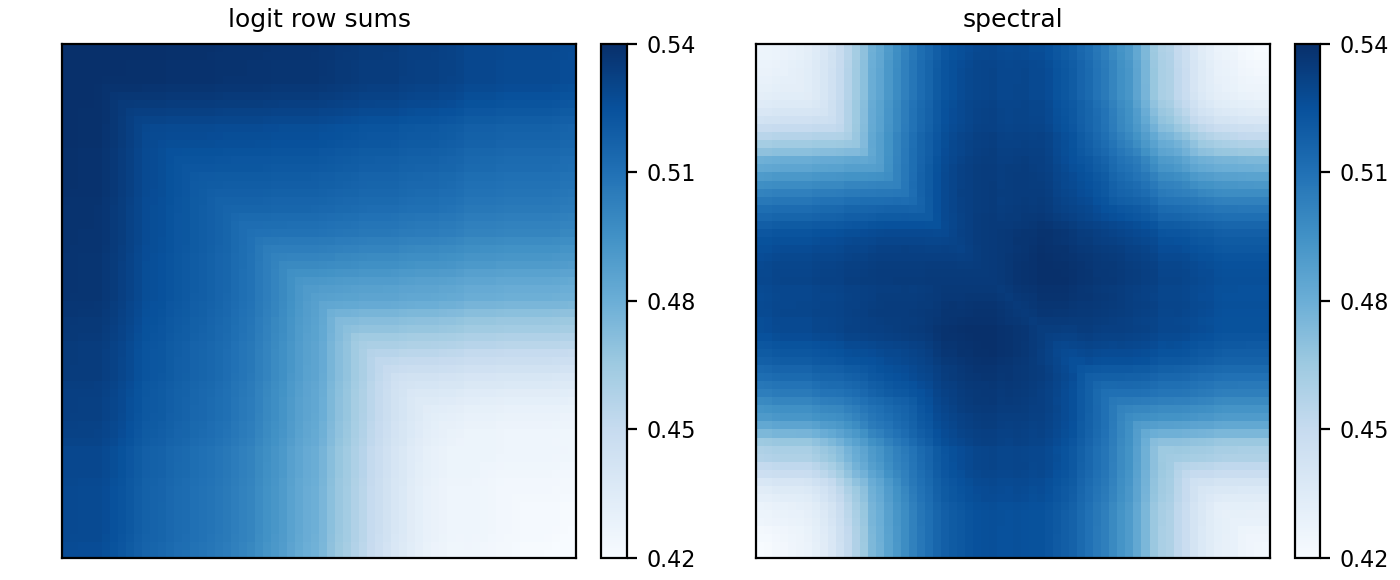}
  % \caption{\centering{\textbf{$N=400$}}}
  \caption{\centering{\textbf{$N=33$}}}
\end{subfigure}
\begin{subfigure}[t]{0.45\textwidth}
  \centering\includegraphics[width=\linewidth]{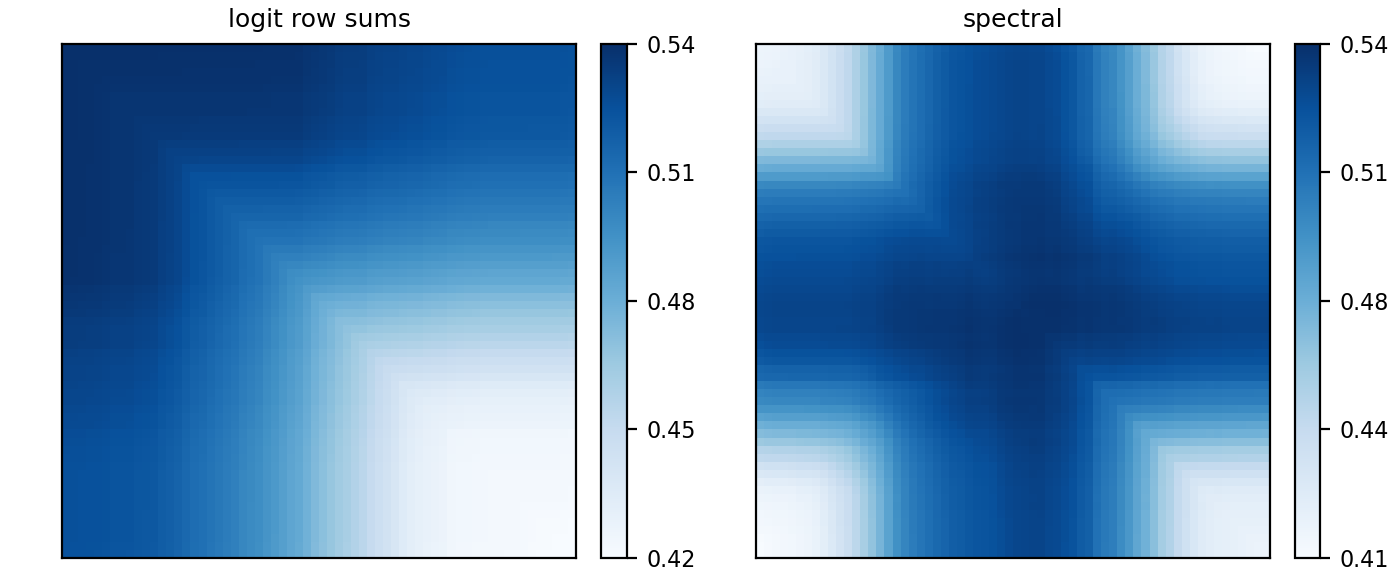}
  % \caption{\centering{\textbf{$N=600$}}}
  \caption{\centering{\textbf{$N=71$}}}
\end{subfigure}

\caption{\textbf{Sensitivity of hypothesis test to canonicalization method (PROTEINS).} On each panel: on the \textbf{left} is the estimated graphon via degree sorting, while on the \textbf{right} via Fiedler vector sorting.}
\label{fig:sensitivity_canon_graphon_proteins}
\end{figure*}

\begin{figure*}[ht!]
\centering
% Row 1
\begin{subfigure}[t]{0.45\textwidth}
  \centering\includegraphics[width=\linewidth]{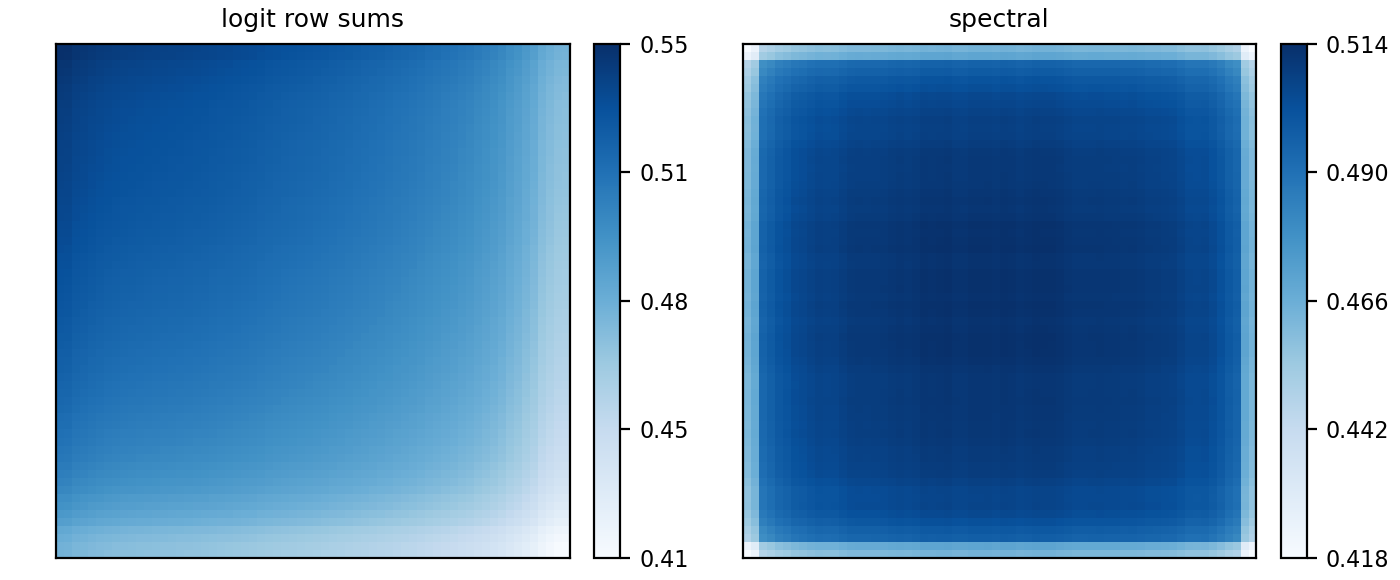}
  % \caption{\centering{\textbf{$N=64$}}}
  \caption{\centering{\textbf{$N=35$}}}
\end{subfigure}
\begin{subfigure}[t]{0.45\textwidth}
  \centering\includegraphics[width=\linewidth]{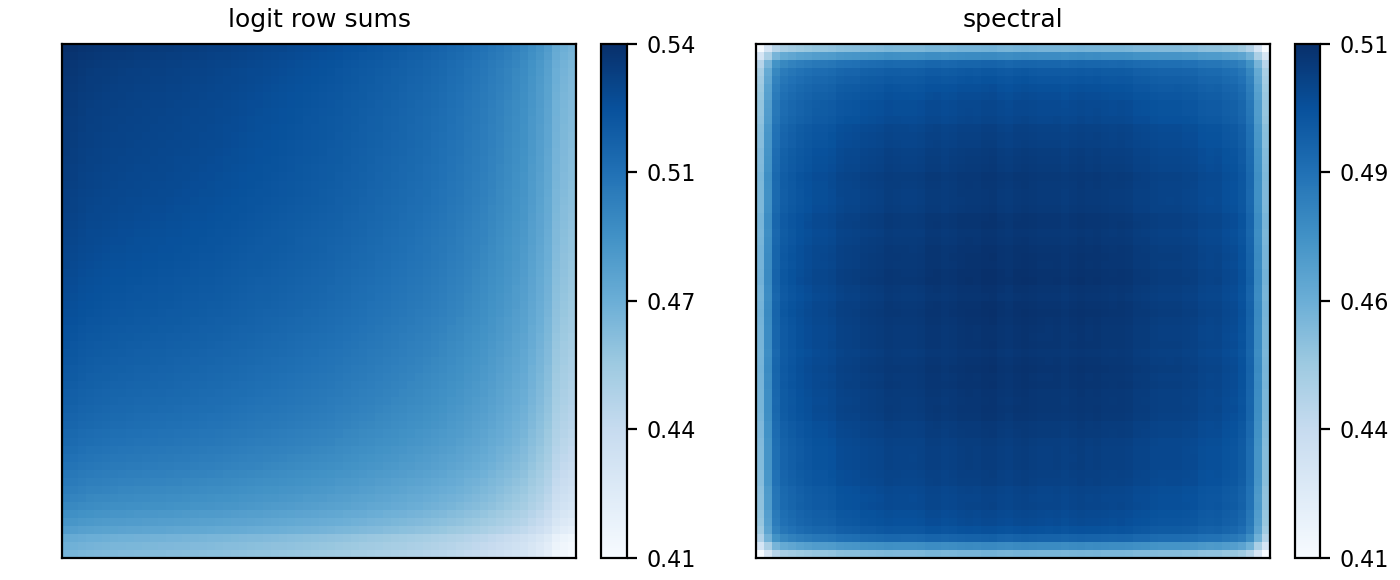}
  % \caption{\centering{\textbf{$N=128$}}}
  \caption{\centering{\textbf{$N=45$}}}
\end{subfigure}

\vspace{0.3em}

% Row 2
\begin{subfigure}[t]{0.45\textwidth}
  \centering\includegraphics[width=\linewidth]{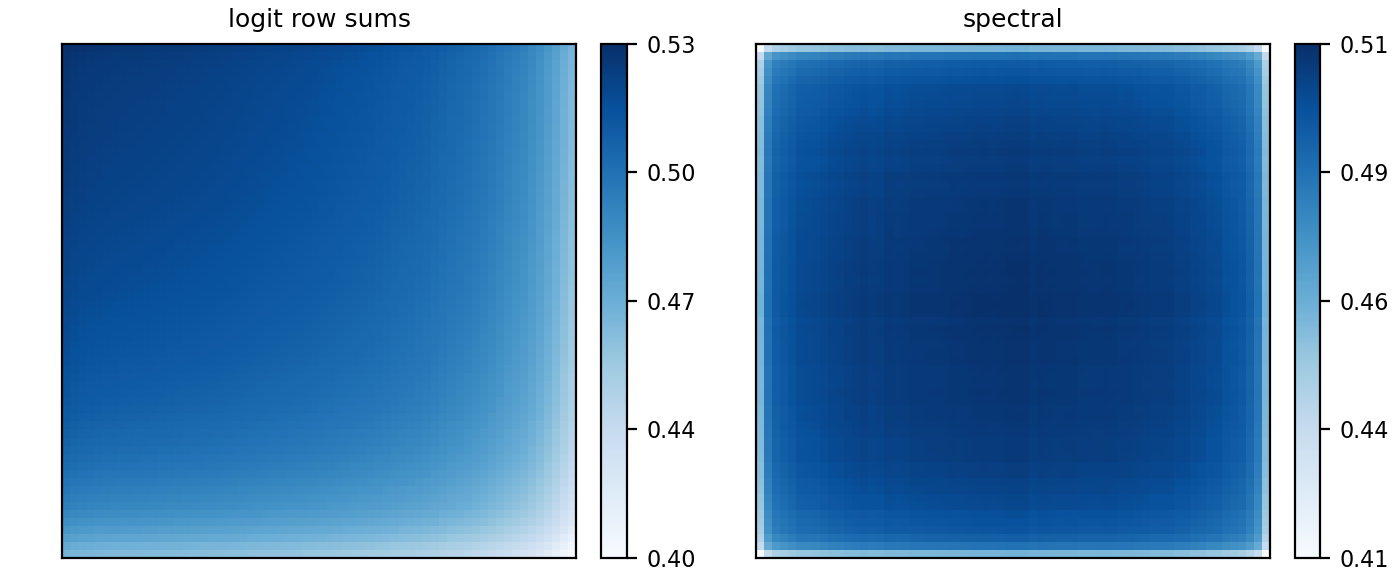}
  % \caption{\centering{\textbf{$N=256$}}}
  \caption{\centering{\textbf{$N=63$}}}
\end{subfigure}
\begin{subfigure}[t]{0.45\textwidth}
  \centering\includegraphics[width=\linewidth]{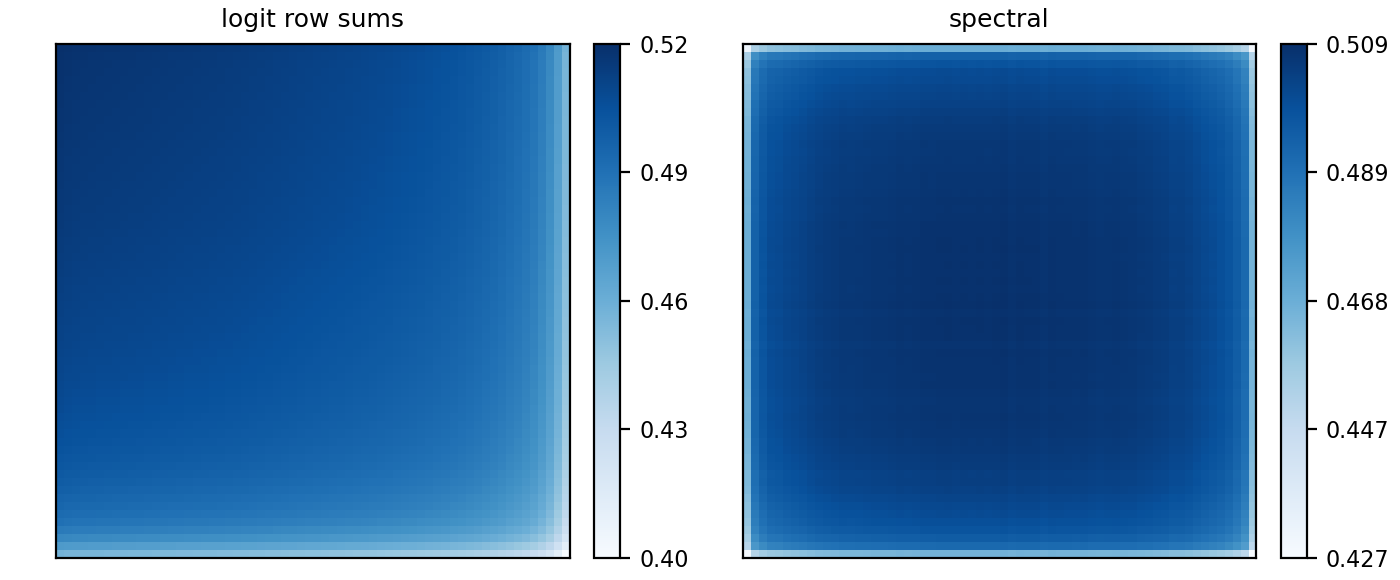}
  % \caption{\centering{\textbf{$N=498$}}}
  \caption{\centering{\textbf{$N=111$}}}
\end{subfigure}

\caption{\textbf{Sensitivity of hypothesis test to canonicalization method (COLLAB).} On each panel: on the \textbf{left} is the estimated graphon via degree sorting, while on the \textbf{right} via Fiedler vector sorting.}
\label{fig:sensitivity_canon_graphon_collab}
\end{figure*}

\begin{figure*}[ht!]
\centering
% Row 1
\begin{subfigure}[t]{0.45\textwidth}
  \centering\includegraphics[width=\linewidth]{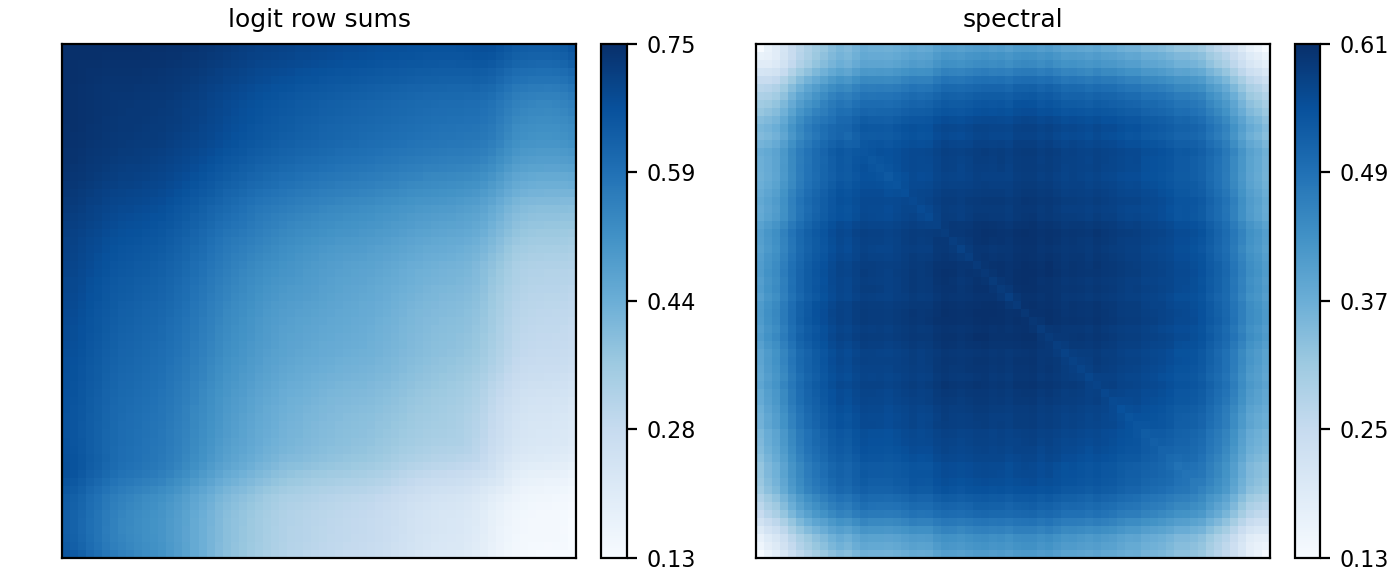}
  % \caption{\centering{\textbf{$N=64$}}}
  \caption{\centering{\textbf{$N=54$}}}
\end{subfigure}
\begin{subfigure}[t]{0.45\textwidth}
  \centering\includegraphics[width=\linewidth]{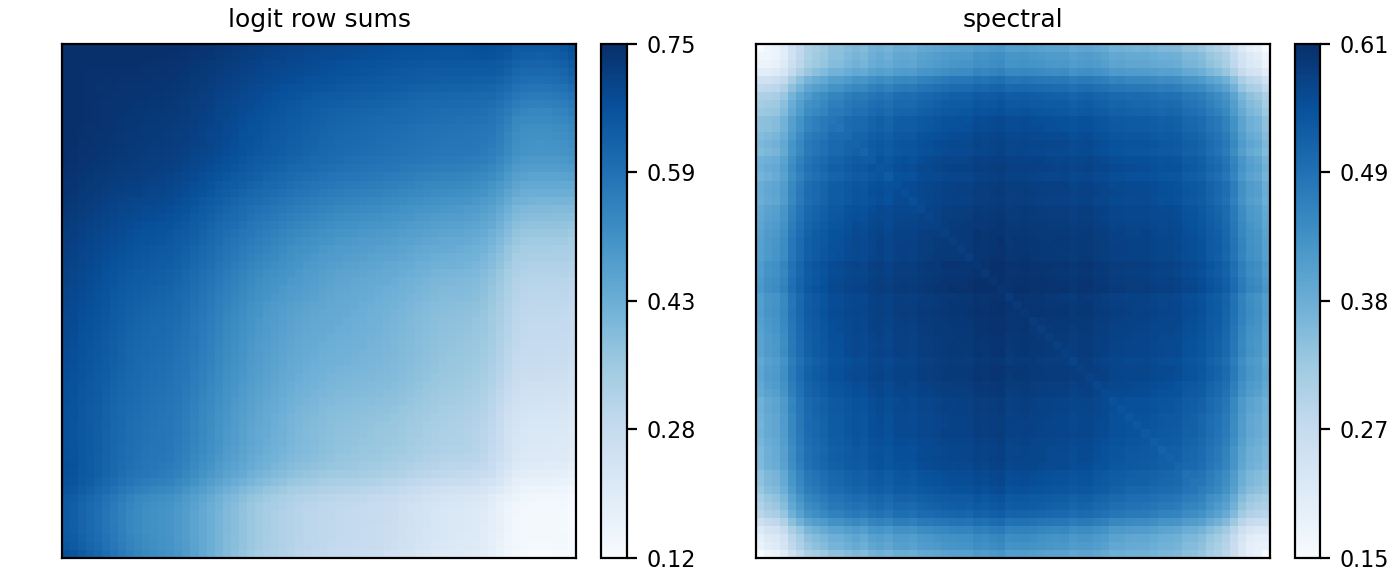}
  % \caption{\centering{\textbf{$N=128$}}}
  \caption{\centering{\textbf{$N=110$}}}
\end{subfigure}

\vspace{0.3em}

% Row 2
\begin{subfigure}[t]{0.45\textwidth}
  \centering\includegraphics[width=\linewidth]{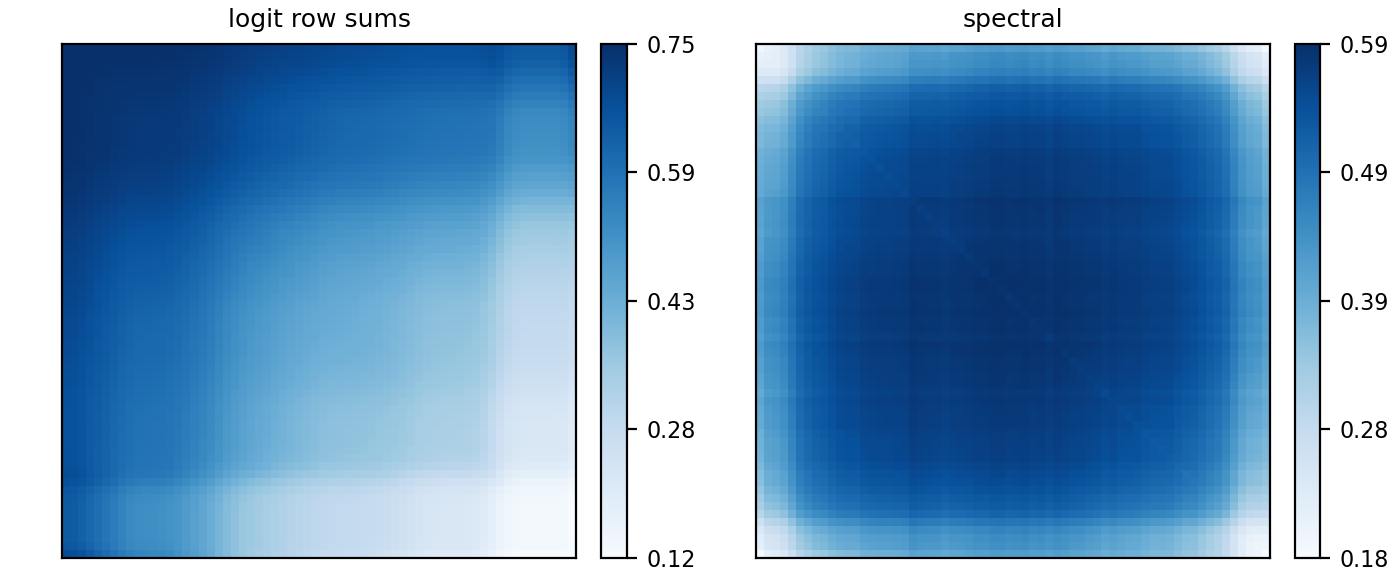}
  % \caption{\centering{\textbf{$N=256$}}}
  \caption{\centering{\textbf{$N=165$}}}
\end{subfigure}
\begin{subfigure}[t]{0.45\textwidth}
  \centering\includegraphics[width=\linewidth]{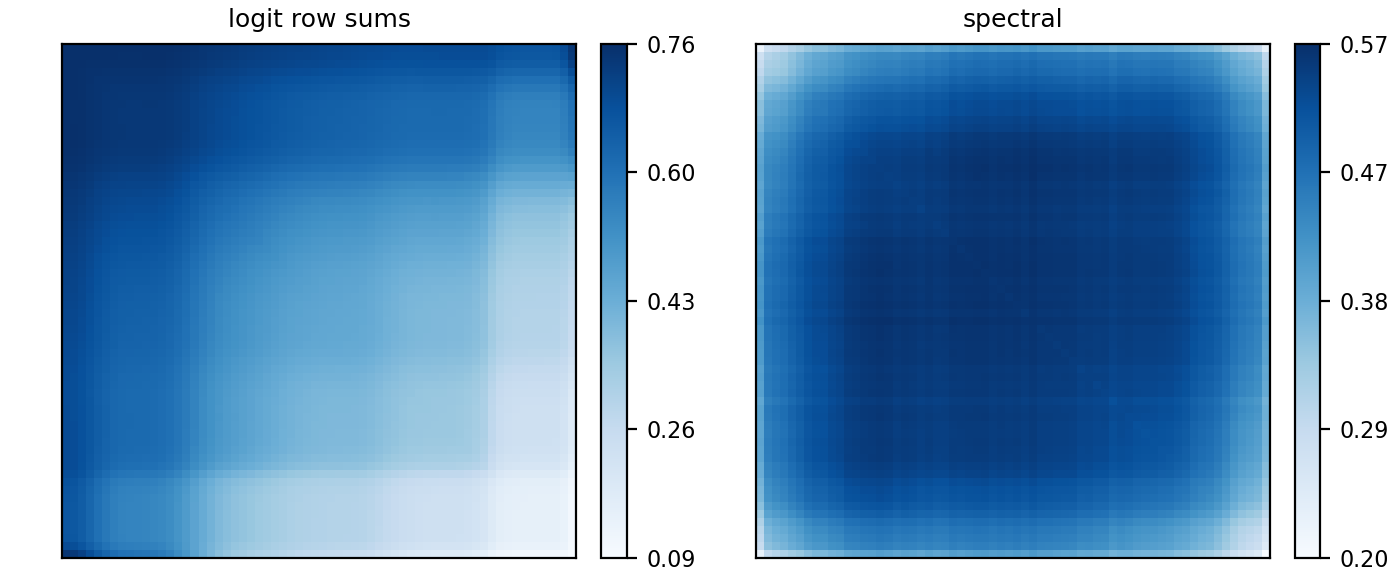}
  % \caption{\centering{\textbf{$N=400$}}}
  \caption{\centering{\textbf{$N=255$}}}
\end{subfigure}

\caption{\textbf{Sensitivity of hypothesis test to canonicalization method (LRGBPeptides).} On each panel: on the \textbf{left} is the estimated graphon via degree sorting, while on the \textbf{right} via Fiedler vector sorting.}
\label{fig:sensitivity_canon_graphon_lrbbpep}
\end{figure*}

\clearpage
% Tables 4-6 and the masked-cSBM figure, moved here from Appendix G so they appear after the other additional experiments.
\begin{table}[ht]
\centering
\small
% \caption{Runtime of the dense product $AV$ and of the factorized product $S(\hat\Theta(S^\top V))$ for $d_v=32$.}
\caption{Runtime of the dense product $\tilde PV$ and of the factorized product $Z(\tilde\Theta(Z^\top V))$ for $d_v=32$.}
\label{tab:factorized}
\begin{tabular}{rccc}
\toprule
% $n$ & Dense $AV$ & Factorized & Speed-up \\
$n$ & Dense $\tilde PV$ & Factorized & Speed-up \\
\midrule
512 & 13.9 ms & 0.6 ms & $22.7\times$ \\
2048 & 8.9 ms & 1.5 ms & $5.8\times$ \\
8192 & 19.2 ms & 4.7 ms & $4.1\times$ \\
\bottomrule
\end{tabular}
\end{table}

\begin{table}[ht]
\centering
\small
% \caption{Ratio of cut-norm variances after and before replacing node features with i.i.d.\ noise, over the four sizes of each sweep in increasing order.}
\caption{Ratio of cut-norm variances after and before replacing node features with i.i.d.\ noise, over the four sizes of each sweep in increasing order.}
\label{tab:featabl}
\begin{tabular}{llcccc}
\toprule
Dataset & Features & \multicolumn{4}{c}{Variance ratio (randomized / original)} \\
\midrule
SmoothW & Random-walk PE & 380 & 931 & $1.9\times10^{4}$ & $2.7\times10^{5}$ \\
SharpW & Random-walk PE & $1.8\times10^{8}$ & $2.8\times10^{9}$ & $1.2\times10^{10}$ & $1.4\times10^{9}$ \\
PROTEINS & Node attributes & 1.00 & 1.01 & 1.00 & 1.00 \\
\bottomrule
\end{tabular}
\end{table}

\begin{table}[ht]
\centering
\small
% \caption{Relative symmetrization residual $\|P-A\|_F/\|P\|_F$, as a range over the sizes of each sweep.}
\caption{Relative symmetrization residual $\|P-\bar P\|_F/\|P\|_F$, as a range over the sizes of each sweep.}
\label{tab:asym}
\begin{tabular}{lclc}
\toprule
Dataset & Range & Dataset & Range \\
\midrule
BFS & 0.006--0.070 & MUTAG & 0.064--0.247 \\
NCI1 & 0.009--0.114 & REDDIT-MULTI-5K & 0.047--0.217 \\
IMDB-MULTI & 0.041--0.074 & PROTEINS & 0.113--0.260 \\
ModelNet10 & 0.051--0.333 & COLLAB & 0.082--0.284 \\
\bottomrule
\end{tabular}
\end{table}

\begin{figure}[!b]
\centering
\begin{subfigure}[t]{0.45\linewidth}\centering\includegraphics[width=\linewidth]{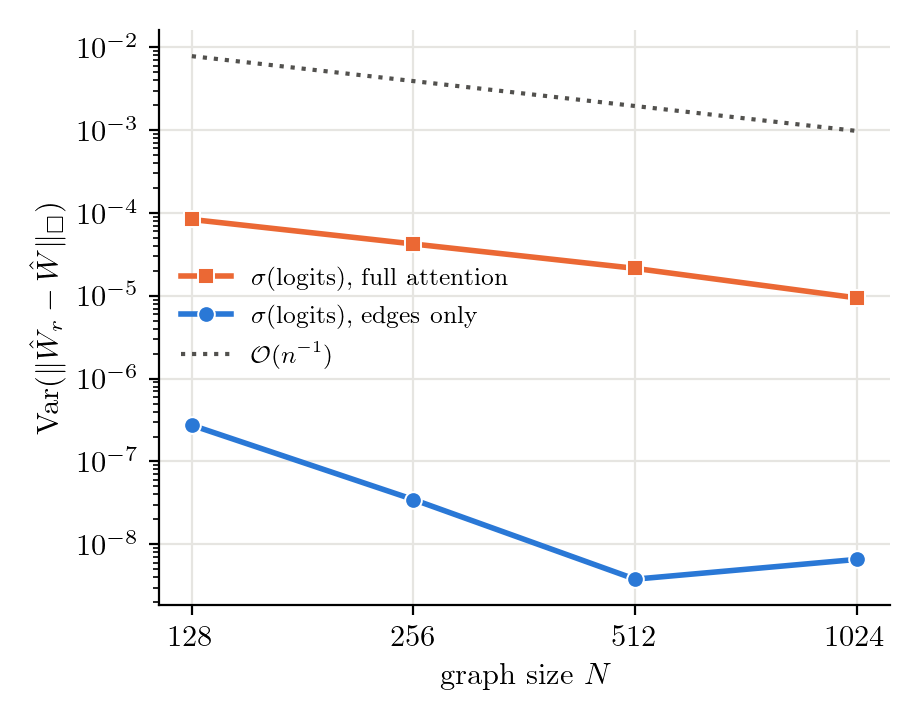}\caption{Empirical variance}\end{subfigure}
\hfill
\begin{subfigure}[t]{0.45\linewidth}\centering\includegraphics[width=\linewidth]{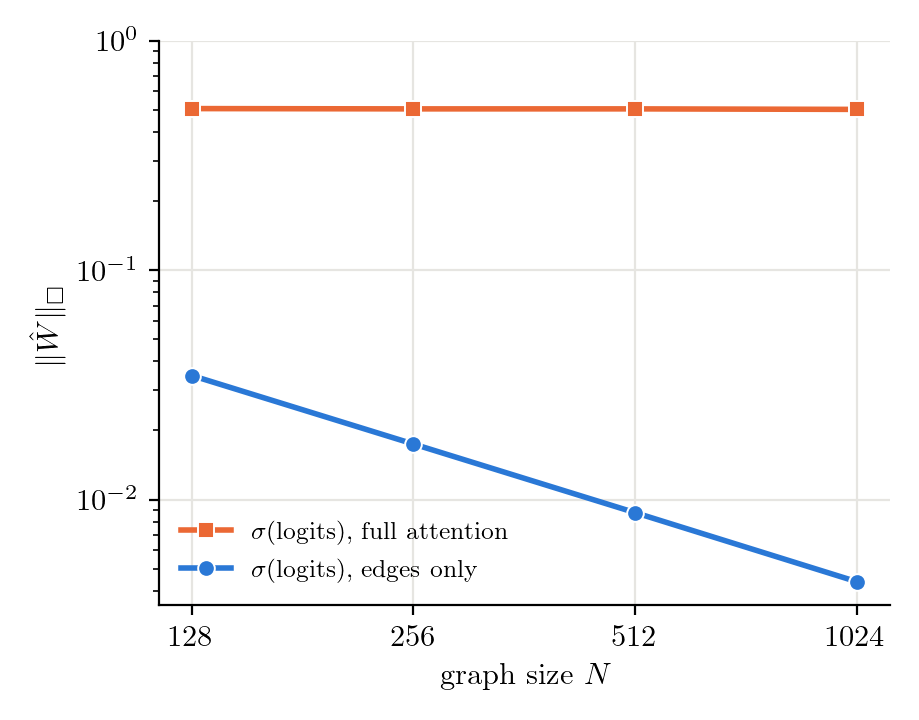}\caption{Kernel cut norm}\end{subfigure}
\caption{Attention restricted to input edges on NoisyCSBM (GPS, single-head), against full attention. \textit{Left}: empirical variance and the scaling proxy $\ccalO(n^{-1})$. \textit{Right}: cut norm $\|\hat W\|_\square$ of the estimated kernel, which vanishes when attention is masked.}
\label{fig:masked_csbm}
\end{figure}

\end{document}